\definecolor{tianqin}{RGB}{0, 139, 139}
\definecolor{luolan}{RGB}{199, 21, 133}
\definecolor{stanford}{HTML}{81221c}
\newcommand{\email}[2]{\href{mailto:#1@#2}{\color{luolan}\tt #1@#2}}
\newtheoremstyle{mytheoremstyle}    % name
    {}                              % space above
    {}                              % space below
    {\it}                           % body font
    {}                              % indent amount
    {\bfseries}                     % theorem head font
    {.}                             % punctuation after theorem head
    {.5em}                          % space after theorem head
    {}                              % Theorem head spec (can be left empty, meaning ‘normal’)
\theoremstyle{mytheoremstyle}
\newtheorem{theorem}{Theorem}
\newtheorem{lemma}[theorem]{Lemma}
\newtheorem{assumption}[theorem]{Assumption}
\newtheorem{proposition}[theorem]{Proposition}
\newtheorem{remark}[theorem]{Remark}
\numberwithin{theorem}{section}
\numberwithin{equation}{section}
\def\1{\bm{1}}
\newcommand{\ub}{\mathbf{u}}
\newcommand{\vb}{\mathbf{v}}
\newcommand{\wb}{\mathbf{w}}
\newcommand{\xb}{\mathbf{x}}
\newcommand{\bx}{\bm{x}}
\newcommand{\Ib}{\mathbf{I}}
\newcommand{\Wb}{\mathbf{W}}
\newcommand{\cC}{\mathcal{C}}
\newcommand{\cO}{\mathcal{O}}
\newcommand{\cU}{\mathcal{U}}
\newcommand{\cV}{\mathcal{V}}
\newcommand{\cW}{\mathcal{W}}
\newcommand{\NN}{\mathbb{N}}
\newcommand{\bxi}{\bm{\xi}}
\newcommand{\argmin}{\mathop{\mathrm{argmin}}}
\newcommand{\argmax}{\mathop{\mathrm{argmax}}}
\newcommand{\Tr}{\mathop{\text{tr}}\kern.2ex}
\newcommand{\norm}[1]{\|#1\|}
\newcommand{\dotp}[2]{\langle{#1},{#2}\rangle}
\newcommand{\et}{\tilde{\eta}}
\DeclareRobustCommand{\munderbar}[1]{\underaccent{\bar}{#1}}
\title{Benign Oscillation of Stochastic Gradient Descent with Large Learning Rates}
\author{
    Miao Lu\thanks{Equal contribution. 
    This work was done when Miao Lu and Beining Wu were visiting the Department of Computer Science at the University of Hong Kong.} \thanks{Department of Management Science and Engineering, Stanford University. Email: \email{miaolu}{stanford.edu}}\and
    Beining Wu\footnotemark[1] \thanks{Department of Statistics, University of Chicago. Email: \email{beiningw}{uchicago.edu}}\and
    Xiaodong Yang\thanks{Department of Statistics, Harvard University. Email: \email{xyang}{g.harvard.edu}}\and
    Difan Zou\thanks{Department of Computer Science \& Institute of Data Science, The University of Hong Kong. Email: \email{dzou}{cs.hku.hk}}
}
\date{\small{\today}}
\begin{document}

%%%%%%%%%%%%%%%%%%%% title/author/date/abstract/content %%%%%%%%%%%%%%%%%%%%

\maketitle

\begin{abstract}
    In this work, we theoretically investigate the generalization properties of neural networks (NN) trained by stochastic gradient descent (SGD) algorithm with \emph{large learning rates}. 
    Under such a training regime, our finding is that, the \emph{oscillation} of the NN weights caused by the large learning rate SGD training turns out to be beneficial to the generalization of the NN, which potentially improves over the same NN trained by SGD with small learning rates that converges more smoothly.
    In view of this finding, we call such a phenomenon ``\emph{benign oscillation}". 
    Our theory towards demystifying such a phenomenon builds upon the \emph{feature learning} perspective of deep learning. 
    Specifically, we consider a feature-noise data generation model that consists of (i) \emph{weak features} which have a small $\ell_2$-norm and appear in each data point; (ii) \emph{strong features} which have a larger $\ell_2$-norm but only appear in a certain fraction of all data points; and (iii) noise.
    We prove that NNs trained by oscillating SGD with a large learning rate can effectively learn the weak features in the presence of those strong features.
    In contrast, NNs trained by SGD with a small learning rate can only learn the strong features but makes little progress in learning the weak features. 
    Consequently, when it comes to the new testing data which consist of only weak features, the NN trained by oscillating SGD with a large learning rate could still make correct predictions consistently, while the NN trained by small learning rate SGD fails.
    Our theory sheds light on how large learning rate training benefits the generalization of NNs.
    Experimental results demonstrate our  finding on ``benign oscillation".
\end{abstract}

% \tableofcontents

% \newpage

% Use roman bold (/boldsymbol) for vectors and matrices.

%%%%%%%%%%%%%%%%%%%% input main text here %%%%%%%%%%%%%%%%%%%%

\section{Introduction}\label{sec: intro}

While deep neural networks (NNs) have achieved tremendous empirical success in various domains including images, language processing, decision-making, etc, the theoretical understanding of deep learning is still far behind satisfactory, especially the relationships between optimization of the NN and its generalization. 
From the viewpoint of optimization, using a \emph{large learning rate} in NN training has been empirically shown to be of vital importance to its generalization \citep{he2016deep, xing2018walk, smith2019super, frankle2019early, damian2021label, kaur2023maximum}.
Nevertheless, a principled theoretical understanding for the mechanism behind the benefits of large learning rate NN training still remains limited.

To better capture the key ingredients in the training dynamics of stochastic gradient descent (SGD) with large learning rates, we train two ResNets \citep{he2016deep} using SGD with small and large learning rates respectively, and present the training and testing results in Figure \ref{fig:oscillation}. 
%Then, it can be clearly seen that 
When using a large learning rate SGD, we can clearly observe an ``oscillating'' training curve, i.e., the training loss fluctuates at different iterations (generally this happens only when the learning rate exceeds the inverse of the objective smoothness), while for small learning rate SGD, the training curve is much smoother and converges more rapidly.
On the other hand, the smooth convergence in training loss can not bring any benefit for the testing performance -- SGD with a large learning rate achieves a significantly higher test accuracy than SGD with a small learning rate.
These empirical observations suggest that the \emph{oscillation} during training could be closely tied to the better generalization performance achieved by SGD with large learning rates. 

Motivated by the previous observations, in this paper, we study the learning dynamics of SGD with large learning rates by investigating the oscillation happening during the optimization process,  and we explain the benefits of oscillation to the generalization performance. 
The key message is that, compared to the smooth convergence achieved by SGD with small learning rates, 
\begin{center}
    \emph{the oscillation prevents the over-greedy convergence and serves as the engine that drives the learning of less-prominent data patterns}.
\end{center}
These data patterns would be beneficial for the NN to generalize well on unseen testing data.
This interprets from the theoretical side why large learning rate training can help NN to generalize better in practice.

% such an oscillation phenomenon can be clearly observed in Figure \ref{fig:oscillation}. SGD with a large learning rate will attain a than SGD with a small learning rate, 

 % we explain the impact of large learning rate training on generalization 

Our investigation of SGD with large learning rates for NN training builds upon the feature learning perspective of deep learning theory \citep{allen2020towards}, which explicitly considers data models consisting of different types of features and noise. 
For the sake of our goal, we consider a new feature-noise data model that consists of two types of features with different strength and different distributions among data.
Based on this data model, by carefully tracking the process of feature learning of a NN trained by SGD with large or small learning rates, we prove that only when trained by large learning rate SGD would the NN effectively learn the key features for generalizing to \emph{each} new data point.
The NN trained by small learning rate SGD fails to generalize to certain testing data 
because of the limited learning of the features which are crucial to the generalization to those new data points.
This shows a division of the generalization property
of the NN trained by large and small learning rates respectively.

To explain this phenomenon,
our theory then identifies the core incentives for the superior performance of large learning rate SGD to learn the key features as the oscillation during NN training, which also seems to be related to the regime of ``\emph{edge of stability}" \citep{cohen2020gradient} in the NN optimization.
Intuitively, the oscillation can prevent over-greedy convergence which could only leverage the most prominent components of the data, thus allowing for all useful components to be discovered and learned via gradient descent.
In view of our finding that indicates oscillating NN training with large learning rates possibly resulting in better generalization, we refer to such a phenomenon as ``\emph{benign oscillation}".

\subsection{Outline of the Paper}
This paper is organized as following. 
The remaining of this section summarizes our key contributions and discusses related works on large learning rate NN training and feature learning.
Section~\ref{sec: problem setup} defines the problem settings.
In Section~\ref{sec: one data}, we go through the key motivations and results of our theory on a simplified one-data noiseless setting. 
Without the burden of multiple data training analysis, we could clearly present our core ideas.
In Section~\ref{sec: main theory}, we show our main results on the multiple data setting with numerical demonstrations.

% Finally in Section~\ref{sec: related works} we discuss the related works on large learning rate NN training and feature learning.
% \textcolor{purple}{Due to space limitation, see Appendix~\ref{sec: related works} for discussions of related works (move it before conclusion).}

\begin{figure}[!t]

% \captionsetup[subfigure]{labelformat=empty}
\centering
\subfigure{
\includegraphics[width=.45\textwidth]{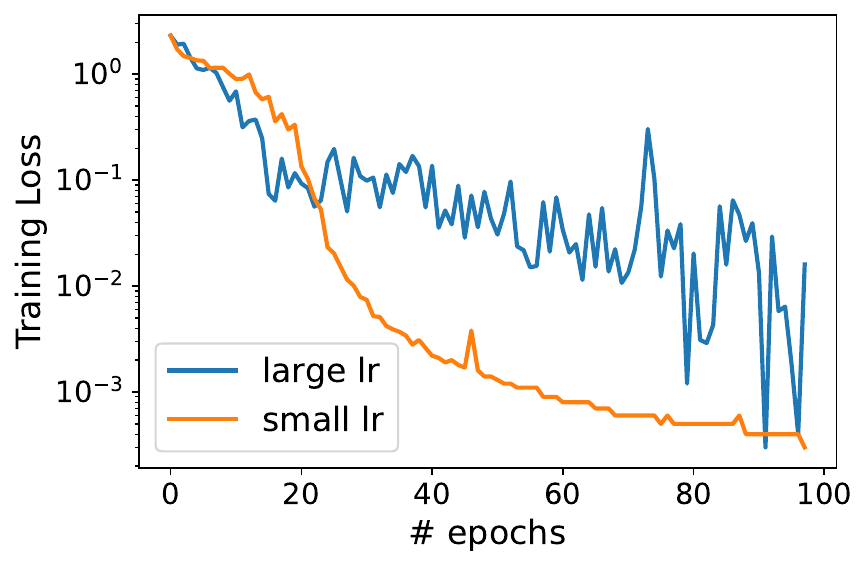}
}
\subfigure{
\includegraphics[width=.435\textwidth]{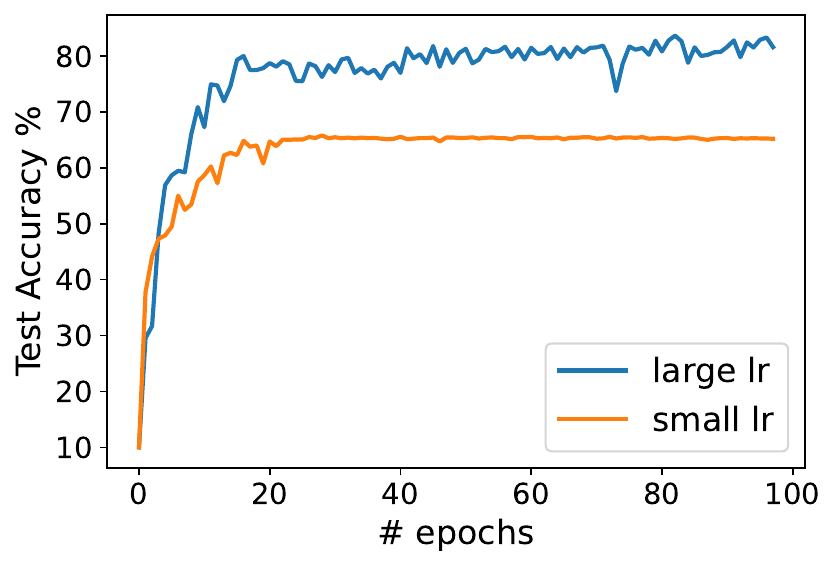}
}
\caption{Training and testing performance of ResNet-18 on CIFAR-10 dataset, when trained via SGD with small and large learning rates ($\eta=0.01$ vs. $\eta=0.75$). We adopt the same configuration as in \citet{andriushchenko2023sgd}: using weight decay but no momentum and no data augmentation. 
A clear difference between the large learning rate training and  small learning rate training can be observed: SGD with a large learning rate leads to an ``oscillating'' training curve with higher testing accuracy; SGD with a small learning rate has a rapid and smooth convergence but gives lower testing accuracy.\label{fig:oscillation}
}
\end{figure}

\subsection{Our Contributions}

% Our contributions to explaining large learning rate NN training are in the theoretical side, three folds.

\paragraph{Dynamic analysis framework for SGD with large learning rates.}  
We provide a theoretical framework to understand and explain the oscillation in NN training by SGD with a large learning rate.
Specifically, we consider a feature-noise data generation model which consists of two types of features -- the \emph{strong features} and the \emph{weak features} -- that have different strength and distributions among data to capture our core ideas towards explaining the relationship between large learning rate SGD training and generalization.
% Here the strong features have a larger $\ell_2$-norm compared to the weak features.
Then, our theoretical framework establishes a sharp characterization of the learning dynamics of these features and noise, based on which we can precisely analyze the generalization of the NN trained by SGD with small or large learning rates. 
We remark that in general studying the NN optimization dynamics when the learning rate is greater than twice inversed smoothness is quite challenging, and our theoretical analysis framework based upon the feature-noise model potentially provides useful guidance which could be leveraged to study other nonconvex optimization problems of independent interest in the future.

\paragraph{A new theoretical argument for feature learning driven by oscillation.} 
The key to explaining the large learning rate training regime is a novel theoretical argument for learning the weak features driven by oscillation.
As we illustrate in Section~\ref{sec: one data}, the oscillation of the NN value (prediction) around the target (label) does \emph{not} cancel with each other.
Instead, the fluctuations would accumulate linearly over time (Lemma~\ref{lem: linear accumulation}).
This further serves as the engine driving the learning of the weak features, resulting in better generalization.
This characterizes the distinctive training dynamics of SGD under the large learning rate training regime,
revealing the benefits of the oscillation in learning useful data patterns.

\paragraph{Division for generalization by different learning rates.}
In contrast to effectively learning the weak features by large learning rate oscillating SGD training, we also show that the smooth and rapid convergence achieved by SGD with small learning rates would \emph{not} help the NN learn the weak features, thus being unable to generalize to the new data without the strong features.
This gives a division of the generalization property of NNs trained by large and small learning rates, which further demonstrates the benefits of the oscillation.

\subsection{Related Works}
\label{subsec: related works}

% In this section, we discuss the related works.
% \textcolor{red}{add more edge of stability paper, remember we have a google doc for that, not sure whether we should have one separate paragraph to summarize the edge of stability papers.}

\paragraph{Large learning rate NN training.} 
Gradient descent training with large learning rates for deep learning is receiving an ever increasing attention in recent years \citep{cohen2020gradient, lewkowycz2020large, jastrzebski2021catastrophic, cohen2022adaptive, andriushchenko2023sgd}.
For GD training, the phenomenon of ``\emph{edge of stability}"  \citep{cohen2020gradient,  cohen2022adaptive} showed that the sharpness of the loss Hessian would finally hover just above $2/\eta$, and thus a larger learning rate would prefer a flatter minimum and possibly better generalization, and this have received great attention for recent years \citep{arora2022understanding,ahn2022understanding,chen2022gradient, damian2021label, wang2022analyzing,zhu2022understanding, wu2023implicit}.
Another related phenomenon is the ``catapults" in GD training \citep{lewkowycz2020large} which happens during large learning rate training phase with a sharp increase-then-decrease spike in the training loss.
The mechanism behind this phenomenon is further investigated by \cite{kalra2023phase,zhu2022quadratic, zhu2023catapults}.

Besides,
\cite{li2019towards} studied the regularization effect of large learning rates on SGD at initialization which results in better generalization than using a small initial learning rate.
\cite{wu2021direction} considered the implicit bias of SGD with a moderate large learning rate for overparametrized linear regression.
\cite{wu2023implicit} then studied the implicit bias of large learning rate GD training in logistic regression.
Additionally, \cite{andriushchenko2023sgd} showed that SGD with a large learning rate could help NNs to learn sparse features from data, but did not provide rigorous theoretical justifications.
We highlight that our theoretical work on large learning rate SGD builds upon a multi-pass fashion of SGD and a feature-noise data generation model, which is different from previous works \citep{li2019towards, wu2021direction, andriushchenko2023sgd} where noise-approximated-SGD is used for analysis.
Also, we study the behavior of large learning rate SGD by focusing on the role of oscillation, which is also largely different from the prior works.

\paragraph{Feature learning in deep learning theory.}
There has been a long line of research in deep learning theory from the perspective of feature learning \citep{allen2020towards, wen2021toward, zou2021understanding, cao2022benign, chen2022towards, zou2023benefits, huang2023graph, yang2023stochastic}. 
The core idea is that, by explicitly characterizing the dynamics of feature learning during training, one can figure out how different algorithms and data structures can influence the learning of features by the neural network, further uncovering the properties of interests in deep learning, e.g., ensemble \citep{allen2020towards}, adaptive gradients \citep{zou2021understanding}, the phenomenon of benign overfitting \citep{cao2022benign}, data augmentation via mixup \citep{zou2023benefits}, etc.
Specifically, the work of \cite{cao2022benign} showed that under small learning rate regimes, training on data with low \emph{signal-to-noise ratio} (SNR) would result in \emph{harmful overfitting}, leading to the poor generalization abilities of the neural network. 
Our work extends this line of research to the less theoretically understood regime of large learning rates by characterizing the feature learning process when oscillation happens during gradient descent and explaining its benefits to generalization.

\section{Problem Setting}\label{sec: problem setup}

In this section, we introduce the theoretical setting for our investigation of generalization properties of SGD through the task of binary classification. 
We first introduce the multi-view data generation model and then define the two-layer convolutional neural network and the SGD algorithm.

\paragraph{Data generation model.}
We let $\vb \perp \ub\in\mathbb{R}^d$ be two fixed vectors, denoting the signal (or feature) part shared by each data point. Each data point, denoted by $(\xb,y)$ where $\xb=(\xb^{(1)},\xb^{(2)},\xb^{(3)})$ contains $3$ patches, is generated as following: let $y\in\{1, -1\}$ be independently generated by $\mathbb{P}(y = 1) = \mathbb{P}(y = -1) = 1/2$, and 
\begin{itemize}
\item \textbf{Weak signal patch.} One patch of $\xb$ is taken by the weak signal $y\cdot \vb$;
\item \textbf{Strong signal patch.} With probability $1-\rho$, one patch of $\xb$ that is different from $y\cdot \vb$, is taken by the strong signal $y\cdot \ub$;
\item \textbf{Noise patch.} All the remaining patches are taken by independent Gaussian noise $\boldsymbol{\xi}\sim N(0,\sigma_p^2(\Ib_d-\vb\vb^\top/\|\vb\|_2^2 - \ub\ub^\top/\|\ub\|_2^2))$ for some variance $\sigma_p>0$.
\end{itemize}

For simplicity, we refer to the data with strong signal patch as the \emph{strong data}, denoted by 
\begin{align}\label{eq: strong data}
    (\mathbf{x},y) = \big((y\cdot\mathbf{u},y\cdot\mathbf{v},\boldsymbol{\xi}),y\big),
\end{align} and we refer to the data with only the weak signal patch as the \emph{weak data}, denoted by 
\begin{align}\label{eq: weak data}
    (\mathbf{x},y) = \big((\widetilde{\boldsymbol{\xi}},y\cdot\mathbf{v},\boldsymbol{\xi}),y\big).
\end{align}
Here by ``strong", we mean a vector with a larger $\ell_2$-norm, as we would specify in the theory part.
Intuitively, the weak signal $y\cdot\mathbf{v}$ can be interpreted as the invariant and common signals across data like the shape of key objects in an image.
The strong signal $y\cdot\mathbf{u}$ can be understood as the background or the domain information which is stronger but only appears in a certain fraction of all data points.
This indicates that in order for a classifier to generalize well to all new data, it must effectively learn the weak signal $y\cdot\mathbf{v}$.

Our proposed data generation model is adapted from the feature-learning-based line of research on deep learning \citep{allen2020towards, cao2022benign, zou2023benefits},
and it can serve as a good theoretical platform to investigate the relationship between oscillating NN training by large learning rate and the NN generalization.
Finally, we remark that this data model can be extended for generality, e.g., multiple features, more patches, multi-class data. 
In fact, as long as the signal and noise patches have properly different strength and fractions among data, our theoretical analysis can be directly applied.

\paragraph{Two-layer CNN.} 
We consider a two-layer convolutional neural network (CNN) with filters applied to the three patches separately.
We assign the parameters of the second layer of the CNN to a fixed $+1$ and $-1$, respectively.
Formally, the CNN function $f(\cdot;\mathbf{W}):\mathbb{R}^{3d}\mapsto\mathbb{R}$ is defined as 
\begin{align}\label{eq: cnn}
    f(\mathbf{x};\mathbf{W}) = \sum_{j\in\{\pm 1\}} jF_j(\mathbf{x};\mathbf{W}_j),\quad \text{with}\quad F_j(\mathbf{x};\mathbf{W}_j) = \frac{1}{m} \sum_{r\in[m]} \sum_{p=1}^3\sigma(\langle\mathbf{w}_{j,r},\mathbf{x}^{(p)}\rangle),
\end{align}
where $m\in\mathbb{N}_+$ is the number of filters (i.e., neurons), $\sigma(z) = (\max\{z,0\})^2$ is the $\mathrm{ReLU}^2$ activation function, and $\mathbf{w}_{j,r}\in\mathbb{R}^d$ denotes the weights of the $r$-th neuron of $F_j$.
We use $\mathbf{W} = \{\mathbf{W}_j\}_{j\in\{\pm 1\}}$ and $\mathbf{W}_j = \{\mathbf{w}_{j,r}\}_{r\in[m]}$ to denote the collections of the weights.

\paragraph{Loss function and stochastic gradient descent (SGD).}
Having access to $n$ i.i.d. samples from the data generation model, $\mathcal{S} = \{(\mathbf{x}_i,y_i)\}_{i\in[n]}$, we solve a binary classification task by minimizing the following mean squared loss,
\begin{align}\label{eq: loss two way multiple data sgd}
    L(\mathbf{W}) = \frac{1}{n}\sum_{i\in[n]} \ell(f(\mathbf{x}_i;\mathbf{W}), y_i) = \frac{1}{2n}\sum_{i\in[n]}\big(f(\mathbf{x}_i;\mathbf{W}) - y_i\big)^2,
\end{align}
where $\ell(f(\mathbf{x}_i;\mathbf{W}), y_i) = (f(\mathbf{x}_i;\mathbf{W}) - y_i)^2/2$ is the loss on a single data point. Inspired by ``edge of stability'' \citep{cohen2020gradient}, adopting mean squared error is believed to make it easier to identify the effects of large learning rates. Besides, mean squared loss has also been demonstrated to be comparable or even better than cross-entropy loss in many classification tasks \citep{hui2020evaluation}.

We optimize the loss function \eqref{eq: loss two way multiple data sgd} by \emph{multi-pass} stochastic gradient descent (SGD), starting from some Gaussian weights, where each entry of $\mathbf{W}_{+1}^{(0)}$ and $\mathbf{W}_{-1}^{(0)}$ is sampled from $N(0,\sigma_0^2)$. 
The SGD goes for several epochs. 
In each epoch, we use each data $(\mathbf{x}_i,y_i)$ for exactly once, in the exact order of $(\mathbf{x}_1,y_1)\rightarrow(\mathbf{x}_2,y_2)\rightarrow\cdots\rightarrow(\mathbf{x}_n,y_n)$\footnote{We consider the same order for all epochs for the simplicity of analysis. Our analysis can be easily extended to multi-pass SGD with shuffling.}. 
Thus, the weights of the CNN are updated obeying the following rule,
\begin{align}
    \mathbf{w}_{j,r}^{(t+1)} &= \mathbf{w}_{j,r}^{(t)} -\eta \cdot \nabla_{\mathbf{w}_{j,r}}\ell(f(\mathbf{W}^{(t)},\mathbf{x}_{i_t}),y_{i_t}) \notag \\
    &= \mathbf{w}_{j,r}^{(t)} - \frac{j\eta}{m}\cdot \big(f(\mathbf{W}^{(t)},\mathbf{x}_{i_t}) - y_{i_t}\big)\cdot\sum_{p=1}^3 \sigma^{\prime}(\langle\mathbf{w}_{j,r}^{(t)},\mathbf{x}^{(p)}_{i_t}\rangle)\cdot \mathbf{x}_{i_t}^{(p)},\quad \forall t\geq 0,\label{eq: two way multiple data sgd}
\end{align}
for each $j\in\{\pm 1\}$ and $r\in[m]$, where $i_t = (t+1) \mod n$ and $\eta>0$ is the learning rate.

\paragraph{Generalization via signal (feature) learning.}
Our goal is to study the generalization property of the CNN \eqref{eq: cnn} trained by SGD \eqref{eq: two way multiple data sgd}.
Given a new testing data point $(\mathbf{x}^{\diamond}, y^{\diamond })$ sampled from the data generation model, we measure the generalization of the CNN by the correctness of its classification \citep{zhang2021understanding},
\begin{align}\label{eq: classification rate}
    \mathbb{E}[\mathbf{1}\{y^{\diamond}\cdot f(\mathbf{x}^{\diamond};\mathbf{W}_{\mathtt{sgd}}) > 0\}] = \mathbb{P}(y^{\diamond}\cdot f(\mathbf{x}^{\diamond};\mathbf{W}_{\mathtt{sgd}}) > 0),
\end{align}
where $\mathbf{W}_{\mathtt{sgd}}$ denotes the weights trained by SGD \eqref{eq: two way multiple data sgd}.

The way we investigate the generalization \eqref{eq: classification rate} is to track the process of signal (feature) learning.
More specifically, by the SGD updates \eqref{eq: two way multiple data sgd}, the weights $\mathbf{w}_{j,r}^{(t)}$ of the CNN is a linear combination of the initialization $\mathbf{w}_{j,r}^{(0)}$, the strong signal $j\cdot\mathbf{u}$, the weak signal $j\cdot\mathbf{v}$, and the noise vectors.
This motivates us to consider the following representation of the CNN weights, for $j\in\{\pm 1\}$ and $r\in[m]$,
\begin{align*}
    \mathbf{w}_{j,r}^{(t)} \approx \mathbf{w}_{j,r}^{(0)} + \frac{\langle\mathbf{w}_{j,r}^{(t)},j\mathbf{u}\rangle}{\|\mathbf{u}\|_2^2} \cdot j\mathbf{u} + \frac{\langle\mathbf{w}_{j,r}^{(t)},j\mathbf{v}\rangle}{\|\mathbf{v}\|_2^2} \cdot j\mathbf{v} + \text{noise parts}.
    %+ \sum_{i=1}^n \frac{\rho_{j,r}(\boldsymbol{\xi}_{i})}{\|\boldsymbol{\xi}_i\|_2^2} \cdot \boldsymbol{\xi}_i.
\end{align*}
% where $\rho_{j,r}(\mathbf{u})$,  $\rho_{j,r}(\mathbf{v})$ denote the combination coefficients.
The relative scales of these combination coefficients actually imply how the weights learn the strong signal $j\cdot\mathbf{u}$, the weak signal $j\cdot\mathbf{v}$, or memorize the noise, which determines how the CNN can generalize.

As is shown by \cite{cao2022benign}, the CNN tends to fit the training dataset utilizing patches with higher strength when trained by small learning rate gradient descents.
Therefore in that training regime, the CNN tends to fit the training data using the strong signal $y\cdot \mathbf{u}$, making less progress in learning the weak signal $y\cdot\mathbf{v}$.
Thus when it comes to the testing data which lack the strong signal patch $y\cdot\mathbf{u}$, the CNN trained in this manner would make a false classification.

On the contrary, our paper investigates the large learning rate regime, and suggests that the \emph{oscillation} happening during SGD training is beneficial for learning the weak signal, giving better generalization results.
So our main focus in the sequel would be studying the dynamics of the inner products 
\begin{align*}
    \langle\mathbf{w}_{j,r}^{(t)},j\mathbf{u}\rangle,\quad \langle\mathbf{w}_{j,r}^{(t)},j\mathbf{v}\rangle, \quad \langle\mathbf{w}_{j,r}^{(t)},\boldsymbol{\xi}\rangle,\quad \forall t\geq 0,
\end{align*}
where $\boldsymbol{\xi}$ is either $\boldsymbol{\xi}_i$ or $\widetilde{\boldsymbol{\xi}}_i$.
We show that by oscillating SGD training with large learning rates, $\langle\mathbf{w}_{j,r}^{(t)},j\mathbf{v}\rangle$ can be effectively learned to a relatively large scale compared to its initialization. 
This is further provably useful for the CNN to generalize to all new data points.

\section{Understand the Oscillation: Single Training Data Case}\label{sec: one data}

Before giving our main theory on large learning rate SGD training, let's first study a simplified setup where we consider only a \emph{single} training data point consisting only of a weak signal patch $y\cdot\mathbf{v}$ and a strong signal patch $y\cdot\mathbf{u}$, without the noise patch.
This setting helps to illustrate the key insights behind our main theory regarding the understanding of oscillation.
Without loss of generality, we denote the single training data as 
\begin{align*}
    (\mathbf{x},y) = \big((y\cdot\mathbf{u}, y\cdot\mathbf{v}), y\big),
\end{align*}
without the sample index $i$.
We can also simplify the CNN expression \eqref{eq: cnn} and the SGD updates \eqref{eq: two way multiple data sgd} to
\begin{align}
    f(\mathbf{x};\mathbf{W})& = \sum_{j\in\{\pm 1\}} jF_j(\mathbf{x};\mathbf{W}_j),\quad\text{with}\quad F_j(\mathbf{x};\mathbf{W}_j) = \frac{1}{m} \sum_{r\in[m]} \sigma(\langle\mathbf{w}_{j,r},y\mathbf{u}\rangle) + \sigma(\langle\mathbf{w}_{j,r},y\mathbf{v}\rangle), \label{eq: single data cnn}\\
    \mathbf{w}^{(t+1)}_{j,r} &= \mathbf{w}^{(t)}_{j,r} - \frac{\eta j}{m}\cdot\big(f(\mathbf{x};\mathbf{W}^{(t)}) - y\big)\cdot\Big(\sigma'(\langle\mathbf{w}_{j,r}^{(t)},y\mathbf{u}\rangle) \cdot y\mathbf{u} + \sigma'(\langle\mathbf{w}_{j,r}^{(t)},y\mathbf{v}\rangle) \cdot y\mathbf{v}\Big).\label{eq: single data sgd}
\end{align}
In such a simplified setup, we aim to explain that, when SGD training belongs to certain \emph{oscillation} regime, which typically occurs under large learning rate $\eta$, the CNN is guaranteed to make progress in learning the weak signal $y\cdot\mathbf{v}$. 
Here by oscillation, we mean that the values of the CNN $f(\mathbf{x};\mathbf{W}^{(t)})$ keep oscillating around the label $y$ during training.
This phenomenon greatly contrasts with known results for feature learning when gradient descent training converges smoothly under relatively small learning rates \citep{allen2020towards,cao2022benign}, which we are going to review in the following. %where patches with less strength would not be well learned.

\paragraph{Review: small learning rate training regime.} 
Firstly, we make a review of what may happen when using SGD updates \eqref{eq: single data sgd} with a small learning rate $\eta$.
The following proposition proves that in this case the CNN can \textbf{not} make much progress in learning the weak signal $y\cdot\mathbf{v}$.

\begin{proposition}[Small learning rate training: single training data (informal)]\label{prop:one-data small-lr}
    Under mild conditions on $(d,m,\sigma_0,\norm{\ub}_2,\norm{\vb}_2)$, if we choose the learning rate $\eta\le m/(6\|\ub\|^2_2)$ small enough, then with high probability, the training loss can smoothly converge, during which
    \begin{equation*}
        \max_{j\in\{\pm 1\},r\in[m]} \left\{ |\langle\mathbf{w}_{j,r}^{(t)},j\mathbf{v}\rangle|\right\}\leq\widetilde{\cO}\left(\sigma_0\|\mathbf{v}\|_2\right).
    \end{equation*}
\end{proposition}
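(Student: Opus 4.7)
The plan is to exploit the fact that $\mathbf{u}\perp\mathbf{v}$ to reduce the SGD dynamics to a pair of coupled scalar iterations and then compare the multiplicative growth rates along the two signal directions. Writing $u_{j,r}^{(t)}=\langle\mathbf{w}_{j,r}^{(t)},j\mathbf{u}\rangle$ and $v_{j,r}^{(t)}=\langle\mathbf{w}_{j,r}^{(t)},j\mathbf{v}\rangle$, equation \eqref{eq: single data sgd} projects to
\begin{align*}
u_{j,r}^{(t+1)} &= u_{j,r}^{(t)} - \frac{2\eta}{m}\,L^{(t)}\,y\,[\,yj\,u_{j,r}^{(t)}\,]_+\,\|\mathbf{u}\|_2^2,\\
v_{j,r}^{(t+1)} &= v_{j,r}^{(t)} - \frac{2\eta}{m}\,L^{(t)}\,y\,[\,yj\,v_{j,r}^{(t)}\,]_+\,\|\mathbf{v}\|_2^2,
\end{align*}
where $L^{(t)}=f(\mathbf{x};\mathbf{W}^{(t)})-y$ and $[z]_+=\max(z,0)$. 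Thus on any ``active'' neuron (one where $yj\,u_{j,r}^{(t)}>0$ and $yj\,v_{j,r}^{(t)}>0$), both coordinates evolve by multiplicative factors $1-\alpha_u L^{(t)}$ and $1-\alpha_v L^{(t)}$ with $\alpha_u=2\eta\|\mathbf{u}\|_2^2/m$ and $\alpha_v=2\eta\|\mathbf{v}\|_2^2/m$, and the crucial ratio $\alpha_v/\alpha_u=\|\mathbf{v}\|_2^2/\|\mathbf{u}\|_2^2\ll 1$.

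The second step is to show that with $\eta\le m/(6\|\mathbf{u}\|_2^2)$ the training loss decreases monotonically and smoothly, so the sign of $L^{(t)}$ is preserved and the active/inactive status of every neuron is preserved. I would argue this via a descent-lemma computation directly on $\tfrac12(f^{(t)}-y)^2$: the second-order expansion produces a Hessian-type term whose spectral norm along the update direction is bounded by a constant multiple of $\|\mathbf{u}\|_2^2/m$ (the $\mathbf{v}$-contributions are lower order by the norm assumption), so the bound $\eta\|\mathbf{u}\|_2^2/m\le 1/6$ is exactly what is needed to guarantee $(f^{(t+1)}-y)^2\le (f^{(t)}-y)^2$ and, after a small-gain telescoping, $|L^{(t)}|\to 0$ monotonically. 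By a standard initialization argument $|f^{(0)}|\ll 1$ so $L^{(t)}<0$ throughout, and the sign-preservation of $u_{j,r}^{(t)}$ and $v_{j,r}^{(t)}$ follows from the fact that each multiplicative factor is then strictly positive.

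The third step is the multiplicative comparison that kills $v$. For any active neuron, taking logarithms of the scalar updates and using $\log(1-\alpha L^{(t)})=\alpha|L^{(t)}|+O((\alpha|L^{(t)}|)^2)$ together with $\alpha|L^{(t)}|\le 1/3$ yields
\begin{equation*}
\log\!\frac{v_{j,r}^{(t)}}{v_{j,r}^{(0)}}\;\le\;\frac{\|\mathbf{v}\|_2^2}{\|\mathbf{u}\|_2^2}\,\log\!\frac{u_{j,r}^{(t)}}{u_{j,r}^{(0)}}\;+\;\widetilde{\mathcal{O}}\!\left(\frac{\|\mathbf{v}\|_2^2}{\|\mathbf{u}\|_2^2}\right).
\end{equation*}
Because training converges smoothly we have $u_{j,r}^{(t)}=\mathcal{O}(1)$ throughout (the CNN cannot over-shoot the label with small $\eta$), while by Gaussian initialization $u_{j,r}^{(0)}=\widetilde\Theta(\sigma_0\|\mathbf{u}\|_2)$ and $v_{j,r}^{(0)}=\widetilde\Theta(\sigma_0\|\mathbf{v}\|_2)$. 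Hence
\begin{equation*}
|v_{j,r}^{(t)}|\;\le\;\widetilde{\mathcal{O}}(\sigma_0\|\mathbf{v}\|_2)\cdot(\sigma_0\|\mathbf{u}\|_2)^{-\|\mathbf{v}\|_2^2/\|\mathbf{u}\|_2^2}\;=\;\widetilde{\mathcal{O}}(\sigma_0\|\mathbf{v}\|_2),
\end{equation*}
provided the ``mild conditions'' include something like $\|\mathbf{v}\|_2^2/\|\mathbf{u}\|_2^2\cdot\log(1/(\sigma_0\|\mathbf{u}\|_2))=\widetilde{\mathcal{O}}(1)$, which is natural under the weak-vs-strong scaling. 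Inactive neurons (where $yj\,v_{j,r}^{(t)}\le 0$) receive no update and trivially satisfy the bound, with the sign-preservation argument of Step~2 guaranteeing that a neuron cannot swap roles.

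The main obstacle is Step~2: rigorously establishing monotone, smooth convergence under the exact constant $\eta\le m/(6\|\mathbf{u}\|_2^2)$. One has to control the composite second-order term that mixes contributions from $\mathbf{u}$ and $\mathbf{v}$ across all $2m$ neurons, and in particular to verify that the update along $\mathbf{v}$ (whose magnitude the result is trying to bound) cannot transiently push $L^{(t)}$ across zero and destroy the clean multiplicative structure used in Step~3. I would close this loop by an induction that simultaneously maintains $(i)$~$L^{(t)}\in[L^{(0)},0)$, $(ii)$~$|v_{j,r}^{(t)}|\le\widetilde{\mathcal{O}}(\sigma_0\|\mathbf{v}\|_2)$, and $(iii)$~$u_{j,r}^{(t)}=\mathcal{O}(1)$, so that the $\mathbf{v}$-contribution to the descent bound is always a lower-order perturbation of the $\mathbf{u}$-dynamics.
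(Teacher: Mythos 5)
Your overall plan --- project the SGD dynamics onto the two orthogonal signal directions, exploit the tiny ratio $\|\mathbf{v}\|_2^2/\|\mathbf{u}\|_2^2$ between the two multiplicative rates, and conclude that $v_{j,r}^{(t)}$ cannot move far from its initialization --- is genuinely different from the paper's. The paper (Appendix~\ref{sec: one data small lr}) decouples the two bounds entirely: it proves the crude exponential envelope $\Psi^{(t)}\le\exp(6\eta\|\mathbf{v}\|_2^2t/m)\Psi^{(0)}$ using nothing but $|\ell^{(t)}|\le 3$ (Lemma~\ref{lemma:one-data small-lr 1st-stage}), and then shows \emph{separately}, by a reference-point descent lemma on $\|\mathbf{W}^{(t)}-\mathbf{W}^\ast\|_F^2$ (Lemmas~\ref{lemma:inner-prod grad reference pt}--\ref{lemma:one-data small-lr descent lemma}), that the (Cesaro-averaged) loss becomes small well before the time horizon $T^{\ddagger}\sim m/(\eta\|\mathbf{v}\|_2^2)$ at which the $\Psi$-envelope blows up. Your log-comparison ties $v$ to $u$ directly, which is more self-contained when it works; but in return you need a stronger per-iteration handle on the $u$-dynamics, and that is where the gap is.

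The concrete gap is in your Step 2. You assert that with $\eta\le m/(6\|\mathbf{u}\|_2^2)$ the per-iteration loss $(f^{(t)}-y)^2$ decreases monotonically and that $L^{(t)}=f^{(t)}-y$ never changes sign (``$L^{(t)}<0$ throughout''). Neither claim is established in the paper, and the paper actually argues in the opposite direction: the inductive step of Lemma~\ref{lemma:one-data small-lr 2nd starts} explicitly treats the case $yf(\mathbf{x};\mathbf{W}^{(t)})>1$ and shows only that the one-step drop factor $(1-3\eta\|\mathbf{u}\|_2^2/m)\ge 1/2$ keeps $\max_r\langle\mathbf{w}_{y,r}^{(t)},y\mathbf{u}\rangle\ge\iota/2$ \emph{despite} an overshoot. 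Likewise, the paper's convergence statement is a bound on the time-averaged loss obtained by telescoping $\|\mathbf{W}^{(t)}-\mathbf{W}^\ast\|_F^2$, not a monotone decrease of the instantaneous loss. Your descent-lemma sketch on $\tfrac12(f^{(t)}-y)^2$ does not obviously yield monotone decrease either: the $\mathbf{v}$-direction contribution to the second-order term is exactly the quantity you are trying to bound, so the closing induction is circular unless the loop is closed more carefully (as the paper does by separating the envelope argument from the descent argument).

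A second, quieter issue: the clean $\widetilde{\mathcal{O}}(\|\mathbf{v}\|_2^2/\|\mathbf{u}\|_2^2)$ error you put on the log-comparison needs $\sum_s(L^{(s)})^2=\widetilde{\mathcal{O}}(m/(\eta\|\mathbf{u}\|_2^2))$, i.e.\ $\alpha_u\sum_s(L^{(s)})^2=\widetilde{\mathcal{O}}(1)$. Under genuinely monotone decrease this telescopes cleanly against $\log(u^{(t)}/u^{(0)})$, but if $L^{(t)}$ is allowed to cross zero (which, per the above, you have not excluded), $\sum_s(L^{(s)})^2$ can exceed $|\sum_sL^{(s)}|$ by a large margin and the error bound degrades. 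The paper sidesteps this entirely: its $\Psi$-envelope only needs the crude bound $|\ell^{(t)}|\le 3$ per step, and the polynomial signal separation $\|\mathbf{u}\|_2\ge\widetilde\Omega(m^2)\|\mathbf{v}\|_2$ in Assumption~\ref{cond:model_params_one_data small-lr} is what guarantees $T\le T^{\ddagger}$. If you want to keep the log-comparison route, you should close Step~2 via the same reference-point machinery rather than a direct loss-descent argument, and track $\sum_s(L^{(s)})^2$ explicitly from the telescoping inequality in Lemma~\ref{lemma:one-data small-lr descent lemma}.

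Also a small point of fact: the CNN prediction being $\mathcal{O}(1)$ only forces $\max_r u_{y,r}^{(t)}=\mathcal{O}(\sqrt{m})$, not $\mathcal{O}(1)$ as you state; this only costs you a $\log m$ in the exponent and is absorbed by the $\widetilde{\mathcal{O}}(\cdot)$, but it should be stated correctly.
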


Please refer to Appendix~\ref{sec: one data small lr} for more details and proofs of Proposition~\ref{prop:one-data small-lr}.
This shows that in the small learning rate training regime, the CNN only learns the weak signal $y\cdot \mathbf{v}$ to the same scale as its initialization. 
CNN trained in this manner may fail to generalize to testing data without strong features (substituted by a noise patch $\boldsymbol{\xi})$, because it would make predictions relying mainly on the random noise.
On the contrary, in the following we intuitively explain that under certain large learning rate training regime, the CNN can learn the weak signal  $y\cdot \mathbf{v}$ up to a constant level higher than its initialization.
Such a phenomenon is depicted in Figure~\ref{fig: two way one data} on an $8$-neuron CNN trained by SGD with $\eta = 0.1$ and $\eta = 0.5$ respectively.

\begin{figure}[!t]
    \centering
    \subfigure{
            % \centering
            \includegraphics[width=0.26\textwidth]{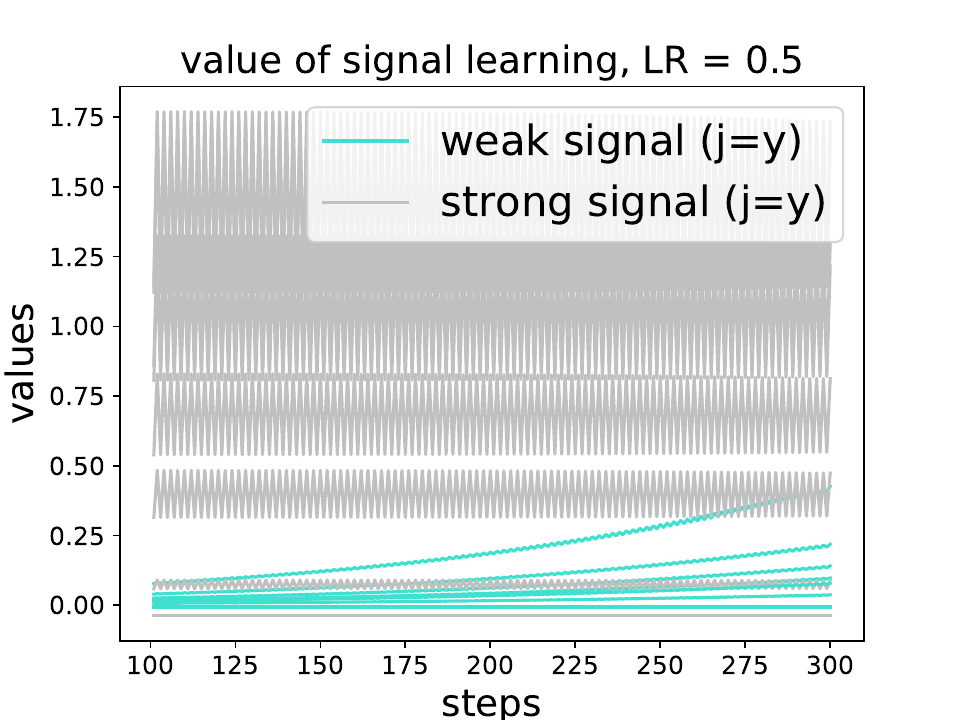}
            \label{fig: two way one data 1}
    }\hspace{-2em}
    \subfigure{
            % \centering
            \includegraphics[width=0.26\textwidth]{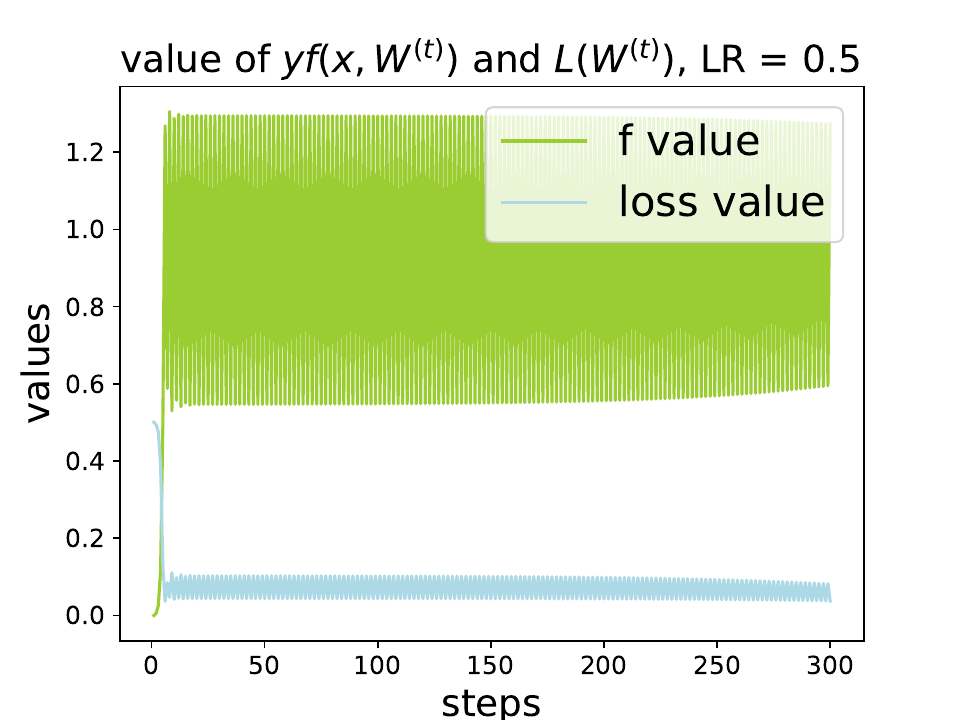}
            \label{fig: two way one data 2}
    }\hspace{-2em}
    \subfigure{
            % \centering
            \includegraphics[width=0.26\textwidth]{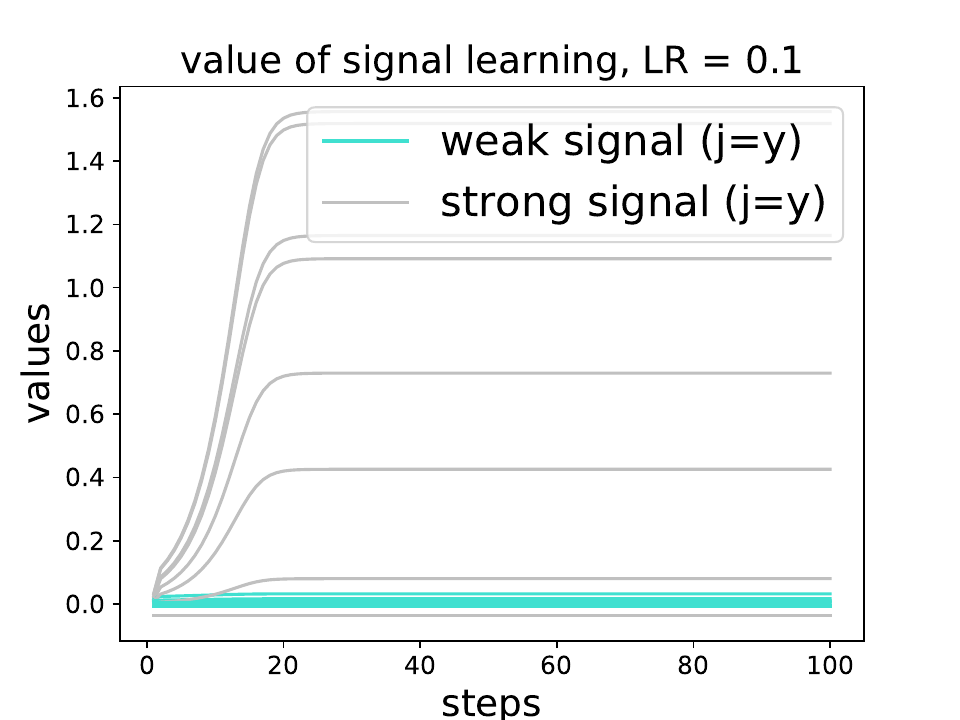}
            \label{fig: two way one data 3}
    }\hspace{-2em}
    \subfigure{
            % \centering
            \includegraphics[width=0.26\textwidth]{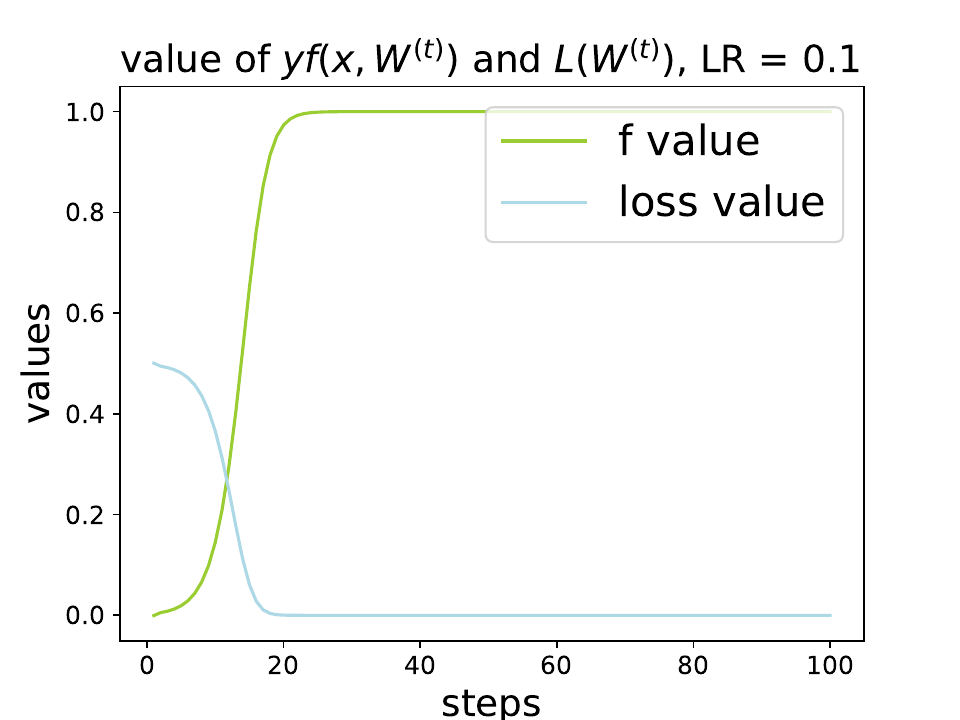}
            \label{fig: two way one data 4}
    }
    \vspace{-4mm}
    \caption{
    The progress of signal learning and the values of $yf(\mathbf{x};\mathbf{W}^{(t)})$ and $L(\mathbf{W}^{(t)})$ under different learning rates $\eta$. 
    The CNN in the first two figures is trained by SGD with $\eta = 0.5$ (large learning rate), while the CNN in the last two figures is trained by SGD with $\eta = 0.1$ (small learning rate).
    For signal learning (first and third figures), the gray lines depict the strong signal learning $\langle \mathbf{w}^{(t)}_{y,r},y\mathbf{u}\rangle$ by all neurons $r\in[m]$, and the light blue lines depict the weak signal learning $\langle \mathbf{w}^{(t)}_{y,r},y\mathbf{v}\rangle$ by all neurons $r\in[m]$. 
    As we can see, with a large learning rate, the value of CNN oscillates around $y$, and $\sum_t(1-yf(\mathbf{x};\mathbf{W}^{(t)}))$ is going to increase, which, as our theory indicates, incentivizes $\langle\mathbf{w}_{y,r}^{(t)},y\mathbf{v}\rangle$ to increase.
    In contrast, with a small learning rate, $\langle\mathbf{w}_{y,r}^{(t)},y\mathbf{v}\rangle$ would stay at the same scale as its initialization throughout the training process.}\label{fig: two way one data}
\end{figure}

\paragraph{Theoretical motivations: large learning rate regime and oscillation.} 
When using a large enough learning rate $\eta$ that exceeds the twice inversed smoothness, the weights of the CNN would keep oscillating, which makes the values of $f(\mathbf{x};\mathbf{W}^{(t)})$ fluctuate around $y$, or equivalently, $y\cdot f(\mathbf{x};\mathbf{W}^{(t)})$ fluctuate around $1$.
The key finding towards our theory is that the fluctuations of $y\cdot f(\mathbf{x};\mathbf{W}^{(t)})$ around $1$ would \emph{not} cancel with each other.
Instead, the oscillation accumulates linearly over time, as we could observe from Figure~\ref{fig: two way one data}.
This further serves as the engine driving the learning of the weak signal $y\cdot\mathbf{v}$.
In the sequel, we explain why the cancellation does not happen.

The core idea is that, with a reasonably large learning rate $\eta$, the CNN weights 
would be quickly enlarged from the learning of the strong signal $y\cdot\ub$ and then keep oscillating, but still stay well bounded. 
As a result of the SGD updates \eqref{eq: single data sgd}, the summation of the gradient terms is also well bounded.
More specifically, let's look carefully into the dynamics of learning the strong signal $y\cdot \mathbf{u}$.
For some time steps $t_0$, $t_1$, and certain neuron $r\in[m]$, it holds from \eqref{eq: single data sgd} that
\begin{align}\label{eq: intuition 1}
    \mathcal{O}(1) = \left|\langle\mathbf{w}_{y,r}^{(t_1+1)},y\mathbf{u} \rangle - \langle\mathbf{w}_{y,r}^{(t_0)},y\mathbf{u} \rangle\right| \approx  \Theta\Bigg(\sum_{s=t_0}^{t_1}\big(1 - yf(\mathbf{x};\mathbf{W}^{(s)})\big)\cdot  \langle\mathbf{w}_{y,r}^{(s)},y \mathbf{u} \rangle\Bigg).
\end{align}
Now we split the summation on the right hand side of \eqref{eq: intuition 1} into two parts: one part is $\mathcal{S}^+$ containing $s$ such that $yf(\mathbf{x};\mathbf{W}^{(s)})>1$ and the other part is $\mathcal{S}^-$ containing $s$ such that $yf(\mathbf{x};\mathbf{W}^{(s)})<1$.
It turns out that when the weak signal component of the CNN is relatively small compared with the strong signal component, the whole behavior of the CNN would be dominated by the dynamics of the strong signal component.
In other words, when $yf(\mathbf{x};\mathbf{W}^{(s)})>1$, the inner products $\langle\mathbf{w}_{y,r}^{(s)},y\mathbf{u}\rangle$ would also take a relatively large value.
Conversely, when $yf(\mathbf{x};\mathbf{W}^{(s)})<1$, the inner products $\langle\mathbf{w}_{y,r}^{(s)},y\mathbf{u}\rangle$ would also take a relatively small value.
Consequently, in view of \eqref{eq: intuition 1}, we can see that the total increases of $\langle\wb_{y,r}^{(s)},y\ub\rangle$ and decreases of $\langle\wb_{y,r}^{(s)},y\ub\rangle$
during the oscillation period are approximately balanced, i.e.,
\begin{align}\label{eq: intuition 2}
    \sum_{s\in\mathcal{S}^+}\underbrace{\big(yf(\mathbf{x};\mathbf{W}^{(s)}) - 1\big)}_{yf(\mathbf{x};\mathbf{W}^{(s)}) \textcolor{red}{>} 1}\cdot  \underbrace{\langle\mathbf{w}_{y,r}^{(s)},y\mathbf{u}\rangle}_{\text{relatively \textcolor{red}{large}}} \,\,\approx\,\,  \sum_{s\in\mathcal{S}^-}\underbrace{\big(1 - yf(\mathbf{x};\mathbf{W}^{(s)})\big)}_{{yf(\mathbf{x};\mathbf{W}^{(s)}) \textcolor{blue}{<}1}}\cdot  \underbrace{\langle\mathbf{w}_{y,r}^{(s)},y\mathbf{u}\rangle}_{\text{relatively \textcolor{blue}{small}}}.
\end{align}
Consequently, the summation of $1 - yf(\mathbf{x};\mathbf{W}^{(s)})$ over $s\in\mathcal{S}^-$ would take a larger value than the summation of $yf(\mathbf{x};\mathbf{W}^{(s)})-1$ over $s\in\mathcal{S}^+$.
This means that the whole summation 
\begin{align}\label{eq: intuition 3}
    \sum_{s\in\mathcal{S}^+\cup \mathcal{S}^-}\big(yf(\mathbf{x};\mathbf{W}^{(s)}) - 1\big) = \sum_{s\in\mathcal{S}^-}\big(1 - yf(\mathbf{x};\mathbf{W}^{(s)})\big) - \sum_{s\in\mathcal{S}^+}\big(yf(\mathbf{x};\mathbf{W}^{(s)} - 1)\big) \geq 0.
\end{align}
That is, the oscillation of $f(\mathbf{x};\mathbf{W}^{(s)})$ around the label $y$ over time does \textbf{not} tend to cancel with each other.
Instead, the summation of the fluctuations would have a determined sign.
Furthermore, if the CNN values are bounded away from the label by a uniform constant $\delta>0$ (i.e., the magnitude of the oscillation), we can further improve \eqref{eq: intuition 3} into
\begin{align}\label{eq: intuition 4}
    \sum_{s\in\mathcal{S}^+\cup \mathcal{S}^-}\big(yf(\mathbf{x};\mathbf{W}^{(s)}) - 1\big) = \Omega\big(\delta\cdot\max\{|\mathcal{S}^+|,|\mathcal{S}^-|\}\big) = \Omega\big(\delta\cdot(t_1 - t_0)\big).
\end{align}
This is the key observation behind our theory for studying the oscillating SGD. 
With \eqref{eq: intuition 4} in hand, we can further show that as long as the weak signal component of the CNN is still small, which means that the weak signal hasn't been well learned, the linear accumulation of the oscillation would incentivize the learning of the weak signal by a careful analysis of the updates \eqref{eq: single data sgd}.

\paragraph{Outcome of oscillation: effective weak signal learning.} 
Based on previous discussions, we can arrive at our main results for the simplified setup of this explanatory section: the oscillating SGD can indeed make progress in learning the weak signal $y\cdot\mathbf{v}$, which then helps the CNN to generalize to new data points which possibly lack the strong signal $y\cdot \mathbf{u}$. 
This is summarized in the following (informal) theorem.

% \textcolor{red}{would be better to give a lower bound for the learning rate such that oscillation will happen.}
\begin{theorem}[Large learning rate training: single training data case (informal)]\label{thm: one data}
    Under mild conditions on $(d,m,\sigma_0,\norm{\ub}_2,\norm{\vb}_2)$, if we choose the learning rate $\eta>m/(4\|\mathbf{u}\|_2^2)$ reasonably large such that the SGD training \eqref{eq: single data sgd} oscillates in the sense that $|yf(\mathbf{x};\mathbf{W}^{(t)})-1|\geq \delta$ for some constant $\delta>0$ and each $t\geq 0$, then with high probability there exists a $t^{\star}\leq T_{\max}$ with $T_{\max} \in \mathrm{poly}(d, m, \eta^{-1}, \delta^{-1}, \|\mathbf{u}\|_2^{-1}, \|\mathbf{v}\|_2^{-1})$ such that 
    \begin{align}
        \frac{1}{m}\sum_{r\in[m]}\sigma(\langle \mathbf{w}_{y,r}^{(t)},y\mathbf{v}\rangle) \geq \delta,\quad \forall t\geq t^{\star}.
    \end{align}
\end{theorem}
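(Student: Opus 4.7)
The plan is to track the dynamics of the signal-coordinate coefficients in a suitable weight decomposition, and use the oscillation to linearly accumulate the residual $(1-yf)$ that in turn drives weak-signal growth. Since the per-step update in \eqref{eq: single data sgd} lies entirely in $\linsp\{y\mathbf{u},\,y\mathbf{v}\}$ and $\mathbf{u}\perp\mathbf{v}$, we may write
\[
\mathbf{w}_{y,r}^{(t)} \;=\; \mathbf{w}_{y,r}^{(0)} \;+\; \alpha_r^{(t)}\cdot \frac{y\mathbf{u}}{\|\mathbf{u}\|_2^2} \;+\; \beta_r^{(t)}\cdot \frac{y\mathbf{v}}{\|\mathbf{v}\|_2^2},
\]
so that the desired conclusion reduces to a lower bound on $\beta_r^{(t)}$ for a constant fraction of $r\in[m]$. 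As a preparatory bootstrap, I would show that throughout the oscillation phase (i) the strong-signal activations $\langle\mathbf{w}_{y,r}^{(t)},y\mathbf{u}\rangle$ are trapped in a $\Theta(1)$ band -- obtained by viewing the $\alpha$-update as an overshooting step on a scalar quadratic with effective step size $\eta\|\mathbf{u}\|_2^2/m>1/4$ -- and (ii) $|\beta_r^{(t)}|$ remains small compared to $\alpha_r^{(t)}$, so that the CNN output is dominated by the strong-signal sum and inherits the oscillation magnitude $\delta$.

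The central technical step is the following \emph{linear accumulation lemma}:
\[
\sum_{s=0}^{T-1}\bigl(1 - y f(\mathbf{x};\mathbf{W}^{(s)})\bigr) \;\geq\; c\,\delta\,T \;-\; C
\]
for constants $c,C>0$ and all $T$ within the oscillation window. Telescoping the $\alpha$-update over an interval $[t_0,t_1]$ and using its boundedness from the preparatory step yields $\sum_{s=t_0}^{t_1-1}(1-yf(\mathbf{x};\mathbf{W}^{(s)}))\cdot\langle\mathbf{w}_{y,r}^{(s)},y\mathbf{u}\rangle = O(1)$ for each active neuron $r$. Partitioning the iterations into $\mathcal{S}^+=\{s:yf>1\}$ and $\mathcal{S}^-=\{s:yf<1\}$ as in \eqref{eq: intuition 2}, and using that while $\beta$ is small the strong-signal activation $\langle\mathbf{w}_{y,r}^{(s)},y\mathbf{u}\rangle$ is monotone in $yf(\mathbf{x};\mathbf{W}^{(s)})-1$ (since $f$ is dominated by the strong-signal sum), the two weighted sums are approximately equal; since the oscillation magnitude is at least $\delta$ on both sides, unweighting them forces the $(1-yf)$ on $\mathcal{S}^-$ to dominate the $(yf-1)$ on $\mathcal{S}^+$ by a net $\Omega(\delta T)$, giving the lemma.

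To convert this accumulation into growth of $\beta$, note that the update for $\beta$ reads
\[
\beta_r^{(t+1)} - \beta_r^{(t)} \;=\; \frac{\eta\|\mathbf{v}\|_2^2}{m}\bigl(1 - y f(\mathbf{x};\mathbf{W}^{(t)})\bigr)\cdot\sigma'\!\bigl(\langle\mathbf{w}_{y,r}^{(t)},y\mathbf{v}\rangle\bigr).
\]
For the constant fraction of neurons with $\langle\mathbf{w}_{y,r}^{(0)},y\mathbf{v}\rangle>0$ at initialization (guaranteed by Gaussian symmetry), $\sigma'(\cdot)>0$ throughout as long as $\beta_r^{(t)}$ has not dropped below the negative of the initial inner product, and the small decreases that occur on $\mathcal{S}^+$ are dominated by the increases on $\mathcal{S}^-$ via the same balancing argument as in the lemma. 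Summing across $t$ and invoking linear accumulation gives $\beta_r^{(T)} = \Omega(\eta\|\mathbf{v}\|_2^2\delta T/m)$ for these good neurons, which reaches a constant level within $T_{\max}\in\mathrm{poly}(d,m,\eta^{-1},\delta^{-1},\|\mathbf{u}\|_2^{-1},\|\mathbf{v}\|_2^{-1})$ steps. Averaging the $\mathrm{ReLU}^2$ activations over the good fraction of neurons then yields $\frac{1}{m}\sum_r\sigma(\langle\mathbf{w}_{y,r}^{(t)},y\mathbf{v}\rangle)\geq\delta$ at $t^\star$.

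The hardest part is closing the near-cancellation in \eqref{eq: intuition 2}: one needs the correlation between $yf(\mathbf{x};\mathbf{W}^{(s)})-1$ and $\langle\mathbf{w}_{y,r}^{(s)},y\mathbf{u}\rangle$ to hold up to controllable error throughout the oscillation window, which requires the weak-signal contribution to remain strictly subdominant until $T_{\max}$. The preparatory boundedness induction and the accumulation lemma must therefore be closed \emph{simultaneously}, via a coupled invariant tracking both $\alpha$-boundedness and $\beta$-smallness, and this invariant must be strong enough to survive until $\beta$ reaches the constant threshold at which it can no longer be treated as a perturbation. A secondary but related subtlety is establishing that the weak-signal aggregate $\frac{1}{m}\sum_r\sigma(\langle\mathbf{w}_{y,r}^{(t)},y\mathbf{v}\rangle)$ persists above $\delta$ for \emph{all} $t\geq t^\star$ rather than only at the first crossing; this can be handled by an amortized reprise of the accumulation argument once $\beta$ is at constant scale, combined with the monotonicity of $\sigma$ on the good neurons.
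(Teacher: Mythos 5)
Your overall architecture matches the paper's: a boundedness-and-sign-stability bootstrap (Lemma~\ref{lem: boundedness oscillating one data}), a linear accumulation lemma for $\sum_s (1-yf)$ driven by the correlation between $yf-1$ and the strong-signal activation (Lemma~\ref{lem: linear increment one data}), and a persistence step for $t\geq t^\star$ (Proposition~\ref{prop: doesn't decrease}). However, there is a genuine error in the step that converts linear accumulation into growth of the weak-signal coordinate, and it would break the proof if carried out as written.

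You write the $\beta$-update and then claim that ``summing across $t$ and invoking linear accumulation gives $\beta_r^{(T)}=\Omega(\eta\|\mathbf{v}\|_2^2\delta T/m)$.'' This treats $\sigma'(\langle\mathbf{w}_{y,r}^{(t)},y\mathbf{v}\rangle)$ as if it were of constant order, but the activation is $\sigma(z)=(\max\{z,0\})^2$, so $\sigma'(z)=2\max\{z,0\}$ is \emph{proportional to the current value of the coordinate}. The per-step increment is therefore $\beta_r^{(t+1)}-\beta_r^{(t)}=\Theta\bigl(\tfrac{\eta\|\mathbf{v}\|_2^2}{m}(1-yf)\cdot\beta_r^{(t)}\bigr)$, i.e.\ the update is \emph{multiplicative}, not additive. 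At initialization $\beta_r^{(0)}=\widetilde\Theta(\sigma_0\|\mathbf{v}\|_2)$ is tiny, so the early additive increments are tiny as well, and the linear-in-$T$ bound you claim does not hold. The correct conclusion is exponential growth: $\dotp{\wb_{y,r}^{(t_1)}}{y\vb}\geq\dotp{\wb_{y,r}^{(t_0)}}{y\vb}\cdot\exp\{\Omega(\alpha\tilde\eta\cdot\sum_s(1-yf_s))\}$ with $\alpha=\|\mathbf{v}\|_2^2/\|\mathbf{u}\|_2^2$ small, obtained by taking logarithms of the product of factors $(1+\alpha\tilde\eta(1-yf_s))$ and invoking the accumulated lower bound; this is precisely \eqref{eq: exp rate increase}. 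The resulting $T_{\max}$ picks up a $\log(m\delta\,\sigma_0^{-1}\|\mathbf{v}\|_2^{-1})$ factor from the initialization scale, which your additive estimate misses. Relatedly, your remark that ``the small decreases on $\mathcal{S}^+$ are dominated by the increases on $\mathcal{S}^-$ via the same balancing argument as in the lemma'' is not right: the balancing argument establishes a correlation between $yf-1$ and the \emph{strong}-signal activation $\langle\wb_{y,r}^{(s)},y\ub\rangle$, and there is no such dominant-coordinate correlation for the weak-signal activation (which is subdominant by design). The multiplicative structure of the $\beta$-update is what lets you bypass this: you feed the accumulated $\sum_s(1-yf_s)$ bound directly into the exponent rather than re-running a balancing argument on $\beta$ itself. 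With that repair, plus spelling out the coupled invariant you flag at the end, the proposal would align with the paper's proof.
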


% \begin{proof}[Proof of Theorem \ref{thm: one data} and Corollary \ref{cor: one data}]

% \end{proof}
Please refer to Appendix~\ref{sec: one data case proof} for formal and detailed statement of Theorem \ref{thm: one data} and its proofs.
Theorem~\ref{thm: one data} shows that via oscillating SGD training, the CNN learns the weak signal $y\cdot\mathbf{v}$ up to a constant scale of $\delta$, which is typically much larger than the scale of its initialization, since
\begin{align}\label{eq: illustration delta}
    \frac{1}{m}\sum_{r\in[m]}\sigma(\langle\mathbf{w}_{y,r}^{(0)},y\mathbf{v}\rangle)\leq \widetilde{\mathcal{O}}(\sigma_0^2\|\mathbf{v}\|_2^2)\ll  \delta \leq  \frac{1}{m}\sum_{r\in[m]}\sigma(\langle \mathbf{w}_{y,r}^{(t^{\star})},y\mathbf{v}\rangle),
\end{align}
whenever the initialization of the CNN is small as in practice.
We remark that here we only consider neurons with $j=y$ since the CNN is trained only on a single data with label $y$.
% Finally, the following corollary shows that if we only test the CNN by data points with label $y$, it makes the correct classification.

% \begin{corollary}[Generalization: single training data]\label{cor: one data}
%     Under the same conditions and assumptions as Theorem \ref{thm: one data}, then with high probability, for a testing data point $(\mathbf{x}^{\diamond}, y^{\diamond})$ which has the same label as the single training data point, i.e., $y^{\diamond} = y$, it holds that $y^{\diamond}f(\mathbf{x}^{\diamond}; \mathbf{W}^{(t)}) > 0$ for any $t\geq t^{\star}$ with $t^{\star}$ given by Theorem~\ref{thm: one data}.
%     Here the strong signal patch $y^{\diamond}\cdot\mathbf{u}$ of the testing data $\mathbf{x}^{\diamond}$ is substituted by a Gaussian noise $\boldsymbol{\xi}$ with probability $\rho$, where $\boldsymbol{\xi}\sim N(0,\sigma_p^2\cdot(\Ib_d-\vb\vb^\top/\|\vb\|_2^2 - \ub\ub^\top/\|\ub\|_2^2))$.
% \end{corollary}

% Please also refer to Appendix~\ref{sec: one data case proof} for more details and the proof of Corollary~\ref{cor: one data}.

\paragraph{Implications to the simple signal-noise model.} 
Our findings can also be adapted to explain the data model considered by \cite{cao2022benign}. 
In that case, each data point $\mathbf{x} = (y\cdot\boldsymbol{\mu},\boldsymbol{\xi})$ consists of a single signal patch $y\cdot \boldsymbol{\mu}$ and a single Gaussian noise patch $\boldsymbol{\xi}$. 
A trained neural network can generalize to new data points only when it learns the common signal vector $\boldsymbol{\mu}$.

As is shown by \cite{cao2022benign}, in small learning rate training regime, if the data model has a low \emph{signal-to-noise ratio} (SNR), that is, the strength of the noise patch is relatively stronger than the the strength of the signal patch, then overfitting the training data would result in poor generalization (harmful overfitting).
That is because the neural network would memorize the noise patch quickly so as to fit the training data, and consequently the signal patch is not well learned. 
In contrast, we could show that under the oscillating SGD training regime, the signal can also be well learned even with a low SNR.
The mechanism behind this is still that the oscillation during training would accumulate and incentivize the NN towards signal learning.

\section{Main Theory}\label{sec: main theory}

\newcommand{\ft}{\tilde{f}}

In this section, we present our main theory on benign oscillation of SGD with large learning rates based on the general setups introduced in Section \ref{sec: problem setup}.
We first introduce the key conditions and assumptions required by our theory in Section~\ref{subsec: keu conditions and assumptions}.
Then we present our theoretical results in Section~\ref{subsec: main results}, where we also compare large learning rate oscillating training to small learning rate training.  
Finally in Section~\ref{sec: experiments} we demonstrate our theoretical findings via numerical experiments.

% To begin with, let's introduce several notations that help easing the symbolic burden. These notations do not change the essence of our proof, and are provided for readability. Recall that $i_t = 1 + \min \{i\in [n]: i \equiv t \mod n \}$. Let $\ft_t \coloneqq y_{i_t} f(\xb_{i_t};)$. For $s \in \NN $ and $j\in \{\pm 1\}$, we let $t_j(0)\coloneqq \min \{t:y_{i_t} = j\}$ and then recursively define $t_j(s)\coloneqq \min \{t:t>t_j(s-1), \;  y_{i_t} = j\}\; s\ge 1$  

\subsection{Key Conditions and Assumptions}\label{subsec: keu conditions and assumptions}

Before presenting our theoretical results, we outline the key conditions and assumptions needed on the model and the training dynamics. 
Firstly, our results are based upon the following conditions on the initialization scale $\sigma_0$, dimension $d$, datasize $n$, CNN width $m$, and signal and noise strength $\|\mathbf{u}\|_2$, $\|\mathbf{v}\|_2$,  and $\sigma_p$.

\begin{assumption}[Conditions on hyperparameters]\label{ass: conditions}
    Suppose that the following conditions hold: (i) the CNN weight initialization scale $\sigma_0 = \widetilde{\Theta}(\max\{\|\mathbf{u}\|_2,\|\mathbf{v}\|_2,\sigma_p\sqrt{d}\}^{-1}\cdot d^{-1/2})$; (ii) the dimension $d = \Omega(n^2,\mathrm{polylog}(m))$; (iii) the signal strength $\|\mathbf{v}\|_2\leq 0.01\cdot\|\mathbf{u}\|_2$, $\|\mathbf{u}\|_2^{-2} + \|\mathbf{v}\|_2^{-2}\leq \widetilde{\mathcal{O}}(n(\sigma_p^2d)^{-1})$.
    (iv) the learning rate $m/(4\|\mathbf{u}\|_2^2)\leq \eta \leq 2m/(5\|\mathbf{u}\|_2^2)$.
    (v) the weak data fraction $\rho \leq c$ for some small constant $c$.
\end{assumption}

We explain these conditions in Assumption~\ref{ass: conditions} one by one.
The conditions on the initialization scale $\sigma_0$ and the learning rate $\eta$ are to ensure that the whole training process is well bounded while oscillates (rather than converging smoothly).
Then the condition on the dimension $d$ puts us in the regime of high dimension for which independent Gaussian noise has small correlations. 
The conditions on the signal strength separate the strong signal $\mathbf{u}$ from the weak signal $\mathbf{v}$ by $\ell_2$-norms. 
Besides, we ensure that the data are not too noisy via restricting the variance of the Gaussian noise.
Finally, the condition on the fraction $\rho$ of weak data is for technical considerations in analyzing the training process, and \emph{can be relaxed for testing data population}.

% We define the \emph{correct index set for strong (weak) signal patch},
% \begin{align}\label{eq: index set signal}
%     \mathcal{B}_{j,+}^{(t)}:= \left\{r\in[m]: \langle \mathbf{w}_{j,r}^{(t)}, j\mathbf{u}\rangle \geq 0\right\},\quad \mathcal{C}_{j,+}^{(t)}:= \left\{r\in[m]: \langle \mathbf{w}_{j,r}^{(t)}, j\mathbf{v}\rangle \geq 0\right\},
% \end{align} 
% and the \emph{correct index set for noise patch},
% \begin{align}\label{eq: index set noise}
%     \mathcal{D}_{j,i,+}^{(t)} := \left\{r\in[m]: \langle \mathbf{w}_{j,r}^{(t)}, \boldsymbol{\xi}_i\rangle \geq 0\right\},
% \end{align}
The next assumption is on the training dynamics, which requires that the SGD oscillates.
For simplicity, we denote the index set of the weak training data points lacking the strong feature patch as $\mathcal{W}$.

\begin{assumption}[Oscillating SGD]\label{ass: oscillation}
    We assume that there exists some constant $\delta \in (0.2, 0.8)$, such that $|y_{i_t} f(\mathbf{x}_{i_t} ; \mathbf{W}^{(t)})- 1| \geq \delta$ holds for any $t\geq 0$ such that $i_t\notin\mathcal{W}$.
\end{assumption}
% \textcolor{red}{can we only assume the oscillating for the data and prove that it must come from $\ub$?}

Through Assumption \ref{ass: oscillation}, we require that the value of $yf(\mathbf{x};\mathbf{W}^{(t)})$ on data points with strong features oscillates around the desired value, $1$, by a scale of $\delta \in(0.2,0.8)$, i.e., the magnitude of the
oscillation is at least $\delta$.
Here the range for $\delta$ is only for technical considerations to simplify the theoretical analyses.
 
% This can characterize the situations when the loss function $L(\mathbf{W}^{(t)})$ does not converge, but the model could still make correct classification of the training data point, i.e., $yf(\mathbf{x};\mathbf{W}^{(t)})>0$.
% Please see Figure () for an illustration of the training dynamics that Assumption \ref{ass: oscillation} depicts.
% We note that we require that the oscillation magnitude $\delta > \widetilde{\Omega}(d^{-1/2})$ to ensure that the scale of the oscillations is larger than the initialization of the neural network.

We remark that in general the dynamics of the training process could be quite subtle when oscillation happens, and there exist other more complicated patterns of oscillations if one deliberately chooses a specific learning rate $\eta$. 
Our work focuses on a relatively simple but common pattern of oscillation.
It turns out that under the oscillation pattern in Assumption \ref{ass: oscillation}, we can show the benefits of oscillation on the generalization properties of the CNN.
Actually, we could also extend our analysis to a weakened version of Assumption~\ref{ass: oscillation} that the time average of $|y_{i_t} f(\mathbf{x}_{i_t}; \mathbf{W}^{(t)})- 1|$ is larger than $\delta$.

It is notable that Assumption~\ref{ass: oscillation} implicitly requires that the learning rate $\eta$ should be chosen properly. A large learning rate forces the training trajectories to escape from the regular region, while a small learning rate shall result in smooth convergence. In both cases the phenomenon described in Assumption~\ref{ass: oscillation} does not happen. We also remind readers that the $\eta$ condition in Assumption~\ref{ass: conditions} is only sufficient for the regularities such as boundedness and sign stability. Readers can refer to Appendix~\ref{subsec discuss} for a discussion of the necessary conditions on the learning rate $\eta$ implied by Assumption~\ref{ass: oscillation}.

Finally, we remark that we only assume the oscillation on strong data, since intuitively on weak data the CNN fits the label via the weak features and the noise (both have smaller strength than strong features) and may converge slower and more smoothly.  
Please see Section~\ref{sec: experiments} for experimental evidence.

\subsection{Main Theoretical Results}\label{subsec: main results}

Our main results are parallel to the single data setup in Section~\ref{sec: one data}: under previous conditions and assumptions on the hyperparameters and the training dynamics, the CNN can make enough progress in learning the weak signal $\mathbf{v}$ thanks to the oscillation happening during training. 
% which we further show that it is enough for the CNN to generalize to all new testing data points.
We refer to Appendix~\ref{sec: multiple data case proof} for a detailed proof of the following results.

\begin{theorem}[Weak signal learning: oscillating training with large learning rate]\label{thm: main}
    Under Assumptions \ref{ass: conditions} and \ref{ass: oscillation}, w.p. at least $1-1/\mathrm{poly}(d)$, there exists $t^{\star} \leq \mathrm{poly}(d,m,n,\delta^{-1},\eta^{-1},\sigma_p^{-1}, \|\mathbf{u}\|_2^{-1},\|\mathbf{v}\|_2^{-1})$ such that
    \begin{align}
       \max_{j\in\{\pm 1\}} \left\{ \frac{1}{m}\sum_{r\in[m]}\sigma(\langle\mathbf{w}_{j,r}^{(t^{\star})},j\mathbf{v}\rangle) - \frac{1}{m}\sum_{r\in[m]}\sigma(\langle\mathbf{w}_{-j,r}^{(t^{\star})},j\mathbf{v}\rangle) \right\}\geq \frac{\delta}{4}.
    \end{align}
\end{theorem}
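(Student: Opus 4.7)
The plan is to extend the single-data argument from Section~\ref{sec: one data} to the multi-data setting by tracking a signal-noise decomposition of the weights throughout training. Specifically, I would write
\begin{align*}
    \mathbf{w}_{j,r}^{(t)} = \mathbf{w}_{j,r}^{(0)} + \alpha_{j,r}^{(t)}\cdot j\mathbf{u} + \beta_{j,r}^{(t)}\cdot j\mathbf{v} + \sum_{i\in[n]} \gamma_{j,r,i}^{(t)}\cdot \boldsymbol{\xi}_i + \sum_{i\in \mathcal{W}} \widetilde{\gamma}_{j,r,i}^{(t)}\cdot \widetilde{\boldsymbol{\xi}}_i,
\end{align*}
and derive per-step update rules for the coefficients $(\alpha,\beta,\gamma,\widetilde{\gamma})$ from the SGD recursion~\eqref{eq: two way multiple data sgd}. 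The high-dimensional conditions in Assumption~\ref{ass: conditions}(ii) give, via standard Gaussian concentration, near-orthogonality of the noise patches $\boldsymbol{\xi}_i,\widetilde{\boldsymbol{\xi}}_i$ among themselves and with $\mathbf{u},\mathbf{v}$, so the coefficients can be read off from the inner products $\langle\mathbf{w}_{j,r}^{(t)},j\mathbf{u}\rangle$, $\langle\mathbf{w}_{j,r}^{(t)},j\mathbf{v}\rangle$, $\langle\mathbf{w}_{j,r}^{(t)},\boldsymbol{\xi}_i\rangle$, $\langle\mathbf{w}_{j,r}^{(t)},\widetilde{\boldsymbol{\xi}}_i\rangle$ up to negligible cross-terms.

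Next I would establish uniform boundedness and sign stability. Using Assumption~\ref{ass: conditions}(iv) together with Assumption~\ref{ass: oscillation}, I would prove by induction over epochs that $|\alpha_{j,r}^{(t)}|\|\mathbf{u}\|_2^2 = \Theta(1)$ for $j=y_{i_t}$ when $i_t\notin\mathcal{W}$ (the strong-signal component is both driven upward by the label matching and kept in check by the oscillation), while noise memorization $|\gamma_{j,r,i}^{(t)}|\sigma_p^2 d$ and the weak-data inner products $|\widetilde{\gamma}_{j,r,i}^{(t)}|\sigma_p^2 d$ remain $\widetilde{o}(1)$ due to the small SNR penalty absorbed by Assumption~\ref{ass: conditions}(iii). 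With these bounds, the asymmetry argument of~\eqref{eq: intuition 1}--\eqref{eq: intuition 4} carries over epoch by epoch: restricting attention to steps where $i_t\notin\mathcal{W}$ and $y_{i_t}=j$, the oscillation assumption forces
\begin{align*}
    \sum_{s\in[t_0,t_1]:\ i_s\notin\mathcal{W},\ y_{i_s}=j}\big(1 - y_{i_s}f(\mathbf{x}_{i_s};\mathbf{W}^{(s)})\big)\ \gtrsim\ \delta\cdot (t_1-t_0).
\end{align*}

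Then I would convert this accumulation into weak-signal learning. Expanding the update of $\langle\mathbf{w}_{j,r}^{(t)},j\mathbf{v}\rangle$ from~\eqref{eq: two way multiple data sgd} and using that $\sigma'(\langle\mathbf{w}_{j,r}^{(t)},j\mathbf{v}\rangle)\ge \sigma'(\langle\mathbf{w}_{j,r}^{(0)},j\mathbf{v}\rangle)>0$ on the subset of ``lucky'' neurons (a constant fraction of $r\in[m]$ by a standard half-space argument on the Gaussian initialization), each strong-data step with $y_{i_t}=j$ increases $\beta_{j,r}^{(t)}\|\mathbf{v}\|_2^2$ by a positive amount proportional to $(1-y_{i_t}f(\mathbf{x}_{i_t};\mathbf{W}^{(t)}))$. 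The linear accumulation then implies $\beta_{j,r}^{(t)}\|\mathbf{v}\|_2^2$ eventually exceeds a constant of size $\Theta(\sqrt{\delta})$, so that $\sigma(\langle\mathbf{w}_{j,r}^{(t)},j\mathbf{v}\rangle)\ge \delta/2$ for a constant fraction of neurons with the correct sign, while for the opposite-label neurons $\sigma(\langle\mathbf{w}_{-j,r}^{(t)},j\mathbf{v}\rangle)$ stays $\widetilde{O}(\sigma_0^2\|\mathbf{v}\|_2^2)\ll \delta$ because those neurons never receive a positive push along $j\mathbf{v}$. Averaging over $r$ and taking the maximum over $j\in\{\pm 1\}$ yields the claimed gap of at least $\delta/4$.

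The main obstacle will be to justify the asymmetry argument uniformly across the multi-data updates: within a single epoch the SGD touches both $y=+1$ and $y=-1$ examples, as well as weak examples for which Assumption~\ref{ass: oscillation} provides no direct control. I expect the hardest part to be ruling out two interferences. First, weak-data steps ($i_t\in\mathcal{W}$) could in principle drive $\beta_{j,r}^{(t)}$ in either direction; the remedy is to use the smallness of $\rho$ in Assumption~\ref{ass: conditions}(v) and the fact that on weak data the loss signal $(f(\mathbf{x}_{i_t};\mathbf{W}^{(t)})-y_{i_t})$ is itself small (no strong-signal contribution), so their cumulative contribution is dominated by the $\Omega(\delta\cdot T)$ accumulation from strong-data steps. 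Second, the noise-memorization coefficients $\gamma,\widetilde{\gamma}$ are updated at every step with a sign that depends on the loss; controlling their growth requires a careful per-sample argument showing that $|\langle\mathbf{w}_{j,r}^{(t)},\boldsymbol{\xi}_i\rangle|$ remains $o(1)$ throughout, which in turn relies on the high-dimensional condition $d=\Omega(n^2)$ together with the boundedness of $\alpha_{j,r}^{(t)}$ established earlier. Once these two issues are settled inductively, the conclusion follows by the same accumulation-of-oscillation mechanism as in Theorem~\ref{thm: one data}.
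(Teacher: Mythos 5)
Your high-level plan --- signal-noise decomposition, boundedness and sign stability by induction, the oscillation-accumulation argument restricted to same-label strong-data steps, separate control of noise memorization, and the race between weak-signal learning and noise memorization --- is exactly the structure of the paper's proof in Appendix~\ref{sec: multiple data case proof}. Two of your supporting justifications, however, would fail if pressed, and the paper handles both differently.

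First, the claim that opposite-label neurons $\langle\wb_{-j,r}^{(t)},j\vb\rangle$ stay small because ``those neurons never receive a positive push along $j\mathbf{v}$'' is false at the per-step level: a same-label step with $y_{i_t}=j$ changes $\langle\wb_{-j,r}^{(t)},j\vb\rangle$ by an amount proportional to $-(1-y_{i_t}f_t)\,\sigma'\bigl(\langle\wb_{-j,r}^{(t)},j\vb\rangle\bigr)$, so every overshoot step with $y_{i_t}f_t>1$ genuinely \emph{increases} these activations. What actually keeps them bounded is the cumulative-sign property $\sum_s (1-y_{i_s}f_s)\ge 0$ along the trajectory (Inequality~\eqref{eq: sum 1-yf over all data non negative} inside the proof of Lemma~\ref{lem: boundedness}), which is fed into an exponential-product bound as in Inequality~\eqref{eq: S_1 neg wv lower bound}; that cumulative property is actually the same one powering your accumulation argument, so the ingredient is at hand, but you cannot argue it by monotonicity of individual steps. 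Second, the claim that on weak data ``the loss signal $(f-y)$ is itself small (no strong-signal contribution)'' is backwards: precisely because the strong signal is absent, $|y f|$ on a weak data point is at most of order $\delta$, so $|1-yf|\approx 1$ rather than small. The paper bounds the weak-data interference using only $|1-y_{i_s}f_s|\le 2$ (Lemma~\ref{lem: boundedness}, part~4) together with the scarcity $|\cW|\lesssim \rho n$ (see the step from~\eqref{eq: proof linear accumulation 6} to~\eqref{eq: proof linear accumulation 7}), so only the small-$\rho$ half of your remedy is sound. Finally, ``noise stays $o(1)$ throughout'' is not quite the form you need: what closes the argument is a quantitative race between the weak-signal stopping time $T^j_{(\vb)}$ and the noise stopping time $T_{(\boldsymbol{\xi})}$, via a comparison $T_{j,0} < K\widetilde{T}_{(\boldsymbol{\xi})}$ under Assumption~\ref{ass: conditions}(iii); this contradiction structure (bound $T_{\max}^j$, then rule out $T_{(\boldsymbol{\xi})} < T^j_{(\vb)}$) needs to be made explicit for the proof of Theorem~\ref{thm: main} to go through.
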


\begin{remark}[Interpretations of Theorem~\ref{thm: main}]
    This theorem concludes that by large learning rate oscillating SGD training, the CNN learns the weak signal up to the scale of $\delta$ at certain time $t^{\star}$ and $j\in\{\pm 1\}$. 
    We can refine this result to any step after $t\geq t^{\star}$ and each $j\in\{\pm 1\}$ with some more intricate analysis regarding the signal learning dynamics, which we have done on the single training data case where the result holds for each step $t\geq t^{\star}$ (see Theorem~\ref{thm: one data}).
    Despite that, Theorem~\ref{thm: main} has already shown the power of large learning rate training for learning the weak signal in the multiple data setup, since as is shown in the following, the weak signal learning keeps at the same scale as its initialization under the small learning rate training regime.
\end{remark}

% \begin{corollary}[Generalization: oscillating training with large learning rate]
%     Under the same assumptions as in Theorem \ref{thm: main}, w.p. at least $1-1/\mathrm{poly}(d)$, for any $t\geq t^{\star}$ with $t^{\star}$ identified in Theorem~\ref{thm: main}, 
%     \begin{align}
%         y^{\diamond}\cdot f(\mathbf{x}^{\diamond};\mathbf{W}^{(t)}) > 0.
%     \end{align}
% \end{corollary}

% \textcolor{red}{\bf TODO: ADD SIMPLE CONCLUSIONS.}

\paragraph{Small learning rate training regime.}
In contrast, under the small learning rate regime, the CNN would not learn the weak features, which is the following proposition with more details and proofs in Appendix~\ref{sec:multiple-data small lr}.

\begin{proposition}[Small learning rate training]\label{prop:multiple-data small-lr}
    Under Assumption \ref{cond:model_params_multiple_data small-lr} on $(d,m,n,\sigma_0,\norm{\ub}_2,\norm{\vb}_2,\sigma_p)$, if we choose learning rate $\eta\le m/(6\|\ub\|^2_2)$ small enough, then w.p. at least $1-1/\mathrm{poly}(d)$, the training loss can smoothly converge, during which
    \begin{align}
        \max_{j\in\{\pm 1\},r\in[m]} \left\{|\langle\mathbf{w}_{j,r}^{(t)},j\mathbf{v}\rangle|\right\}\le \widetilde{\cO}\left(\sigma_0\|\mathbf{v}\|_2\right).
    \end{align}
\end{proposition}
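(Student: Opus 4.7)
The plan is to carry out the same signal--noise decomposition strategy used for the large-learning-rate analysis, but in the small-learning-rate regime. Writing
\begin{equation*}
\mathbf{w}_{j,r}^{(t)} = \mathbf{w}_{j,r}^{(0)} + j\gamma_{j,r}^{(t)}\frac{\mathbf{u}}{\|\mathbf{u}\|_2^2} + j\tilde{\gamma}_{j,r}^{(t)}\frac{\mathbf{v}}{\|\mathbf{v}\|_2^2} + \sum_{i\in[n]}\rho_{j,r,i}^{(t)}\frac{\boldsymbol{\xi}_i}{\|\boldsymbol{\xi}_i\|_2^2} + \sum_{i\in\mathcal{W}}\tilde{\rho}_{j,r,i}^{(t)}\frac{\tilde{\boldsymbol{\xi}}_i}{\|\tilde{\boldsymbol{\xi}}_i\|_2^2},
\end{equation*}
I would derive coefficient recursions from \eqref{eq: two way multiple data sgd} by taking inner products of the SGD update against $\mathbf{u},\mathbf{v},\boldsymbol{\xi}_i,\tilde{\boldsymbol{\xi}}_i$. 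The baseline initialization bounds $|\langle \mathbf{w}_{j,r}^{(0)},\mathbf{u}\rangle|=\widetilde{\mathcal{O}}(\sigma_0\|\mathbf{u}\|_2)$, $|\langle \mathbf{w}_{j,r}^{(0)},\mathbf{v}\rangle|=\widetilde{\mathcal{O}}(\sigma_0\|\mathbf{v}\|_2)$, $|\langle \mathbf{w}_{j,r}^{(0)},\boldsymbol{\xi}_i\rangle|=\widetilde{\mathcal{O}}(\sigma_0\sigma_p\sqrt d)$ together with the noise near-orthogonality estimates $\|\boldsymbol{\xi}_i\|_2^2=\widetilde{\Theta}(\sigma_p^2 d)$ and $|\langle \boldsymbol{\xi}_i,\boldsymbol{\xi}_{i'}\rangle|\le \widetilde{\mathcal{O}}(\sigma_p^2\sqrt d)$ all follow from standard Gaussian concentration under the dimension condition in Assumption~\ref{cond:model_params_multiple_data small-lr}.

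The heart of the proof is a joint induction over the SGD trajectory on three coupled hypotheses: (a) $|\gamma_{j,r}^{(t)}|$ and the noise coefficients $|\rho_{j,r,i}^{(t)}|,|\tilde{\rho}_{j,r,i}^{(t)}|$ remain bounded by a universal constant, so that the network output $f(\mathbf{x}_{i_t};\mathbf{W}^{(t)})$ and hence $|1-y_{i_t}f^{(t)}|$ stay uniformly bounded; (b) the sign of $\langle \mathbf{w}_{j,r}^{(t)},j\mathbf{v}\rangle$ is preserved throughout training, so the $\sigma'$ pattern on the $\mathbf{v}$-patch matches its initialization; and (c) $|\tilde{\gamma}_{j,r}^{(t)}|=\widetilde{\mathcal{O}}(\sigma_0\|\mathbf{v}\|_2)$. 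Under (a), the choice $\eta\le m/(6\|\mathbf{u}\|_2^2)$ makes the per-step Hessian estimate along the trajectory small enough to yield a descent inequality $\ell^{(t)}-\ell^{(t+1)}\gtrsim \eta\,\|\nabla\ell^{(t)}\|_2^2$, which simultaneously produces the smooth convergence conclusion of the proposition and the summable control $\sum_t\|\nabla\ell^{(t)}\|_2^2=\mathcal{O}(1/\eta)$, hence a polynomial bound on $\sum_t|1-y_{i_t}f^{(t)}|$. Using $\sigma'(z)=2z\,\mathbf{1}\{z>0\}$, the recursion for the weak-signal coefficient becomes essentially multiplicative,
\begin{equation*}
\tilde{\gamma}_{j,r}^{(t+1)} \approx \tilde{\gamma}_{j,r}^{(t)}\Big(1+\tfrac{2\eta\|\mathbf{v}\|_2^2}{m}\,(1-y_{i_t}f^{(t)})\,\mathbf{1}\{\cdot\}\Big),
\end{equation*}
whose per-step factor is bounded by $\mathcal{O}(\|\mathbf{v}\|_2^2/\|\mathbf{u}\|_2^2)\le \mathcal{O}(10^{-4})$ times $|1-y_{i_t}f^{(t)}|$. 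Combining this tiny coefficient with the convergence-rate bound on $\sum_t|1-y_{i_t}f^{(t)}|$ keeps the cumulative multiplicative growth $\exp\bigl(\tfrac{2\eta\|\mathbf{v}\|_2^2}{m}\sum_t|1-y_{i_t}f^{(t)}|\bigr)$ at most polylogarithmic, so $|\tilde{\gamma}_{j,r}^{(t)}|=\widetilde{\mathcal{O}}(\sigma_0\|\mathbf{v}\|_2)$ is preserved and closes (c).

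The main obstacle is verifying the uniform boundedness hypothesis (a), since, unlike the weak-signal case, the per-step growth factors for $\gamma$ and $\rho$ are $\Theta(\eta\|\mathbf{u}\|_2^2/m)$ and $\Theta(\eta\sigma_p^2 d/m)$, both of order $1$ under Assumption~\ref{cond:model_params_multiple_data small-lr}; one cannot rely on an a priori small multiplier. The only mechanism preventing blowup is the self-limiting structure of the squared loss: as $\gamma_{j,r}^{(t)}$ and $\rho_{j,r,i}^{(t)}$ grow, $f(\mathbf{x}_{i_t};\mathbf{W}^{(t)})$ approaches $y_{i_t}$ and the driving factor $|f-y|$ collapses. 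Making this rigorous requires tracking the $\sigma'$ activation patterns of every neuron on every data point along with the near-orthogonality of the noise patches (so that SGD updates at step $t$ perturb activations at other samples by negligible amounts), and iterating these estimates alongside the descent inequality. Once (a) is secured, (b) is immediate from activation-sign stability under bounded updates, and (c) follows from the multiplicative-growth calculation above, completing the proof.
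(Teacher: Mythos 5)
Your decomposition and the joint-induction structure are in the same spirit as the paper's, and you correctly identify the central tension: the weak-signal coefficient grows with the small factor $\Theta(\eta\|\vb\|_2^2/m)$, while the strong-signal and noise coefficients grow with $\Theta(1)$ factors that must somehow self-limit. However, there are two concrete gaps.

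First, your convergence mechanism is not right as stated. You claim a descent inequality $\ell^{(t)}-\ell^{(t+1)}\gtrsim \eta\|\nabla\ell^{(t)}\|_2^2$ to obtain summability of the residuals, but it is ambiguous whether $\ell^{(t)}$ is the per-sample loss (in which case the inequality compares losses on different data points and says nothing about descent) or the full empirical loss $L(\Wb^{(t)})$ (in which case a single multi-pass SGD step does not descend it, since the step direction is $-\eta\nabla_{\Wb}\ell(f(\xb_{i_t};\Wb^{(t)}),y_{i_t})$, not $-\eta\nabla L$). Moreover, $\sum_t\|\nabla\ell^{(t)}\|_2^2<\infty$ gives gradient-norm decay, not loss convergence, and extracting $\sum_t|1-y_{i_t}f^{(t)}|$ from $\sum_t\|\nabla\ell^{(t)}\|_2^2$ requires a lower bound on $\|\nabla f^{(t)}\|_2$ that fails at initialization where all inner products are $\widetilde{\cO}(\sigma_0)$. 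The paper avoids this entirely via a reference-point argument (Lemmas~\ref{lemma:multiple-data inner-prod grad reference pt}--\ref{lemma:multiple-data small-lr descent lemma 2}): one fixes an interpolating reference $\Wb^\ast$ aligned with $\pm\ub$ and shows that $\|\Wb^{(t)}-\Wb^\ast\|_F^2$ contracts by at least $2\eta L(\Wb^{(t)})-2\eta\epsilon^2$ on steps with strong data, and increases by at most $\cO(\eta\sigma_0^2(\|\vb\|_2^2+\sigma_p^2 d))$ on weak-data steps, yielding an average-loss bound without ever needing to bound the strong-signal coefficient explicitly. This is the standard tool and is what actually closes your hypothesis (a).

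Second, your route to (c) via summability of $|1-y_{i_t}f^{(t)}|$ is more elaborate than necessary and the claimed polylogarithmic bound on that sum does not actually hold. In the first phase (Lemma~\ref{lemma:multiple-data small-lr 1st-stage}) the residual stays $\Theta(1)$ for $T^\dagger=\widetilde{\Theta}(m/(\eta\|\ub\|_2^2))$ steps, so $\sum_{t<T^\dagger}|1-y_{i_t}f^{(t)}|=\Theta(T^\dagger)$ is polynomial in $1/\eta$, not polylogarithmic; the reason the exponent $\frac{\eta\|\vb\|_2^2}{m}\sum_t|1-y_{i_t}f^{(t)}|$ stays small is that the $\eta$ and $m$ cancel and what remains is $\|\vb\|_2^2/\|\ub\|_2^2$. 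The paper's actual argument is far simpler and skips the summability step entirely: using only the uniform bound $|1-y_{i_t}f^{(t)}|\le 3$, it gets the worst-case multiplicative bound $\Psi^{(t)}\le\Psi^{(0)}\exp(6\eta\|\vb\|_2^2 t/m)$ (Lemma~\ref{lemma:multiple-data small-lr 2nd starts}), hence $\Psi^{(t)}$ stays at $\widetilde{\cO}(\sigma_0\|\vb\|_2)$ for all $t\le T^\ddagger=\frac{m}{6\eta\|\vb\|_2^2}\log(\cdot)$, and then separately shows via the reference-point contraction that the total training time $T'$ is strictly less than $T^\ddagger$ under the signal-strength condition $\|\ub\|_2^2/\|\vb\|_2^2=\widetilde{\Omega}(m^4/(n\epsilon)^2)$ in Assumption~\ref{cond:model_params_multiple_data small-lr}. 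If you retain your route, you would need to carry out the Cauchy--Schwarz step $\sum_t|1-y_{i_t}f^{(t)}|\le\sqrt{T\sum_t(1-y_{i_t}f^{(t)})^2}$ and track the $T$-dependence explicitly; the paper's comparison of time scales is cleaner.
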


\paragraph{Division of generalization.} 
Suppose we are given a new testing data point $(\mathbf{x}^{\diamond}, y^{\diamond })$ with an input $\xb^\diamond=(\widetilde{\boldsymbol{\xi}}^\diamond, y^\diamond\vb,\boldsymbol{\xi}^\diamond)$ only consisting of the weak signal $y^\diamond\cdot\vb$. 
Then a reliable prediction can only count on utilizing the weak signal $y^\diamond\cdot\vb$. 
To see this and to be more specific, in the regime specified by Theorem~\ref{thm: main},
\begin{align}
    y^{\diamond}f(\mathbf{x}^{\diamond};\mathbf{W}^{(t^{\star})}) &= \underbrace{\frac{1}{m}\sum_{r\in[m]}\sigma(\langle\mathbf{w}_{y^{\diamond},r}^{(t^{\star})},y^{\diamond}\mathbf{v}\rangle) - \frac{1}{m}\sum_{r\in[m]}\sigma(\langle\mathbf{w}_{-y^{\diamond},r}^{(t^{\star})},y^{\diamond}\mathbf{v}\rangle)}_{\text{Weak signal component } \geq \delta / 4} \\
    &\qquad +\underbrace{\sum_{j\in\{\pm 1\}}\frac{j}{m}\sum_{r\in[m]} \sigma(\langle\mathbf{w}_{j,r}^{(t^{\star})},\boldsymbol{\xi}^{\diamond}\rangle) + \sigma(\langle\mathbf{w}_{j,r}^{(t^{\star})},\widetilde{\boldsymbol{\xi}}^{\diamond}\rangle)}_{\text{Noise component }\leq o(1)},
\end{align}
there holds\footnote{the noise patch term is of order $\widetilde{\mathcal{O}}(\sigma_0^2\sigma_p^2d)$ which is $\widetilde{\mathcal{O}}(d^{-1})$ under Assumption~\ref{ass: conditions}.} $y^{\diamond}\cdot f(\mathbf{x}^{\diamond};\mathbf{W}^{(t^{\star})})\ge\delta/4-o(1)>0$ which corresponds to correct prediction almost certainly. In contrast, when applying the small learning rate, as specified by Proposition~\ref{prop:multiple-data small-lr}, the trained NN fails to take advantage of the weak signal $y^\diamond\cdot\vb$ from data $\xb^{\diamond}$. 
Therefore, it would be likely to make the prediction based on a random guess (the randomness stems from the random initialization and the noise $\boldsymbol{\xi}^\diamond,\widetilde{\boldsymbol{\xi}}^\diamond$). 
Consequently, noting that the weak data takes up a $\rho$ fraction of the data distribution (see our data model in Section \ref{sec: problem setup}), SGD with large learning rates would achieve a $\Theta(\rho)$ higher test accuracy than SGD with small learning rates, demonstrating the benefit of oscillation and large learning training in terms of the generalization ability.

% we only have $|f(\mathbf{x}^{\diamond};\mathbf{W}^{(t)})|\le \widetilde{\cO}(\sigma_0^2)$ leading to randomly guessing $y^\diamond$.

%\input{tex/overview_of_analysis}

% \begin{figure}[!t]
%     \subfigure{
%         \begin{minipage}{0.23\linewidth}
%             \centering
%             \includegraphics[height=0.9\textwidth]{ICLR2024/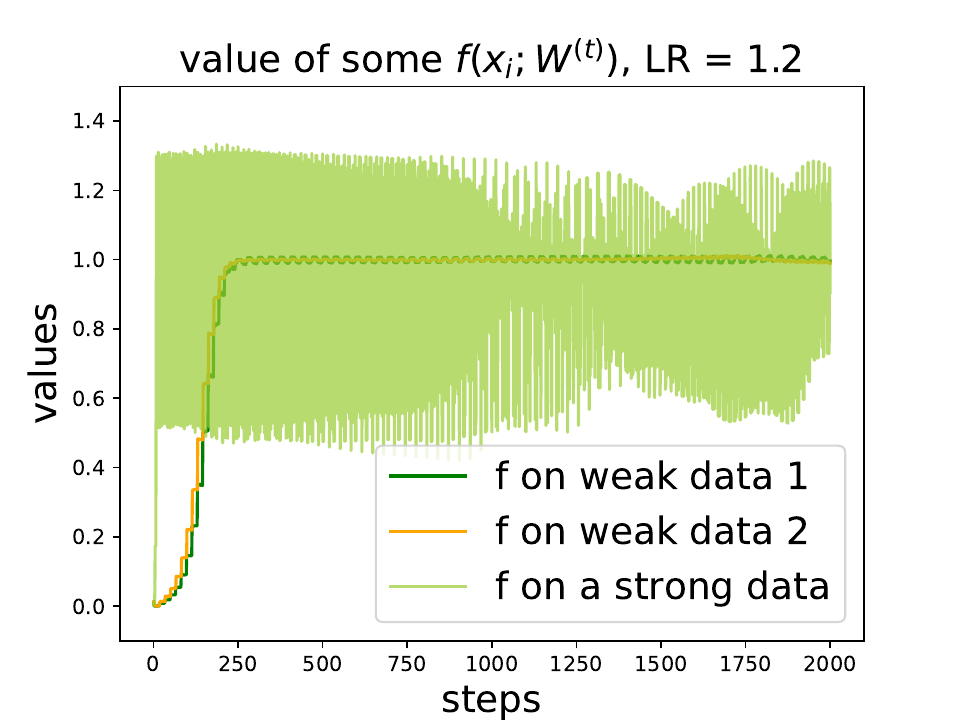}
%         \end{minipage}
%     }
%     \subfigure{
%         \begin{minipage}{0.23\linewidth}
%             \centering
%             \includegraphics[height=0.9\textwidth]{ICLR2024/}
%         \end{minipage}
%     }
%     \subfigure{
%         \begin{minipage}{0.23\linewidth}
%             \centering
%             \includegraphics[height=0.9\textwidth]{ICLR2024/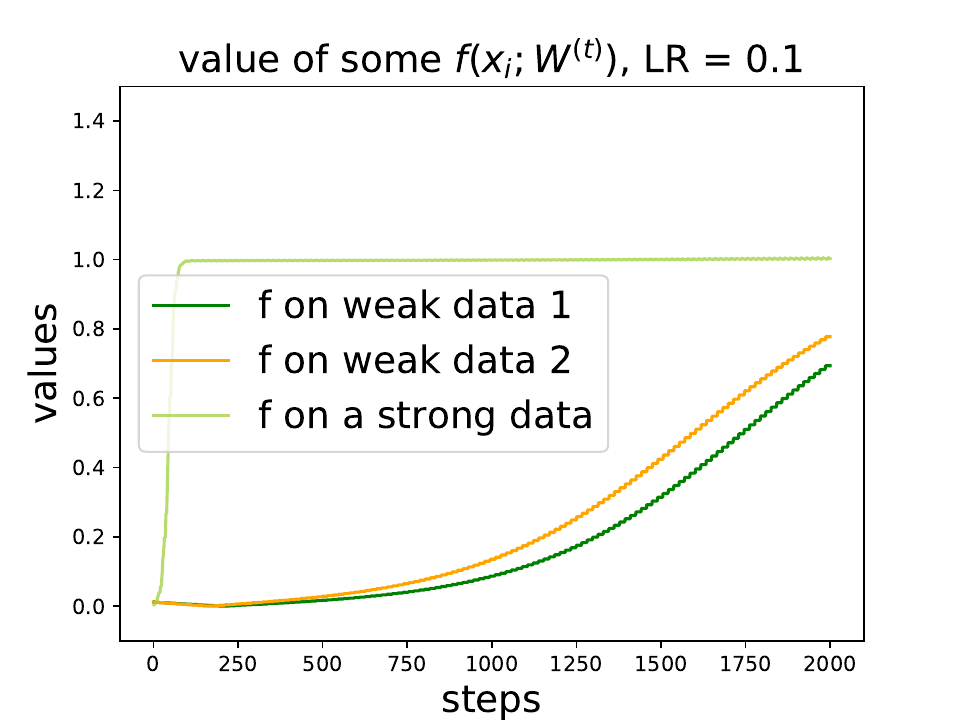}
%         \end{minipage}
%     }
%     \subfigure{
%         \begin{minipage}{0.23\linewidth}
%             \centering
%             \includegraphics[height=0.9\textwidth]{ICLR2024/}
%         \end{minipage}
%     }
%     \caption{
%     The values of $f$ and $L$ under different learning rates ($\eta_{\mathrm{large}} = 1.2$, $\eta_{\mathrm{small}} = 0.1$).}\label{fig: two way multiple data 2}
% \end{figure}

\begin{figure}[!t]
    \centering
    \subfigure{
            \includegraphics[width=0.26\textwidth]{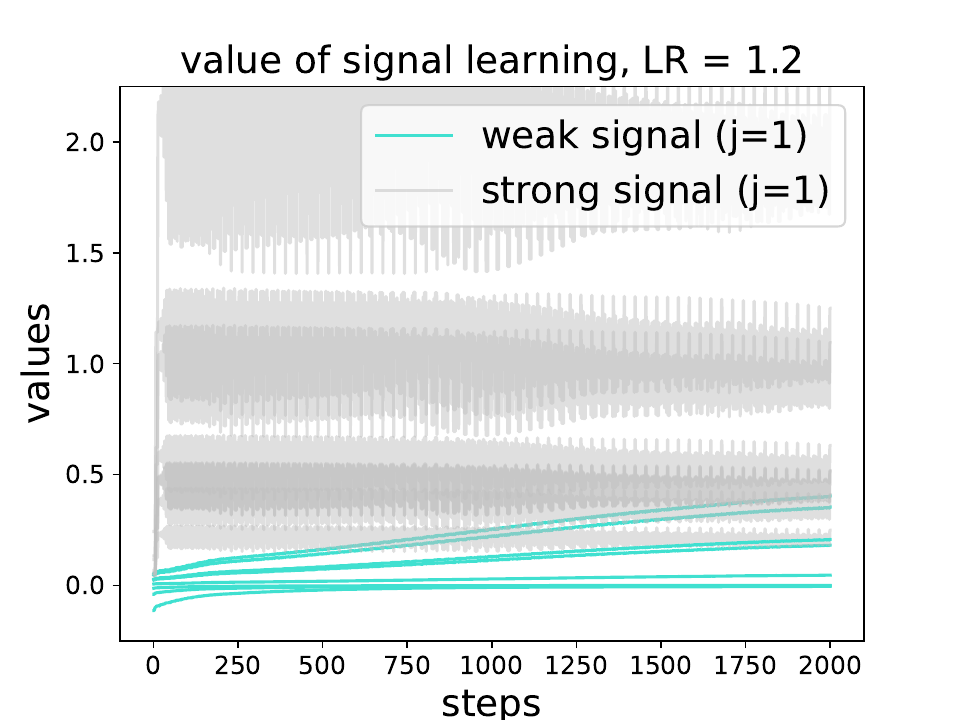}
    }\hspace{-2em}
    \subfigure{
            \includegraphics[width=0.26\textwidth]{figure/figure_loss_sgd_0927_3_f.pdf}
    }\hspace{-2em}
    \subfigure{
            \includegraphics[width=0.26\textwidth]{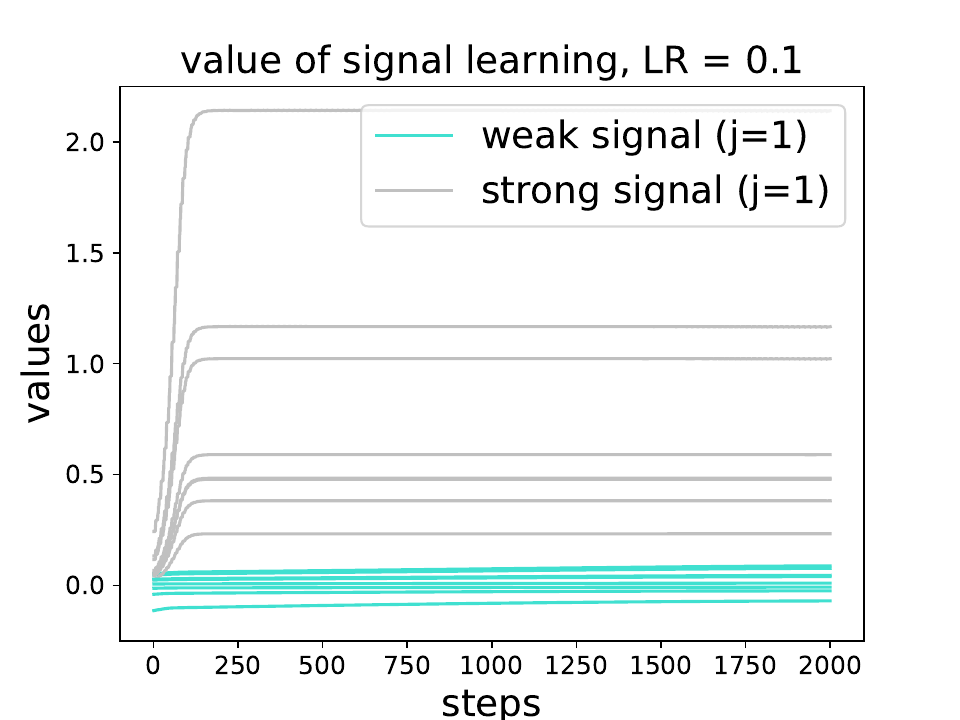}
    }\hspace{-2em}
    \subfigure{
            \includegraphics[width=0.26\textwidth]{figure/figure_loss_sgd_0927_3_f_small.pdf}
    }
    \caption{
    The dynamics of signal learning under a large learning rate $\eta_{\mathrm{large}} = 1.2$ and a small learning rate  $\eta_{\mathrm{small}} = 0.1$. The values of signal learning are obtained by characterizing the inner products $\langle \wb_{j,r}^{(t)},j\ub\rangle$ and $\langle \wb_{j,r}^{(t)},j\vb\rangle$. 
    When using the large learning rate, strong signal learning as well as the NN outputs will oscillate, during which weak signal will be gradually learned. 
    When using the small learning rate, strong signal learning will converge quickly, and the weak signal learning will stay at the same scale as its initialization.
    }\label{fig: two way multiple data}
\end{figure}

\subsection{Numerical Experiments}\label{sec: experiments}

In this section, we conduct numerical experiments to demonstrate our findings on ``benign oscillation". 
We follow the same data generation model and optimization algorithm as we described as Section~\ref{sec: problem setup}.
Specifically, we consider a dataset with $n = 16$ and $|\mathcal{W}| = 2$, that is, $\rho \approx 0.125$.
The dimension is $d = 64$, and the number of neurons for each direction $j$ is $m = 8$.
We generate the data with strong signal $\|\mathbf{u}\|_2 = 2$, weak signal $\|\mathbf{v}\|_2 = 0.4$, and noise $\|\boldsymbol{\xi}\|_2\approx\sigma_pd^{1/2} = 0.8$.
% All of the experiments below are averaged over $5$ random seeds.

\paragraph{Weak signal learning.} We run the SGD \eqref{eq: two way multiple data sgd} to train the CNN with two different scale of learning rates: a large learning rate $\eta_{\mathrm{large}} = 1.2$, a small learning rate $\eta_{\mathrm{small}} = 0.1$.
We plot the dynamics of signal learning for each neuron $r\in[m]$ from these two training regimes in Figure~\ref{fig: two way multiple data}.
As we can see from Figure~\ref{fig: two way multiple data}, with large learning rate SGD training, the CNN can effectively learn the weak signal to a scale much larger than the initialization.
On the contrary, by small learning rate SGD training, the CNN does not learn the weak signal since it just remains at the same level as the initialization.
This demonstrates our theory in Section~\ref{subsec: main results}.

Furthermore, in Figure~\ref{fig: two way multiple data}, we plot the values of $y_{i} f(\mathbf{x}_i;\mathbf{W}^{(t)})$ on certain data points $i\in[n]$ and the value of $L(\mathbf{W}^{(t)})$ for those two training regimes.
Specifically, we plot the values of $y_i f(\mathbf{x}_i;\mathbf{W}^{(t)})$ on a strong data $i\notin \mathcal{W}$ (randomly sampled) and the values of $y_i f(\mathbf{x}_i;\mathbf{W}^{(t)})$ on the two weak data $i\in\mathcal{W}$.
As we can observe from the large learning rate training case, the values of $f$ on the strong data oscillate while the values of $f$ on the weak data do not. 
This matches our Assumption~\ref{ass: oscillation} that the oscillation in $f$ value only happens for strong data.
Also, for the small learning rate training case, the values of $f$ on the weak data converge slower than those on the strong data.
This is because on the weak data the CNN mainly  utilizes the noise to fit the target which is of lower strength than the strong signals on strong data. 
But still, the noise outweights the weak signals and consequently the CNN makes no progress in learning the weak signal if trained smoothly.

\paragraph{Generalization properties.}
Finally, we test the CNN trained by two different learning rates on the new testing data generated in the same way as the training data. 
The testing data size is $32$ with $4$ weak data points. 
We repeat the testing evaluation over $5$ random seeds and take the average.
The result is that for the CNN trained by $\eta_{\mathrm{large}}$ the testing accuracy is $99.38\%$, and for the CNN trained by $\eta_{\mathrm{small}}$ the testing accuracy is $93.75\%$, matching our theoretical insights that large learning rate training benefits NN generalization.
For the CNN trained by $\eta_{\mathrm{small}}$, it misclassifies certain weak data points. 
As we previously discussed, on the data without the strong signal, the CNN approximately uses a random guess.

\section{Conclusions}

This work theoretically investigated the NN training with large learning rates and established a theoretical framework to understand the oscillation phenomenon. 
We revealed the benefits of oscillation training to the NN generalization,  which we summarize as the phenomenon of ``\emph{benign oscillation}". 
Our theory demystified the phenomenon based on a feature learning perspective and showed that the oscillation can drive the learning of weak but important patterns from data that are crucial to generalization. 
Our theory sheds light on the understanding of large learning rate NN training and provided useful guidance towards the optimization analysis when smooth convergence is not guaranteed.

\bibliography{reference}

\begin{thebibliography}{32}
\expandafter\ifx\csname natexlab\endcsname\relax\def\natexlab#1{#1}\fi
\expandafter\ifx\csname url\endcsname\relax
  \def\url#1{\texttt{#1}}\fi
\expandafter\ifx\csname urlprefix\endcsname\relax\def\urlprefix{URL }\fi

\bibitem[{Ahn et~al.(2022)Ahn, Zhang and Sra}]{ahn2022understanding}
\textsc{Ahn, K.}, \textsc{Zhang, J.} and \textsc{Sra, S.} (2022).
\newblock Understanding the unstable convergence of gradient descent.
\newblock In \textit{International Conference on Machine Learning}. PMLR.

\bibitem[{Allen-Zhu and Li(2022)}]{allen2020towards}
\textsc{Allen-Zhu, Z.} and \textsc{Li, Y.} (2022).
\newblock Towards understanding ensemble, knowledge distillation and
  self-distillation in deep learning.
\newblock In \textit{The Eleventh International Conference on Learning
  Representations}.

\bibitem[{Andriushchenko et~al.(2023)Andriushchenko, Varre, Pillaud-Vivien and
  Flammarion}]{andriushchenko2023sgd}
\textsc{Andriushchenko, M.}, \textsc{Varre, A.~V.}, \textsc{Pillaud-Vivien, L.}
  and \textsc{Flammarion, N.} (2023).
\newblock Sgd with large step sizes learns sparse features.
\newblock In \textit{International Conference on Machine Learning}. PMLR.

\bibitem[{Arora et~al.(2022)Arora, Li and Panigrahi}]{arora2022understanding}
\textsc{Arora, S.}, \textsc{Li, Z.} and \textsc{Panigrahi, A.} (2022).
\newblock Understanding gradient descent on the edge of stability in deep
  learning.
\newblock In \textit{International Conference on Machine Learning}. PMLR.

\bibitem[{Cao et~al.(2022)Cao, Chen, Belkin and Gu}]{cao2022benign}
\textsc{Cao, Y.}, \textsc{Chen, Z.}, \textsc{Belkin, M.} and \textsc{Gu, Q.}
  (2022).
\newblock Benign overfitting in two-layer convolutional neural networks.
\newblock \textit{Advances in neural information processing systems}
  \textbf{35} 25237--25250.

\bibitem[{Chen and Bruna(2022)}]{chen2022gradient}
\textsc{Chen, L.} and \textsc{Bruna, J.} (2022).
\newblock On gradient descent convergence beyond the edge of stability.
\newblock \textit{arXiv preprint arXiv:2206.04172} .

\bibitem[{Chen et~al.(2022)Chen, Deng, Wu, Gu and Li}]{chen2022towards}
\textsc{Chen, Z.}, \textsc{Deng, Y.}, \textsc{Wu, Y.}, \textsc{Gu, Q.} and
  \textsc{Li, Y.} (2022).
\newblock Towards understanding the mixture-of-experts layer in deep learning.
\newblock \textit{Advances in neural information processing systems}
  \textbf{35} 23049--23062.

\bibitem[{Cohen et~al.(2020)Cohen, Kaur, Li, Kolter and
  Talwalkar}]{cohen2020gradient}
\textsc{Cohen, J.}, \textsc{Kaur, S.}, \textsc{Li, Y.}, \textsc{Kolter, J.~Z.}
  and \textsc{Talwalkar, A.} (2020).
\newblock Gradient descent on neural networks typically occurs at the edge of
  stability.
\newblock In \textit{International Conference on Learning Representations}.

\bibitem[{Cohen et~al.(2022)Cohen, Ghorbani, Krishnan, Agarwal, Medapati,
  Badura, Suo, Cardoze, Nado, Dahl et~al.}]{cohen2022adaptive}
\textsc{Cohen, J.~M.}, \textsc{Ghorbani, B.}, \textsc{Krishnan, S.},
  \textsc{Agarwal, N.}, \textsc{Medapati, S.}, \textsc{Badura, M.},
  \textsc{Suo, D.}, \textsc{Cardoze, D.}, \textsc{Nado, Z.}, \textsc{Dahl,
  G.~E.} \textsc{et~al.} (2022).
\newblock Adaptive gradient methods at the edge of stability.
\newblock \textit{arXiv preprint arXiv:2207.14484} .

\bibitem[{Damian et~al.(2022)Damian, Nichani and Lee}]{damian2021label}
\textsc{Damian, A.}, \textsc{Nichani, E.} and \textsc{Lee, J.~D.} (2022).
\newblock Self-stabilization: The implicit bias of gradient descent at the edge
  of stability.
\newblock In \textit{The Eleventh International Conference on Learning
  Representations}.

\bibitem[{Frankle et~al.(2019)Frankle, Schwab and Morcos}]{frankle2019early}
\textsc{Frankle, J.}, \textsc{Schwab, D.~J.} and \textsc{Morcos, A.~S.} (2019).
\newblock The early phase of neural network training.
\newblock In \textit{International Conference on Learning Representations}.

\bibitem[{He et~al.(2016)He, Zhang, Ren and Sun}]{he2016deep}
\textsc{He, K.}, \textsc{Zhang, X.}, \textsc{Ren, S.} and \textsc{Sun, J.}
  (2016).
\newblock Deep residual learning for image recognition.
\newblock In \textit{Proceedings of the IEEE conference on computer vision and
  pattern recognition}.

\bibitem[{Huang et~al.(2023)Huang, Cao, Wang, Cao and Suzuki}]{huang2023graph}
\textsc{Huang, W.}, \textsc{Cao, Y.}, \textsc{Wang, H.}, \textsc{Cao, X.} and
  \textsc{Suzuki, T.} (2023).
\newblock Graph neural networks provably benefit from structural information: A
  feature learning perspective.
\newblock \textit{arXiv preprint arXiv:2306.13926} .

\bibitem[{Hui(2020)}]{hui2020evaluation}
\textsc{Hui, L.} (2020).
\newblock Evaluation of neural architectures trained with square loss vs
  cross-entropy in classification tasks.
\newblock In \textit{The Ninth International Conference on Learning
  Representations (ICLR 2021)}.

\bibitem[{Jastrzebski et~al.(2021)Jastrzebski, Arpit, Astrand, Kerg, Wang,
  Xiong, Socher, Cho and Geras}]{jastrzebski2021catastrophic}
\textsc{Jastrzebski, S.}, \textsc{Arpit, D.}, \textsc{Astrand, O.},
  \textsc{Kerg, G.~B.}, \textsc{Wang, H.}, \textsc{Xiong, C.}, \textsc{Socher,
  R.}, \textsc{Cho, K.} and \textsc{Geras, K.~J.} (2021).
\newblock Catastrophic fisher explosion: Early phase fisher matrix impacts
  generalization.
\newblock In \textit{International Conference on Machine Learning}. PMLR.

\bibitem[{Kalra and Barkeshli(2023)}]{kalra2023phase}
\textsc{Kalra, D.~S.} and \textsc{Barkeshli, M.} (2023).
\newblock Phase diagram of training dynamics in deep neural networks: effect of
  learning rate, depth, and width.
\newblock \textit{arXiv preprint arXiv:2302.12250} .

\bibitem[{Kaur et~al.(2023)Kaur, Cohen and Lipton}]{kaur2023maximum}
\textsc{Kaur, S.}, \textsc{Cohen, J.} and \textsc{Lipton, Z.~C.} (2023).
\newblock On the maximum hessian eigenvalue and generalization.
\newblock In \textit{Proceedings on}. PMLR.

\bibitem[{Lewkowycz et~al.(2020)Lewkowycz, Bahri, Dyer, Sohl-Dickstein and
  Gur-Ari}]{lewkowycz2020large}
\textsc{Lewkowycz, A.}, \textsc{Bahri, Y.}, \textsc{Dyer, E.},
  \textsc{Sohl-Dickstein, J.} and \textsc{Gur-Ari, G.} (2020).
\newblock The large learning rate phase of deep learning: the catapult
  mechanism.
\newblock \textit{arXiv preprint arXiv:2003.02218} .

\bibitem[{Li et~al.(2019)Li, Wei and Ma}]{li2019towards}
\textsc{Li, Y.}, \textsc{Wei, C.} and \textsc{Ma, T.} (2019).
\newblock Towards explaining the regularization effect of initial large
  learning rate in training neural networks.
\newblock \textit{Advances in Neural Information Processing Systems}
  \textbf{32}.

\bibitem[{Smith and Topin(2019)}]{smith2019super}
\textsc{Smith, L.~N.} and \textsc{Topin, N.} (2019).
\newblock Super-convergence: Very fast training of neural networks using large
  learning rates.
\newblock In \textit{Artificial intelligence and machine learning for
  multi-domain operations applications}, vol. 11006. SPIE.

\bibitem[{Wang et~al.(2022)Wang, Li and Li}]{wang2022analyzing}
\textsc{Wang, Z.}, \textsc{Li, Z.} and \textsc{Li, J.} (2022).
\newblock Analyzing sharpness along gd trajectory: Progressive sharpening and
  edge of stability.
\newblock \textit{Advances in Neural Information Processing Systems}
  \textbf{35} 9983--9994.

\bibitem[{Wen and Li(2021)}]{wen2021toward}
\textsc{Wen, Z.} and \textsc{Li, Y.} (2021).
\newblock Toward understanding the feature learning process of self-supervised
  contrastive learning.
\newblock In \textit{International Conference on Machine Learning}. PMLR.

\bibitem[{Wu et~al.(2023)Wu, Braverman and Lee}]{wu2023implicit}
\textsc{Wu, J.}, \textsc{Braverman, V.} and \textsc{Lee, J.~D.} (2023).
\newblock Implicit bias of gradient descent for logistic regression at the edge
  of stability.
\newblock \textit{arXiv preprint arXiv:2305.11788} .

\bibitem[{Wu et~al.(2021)Wu, Zou, Braverman and Gu}]{wu2021direction}
\textsc{Wu, J.}, \textsc{Zou, D.}, \textsc{Braverman, V.} and \textsc{Gu, Q.}
  (2021).
\newblock Direction matters: On the implicit bias of stochastic gradient
  descent with moderate learning rate.
\newblock In \textit{International Conference on Learning Representation
  (ICLR)}.

\bibitem[{Xing et~al.(2018)Xing, Arpit, Tsirigotis and Bengio}]{xing2018walk}
\textsc{Xing, C.}, \textsc{Arpit, D.}, \textsc{Tsirigotis, C.} and
  \textsc{Bengio, Y.} (2018).
\newblock A walk with sgd.
\newblock \textit{arXiv preprint arXiv:1802.08770} .

\bibitem[{Yang et~al.(2023)Yang, Tang and Tu}]{yang2023stochastic}
\textsc{Yang, N.}, \textsc{Tang, C.} and \textsc{Tu, Y.} (2023).
\newblock Stochastic gradient descent introduces an effective
  landscape-dependent regularization favoring flat solutions.
\newblock \textit{Physical Review Letters} \textbf{130} 237101.

\bibitem[{Zhang et~al.(2021)Zhang, Bengio, Hardt, Recht and
  Vinyals}]{zhang2021understanding}
\textsc{Zhang, C.}, \textsc{Bengio, S.}, \textsc{Hardt, M.}, \textsc{Recht, B.}
  and \textsc{Vinyals, O.} (2021).
\newblock Understanding deep learning (still) requires rethinking
  generalization.
\newblock \textit{Communications of the ACM} \textbf{64} 107--115.

\bibitem[{Zhu et~al.(2022{\natexlab{a}})Zhu, Liu, Radhakrishnan and
  Belkin}]{zhu2022quadratic}
\textsc{Zhu, L.}, \textsc{Liu, C.}, \textsc{Radhakrishnan, A.} and
  \textsc{Belkin, M.} (2022{\natexlab{a}}).
\newblock Quadratic models for understanding neural network dynamics.
\newblock \textit{arXiv preprint arXiv:2205.11787} .

\bibitem[{Zhu et~al.(2023)Zhu, Liu, Radhakrishnan and
  Belkin}]{zhu2023catapults}
\textsc{Zhu, L.}, \textsc{Liu, C.}, \textsc{Radhakrishnan, A.} and
  \textsc{Belkin, M.} (2023).
\newblock Catapults in sgd: spikes in the training loss and their impact on
  generalization through feature learning.
\newblock \textit{arXiv preprint arXiv:2306.04815} .

\bibitem[{Zhu et~al.(2022{\natexlab{b}})Zhu, Wang, Wang, Zhou and
  Ge}]{zhu2022understanding}
\textsc{Zhu, X.}, \textsc{Wang, Z.}, \textsc{Wang, X.}, \textsc{Zhou, M.} and
  \textsc{Ge, R.} (2022{\natexlab{b}}).
\newblock Understanding edge-of-stability training dynamics with a minimalist
  example.
\newblock In \textit{The Eleventh International Conference on Learning
  Representations}.

\bibitem[{Zou et~al.(2022)Zou, Cao, Li and Gu}]{zou2021understanding}
\textsc{Zou, D.}, \textsc{Cao, Y.}, \textsc{Li, Y.} and \textsc{Gu, Q.} (2022).
\newblock Understanding the generalization of adam in learning neural networks
  with proper regularization.
\newblock In \textit{The Eleventh International Conference on Learning
  Representations}.

\bibitem[{Zou et~al.(2023)Zou, Cao, Li and Gu}]{zou2023benefits}
\textsc{Zou, D.}, \textsc{Cao, Y.}, \textsc{Li, Y.} and \textsc{Gu, Q.} (2023).
\newblock The benefits of mixup for feature learning.
\newblock In \textit{Proceedings of the 40th International Conference on
  Machine Learning}.

\end{thebibliography}

\bibliographystyle{ims}

\newpage

\appendix

\tableofcontents

\section{Preliminary Lemmas on Concentration}\label{sec: preliminary lemmas}

In this section, we give finite-sample concentration results to characterize the high-probability concentration properties of the random elements involved in our problem.
Throughout this paper, we fix a small constant failure probability $ p = 1/\mathrm{poly}(d)$.

\begin{lemma}\label{lem: balanced sample}
Suppose that $n \ge 8\log(4/p)$, then with probability at least $1-p$, we have that
\begin{align}
    |\{i\in[n]:y_i = 1\}|\wedge |\{i\in[n]:y_i = -1\}|  \ge \frac{n}{4}. \label{eqn: balanced sample}
\end{align}
\end{lemma}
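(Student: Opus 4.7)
The plan is to recognize that the random variable $N_+ := |\{i\in[n]:y_i = 1\}|$ is distributed as $\mathrm{Binomial}(n,1/2)$, since the labels are i.i.d.\ Rademacher with equal probabilities. By symmetry, $N_- := n - N_+$ is also $\mathrm{Binomial}(n,1/2)$, and both have mean $n/2$. The lemma therefore reduces to showing that $N_+$ does not deviate below its mean by more than $n/4$ (and the same for $N_-$) with high probability. This is a textbook Chernoff/Hoeffding-type tail bound, so the strategy is simply to invoke such a bound and then apply a union bound.

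Concretely, I would first apply Hoeffding's inequality to $N_+$ written as the sum $\sum_{i=1}^n \mathbf{1}\{y_i = 1\}$ of independent $[0,1]$-valued random variables, which yields
\[
\mathbb{P}\bigl(N_+ \leq n/4\bigr) \;=\; \mathbb{P}\bigl(N_+ - n/2 \leq -n/4\bigr) \;\leq\; \exp(-n/8).
\]
The same bound holds for $\mathbb{P}(N_- \leq n/4)$ either by repeating the argument or by noting $N_- = n - N_+$ has the identical distribution. Then a union bound gives
\[
\mathbb{P}\bigl(N_+ \wedge N_- < n/4\bigr) \;\leq\; 2\exp(-n/8).
\]
Finally, the hypothesis $n \geq 8\log(4/p)$ makes the right-hand side at most $2\cdot(p/4) = p/2 \leq p$, which establishes the lemma.

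There is essentially no obstacle here; the only thing to be careful about is to avoid double-counting the randomness when bounding both $N_+$ and $N_-$ (using $N_- = n - N_+$ makes this transparent) and to ensure the constants in the Hoeffding bound match the $n\geq 8\log(4/p)$ threshold. Any equivalent concentration tool (Chernoff for Bernoullis, Bernstein, or even a direct computation via Stirling) would work just as well.
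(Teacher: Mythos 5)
Your proof is correct and is essentially the standard Hoeffding-plus-union-bound argument; the paper itself defers to Lemma B.1 of \citet{cao2022benign}, which uses the same concentration argument, so your write-up simply fills in the details that the paper chose to cite. One small note: your calculation shows $2\exp(-n/8)\leq p/2$ under the stated hypothesis, so there is slack in the constant $4/p$; that slack exists because the cited lemma in \citet{cao2022benign} union-bounds over a couple of additional events, but for the statement as given your bound is more than sufficient.
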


\begin{proof}[Proof of Lemma~\ref{lem: balanced sample}]
    See Lemma B.1 in \cite{cao2022benign} for a proof.
\end{proof}

\begin{lemma}\label{lem: balanced weak data}
Suppose that $n \ge 8\log(4/p)$, then with probability at least $1-p$, we have that
\begin{align}
    |\mathcal{W}| \leq \frac{2\rho}{n}. \label{eqn: balanced weak data}
\end{align}
\end{lemma}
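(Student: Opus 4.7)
The plan is to read the displayed bound as $|\mathcal{W}|\le 2\rho n$, since the right-hand side as printed is almost certainly a typo (with $|\mathcal{W}|\in\mathbb{Z}_{\ge 0}$ and $\rho$ a small constant, the expression $2\rho/n$ is strictly less than $1$ and would force $|\mathcal{W}|=0$ almost surely, which is inconsistent with the data model in which each sample is weak independently with probability $\rho$). With that reading, the statement is a textbook concentration result about the sum of $n$ i.i.d.\ Bernoulli$(\rho)$ variables, and the argument will mirror the proof of Lemma~\ref{lem: balanced sample} (cf.\ Cao et al., Lemma B.1), which is the companion concentration bound for the label counts under the same sample-size hypothesis $n\ge 8\log(4/p)$.

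First I would introduce the indicators $Z_i = \mathbf{1}\{i\in\mathcal{W}\}$ for $i\in[n]$. By the data generation model in Section~\ref{sec: problem setup}, the event that sample $i$ lacks a strong signal patch occurs independently across $i$ with probability $\rho$, so the $Z_i$ are i.i.d.\ Bernoulli$(\rho)$ and $|\mathcal{W}| = \sum_{i=1}^n Z_i$ with $\mathbb{E}|\mathcal{W}| = \rho n$. Second, I would apply an upper-tail concentration inequality: either Hoeffding's inequality
\[
\mathbb{P}\bigl(|\mathcal{W}| - \rho n \ge t\bigr) \le \exp\!\bigl(-2t^2/n\bigr),
\]
with $t = \rho n$, or, if a multiplicative form is preferred (to obtain a sharper exponent when $\rho$ is small), the Chernoff bound
\[
\mathbb{P}\bigl(|\mathcal{W}| \ge (1+\delta)\rho n\bigr) \le \exp\!\Bigl(-\tfrac{\delta^2 \rho n}{2+\delta}\Bigr)
\]
with $\delta = 1$, giving $\mathbb{P}(|\mathcal{W}|\ge 2\rho n)\le \exp(-\rho n/3)$.

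Third, I would verify that under the hypothesis $n\ge 8\log(4/p)$ the resulting exponential tail is at most $p$, which amounts to a direct numerical check and is the same constant-matching step as in the proof of Lemma~\ref{lem: balanced sample}. For a fully uniform-in-$\rho$ statement one naturally picks the multiplicative bound, since the Hoeffding route gives an exponent $2\rho^2 n$ that degrades as $\rho\to 0$; in either case the choice of constants is exactly what fixes the form of the hypothesis on $n$.

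The main obstacle is not conceptual at all — it is bookkeeping the constants so that the tail exponent lines up with $n\ge 8\log(4/p)$, and phrasing the proof so it matches the style already used for Lemma~\ref{lem: balanced sample}. The only genuinely substantive point is the independence claim: one must observe that the presence/absence of the strong-signal patch is an independent Bernoulli draw per sample in the data model of Section~\ref{sec: problem setup}, and is in particular independent of the label $y_i$ and of the noise $\boldsymbol{\xi}_i,\widetilde{\boldsymbol{\xi}}_i$, so that Hoeffding/Chernoff applies without modification.
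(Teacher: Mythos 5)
Your reading of the displayed bound as a typo for $|\mathcal{W}|\le 2\rho n$ is correct, and your argument—cast $|\mathcal{W}|$ as a sum of $n$ i.i.d.\ Bernoulli$(\rho)$ indicators and apply a Chernoff/Hoeffding upper tail—is exactly what the paper intends; its own proof is a one-line pointer back to Lemma~\ref{lem: balanced sample}, which is the same Binomial concentration argument with parameter $1/2$ instead of $\rho$. One caveat you gesture at but do not resolve: with the multiplicative Chernoff exponent $\rho n/3$, the tail is $\le p$ only when $n\gtrsim \rho^{-1}\log(1/p)$, so the stated hypothesis $n\ge 8\log(4/p)$ is not literally sufficient for small $\rho$ (a defect inherited from copying the hypothesis of Lemma~\ref{lem: balanced sample}, where the Bernoulli parameter is $1/2$); since $\rho$ is treated as a fixed small constant in the paper this is absorbed into constants, but your ``direct numerical check'' step would not go through as written.
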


\begin{proof}[Proof of Lemma~\ref{lem: balanced weak data}]
    This follows from the same proof as Lemma~\ref{lem: balanced sample}.
\end{proof}

\begin{lemma}\label{lem: noise norm and correlation}
Suppose that $d = \Omega(\log (4n/p))$, then with probability $1-p$, we have that 
\begin{gather}
    \sigma_p^2d /2 \le \norm{\bxi}_2^2  \le 3\sigma_p^2d/2, 
    \quad |\dotp{\bxi_i}{\bxi_{i^\prime}} | \le 2\sigma_p^2 \cdot \sqrt{d\log(2n/p)}\label{eq: noise corrletation ub}
\end{gather}
hold for all $i,i'\in [n]$.
\end{lemma}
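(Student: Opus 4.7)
The proof is a standard concentration argument that exploits the Gaussian structure of the noise vectors $\bxi_i \sim N(\zero,\Sigma)$, where $\Sigma = \sigma_p^2(\Ib_d - \vb\vb^\top/\|\vb\|_2^2 - \ub\ub^\top/\|\ub\|_2^2)$ is a scaled orthogonal projection of rank $d-2$, so its nonzero eigenvalues all equal $\sigma_p^2$. The plan is to handle the two bounds separately and then take a union bound.

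First I would control each individual norm $\|\bxi_i\|_2^2$. Since $\Sigma$ has eigenvalues $\sigma_p^2$ (multiplicity $d-2$) and $0$ (multiplicity $2$), we can write $\|\bxi_i\|_2^2 = \sigma_p^2 \|\bz_i\|_2^2$ where $\bz_i \sim N(\zero,\Ib_{d-2})$. Hence $\|\bxi_i\|_2^2/\sigma_p^2$ is a $\chi^2_{d-2}$ random variable, and I can invoke the standard Laurent--Massart tail bound
\[
\Pr\!\left(\big|\|\bz_i\|_2^2 - (d-2)\big| \ge 2\sqrt{(d-2)\log(4n/p)} + 2\log(4n/p)\right) \le 2e^{-\log(4n/p)} = \frac{p}{2n}.
\]
Under the assumption $d = \Omega(\log(4n/p))$, the deviation on the right is $\le (d-2)/2$, which yields $\sigma_p^2 d/2 \le \|\bxi_i\|_2^2 \le 3\sigma_p^2 d/2$. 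A union bound over $i\in[n]$ keeps the total failure probability at most $p/2$.

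Next I would handle the cross term $\dotp{\bxi_i}{\bxi_{i'}}$ for $i\ne i'$, where $\bxi_i \indep \bxi_{i'}$. Conditioning on $\bxi_{i'}$, the quantity $\dotp{\bxi_i}{\bxi_{i'}}$ is Gaussian with mean zero and variance $\bxi_{i'}^\top \Sigma \bxi_{i'} \le \sigma_p^2 \|\bxi_{i'}\|_2^2 \le (3/2)\sigma_p^4 d$ on the event already established above. A standard Gaussian tail bound then gives
\[
\Pr\!\left(|\dotp{\bxi_i}{\bxi_{i'}}| \ge 2\sigma_p^2\sqrt{d\log(2n/p)} \,\Big|\, \bxi_{i'}\right) \le 2\exp\!\left(-\frac{4\sigma_p^4 d \log(2n/p)}{2\cdot(3/2)\sigma_p^4 d}\right) = 2\,(2n/p)^{-4/3},
\]
which, after a union bound over the at most $n^2$ pairs $(i,i')$, contributes at most $p/2$ to the total failure probability once absorbed into polynomial factors in the statement $d=\Omega(\log(4n/p))$.

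The argument is essentially routine; the only piece requiring mild care is the conditioning step in the second display, where I must use the high-probability norm bound from the first step inside a conditional Gaussian tail, which is why the two events are combined via a single union bound rather than treated independently. Summing the two failure probabilities gives the claimed $1-p$ guarantee.
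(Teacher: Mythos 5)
Your proof is the standard concentration argument that Lemma B.2 of \cite{cao2022benign} (which the paper cites in place of a proof) also uses: the rank-$(d-2)$ projection structure reduces the norm to a $\chi^2_{d-2}$ variable controlled by Laurent--Massart, and the cross term is handled by conditioning on one vector and applying a one-dimensional Gaussian tail. So structurally you are doing the same thing as the reference.

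Two small points deserve tightening. First, the $\chi^2$ step gives $\|\bxi_i\|_2^2 \in [\sigma_p^2(d-2)/2,\ 3\sigma_p^2(d-2)/2]$, not quite $[\sigma_p^2 d/2,\ 3\sigma_p^2 d/2]$; with $d = \Omega(\log(4n/p))$ large, you should take a slightly smaller deviation than $(d-2)/2$ so the lower bound clears $\sigma_p^2 d/2$ (harmless, but worth a sentence). Second, and more substantively, your justification of the cross-term union bound is not quite right as phrased. The per-pair failure is $\approx 2(p/(2n))^{4/3}$, so summing over $\lesssim n^2$ ordered pairs gives $\approx n^{2/3}p^{4/3}$; reducing this to $p/2$ requires something like $n^2 p \lesssim 1$, which does \emph{not} follow from $d=\Omega(\log(4n/p))$ (that assumption has no bearing here). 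It does follow from the paper's fixed choice $p = 1/\mathrm{poly}(d)$ together with $d=\Omega(n^2)$ in Assumption~\ref{ass: conditions}, or you can sidestep it entirely by observing that the threshold constant $2$ has enough slack that replacing $\log(2n/p)$ by $\log(4n^2/p) \le 2\log(2n/p)$ inside the tail lets you union-bound directly over all $n^2$ pairs. Either fix is one line, but as written the appeal to ``polynomial factors in $d=\Omega(\log(4n/p))$'' is the wrong reason.
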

\begin{proof}[Proof of Lemma~\ref{lem: noise norm and correlation}]
    See Lemma B.2 in \cite{cao2022benign} for a proof.
\end{proof}

% \begin{proof}[Proof of Lemma~\ref{lem: noise norm and correlation}]
%     The condition on $d = \Omega(\log (4n/p))$ implies that $2\sqrt{d\log(4n/p)}<d^2 / 2$ and $\sqrt{\log(4n/p)/d}<1$. From the Bernstein's inequality, we have that 
% \begin{align}
%     \PP\Big(\big| \norm{\bxi_i}^2_2 -\sigma_p^2 d\big|> \sigma_p^2 d/2 \Big) &\le \PP\Big(\big| \norm{\bxi_i}^2_2 -\sigma_p^2 d\big|> \sigma_p^2 \sqrt{d\log (4n/p)}\Big)\\
%      &\le \frac{p}{2n}, \quad \forall i \in [n].
% \end{align}

% On the other hand, for every pair of indices $i\neq i'\in [n]$, $\dotp{\bxi_i}{\bxi_{i'}}$ is a summation of $n$ independent zero mean sub-exponential random variables, again with the Bernstein's inequality, we have
% \begin{align}
%     \PP\Bigl(  \big|\dotp{\bxi_i}{\bxi_{i'}}\big| >  2\sigma_p^2  \sqrt{d\cdot \log(4n^2/\epsilon)} \Bigr) \le  \frac{p}{2n^2}, \quad \forall i\neq i' \in [n]. 
% \end{align}
% Finally, a union bound argument concludes the proof.
% \end{proof}

\begin{lemma}\label{lem: initialization}
Suppose that $d \ge \Omega(\log(mn/p))$. Then with probability at least $1-p$, we have that 
\begin{align}
    \sigma_0 \norm{\ub}_2/2\le \max_{j\in\{\pm 1\},r\in[m]} \dotp{\wb_{j,r}^{(0)}}{j\ub} &\le \sqrt{2\log(16 m/p)}\cdot \sigma_0 \norm{\ub}_2,\label{eq: LUBPosPartIPU}\\
\min_{j\in\{\pm 1\},r\in[m]} \dotp{\wb_{j,r}^{(0)}}{j\ub} &\ge -\sqrt{2\log(16 m/p)}\cdot  \sigma_0 \norm{\ub}_2,\label{eq: LBNegPartIPU}\\
    \sigma_0 \norm{\vb}_2/2\le \max_{j\in\{\pm 1\},r\in[m]} \dotp{\wb_{j,r}^{(0)}}{j\vb} &\le \sqrt{2\log(16 m/p)}\cdot \sigma_0 \norm{\vb}_2, \label{eq: LUBPosPartIPV}\\
\min_{j\in\{\pm 1\},r\in[m]} \dotp{\wb_{j,r}^{(0)}}{j\vb} &\ge -\sqrt{2\log(16 m/p)}\cdot  \sigma_0 \norm{\vb}_2.\label{eq: LBNegPartIPV}\\
\sigma_0\sigma_p\sqrt{d}/4 \leq\max_{r\in[m]} j\cdot\langle \mathbf{w}_{j,r}^{(0)},\boldsymbol{\xi}_i\rangle &\leq 2\sqrt{\log(16mn/p)}\cdot\sigma_0\sigma_p\sqrt{d},\quad \forall i\in[n],
\end{align}
\end{lemma}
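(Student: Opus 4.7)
The plan is to exploit that each weight vector $\mathbf{w}_{j,r}^{(0)}\sim N(\zero,\sigma_0^2\Ib_d)$ is independent across $j\in\{\pm 1\}$ and $r\in[m]$, so that for any fixed deterministic vector $\mathbf{z}\in\RR^d$ the inner product $\langle \mathbf{w}_{j,r}^{(0)},\mathbf{z}\rangle$ is a centered univariate Gaussian with variance $\sigma_0^2\|\mathbf{z}\|_2^2$. All six bounds then reduce to two classical Gaussian facts applied to $2m$ (or $2mn$) i.i.d.\ copies: the subGaussian tail $\PP(|Z|\ge t\sigma)\le 2e^{-t^2/2}$ for the upper bounds on the max and the lower bounds on the min, and the Gaussian anti-concentration estimate $\PP(Z\ge \sigma/2)\ge c_0$ for some absolute constant $c_0>0$ for the lower bounds on the max.

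First I would handle the signal directions $\mathbf{u}$ and $\mathbf{v}$, which are deterministic. For the upper bound \eqref{eq: LUBPosPartIPU}, apply the Gaussian tail to $\langle\mathbf{w}_{j,r}^{(0)},j\mathbf{u}\rangle$ with $t=\sqrt{2\log(16m/p)}$ and union-bound over the $2m$ pairs $(j,r)$; this gives a failure probability at most $2\cdot 2m\cdot (p/(16m))=p/4$. The lower bound \eqref{eq: LBNegPartIPU} on the minimum follows from the same argument applied to $-\langle\mathbf{w}_{j,r}^{(0)},j\mathbf{u}\rangle$, by symmetry of the Gaussian. For the lower bound $\sigma_0\|\mathbf{u}\|_2/2$ on the max, use that each $\langle\mathbf{w}_{j,r}^{(0)},j\mathbf{u}\rangle/(\sigma_0\|\mathbf{u}\|_2)$ exceeds $1/2$ with probability at least $c_0$; by independence, the probability that all $2m$ copies fall below $1/2$ is at most $(1-c_0)^{2m}\le p/16$ provided $m\ge \Omega(\log(1/p))$, which is implied by the assumption $d=\Omega(\log(mn/p))$ together with the problem-wide hyperparameter regime. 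The bounds \eqref{eq: LUBPosPartIPV} and \eqref{eq: LBNegPartIPV} for $\mathbf{v}$ follow identically.

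For the noise inner products $\langle \mathbf{w}_{j,r}^{(0)},\boldsymbol{\xi}_i\rangle$, the $\boldsymbol{\xi}_i$ are themselves random, so I would first condition on the event from Lemma~\ref{lem: noise norm and correlation} that $\sigma_p^2 d/2\le\|\boldsymbol{\xi}_i\|_2^2\le 3\sigma_p^2 d/2$ for every $i\in[n]$; this event has probability at least $1-p$. Conditioned on the $\boldsymbol{\xi}_i$'s, independence of $\mathbf{W}^{(0)}$ from the data means $\langle\mathbf{w}_{j,r}^{(0)},\boldsymbol{\xi}_i\rangle\mid \boldsymbol{\xi}_i$ is $N(0,\sigma_0^2\|\boldsymbol{\xi}_i\|_2^2)$ with $\|\boldsymbol{\xi}_i\|_2\in[\sigma_p\sqrt{d/2},\sigma_p\sqrt{3d/2}]$. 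The upper bound of $2\sqrt{\log(16mn/p)}\cdot\sigma_0\sigma_p\sqrt{d}$ then comes from a Gaussian tail bound with $t=\sqrt{2\log(16mn/p)}$, union-bounded over the $2mn$ triples $(j,r,i)$. The lower bound $\sigma_0\sigma_p\sqrt{d}/4$ on $\max_{r} j\langle\mathbf{w}_{j,r}^{(0)},\boldsymbol{\xi}_i\rangle$ for each fixed $(j,i)$ again uses the anti-concentration estimate $\PP(j\langle\mathbf{w}_{j,r}^{(0)},\boldsymbol{\xi}_i\rangle\ge \sigma_0\|\boldsymbol{\xi}_i\|_2/(2\sqrt{2}))\ge c_0$ together with independence across $r\in[m]$, after which a union bound over the $2n$ choices of $(j,i)$ absorbs a factor of $2n\cdot (1-c_0)^m$ into $p/4$ under $m\ge\Omega(\log(mn/p))$.

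The only mildly delicate point is verifying that the hyperparameter regime in Assumption~\ref{ass: conditions} is strong enough to make the anti-concentration failure probability $(1-c_0)^m$ (and $(1-c_0)^{2m}$) bounded by the required fraction of $p$; once this is checked, the statement follows from a single union bound over the six events above, each of probability at most $p/6$, giving a total failure probability at most $p$. No other ingredient beyond Gaussianity and Lemma~\ref{lem: noise norm and correlation} is needed.
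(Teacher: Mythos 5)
Your proposal is correct and follows the same route as the proof the paper cites (Lemma B.3 of \citet{cao2022benign}): Gaussian tail bounds plus a union bound over neurons/samples for the upper bounds and lower bounds on the minima, Gaussian anti-concentration plus independence across neurons for the lower bounds on the maxima, and conditioning on the noise-norm event of Lemma~\ref{lem: noise norm and correlation} before treating the $\langle\wb_{j,r}^{(0)},\bxi_i\rangle$ as Gaussians of comparable variance.

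One caveat on the ``mildly delicate point'' you flag: the anti-concentration step does need $(1-c_0)^m\le p/\mathrm{poly}(n)$, i.e.\ $m=\Omega(\log(n/p))$, but the claim that this is \emph{implied} by $d=\Omega(\log(mn/p))$ together with Assumption~\ref{ass: conditions} does not hold --- those conditions lower-bound $d$ in terms of $m$ and $n$ but impose no lower bound on $m$ at all. This is really an omission in the lemma statement as copied into the paper: \citet{cao2022benign} make the hypothesis $m=\Omega(\log(1/p))$ (in fact $m=\Omega(\log(n/p))$ for the noise part) explicit in their Lemma B.3, and it should be carried over here. Your proof is right to require it; it just cannot be derived from the conditions the paper actually writes down.
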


\begin{proof}[Proof of Lemma~\ref{lem: initialization}]
    See Lemma B.3 in \cite{cao2022benign} for a proof.
\end{proof}
\newcommand{\MaxInitIPU}{{\sqrt{2\log(16 m/p)}\cdot \sigma_0 \norm{\ub}_2}}
\newcommand{\MaxInitIPV}{{\sqrt{2\log(16 m/p)}\cdot \sigma_0 \norm{\vb}_2}}

\section{Proofs for Single Training Data Case (Section~\ref{sec: one data})}\label{sec: one data case proof}

In this section, we give a formal statement and a detailed proof for our main results on the single noiseless training data setup, Theorem~\ref{thm: one data}.

We begin with the formal statement on the conditions and assumptions required for Theorem~\ref{thm: one data}. 
Firstly, we put requirements on the data model, initialization, and learning rates.

\begin{assumption}[Conditions on hyperparameters]\label{cond:model_params_one_data}
    Suppose that the following holds:
    \begin{enumerate}[leftmargin = 0.3in]
        \item The learning rate $m/(4\|\mathbf{u}\|_2^2)\leq \eta \leq 2m/(5\|\mathbf{u}\|_2^2)$;
        \item The weight initialization scale $\sigma_0 = \widetilde{\Theta}(\max\{\|\mathbf{u}\|_2, \|\mathbf{v}\|_2 \}^{-1}\cdot d^{-1/2})$;
        \item The signal strength $\norm{\vb}_2< 0.01\cdot \norm{\ub}_2$;
        \item The dimension $d$ satisfies $d = \Omega(\mathrm{polylog}(m))$.
    \end{enumerate}
\end{assumption} 

These conditions are the simplified version of those conditions from Assumption~\ref{ass: conditions} for the multiple data setup.
We refer the readers to Section~\ref{subsec: keu conditions and assumptions} for an explanation of these conditions.

The next assumption is on the training process, which requires that the SGD oscillates.

\begin{assumption}[Oscillations SGD: single training data case]\label{ass:oscilation_one_data}
    We assume that there exists some constant $\delta\in(0.2,0.8)$, such that $|yf(\mathbf{x};\mathbf{W}^{(t)}) - 1|\geq \delta$ for any $t\geq 0$, where $(\mathbf{x}, y)$ denotes the single data point, $\mathbf{W}^{(t)}$ denotes the weights found by SGD \eqref{eq: single data sgd}.
\end{assumption}

Again, we refer the readers to Section~\ref{subsec: keu conditions and assumptions} for an explanation of Assumption~\ref{ass:oscilation_one_data}.
With Assumptions~\ref{cond:model_params_one_data} and \ref{ass:oscilation_one_data}, our formal statement of Theorem~\ref{thm: one data} is the following theorem.

\begin{theorem}[Restatement of Theorem~\ref{thm: one data}]\label{thm:one_data_big_eta}
    Under Assumptions~\ref{cond:model_params_one_data} and \ref{ass:oscilation_one_data}, with probability at least $1-1/\mathrm{poly}(d)$, there exists a step $T_{(\vb)}$ such that for $j=y$ and any $t\ge T_{(\vb)}$, it holds that
    \begin{align}\label{eq: weak signal learning one data}
        \frac{1}{m}\sum_{r\in[m]}\sigma(\langle\mathbf{w}_{j,r}^{(t)},j\mathbf{v}\rangle) - \frac{1}{m}\sum_{r\in[m]}\sigma(\langle\mathbf{w}_{-j,r}^{(t)},j\mathbf{v}\rangle) \geq \frac{\delta}{4},
    \end{align}
    where $\delta>0$ is specified in Assumption \ref{ass:oscilation_one_data} and $T_{(\vb)} \le  \widetilde{\Theta}(m\cdot \eta^{-1} \cdot \norm{\vb}^{-2}_2 \cdot \delta^{-1}\cdot \log (m\delta\sigma_0^{-1} \norm{\vb}_2^{-1}))$.  
\end{theorem}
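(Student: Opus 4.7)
The plan is to project the SGD update \eqref{eq: single data sgd} onto the one-dimensional subspaces spanned by $\mathbf{u}$ and $\mathbf{v}$ (orthogonal by assumption), reducing the evolution to four decoupled scalar recursions in $A_{j,r}^{(t)} := \langle\mathbf{w}_{j,r}^{(t)}, y\mathbf{u}\rangle$ and $B_{j,r}^{(t)} := \langle\mathbf{w}_{j,r}^{(t)}, y\mathbf{v}\rangle$. Using $\sigma'(z) = 2[z]_+$ and $y^2 = j^2 = 1$, the recursions for the aligned ($j = y$) neurons take the multiplicative form
\begin{align*}
A_{y,r}^{(t+1)} = A_{y,r}^{(t)}\!\left[1 - \tfrac{2\eta\|\mathbf{u}\|_2^2}{m}\bigl(yf^{(t)}-1\bigr)\right], \qquad B_{y,r}^{(t+1)} = B_{y,r}^{(t)}\!\left[1 - \tfrac{2\eta\|\mathbf{v}\|_2^2}{m}\bigl(yf^{(t)}-1\bigr)\right]
\end{align*}
whenever the iterates are positive, while the $j = -y$ coordinates that start on the ``wrong'' side of zero are frozen in the $\mathrm{ReLU}^2$-dead region, and those on the ``right'' side obey the same recursion with a sign flip in the bracket. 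The scalar output is $yf^{(t)} = \tfrac{1}{m}\sum_r[\sigma(A_{y,r}^{(t)}) + \sigma(B_{y,r}^{(t)}) - \sigma(A_{-y,r}^{(t)}) - \sigma(B_{-y,r}^{(t)})]$.

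Step one is a regularity and sign-preservation lemma. By Lemma~\ref{lem: initialization}, at initialization $|A_{y,r}^{(0)}|, |B_{y,r}^{(0)}| = \widetilde O(\sigma_0\max\{\|\mathbf{u}\|_2, \|\mathbf{v}\|_2\})$, and a constant fraction of $r$'s have $A_{y,r}^{(0)} > 0$. Using the lower bound $\eta \geq m/(4\|\mathbf{u}\|_2^2)$ from Assumption~\ref{cond:model_params_one_data}, a short inductive warm-up shows that the active $A_{y,r}$ amplify within $\widetilde O(\log(1/\sigma_0))$ steps until the aggregate $\tfrac{1}{m}\sum_r\sigma(A_{y,r}^{(t)})$ reaches $\Theta(1)$, after which Assumption~\ref{ass:oscilation_one_data} forces the system into the oscillating regime. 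The upper bound $\eta \le 2m/(5\|\mathbf{u}\|_2^2)$, together with the boundedness of $yf^{(t)}$ that I would simultaneously derive inductively, ensures the multiplicative factor $1 - \tfrac{2\eta\|\mathbf{u}\|_2^2}{m}(yf-1)$ never becomes too negative, so signs are preserved. In particular every active $A_{y,r}^{(t)}$ stays in a bounded window $[A_{\mathrm{lo}}, A_{\mathrm{hi}}]$ of constant width throughout training, and the $-y$ neurons either stay frozen or shrink under the same reasoning applied to the sign-flipped recursion.

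The heart of the argument is the linear accumulation lemma: $\sum_{s=0}^{t-1}(1 - yf^{(s)}) \ge c\,\delta\, t - O(1)$ for some absolute constant $c > 0$. Telescoping the $A$-recursion gives
\begin{align*}
\sum_{s=0}^{t-1}\bigl(yf^{(s)}-1\bigr)\,A_{y,r}^{(s)} \;=\; \tfrac{m}{2\eta\|\mathbf{u}\|_2^2}\bigl(A_{y,r}^{(0)} - A_{y,r}^{(t)}\bigr) \;=\; O\!\left(\tfrac{m}{\eta\|\mathbf{u}\|_2^2}\right)
\end{align*}
by regularity. Splitting into $\mathcal{S}^{\pm} = \{s : yf^{(s)} \gtrless 1\}$ and using that $yf^{(t)}$ is monotonically increasing in the aggregate $\tfrac{1}{m}\sum_r\sigma(A_{y,r})$ (established in step one), one sees that whenever $s \in \mathcal{S}^+$ the individual $A_{y,r}^{(s)}$ lies near $A_{\mathrm{hi}}$ and whenever $s \in \mathcal{S}^-$ near $A_{\mathrm{lo}}$; combined with $|yf^{(s)}-1|\ge\delta$ from Assumption~\ref{ass:oscilation_one_data}, a short bookkeeping argument along the lines of \eqref{eq: intuition 1}--\eqref{eq: intuition 4} yields the desired $\Omega(\delta t)$ lower bound. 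This is the quantitative version of the ``oscillations do not cancel'' phenomenon.

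Finally, feeding linear accumulation into the $B$-recursion gives the weak-signal growth. Since $\tfrac{2\eta\|\mathbf{v}\|_2^2}{m}|yf-1| \ll 1$ by the hyperparameter conditions (in particular $\|\mathbf{v}\|_2 \le 0.01\|\mathbf{u}\|_2$ and $\eta \le 2m/(5\|\mathbf{u}\|_2^2)$), the multiplicative factor stays near $1$, the signs of $B_{y,r}^{(t)}$ are preserved, and $\log(1-x) \ge -x - x^2$ yields
\begin{align*}
\log\frac{B_{y,r}^{(t)}}{B_{y,r}^{(0)}} \;\ge\; \tfrac{2\eta\|\mathbf{v}\|_2^2}{m}\sum_{s<t}(1-yf^{(s)}) - O\!\left(\tfrac{\eta^2\|\mathbf{v}\|_2^4}{m^2}t\right) \;\ge\; \tfrac{c'\,\eta\|\mathbf{v}\|_2^2\delta}{m}\,t - O(1).
\end{align*}
Thus each active $B_{y,r}$ grows exponentially at rate $\Theta(\eta\|\mathbf{v}\|_2^2\delta/m)$, so that at $t^\star = \widetilde\Theta(m\,\eta^{-1}\|\mathbf{v}\|_2^{-2}\delta^{-1}\log(m\delta\sigma_0^{-1}\|\mathbf{v}\|_2^{-1}))$ the aggregate $\tfrac{1}{m}\sum_r\sigma(B_{y,r}^{(t^\star)})$ exceeds $\delta/2$. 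An analogous (and simpler) argument shows $\tfrac{1}{m}\sum_r\sigma(B_{-y,r}^{(t)}) = \widetilde O(\sigma_0^2\|\mathbf{v}\|_2^2) = o(\delta)$ for all $t \le T_{(\mathbf{v})}$, which closes the gap in \eqref{eq: weak signal learning one data}.

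The main obstacle is the linear accumulation lemma: one must turn the qualitative oscillation hypothesis $|yf-1|\ge\delta$ into a quantitative lower bound on the cumulative deficit $\sum(1 - yf)$. This requires (i) tight regularity of $A_{y,r}^{(t)}$ in a narrow bounded window that does not degenerate over time, and (ii) a clean coupling between the sign of $yf^{(s)}-1$ and the magnitude of $A_{y,r}^{(s)}$. The multi-neuron interaction through the shared output $yf^{(t)}$ is what makes this delicate: the one-dimensional intuition from Section~\ref{sec: one data} must be lifted to a statement about the empirical average $\tfrac{1}{m}\sum_r\sigma(A_{y,r}^{(t)})$, which in turn needs near-symmetry of the Gaussian initialization up to an $\widetilde O(\sigma_0)$ perturbation, or equivalently an aggregate version of the $A$-recursion for the empirical second moment.
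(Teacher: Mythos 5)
Your roadmap matches the paper's proof almost exactly: the reduction to four scalar recursions, the multiplicative structure that makes every positive $A_{y,r}$ a fixed multiple of $\max_r A_{y,r}$ (the paper's Lemmas~\ref{lem: dominate one data} and \ref{lem: single neuron behaves similarly}), the boundedness/sign-stability bootstrap (Lemma~\ref{lem: boundedness oscillating one data}), the telescoping of the $A$-recursion and split over $\mathcal S^{\pm}$ to get $\sum_s (1-yf^{(s)}) = \Omega(\delta t) - O(m/\eta\|\ub\|_2^2)$, and the substitution of that bound into the $B$-recursion via $\log(1+z) \ge z - z^2$ to get exponential growth of $\langle \wb_{y,r},y\vb\rangle$. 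The paper's Lemma~\ref{lem: linear increment one data} is precisely your linear-accumulation lemma plus the exponential corollary, down to the same Taylor-expansion trick.

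There is one genuine gap. The theorem asks for \eqref{eq: weak signal learning one data} to hold for \emph{all} $t\ge T_{(\vb)}$, but your argument only produces a single time $t^\star$ at which $\tfrac{1}{m}\sum_r\sigma(B_{y,r}^{(t^\star)})\ge \delta/2$. Once the weak-signal aggregate crosses the threshold, the dynamics change qualitatively: the $B$-component contributes meaningfully to $yf$, the oscillation pattern $|yf-1|\ge\delta$ is driven by a different balance, and in fact $B_{y,r}$ can (and does) decrease for some steps afterward. Nothing in your sketch rules out the aggregate dropping back below $\delta/4$. The paper closes this with Proposition~\ref{prop: doesn't decrease}: whenever the aggregate drops below $\delta$, one re-establishes the boundedness/sign-stability invariants at the stopping time $T_{(\vb),\delta,L}$, checks that a single step can only shrink the aggregate by a bounded multiplicative factor (so it stays above $3\delta/4$ at the exit), and then re-runs the linear-accumulation argument to pull it back up — an inductive loop you would also need. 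Relatedly, your control of $\tfrac{1}{m}\sum_r\sigma(B_{-y,r}^{(t)})$ is stated only for $t\le T_{(\vb)}$, yet the theorem needs it for all $t\ge T_{(\vb)}$ as well; the paper extends it through the same inductive machinery. Until you add a persistence argument of this kind, the conclusion ``for all $t\ge T_{(\vb)}$'' is not established.

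A smaller imprecision: you say $yf^{(t)}$ is ``monotonically increasing in the aggregate $\tfrac{1}{m}\sum_r\sigma(A_{y,r})$'', but $yf^{(t)}$ also depends on the $B$-components and the $j=-y$ negative parts, so the coupling between $\mathrm{sign}(yf^{(s)}-1)$ and the size of $A_{y,r}^{(s)}$ is only valid after you have \emph{already} bounded those other contributions below $\delta$ (which is exactly why $T_{(\vb)}$ is defined as the stopping time at which the $B$-aggregate reaches $\delta$). The paper makes this quantitative by showing $g(\xb,y;\Wb^{(s)})\ge 1$ when $yf>1+\delta$ and $g\le 1.05-\delta$ when $yf<1-\delta$, with $0.05$ and $\delta$ slack absorbing the $B$- and negative-part terms — a step your sketch glosses over but would need for the ``near $A_{\mathrm{hi}}$ / near $A_{\mathrm{lo}}$'' claim to be rigorous.
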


\begin{proof}[Proof of Theorem~\ref{thm:one_data_big_eta}]
    Please refer to Appendix~\ref{subsec: proof one data} for a detailed proof.
\end{proof}

The following of this section is organized as following.
Appendix~\ref{subsec: basic props one data} presents important properties of the whole training dynamics and the CNN, which serve as the basis for all the following proofs. 
Appendix~\ref{subsec: fundamental} presents the fundamental step that allows for proving weak signal learning. 
Based on that, we prove the main theorem in Appendix~\ref{subsec: proof one data}.
Finally, the remaining subsections collect the proof for all the lemmas and technical results  involved in Appendix~\ref{sec: one data case proof}.

\subsection{Basic Properties of Training Dynamics and the Two-Layer CNN}\label{subsec: basic props one data}

\paragraph{Properties of training dynamics.} 
We first define some neuron subsets. For $j\in \{ \pm 1\}$, we define that
$$
\cU_{j,+}^{(t)} = \left\{r\in [m]: \dotp{\wb_{y,r}^{(t)}}{y\ub}>0\right\},\quad 
\cU_{j,-}^{(t)} = \left\{r\in [m]: \dotp{\wb_{y,r}^{(t)}}{y\ub}\le 0\right\}.
$$ 
and 
$$
\cV_{j,+}^{(t)} = \left\{r\in [m]: \dotp{\wb_{y,r}^{(t)}}{y\vb}>0\right\},\quad 
\cV_{j,-}^{(t)} = \left\{r\in [m]: \dotp{\wb_{y,r}^{(t)}}{y\vb}\le 0\right\}.
$$ 
By the update formula \eqref{eq: single data sgd}, we know that the gradient descent iterates the inner products $\{\dotp{\wb_{j,r}^{(t)}}{j \ub}\}_{t\geq 0}$ as follows:
\begin{align}
\dotp{\wb_{y,r}^{(t+1)}}{y\ub} &= \dotp{\wb_{y,r}^{(t)}}{y\ub} + \frac{\eta\norm{\ub}_2 ^2}{m} \cdot \bigl( 1-yf(\xb;\Wb^{(t)})\bigr)\cdot \sigma'\bigl(\dotp{\wb_{y,r}^{(t)}}{y\ub}\bigr), \label{eq: ip positive gd one data}\\
\dotp{\wb_{-y,r}^{(t+1)}}{-y\ub} &= \dotp{\wb_{-y,r}^{(t)}}{-y\ub} + \frac{\eta\norm{\ub}_2 ^2}{m} \cdot \bigl(1-yf(\xb;\Wb^{(t)})\bigr)\cdot \sigma'\bigl(-\dotp{\wb_{-y,r}^{(t)}}{-y\ub}\bigr). \label{eq: ip negative gd one data} 
\end{align}
Analogously, we have that 
\begin{align}
\dotp{\wb_{y,r}^{(t+1)}}{y\vb} &= \dotp{\wb_{y,r}^{(t)}}{y\vb} + \frac{\eta\norm{\vb}_2 ^2}{m} \cdot\bigl(1-yf(\xb;\Wb^{(t)})\bigr)\cdot \sigma'\bigl(\dotp{\wb_{y,r}^{(t)}}{y\vb}\bigr), \label{eq: ipv positive gd one data}\\
\dotp{\wb_{-y,r}^{(t+1)}}{-y\vb} &= \dotp{\wb_{-y,r}^{(t)}}{-y\vb} + \frac{\eta\norm{\vb}_2 ^2}{m} \cdot\bigl(1-yf(\xb;\Wb^{(t)})\bigr)\cdot \sigma'\bigl(-\dotp{\wb_{-y,r}^{(t)}}{-y\vb}\bigr). \label{eq: ipv negative gd one data} 
\end{align} 
Then we invite readers to some facts that helps to understand the behavior of the inner products during the training processes. 
First, we note that by \eqref{eq: ip positive gd one data}, for any $r\in \cU_{y,-}^{(0)}$, 
\begin{align}
    \left\{\dotp{\wb_{y,r}^{(t)}}{y\ub}\right\}_{t\geq 0}
\end{align}
stays fixed at its initialization, thus automatically keeps a fixed sign. 
The same phenomenon that the inner products are fixed can be verified on following neuron sets,
\begin{align}
    \left\{\dotp{\wb_{-y,r}^{(t)}}{-y\ub}\right\}_{t\geq 0},\,\, \forall r\in\cU_{-y,+}^{(0)}; \qquad \left\{\dotp{\wb_{y,r}^{(t)}}{y\vb}\right\}_{t\geq 0},\,\, \forall r\in\cV_{y,-}^{(0)}; \qquad \left\{\dotp{\wb_{-y,r}^{(t)}}{-y\vb}\right\}_{t\geq 0},\,\, \forall r\in\cV_{-y,+}^{(0)}.
\end{align}
The benefit of this phenomenon is when looking at the prediction $f(\cdot; \Wb^{(t)})$ on the single data with label $y$,
\begin{align}\label{eq: f expression one data}
     f(\xb;\Wb^{(t)})&= \frac{y}{m} \sum_{r\in [m]} \sigma\bigl( \dotp{\wb_{y,r}^{(t)}}{y \ub}\bigr) + \sigma\bigl( \dotp{\wb_{y,r}^{(t)}}{y \vb}\bigr) -\sigma\bigl( \dotp{\wb_{-y,r}^{(t)}}{y \ub}\bigr) - \sigma\bigl( \dotp{\wb_{-y,r}^{(t)}}{- y \vb}\bigr).
\end{align}
By splitting the summation over $r\in[m]$ according to our defined neuron sets, we can simplify \eqref{eq: f expression one data} as
\begin{align}
    f(\xb;\Wb^{(t)})& =  \frac{y}{m} \sum_{r\in \cU ^{(t)}_{y,+}}\sigma\bigl( \dotp{\wb_{y,r}^{(t)}}{y \ub}\bigr) + \frac{y}{m}\sum_{r\in \cV^{(t)}_{y,+}}\sigma\bigl( \dotp{\wb_{y,r}^{(t)}}{y \vb}\bigr)\\
     &\qquad    -\frac{y}{m}\sum_{r\in \cU^{(t)}_{-y,-}}\sigma\bigl(-\dotp{\wb_{-y,r}^{(t)}}{-y \ub}\bigr) - \frac{y}{m}\sum_{r\in \cV^{(t)}_{-y,-}}\sigma\bigl(-\dotp{\wb_{-y,r}^{(t)}}{-y\vb}\bigr),\label{eq: f expression one data simplified}
\end{align}
which only involves $\wb_{y,r}^{{(t)}}$ with $r\in \cU^{(t)}_{y,+} \cup  \cV^{(t)}_{y,+}$  and  $\wb_{-y,r}^{{(t)}}$ with $r\in \cU^{(t)}_{-y,-} \cup  \cV^{(t)}_{-y,-}$. 
Combined with previous observations, we only need to track the training dynamics of the neurons appearing in \eqref{eq: f expression one data simplified}.

Thanks to these facts and the signal strength regime in Assumption~\ref{ass:oscilation_one_data}, we can then retreat  to tracking the movements of 
\begin{align}
    \left\{\dotp{\wb_{y,r}^{(t)}}{y\ub}\right\}_{t\geq 0},\,\, \forall r\in\cU_{y,+}^{(0)} \quad\text{and}\quad \left\{\dotp{\wb_{-y,r}^{(t)}}{-y\ub}\right\}_{t\geq 0},\,\, \forall r\in\cU_{-y,-}^{(0)}
\end{align}
whenever the weak signal part is not yet effectively learned and the strong signal part still dominates. 
Ideally, only the inner products $\dotp{\wb_{y,r}^{(t)}}{y\ub}, r\in \cU^{(t)}_{y,+}$ take the lead.

Then a natural and crucial question is the boundedness of these inner products, which turns out to be the cornerstone for the subsequent analysis. 
A straightforward but helpful lemma indicates that the inner products that are initialized to be the maximal (resp. minimal) among all inner products continue to be the maximal (resp. minimal) throughout the training process. 
To put formally, we have the following.

\begin{lemma}[Maximum and minimum neurons] \label{lem: dominate one data} 
Suppose that the signs of all the related inner products do not change throughout $[t_1, t_2]$, i.e., $\cU_{y,+}^{(t)} = \cU_{y,+}^{(t_1)}$ and $\cU_{-y,-}^{(t)} = \cU_{-y,-}^{(t_1)}$ for all $t\in[t_1,t_2]$.
Then we have that
\begin{align}
    \argmax_{r\in [m]}\dotp{\wb_{y,r}^{(t_1)}}{y\ub} &= \argmax_{r\in [m]}\dotp{\wb_{y,r}^{(t)}}{y\ub},\\
    \argmin_{r\in [m]}\dotp{\wb_{-y,r}^{(t_1)}}{-y\ub} &= \argmin_{r\in [m]} \dotp{\wb_{-y,r}^{(t)}}{-y\ub}
\end{align}
hold for all $t\in [t_1, t_2]$.
\end{lemma}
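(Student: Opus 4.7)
The plan is to exploit the squared-ReLU structure of $\sigma$, which makes each SGD step act as a pure rescaling of the inner products of interest by a scalar that is independent of $r$. Sign preservation then forces this scalar to be positive, and positive scaling preserves ordering.

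First, I would restrict attention to $r \in \cU_{y,+}^{(t_1)}$. The hypothesis $\cU_{y,+}^{(t)} = \cU_{y,+}^{(t_1)}$ for all $t\in[t_1,t_2]$ gives $\langle \wb_{y,r}^{(t)}, y\ub\rangle > 0$ throughout, so $\sigma'(\langle \wb_{y,r}^{(t)}, y\ub\rangle) = 2 \langle \wb_{y,r}^{(t)}, y\ub\rangle$, and \eqref{eq: ip positive gd one data} simplifies to
\[
\langle \wb_{y,r}^{(t+1)}, y\ub\rangle \;=\; c_t \cdot \langle \wb_{y,r}^{(t)}, y\ub\rangle, \qquad c_t \;:=\; 1 + \frac{2\eta \|\ub\|_2^2}{m}\bigl(1 - yf(\xb;\Wb^{(t)})\bigr),
\]
where the crucial point is that $c_t$ does not depend on $r$. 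For the remaining $r \in \cU_{y,-}^{(t_1)}$, $\sigma'$ vanishes and the inner product is frozen at its $t_1$ value. I would then argue that $c_s > 0$ for every $s$: otherwise some strictly positive $\langle \wb_{y,r}^{(s)}, y\ub\rangle$ with $r \in \cU_{y,+}^{(t_1)}$ would become non-positive at step $s+1$, contradicting $\cU_{y,+}^{(s+1)} = \cU_{y,+}^{(t_1)}$. Telescoping then gives
\[
\langle \wb_{y,r}^{(t)}, y\ub\rangle \;=\; \Bigl(\prod_{s=t_1}^{t-1} c_s\Bigr) \cdot \langle \wb_{y,r}^{(t_1)}, y\ub\rangle \qquad \forall r \in \cU_{y,+}^{(t_1)},
\]
with a common positive multiplier, so the ordering within $\cU_{y,+}^{(t_1)}$ is identical at $t_1$ and $t$. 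Since members of $\cU_{y,+}^{(t_1)}$ stay strictly positive while non-members stay $\le 0$, the global argmax over $[m]$ agrees with the argmax at $t_1$.

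A mirror-image argument handles the argmin claim. For $r \in \cU_{-y,-}^{(t_1)}$, the identity $\sigma'(-\langle \wb_{-y,r}^{(t)}, -y\ub\rangle) = -2\langle \wb_{-y,r}^{(t)}, -y\ub\rangle$ reduces \eqref{eq: ip negative gd one data} to
\[
\langle \wb_{-y,r}^{(t+1)}, -y\ub\rangle \;=\; \tilde c_t \cdot \langle \wb_{-y,r}^{(t)}, -y\ub\rangle, \qquad \tilde c_t \;:=\; 1 - \frac{2\eta \|\ub\|_2^2}{m}\bigl(1 - yf(\xb;\Wb^{(t)})\bigr),
\]
again independent of $r$. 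Sign preservation on $\cU_{-y,-}^{(t_1)}$ forces $\tilde c_t > 0$ by the same contradiction, while indices $r \in \cU_{-y,+}^{(t_1)}$ are frozen at strictly positive values and cannot compete for the minimum. Hence the most negative coordinate at $t_1$ is still the most negative at $t$. There is no genuine obstacle here: the lemma's content is the elementary fact that an $r$-independent positive rescaling preserves order. The only subtlety is that the multiplier in the $-y$ case takes the form $1 - (\cdots)$ rather than $1 + (\cdots)$, so positivity of $\tilde c_t$ does not follow automatically from the prescribed $\eta$ range and must be read off from the sign-preservation hypothesis---this is precisely what makes the lemma a conditional statement.
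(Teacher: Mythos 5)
Your proof is correct and rests on exactly the same observation as the paper's: under the sign-stability hypothesis, each inner product in $\cU_{y,+}$ (resp.\ $\cU_{-y,-}$) evolves by multiplication with a common, $r$-independent scalar, while the remaining inner products are frozen. The paper packages this slightly differently---it fixes a reference index $r'\in\cU_{y,+}$ and notes that the ratio $\dotp{\wb_{y,r}^{(t)}}{y\ub}/\dotp{\wb_{y,r'}^{(t)}}{y\ub}$ is invariant in $t$ because the common products cancel algebraically---whereas you first establish positivity of the multiplier $c_t$ (resp.\ $\tilde c_t$) from sign preservation and then invoke that positive rescaling preserves order. Both routes need the same input (positivity of one of the relevant inner products, supplied by the hypothesis), so there is no substantive difference; your version just makes the positivity of the multiplier explicit, which the paper leaves implicit in the ratio cancellation.
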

\begin{proof}[Proof of Lemma~\ref{lem: dominate one data}]
    See Appendix~\ref{subsubsec: proof: dominant one data} for a detailed proof.
\end{proof}
A direct profit of this lemma is that when the signs keep invariant, it suffices to track two specific indices in $\cU_{y,+}^{(t)}$ and $\cU_{-y,-}^{(t)}$, the maximum and the minimum, to analyze upper and lower bounds for all neurons.

\paragraph{Single neuron behaves similarly to CNN.} The proof of boundedness utilizes another property of two-layer CNN defined in Equation~\eqref{eq: cnn} that exhibits the connections between the behavior of inner products $\dotp{\wb_{y,r}}{y\ub}$ and the outcome of the model $f(\cdot\,;\Wb)$. We state it as follows.

\begin{lemma}[Single neuron imitates entire CNN]\label{lem: single neuron behaves similarly}\label{lem: similar behavior}
Define the major part of $y\cdot f(\xb;\Wb)$ as 
\begin{align}
    g(\xb,y;\Wb) = \frac{1}{m}\sum_{r\in [m]} \sigma\big(\dotp{\wb_{y,r}}{ y \ub}\big).
\end{align}
Suppose that there is $t_1< t_2$ such that $\cU_{y,+}^{(t)} = \cU_{y,+}^{(t_1)}$ for all $t\in [t_1, t_2]$. Then, for any $c>0$, $g(\xb,y;\Wb^{(t)}) \ge c$ implies that for all $t\in[t_1,t_2]$,
\begin{align}
    \max_{r\in [m]}\dotp{\wb_{y,r}^{(t)}}{y\ub} \ge  ({\beta^*_{\ub}}^{(t_1)} m c)^{1/2}. 
\end{align}
On the other hand, $g(\xb;\Wb^{(t)}) \le c$ implies that for all $t\in[t_1,t_2]$,
\begin{align}
    \max_{r\in [m]} \dotp{\wb_{y,r}^{(t)}}{y\ub} \le ({\beta^*_{\ub}}^{(t_1)} m c)^{1/2}. 
\end{align}
Here ${\beta^*_{\ub}}^{(t)}$ is defined as 
\begin{align}
    {\beta^*_{\ub}}^{(t)} =  \frac{\max_{r\in [m ]} \sigma (\dotp{\wb_{y,r}^{(t)}}{y\ub} )}{ \sum_{r\in [m]}\sigma(\dotp{\wb_{y,r}^{(t)}}{y\ub})}.
\end{align}
\end{lemma}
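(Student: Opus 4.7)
The plan is to exploit the fact that, under the sign-invariance hypothesis on $[t_1,t_2]$, every active neuron evolves by a common multiplicative factor along the $y\ub$ direction, which forces the ratio ${\beta^*_\ub}^{(t)}$ to be constant on the window. Because $\vb\perp\ub$, taking the inner product of the SGD update \eqref{eq: single data sgd} against $y\ub$ reproduces \eqref{eq: ip positive gd one data} with the $\vb$-term annihilated, and for any $r\in\cU_{y,+}^{(t_1)}$ the sign-invariance guarantees $\dotp{\wb_{y,r}^{(t)}}{y\ub}>0$, so $\sigma'(\dotp{\wb_{y,r}^{(t)}}{y\ub})=2\dotp{\wb_{y,r}^{(t)}}{y\ub}$ throughout $[t_1,t_2]$. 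The update thus collapses to the homogeneous form
\begin{align*}
\dotp{\wb_{y,r}^{(t+1)}}{y\ub} \;=\; c_t\cdot \dotp{\wb_{y,r}^{(t)}}{y\ub},\qquad c_t := 1+\frac{2\eta\|\ub\|_2^{2}}{m}\bigl(1-yf(\xb;\Wb^{(t)})\bigr),
\end{align*}
whose multiplier $c_t$ does not depend on $r$. For $r\in\cU_{y,-}^{(t_1)}$, $\sigma'$ vanishes and the inner product is frozen at a non-positive value that contributes $0$ to $\sigma$.

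Unrolling the recursion and squaring then yields, for every $r\in\cU_{y,+}^{(t_1)}$,
\begin{align*}
\sigma\bigl(\dotp{\wb_{y,r}^{(t)}}{y\ub}\bigr) \;=\; \sigma\bigl(\dotp{\wb_{y,r}^{(t_1)}}{y\ub}\bigr)\cdot \prod_{s=t_1}^{t-1}c_s^{2}.
\end{align*}
The common prefactor $\prod_s c_s^{2}$ cancels in both numerator and denominator of ${\beta^*_\ub}^{(t)}$, so ${\beta^*_\ub}^{(t)} = {\beta^*_\ub}^{(t_1)}$ for all $t\in[t_1,t_2]$. The sign-invariance hypothesis also forces $c_s>0$ at every step (otherwise a positive neuron would flip sign in a single iteration, contradicting constancy of $\cU_{y,+}^{(t)}$), so the argmax of $\sigma(\dotp{\wb_{y,r}^{(t)}}{y\ub})$ remains in $\cU_{y,+}^{(t_1)}$, which is non-empty with high probability by Lemma~\ref{lem: initialization}. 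In particular, $\max_{r\in[m]}\sigma(\dotp{\wb_{y,r}^{(t)}}{y\ub}) = \bigl(\max_{r\in[m]}\dotp{\wb_{y,r}^{(t)}}{y\ub}\bigr)^{2}$.

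Combining these observations with the definitions of $\beta^*_\ub$ and $g$ gives the identity
\begin{align*}
\Bigl(\max_{r\in[m]}\dotp{\wb_{y,r}^{(t)}}{y\ub}\Bigr)^{2} \;=\; {\beta^*_\ub}^{(t_1)}\cdot \sum_{r\in[m]}\sigma\bigl(\dotp{\wb_{y,r}^{(t)}}{y\ub}\bigr) \;=\; {\beta^*_\ub}^{(t_1)}\cdot m\cdot g(\xb,y;\Wb^{(t)}),
\end{align*}
from which both directions of the claim follow at once by taking square roots and substituting $g(\xb,y;\Wb^{(t)})\ge c$ or $g(\xb,y;\Wb^{(t)})\le c$. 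The argument is essentially algebraic once the homogeneous scaling of active neurons is recognized; the only conceptual point I expect to require care is justifying the invariance of ${\beta^*_\ub}^{(t)}$ along the trajectory, which is precisely what unlocks the single-neuron/whole-CNN dictionary and which is cheaply obtained from the $r$-independence of $c_t$.
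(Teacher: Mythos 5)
Your proof is correct and follows essentially the same route as the paper's: both hinge on the observation that under sign-invariance every active neuron's inner product with $y\ub$ multiplies by the same $r$-independent factor $c_t$, so the common scaling passes through $\sigma$ and cancels in ${\beta^*_\ub}^{(t)}$, yielding the identity $g(\xb,y;\Wb^{(t)})=\sigma\bigl(\max_r\dotp{\wb_{y,r}^{(t)}}{y\ub}\bigr)/\bigl(m{\beta^*_\ub}^{(t_1)}\bigr)$. The only cosmetic difference is that you explicitly spell out the positivity of $c_t$ and the invariance of the argmax, which the paper treats implicitly via Lemma~\ref{lem: dominate one data}.
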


\begin{proof}[Proof of Lemma~\ref{lem: single neuron behaves similarly}]
    See Appendix~\ref{subsubsec: proof: similarly one data} for a detailed proof. 
    For multiple training data setting, the proof is in Appendix~\ref{subsubsec: proof similar behavior multiple data setting}.
\end{proof}

Being a subtle condition required in the previous two lemmas, whether the signs of these inner products are invariant throughout the process remains unknown, making the behaviors of these inner products complicated. The answer to this question is affirmative, as we are able to prove that, under proper conditions, the signs of these inner products are fixed throughout the process. 
Using Lemmas~\ref{lem: dominate one data} and \ref{lem: single neuron behaves similarly}, we prove the boundedness and the sign stability simultaneously through a sophisticated inductive argument. 
The formal statement of this result is as follows. 

We first define a stopping time. Let 
\begin{align}\label{eq: stopping time}
    T_{(\vb)} =& \min_{t\geq 0} \left\{t: \frac{1}{m} \sum_{r\in[m]}\sigma\bigl(\dotp{\wb_{y,r}^{(t)}}{y\vb}\bigr)> \delta\right\}.
\end{align}
Such a stopping time helps to control the non-dominating terms in the CNN. 
Moreover, once the process reaches $T_{(\vb)}$, the conclusion in Theorem~\ref{thm:one_data_big_eta} is nearly achieved. 
Our results are summarized as follows.

\begin{lemma}[Boundedness and sign stability: single training data case]\label{lem: boundedness oscillating one data}
Suppose that Assumptions~\ref{cond:model_params_one_data} and \ref{ass:oscilation_one_data} hold.
With probability at least $1 - p =1-1/\mathrm{poly}(d)$, we have the following bounds:
\begin{align}
    \max_{r} \dotp{\wb_{y,r}^{(t)}}{y\ub}\le 1.5 \cdot \big(1.05 \beta^*_{\ub } m\big)^{1/2},\quad &\forall t\le T_{(\vb)}, \label{eqn:max_ub_one_data_oscil}\\
    \min_{r} \dotp{-\wb_{-y,r}^{(t)}}{-y\ub} \ge - \MaxInitIPU, \quad &\forall t\le T_{(\vb)}, \label{eqn:min_lb_one_data_oscil}\\
    \min_{r} \dotp{-\wb_{-y,r}^{(t)}}{-y\vb} \ge - \MaxInitIPV, \quad &\forall t\le T_{(\vb)}, \label{eqn:min_wv_lb_one_data_oscil}\\
    0 \le yf(\xb;\Wb^{(t)}) \le  3,  \quad &\forall t\le T_{(\vb)} .\label{eqn:1-f bound_one_data_oscil}
\end{align}
Here $\beta^*_\ub :={\beta^*_\ub}^{(0)}$ is defined in Lemma~\ref{lem: similar behavior}. 
Besides, the sign stability holds before $T_{(\vb)}$, i.e. $\cU^{(t)}_{ \pm y, \pm}$ and $\cV^{(t)}_{ \pm y, \pm}$ remain invariant in $t$, and the superscript $(t)$ can be dropped. 
\end{lemma}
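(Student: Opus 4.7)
The plan is a simultaneous induction on $t$ from $0$ to $T_{(\vb)}$, with induction hypothesis being all four bounds \eqref{eqn:max_ub_one_data_oscil}--\eqref{eqn:1-f bound_one_data_oscil} plus the sign stability of $\cU_{\pm y,\pm}^{(t)}$ and $\cV_{\pm y,\pm}^{(t)}$ for all $s\le t$. The base case $t=0$ is immediate from Lemma~\ref{lem: initialization}: under the initialization scale in Assumption~\ref{cond:model_params_one_data}(ii), all initial inner products are $o(1)$, so $yf^{(0)}=o(1)\in[0,3]$, and sign stability is vacuous since the partitions are defined through the initial signs.

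In the inductive step, sign stability up to $t$ makes the updates \eqref{eq: ip positive gd one data}--\eqref{eq: ipv negative gd one data} purely multiplicative. Setting $c:=2\eta\|\ub\|_2^2/m\in[\tfrac12,\tfrac45]$ from Assumption~\ref{cond:model_params_one_data}(iv), for $r\in\cU^{(0)}_{y,+}$ one has $\dotp{\wb_{y,r}^{(t+1)}}{y\ub}=\dotp{\wb_{y,r}^{(t)}}{y\ub}\bigl(1+c(1-yf^{(t)})\bigr)$, and for $r\in\cU^{(0)}_{-y,-}$ the multiplier is $1-c(1-yf^{(t)})$. Because the factor is common across $r\in\cU^{(0)}_{y,+}$, Lemma~\ref{lem: dominate one data} shows the identity of the maximizer persists and $\beta^*_{\ub}$ is invariant in $t$. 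The inductive bound $yf^{(t)}\le 3$ together with $c\le 4/5$ keeps both multipliers nonnegative, propagating sign stability.

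The max bound \eqref{eqn:max_ub_one_data_oscil} will come from the scalar recursion $g^{(t+1)}=g^{(t)}\bigl(1+c(1-yf^{(t)})\bigr)^2$ for $g^{(t)}:=\tfrac{1}{m}\sum_r\sigma(\dotp{\wb_{y,r}^{(t)}}{y\ub})$, obtained by squaring the $\ub$-update and summing over $r\in\cU^{(0)}_{y,+}$. Writing $yf^{(t)}=g^{(t)}+A^{(t)}-B^{(t)}-C^{(t)}$ via \eqref{eq: f expression one data simplified}, where $A^{(t)}<\delta$ before $T_{(\vb)}$ by definition and $B^{(t)},C^{(t)}=o(1)$ by the anti-neuron step below, a one-dimensional analysis of the map $g\mapsto g(1+c(1-g-(A-B-C)))^2$ shows the iterate is uniformly bounded: the unperturbed map attains its maximum $4(1+c)^3/(27c)$ at $g=(1+c)/(3c)$, which is strictly below the target $2.3625$ for $c\in[1/2,4/5]$, and the perturbation only shrinks this bound. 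Lemma~\ref{lem: single neuron behaves similarly} then converts this into the stated max bound. For the anti-neuron bounds \eqref{eqn:min_lb_one_data_oscil} and \eqref{eqn:min_wv_lb_one_data_oscil}, multiplying the updates for $r_1\in\cU^{(0)}_{y,+}$ and $r_2\in\cU^{(0)}_{-y,-}$ yields the product identity
\begin{align*}
\dotp{\wb_{y,r_1}^{(t+1)}}{y\ub}\cdot\bigl|\dotp{\wb_{-y,r_2}^{(t+1)}}{-y\ub}\bigr|=\dotp{\wb_{y,r_1}^{(t)}}{y\ub}\cdot\bigl|\dotp{\wb_{-y,r_2}^{(t)}}{-y\ub}\bigr|\cdot\bigl(1-c^2(1-yf^{(t)})^2\bigr),
\end{align*}
which is non-increasing; iterating under Assumption~\ref{ass:oscilation_one_data} ($|1-yf|\ge\delta$) yields geometric decay, so $\bigl|\dotp{\wb_{-y,r_2}^{(t+1)}}{-y\ub}\bigr|$ is controlled by its initial value divided by the growth of the paired growing neuron bounded in the previous step. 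The $\vb$-direction bound follows by the identical argument via \eqref{eq: ipv positive gd one data}--\eqref{eq: ipv negative gd one data}, with additional slack from $\|\vb\|_2\ll\|\ub\|_2$ (Assumption~\ref{cond:model_params_one_data}(iii)). Substituting these into \eqref{eq: f expression one data simplified} finally gives $yf^{(t+1)}\in[0,3]$.

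The principal obstacle is the tight, circular coupling between the four bounds: the scalar recursion for $g^{(t)}$ must be analyzed under the perturbation $A^{(t)}-B^{(t)}-C^{(t)}$, whose smallness relies on the anti-neuron bound; the anti-neuron bound via the product invariant in turn uses the lower bound on the growing inner product inherited from the max bound. This interdependence forces all four claims to be handled within a single induction, and is precisely where the learning-rate window in Assumption~\ref{cond:model_params_one_data}(iv) is used sharply to guarantee $c\le 4/5$, so that the scalar map is globally contractive rather than runaway.
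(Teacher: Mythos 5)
Your proposal departs from the paper's proof in two substantive ways, and both departures contain genuine gaps.

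\textbf{The sign-stability step is wrong as stated.} You claim ``the inductive bound $yf^{(t)}\le 3$ together with $c\le 4/5$ keeps both multipliers nonnegative.'' For the active positive neuron the multiplier is $1+c(1-yf^{(t)})$; at $yf^{(t)}=3$ and $c=4/5$ this equals $1-1.6=-0.6<0$. Nonnegativity requires the much tighter bound $yf^{(t)}\le 1+1/c$, which is $2.25$ at $c=4/5$. The paper gets exactly this control, not from the crude $yf\le 3$ bound, but from the bounce-time decomposition: Proposition~\ref{prop: f Tk-1 lower bound} shows $yf(\bar T_k)\le(\et/2+\sqrt{\et^2/4+\et})^2+o(1)$, which is $<2.25$ for $\et\in[1/2,4/5]$. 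Your one-dimensional-map analysis in fact yields a bound of that quality ($g^{(t)}\le 4(1+c)^3/(27c)\le 1.08$, hence $yf^{(t)}\le 1.08+\delta+o(1)<2.25$), but that tighter statement is not part of your stated induction hypothesis, and without it the sign-stability step — and therefore the multiplicative identity $g^{(t+1)}=g^{(t)}(1+c(1-yf^{(t)}))^2$ that drives the entire map analysis — does not propagate. You cannot invoke $yf\le 3$; you must carry the sharper $g\le 1.08$-type bound through the induction, or reproduce the paper's $\bar T_k$ analysis.

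\textbf{The product-identity argument for the anti-neurons is missing the key ingredient.} The identity $\dotp{\wb_{y,r_1}^{(t)}}{y\ub}\cdot|\dotp{\wb_{-y,r_2}^{(t)}}{-y\ub}|$ decaying geometrically is correct and elegant, but it only bounds the negative inner product once you \emph{divide by a lower bound on the positive factor}. You refer to ``the growth of the paired growing neuron bounded in the previous step,'' but the previous step gives only an \emph{upper} bound on that neuron; dividing by an upper bound yields a lower bound on $|\dotp{\wb_{-y,r_2}^{(t)}}{-y\ub}|$, not the required upper bound. A concrete counterexample to the unsupported inference: if at some step $\dotp{\wb_{y,r_1}^{(t)}}{y\ub}=1$, $|\dotp{\wb_{-y,r_2}^{(t)}}{-y\ub}|=0.01$, $yf^{(t)}=1.9$, $c=0.7$, then after one update the product drops from $0.01$ to $0.006$ while the negative inner product \emph{rises} to $0.0163$. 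So product decay alone does not control the anti-neuron. The paper closes this gap by showing $\dotp{\wb_{y,r^*}^{(\munderbar T_1)}}{y\ub}\ge\dotp{\wb_{y,r^*}^{(0)}}{y\ub}$ (equivalently $\sum_{s}(1-yf^{(s)})\ge0$ up to the trough), which it derives from the fact that the maximal positive neuron has grown to the constant level $0.2(\beta^*_\ub m)^{1/2}\gg\sigma_0\|\ub\|_2$ — itself a consequence of the Lemma~\ref{lem: single neuron behaves similarly} lower bound during the above-threshold phase and a one-step computation at $\munderbar T_1$. Your proposal never establishes any lower bound on the positive neuron, and the dependency is circular: your max-bound perturbation analysis needs $B^{(t)},C^{(t)}=o(1)$ from the anti-neuron step, while the anti-neuron step needs a positive-neuron lower bound that you would have to extract from the max-bound analysis. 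The paper resolves precisely this circularity by structuring the induction around the crossing times $\bar T_k,\munderbar T_k$, proving the bounds in a specific order within each cycle. Your flat time-step induction does not resolve it.
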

\begin{proof}[Proof of Lemma~\ref{lem: boundedness oscillating one data}]
    See Appendix~\ref{subsubsec: proof: boundedness oscillating one data} for a detailed proof.
\end{proof}

% The following corollary guarantees the $\vb$-sign stability, as a consequence of the $\ub$-sign stability above. 
% \begin{corollary}\label{cor: v sign stability}
% Under Conditions~\ref{cond:model_params_one_data} and Assumption~\ref{ass:oscilation_one_data}, under the same high probability event as Lemma~\ref{lem: boundedness oscillating one data}, we have that the $\vb$-sign stability holds on $[0,T_{(\vb)}]$, i.e., $\cV_{y,+}^{(t)}=\cV_{y,+}^{(0)}=\cV_{y,+}$ for $t\in [0,$
% \end{corollary}
% \begin{proof}[Proof of Corollary~\ref{cor: v sign stability}]
%     The $\ub$-sign stability in Lemma~\ref{lem: boundedness oscillating one data} implies that for any $t\in [0,T_{(\vb)}]$, it holds that $1\pm  \frac{ \eta \norm{\ub}_2^2 }{m} \cdot \big(1-yf(\xb;\Wb^{(t})\big) > 0,\quad \forall t\le T_{(\vb)}$. 
% However, Condition~\ref{cond:model_params_one_data} indicates that $ |\frac{ \eta \norm{\ub}_2^2 }{m} \cdot \big(1-yf(\xb;\Wb^{(t})\big) | > |\frac{ \eta \norm{\vb}_2^2 }{m} \cdot \big(1-yf(\xb;\Wb^{(t})\big) |$, and consequently $1\pm  \frac{ \eta \norm{\ub}_2^2 }{m} \cdot \big(1-yf(\xb;\Wb^{(t})\big) > 0,\quad \forall t\le T_{(\vb)}$. The result then follows from Equation~\ref{eq: ipv positive gd one data} and~\ref{eq: ipv negative gd one data}. 
% \end{proof}

Thanks to the stopping time defined in \eqref{eq: stopping time} which puts controls on the scale of $\dotp{\wb_{j,r}^{(t)}}{j\vb}$, the major part function $g$ defined in the previous lemma dominates the entire CNN, as the negative parts $\dotp{\wb_{-y,r}^{(t)}}{-y \ub}$, $\dotp{\wb_{-y,r}^{(t)}}{-y \vb}$ can be lower bounded in Lemma~\ref{lem: boundedness oscillating one data} through a delicate analysis of the training dynamics.

Lemmas~\ref{lem: dominate one data}, \ref{lem: single neuron behaves similarly}, and \ref{lem: boundedness oscillating one data} reveal the key properties of the training dynamics and the two-layer CNN. 
Several remarks are put here again. Lemmas~\ref{lem: dominate one data} and~\ref{lem: single neuron behaves similarly} are temporarily \emph{local}, with the condition on the local sign stability. 
They are not informative until we are able to extend the sign stability to a wider sense, which is achieved by Lemma~\ref{lem: boundedness oscillating one data}. 
Nevertheless, these two local lemmas are used frequently throughout the subsequent analysis, so we single them out here to make the proof more readable. 

\subsection{Fundamental Reasoning towards the Weak Signal Learning}\label{subsec: fundamental}

The previous section presents several basic properties of the training dynamics and the CNN. 
However, they are insufficient in interpreting the driving force of the weak signal learning during oscillation. 
In the lemma below, we discover a quantitative interpretation towards the increasing on $\dotp{\wb_{y,r}^{(t)}}{y\vb}$, which formalizes the illustration of the function of oscillation in Section~\ref{sec: one data}.

\begin{lemma}[Weak signal learning: single training data case]\label{lem: linear increment one data}
    Under Assumptions~\ref{cond:model_params_one_data} and \ref{ass:oscilation_one_data}, suppose that there exists $t_0\leq t_1$, such that:
    \begin{enumerate}[leftmargin = 0.3in]
        \item The sign stability holds on $[t_0,t_1]$, i.e.,  i.e. $\cU^{(t)}_{ \pm y, \pm}$ and $\cV^{(t)}_{ \pm y, \pm}$ remain invariant in $t$;
        \item $\max_{r\in [m] }\dotp{\wb_{y,r}^{(t)}}{y\ub} < B \cdot (\beta^*_\ub m)^{1/2},\; \forall t\in [t_0,t_1]$ for some $B>0$;
        \item $\min_{r\in [m] } \dotp{\wb_{-y,r}^{(t)}}{-y\ub} > -0.1$ and $\min_{r\in [m] } \dotp{\wb_{-y,r}^{(t)}}{-y\vb} > -0.1,\; \forall t\in [t_0,t_1]$;
        \item $\frac{1}{m}\sum_{r\in [m]}\sigma (\dotp{\wb_{y,r}^{(t)}}{y\vb}) < \delta,\; \forall t\in [t_0,t_1]$;
        \item $-2\le 1-yf(\xb;\Wb^{(t)}) \le 1,\; \forall t\in [t_0,t_1]$.
    \end{enumerate}
    Then we have that
    \begin{align}
        \sum_{s=t_0}^{t_1-1}\big(1 - yf(\mathbf{x};\mathbf{W}^{(s)})\big) \geq 2\epsilon \cdot (t_1 - t_0) - \frac{mB}{\eta \norm{\ub}_2^2 \sqrt{1.05-\delta}}. \label{eq: sum 1 - yf lower bound}
    \end{align}
    Consequently, we further have that for any $r\in\cV_{y,+} = \{r\in[m]:  \langle \mathbf{w}_{y,r}^{(t_0)},y\mathbf{v}\rangle >0\}$, it holds that 
    \begin{align}
        \langle\mathbf{w}_{y,r}^{(t_1)},y\mathbf{v}\rangle  \geq  \langle\mathbf{w}_{y,r}^{(t_0)},y\mathbf{v}\rangle \cdot \exp \left\{  \frac{\eta \norm{\vb}_2^2 \epsilon}{m}\cdot(t_1 - t_0) - \frac{\norm{\vb}_2^2}{ \norm{\ub}_2^2} \cdot \frac{B}{(1.05-\delta)^{1/2}}\right\}.\label{eq: exp rate increase}
    \end{align}
    Here $\epsilon = (\delta - \delta(1.05-\delta)^{1/2})/4$ with $\delta$ specified in Assumption~\ref{ass:oscilation_one_data}.
\end{lemma}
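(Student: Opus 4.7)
The plan is to prove the two inequalities sequentially, using the first as the key input for the second. The conceptual idea formalizes the discussion in Section~\ref{sec: one data}: track the \emph{dominant} strong-signal neuron, use the oscillation structure to show cumulative under-shooting exceeds over-shooting, and then feed this into the multiplicative recursion for the weak-signal inner product.

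First I would set up tracking of the max neuron. By Lemma~\ref{lem: dominate one data} and the sign-stability condition, the argmax index $r^\star = \argmax_{r\in[m]}\dotp{\wb_{y,r}^{(t)}}{y\ub}$ is fixed throughout $[t_0,t_1]$. Writing $a_t = \dotp{\wb_{y,r^\star}^{(t)}}{y\ub} > 0$ and using $\sigma'(z) = 2z$ for $z>0$, the update \eqref{eq: ip positive gd one data} telescopes to
\begin{equation}
    a_{t_1}-a_{t_0} = \frac{2\eta\norm{\ub}_2^2}{m}\sum_{s=t_0}^{t_1-1}\bigl(1-yf(\xb;\Wb^{(s)})\bigr)\, a_s.
\end{equation}

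Next I would partition $[t_0,t_1-1] = \cS^+\cup\cS^-$ according to whether $yf(\xb;\Wb^{(s)})\ge 1$. By Assumption~\ref{ass:oscilation_one_data}, on $\cS^+$ one has $yf\ge 1+\delta$ and on $\cS^-$ one has $yf\le 1-\delta$. Decompose $yf = g_\ub + g_\vb - h$ where $g_\ub := (1/m)\sum_r\sigma(\dotp{\wb_{y,r}}{y\ub})$, $g_\vb := (1/m)\sum_r\sigma(\dotp{\wb_{y,r}}{y\vb})$, and $h\ge 0$ collects the $-y$-direction contributions. Condition~(3) gives $h\le 0.02$ (since $\sigma(z)\le 0.01$ when $z\in[0,0.1]$), and condition~(4) gives $g_\vb\le\delta$. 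Hence on $\cS^+$, $g_\ub\ge yf-g_\vb\ge 1$; on $\cS^-$, $g_\ub\le yf+h\le 1.05-\delta$. Applying Lemma~\ref{lem: similar behavior} to the maximum neuron yields
\begin{equation}
    a_s \ge (\beta^*_\ub m)^{1/2} \text{ on } \cS^+, \qquad a_s\le (1.05-\delta)^{1/2}(\beta^*_\ub m)^{1/2} \text{ on } \cS^-.
\end{equation}
Substituting these bounds into the telescoping identity and using condition~(2), i.e.\ $a_{t_0}\le B(\beta^*_\ub m)^{1/2}$ together with $a_{t_1}\ge 0$, I would obtain with $A:=\sum_{\cS^-}(1-yf)$ and $B':=\sum_{\cS^+}(yf-1)$,
\begin{equation}
    (1.05-\delta)^{1/2}A - B' \;\ge\; \frac{m(a_{t_1}-a_{t_0})}{2\eta\norm{\ub}_2^2(\beta^*_\ub m)^{1/2}} \;\ge\; -\frac{mB}{2\eta\norm{\ub}_2^2}.
\end{equation}
A two-case analysis on whether $|\cS^-|\ge(t_1-t_0)/2$ or $|\cS^+|\ge(t_1-t_0)/2$---combined with the oscillation lower bound $|1-yf|\ge\delta$---then yields the claimed inequality \eqref{eq: sum 1 - yf lower bound} with $\epsilon = \delta(1-\sqrt{1.05-\delta})/4$.

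Finally, for the exponential growth in \eqref{eq: exp rate increase}, I would exploit that for $r\in\cV_{y,+}$ the update reduces to the multiplicative recursion $\dotp{\wb_{y,r}^{(t+1)}}{y\vb} = \dotp{\wb_{y,r}^{(t)}}{y\vb}(1+c_s)$ with $c_s = (2\eta\norm{\vb}_2^2/m)(1-yf(s))$. Assumption~\ref{cond:model_params_one_data} (namely $\eta\le 2m/(5\norm{\ub}_2^2)$ and $\norm{\vb}_2^2\le 0.01\norm{\ub}_2^2$) guarantees $|c_s|\le 4\eta\norm{\vb}_2^2/m \ll 1/2$, so $\log(1+c_s)\ge c_s - c_s^2$. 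Summing in $s$, bounding the leading sum by the result of the previous step, and absorbing the quadratic error $\sum c_s^2 \le (\max_s|c_s|)\cdot\sum|c_s| = O(\norm{\vb}_2^2/\norm{\ub}_2^2)\cdot (t_1-t_0)$ into the main linear term via $\norm{\vb}_2^2/\norm{\ub}_2^2\ll\epsilon$ would complete \eqref{eq: exp rate increase}.

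The hard part is Step three---the case analysis for lower-bounding $A-B'$. When over-shoots dominate ($\cS^+$ is large), one cannot directly conclude $A$ is large and must instead transfer the large $B'$ into a bound on $A$ through the telescoping constraint, after which the gap $1-\sqrt{1.05-\delta}$ provides the net driving force. A secondary subtlety is ensuring the quadratic log-correction $\sum c_s^2$ does not accumulate linearly in $t_1-t_0$ on the same order as the main term; this is precisely what the strong/weak signal separation $\norm{\vb}_2\ll\norm{\ub}_2$ in Assumption~\ref{cond:model_params_one_data} is designed to guarantee.
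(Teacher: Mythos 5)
Your proposal is correct and follows essentially the same route as the paper: the same telescoping of $\dotp{\wb_{y,r^\star}^{(t)}}{y\ub}$ split over $\cS^\pm$, the same application of Lemma~\ref{lem: similar behavior} to bound $a_s$ from both sides, the same rearrangement into \eqref{eq: balanced summation} and two-case analysis to get \eqref{eq: sum 1 - yf lower bound}, and the same multiplicative recursion for the second part. The only cosmetic difference is that for \eqref{eq: exp rate increase} you lower-bound $\log(1+c_s)$ by the elementary Taylor inequality $c_s-c_s^2$ rather than the paper's integral-representation trick $\log(1+\alpha\et x)=\int_0^\alpha \et x/(1+\et z x)\,dz$; both succeed for the same reason, namely that the quadratic correction $\sum c_s^2 \lesssim \alpha^2\et^2(t_1-t_0)$ is negligible against the linear term $\alpha\et\epsilon(t_1-t_0)$ thanks to $\alpha=\norm{\vb}_2^2/\norm{\ub}_2^2\ll\epsilon$ from Assumption~\ref{cond:model_params_one_data}.
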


\begin{proof}[Proof of Lemma~\ref{lem: linear increment one data}]
    See Appendix~\ref{subsec: proof increment} for a detailed proof.
\end{proof}

This lemma asserts that, stable oscillation in a bounded and favorable training area leads to a linear increasing lower bound for $\sum_{t} (1-yf_t)$. This is the first part of Lemma~\ref{lem: linear increment one data}. 
The second part of this lemma relates the increasing speed of $\dotp{\wb_{y,r}^{(t)}}{y\vb}$ to the summation of $1-yf_t$. The derivation of this part relies on the fact that $\alpha = \norm{\vb}^2_2/\norm{\ub}_2^2\in (0,1)$ and is close to $0$. This observation motivates us to approximate the ratio with a first-order Taylor expansion,
\begin{align}
    \frac{\dotp{\wb_{y,r}^{(t_1)}}{y\vb}}{\dotp{\wb_{y,r}^{(t_0)}}{y\vb}} &= \prod_{t'=t_0}^{t_1-1} \left\{1+\alpha\et\big(1-yf(\xb;\Wb^{(t')})\big)\right\}\\
    &\approx 1+\alpha \et \sum_{t'=t_0}^{t_1-1} \big(1-yf(\xb;\Wb^{(t')})\big).
\end{align}
The proof of the second part of this lemma justifies this intuition formally with more delicate analysis. With all these collected results, we are ready to present the proof of the main theorem.

\subsection{Proof of Theorem \ref{thm:one_data_big_eta}}\label{subsec: proof one data}

\begin{proof}[Proof of Theorem \ref{thm:one_data_big_eta}]
We prove Theorem \ref{thm:one_data_big_eta} by contradiction. 
Recall that in the previous section we have defined that 
\begin{align}
    T_{(\vb)} &= \min_{t\geq 0} \left\{t: \frac{1}{m}\sum_{r\in[m]}\sigma\bigl(\dotp{\wb_{y,r}^{(t)}}{y\vb}\bigr)> \delta\right\}. \label{eq: Tv +}
\end{align}
With this definition, our first goal is to prove that $T_{(\vb)}$ is bounded by a finite time with explicit expression. 
To put it precisely, we are going to prove by contradiction that 
\begin{align}
    T_{(\vb)} < T_{0} := \frac{m}{\eta \norm{\vb}_2^2 \epsilon} \cdot \left\{  \log\left( \frac{2\sqrt{m\delta}}{ \sigma_0 \norm{\vb}_2 } \right)+ 1.5\cdot\frac{\norm{\vb}_2^2}{\norm{\ub}_2^2} \cdot \sqrt{\frac{ 1.05}{1-\delta}} \right\},
\end{align}
where $\epsilon$ is specified in Lemma~\ref{lem: linear increment one data} and $\delta$ is specified in Assumption~\ref{ass:oscilation_one_data}. 
Suppose otherwise that $T_{(\vb)}\ge T_0$. 
By Lemma~\ref{lem: boundedness oscillating one data} we can see that for $t\in[0,T_0]$ all the conditions of Lemma~\ref{lem: linear increment one data} are satisfied.
Then Lemma~\ref{lem: linear increment one data} implies that,
\begin{align}
    \dotp{\wb_{y,r^*}^{(T_0)}}{y\vb} &\geq  \dotp{\wb_{y,r^*}^{(0)}}{y\vb} \cdot \exp\left\{ \frac{\eta \norm{\vb}_2^2\epsilon}{m} \cdot T_0  - 1.5\cdot \frac{ \norm{\vb}_2^2}{\norm{\ub}_2^2} \cdot \sqrt{\frac{1.05}{1.05-\delta }}\right\} \hfill \\
    &\ge \frac{1}{2}\sigma_0 \norm{\vb}_2\cdot  \frac{2\sqrt{m\delta}}{\sigma_0 \norm{\vb}_2}\ge \sqrt{m\delta}.
\end{align}
where $r^* = \argmax_{r\in[m]}\dotp{\wb_{y,r}^{(t)}}{y\vb}$ are fixed throughout $t\in[0,T_0]$ (see Lemma~\ref{lem: dominate one data}) and in the second inequality we apply Lemma~\ref{lem: initialization} to lower bound the initialization.
This leads to the following,
\begin{align}
    \frac{1}{m} \sum_{r\in [m]} \sigma\big(\dotp{\wb_{y,r}^{(T_0)}}{y\vb}\big) \ge  \frac{1}{m} \sigma\big(\dotp{\wb_{y,r^*}^{(T_0)}}{y\vb}\big) \ge \delta,
\end{align}
which contradicts the definition of $T_{(\vb)}$, and therefore $T_{(\vb)} < T_0 $. 

The rest part of the proof is to show that the sequence 
\begin{align}\label{eq: sequence}
    \left\{\frac{1}{m} \sum_{r\in[m]}  \sigma\big(\dotp{\wb_{y,r}^{(t)}}{y\vb}\big)\right\}_{t\ge T_{(\vb)}}
\end{align}
does not fall below $\delta/2$. 
Intuitively, as long as the sequence above falls below $\delta$, the sequence would have the incentive to increase, as the results in Lemma~\ref{lem: linear increment one data} are revived again based on another boundedness argument.
The analysis resembles the proof of Lemma~\ref{lem: boundedness oscillating one data} with slight differences. We provide the following proposition with the proofs delayed to Appendix~\ref{subsec: proof of technical results single data}. 
\begin{proposition}[Weak signal memorization]\label{prop: doesn't decrease}
It holds that
\begin{align}
    \frac{1}{m}\sum_{r\in [m]} \sigma \big(\dotp{\wb_{y,r}^{(t)}}{y\vb}\big) >\frac{\delta}{2}, \quad \forall t\ge T_{(\vb)}.
\end{align}
\end{proposition}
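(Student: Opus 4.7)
My plan is to argue by contradiction. Suppose there is a first time $T^\star > T_{(\vb)}$ at which $\frac{1}{m}\sum_{r\in[m]}\sigma(\dotp{\wb_{y,r}^{(T^\star)}}{y\vb}) \le \delta/2$. The update rule \eqref{eq: ipv positive gd one data} together with the boundedness $|1-yf(\xb;\Wb^{(t)})|\le 3$ gives each active coordinate a one-step multiplicative change of the form $1 + \Theta(\eta\norm{\vb}_2^2/m)$, which under Assumption~\ref{cond:model_params_one_data} is of order $o(1)$ (because $\eta = \Theta(m/\norm{\ub}_2^2)$ and $\norm{\vb}_2 \ll \norm{\ub}_2$). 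Consequently, the quantity $\frac{1}{m}\sum_r \sigma(\dotp{\wb_{y,r}^{(t)}}{y\vb})$ can decrease by at most a $(1-o(1))^2$ factor per step. Hence, if we set
\[
    \tau := \max\left\{\, t< T^\star \,:\, \tfrac{1}{m}\sum_{r\in[m]}\sigma(\dotp{\wb_{y,r}^{(t)}}{y\vb}) > \delta \,\right\} + 1,
\]
then $\tau$ is well-defined (it is at most $T_{(\vb)}$), the sum at step $\tau$ is just below $\delta$, it stays in $(\delta/2,\delta]$ throughout $[\tau, T^\star-1]$, and a straightforward accounting gives the quantitative lower bound $T^\star - \tau \gtrsim m/(\eta\norm{\vb}_2^2)$.

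The next step is to extend the boundedness guarantees of Lemma~\ref{lem: boundedness oscillating one data} past $T_{(\vb)}$ to the window $[\tau, T^\star]$. On this window the weak-signal contribution to $f$ is by definition bounded by $\delta$, so it can be absorbed into the strong-signal analysis exactly as in Lemma~\ref{lem: boundedness oscillating one data}: the induction goes through since $1-yf(\xb;\Wb^{(t)})$ remains in $[-2,1]$ by Assumption~\ref{ass:oscilation_one_data} and the strong signal still dominates the gradient direction. This yields, on $[\tau, T^\star]$, sign stability of all $\cU^{(t)}_{\pm y,\pm}$ and $\cV^{(t)}_{\pm y,\pm}$, the upper bound $\max_r \dotp{\wb_{y,r}^{(t)}}{y\ub} \le 1.5\cdot(1.05\,\beta^*_\ub m)^{1/2}$, and the lower bounds $\min_r \dotp{-\wb_{-y,r}^{(t)}}{-y\ub},\ \min_r \dotp{-\wb_{-y,r}^{(t)}}{-y\vb} \ge -\MaxInitIPU$ (resp.\ with $\vb$), which easily beat $-0.1$ under Assumption~\ref{cond:model_params_one_data}.

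With these regularities in hand, all five hypotheses of Lemma~\ref{lem: linear increment one data} are satisfied on $[\tau, T^\star]$ (hypothesis 4 holds by definition of $\tau$ and $T^\star$). Applying the lemma to the index $r^\star = \argmax_{r\in[m]} \dotp{\wb_{y,r}^{(\tau)}}{y\vb}$ (which, by Lemma~\ref{lem: dominate one data}, remains the maximizer throughout $[\tau,T^\star]$) yields
\[
    \dotp{\wb_{y,r^\star}^{(T^\star)}}{y\vb} \ge \dotp{\wb_{y,r^\star}^{(\tau)}}{y\vb}\cdot \exp\!\left\{ \frac{\eta\norm{\vb}_2^2\epsilon}{m}(T^\star-\tau) - \frac{\norm{\vb}_2^2}{\norm{\ub}_2^2}\cdot \frac{1.5\sqrt{1.05}}{\sqrt{1.05-\delta}} \right\}.
\]
Plugging in the step-count bound $T^\star - \tau \gtrsim m/(\eta\norm{\vb}_2^2)$, the exponent becomes a positive $\Omega(1)$ constant, so $\dotp{\wb_{y,r^\star}^{(T^\star)}}{y\vb} \ge \dotp{\wb_{y,r^\star}^{(\tau)}}{y\vb}$. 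But at time $\tau$ the sum is at least $\delta/2$, and since $\sigma(\dotp{\wb_{y,r^\star}^{(\tau)}}{y\vb})/m$ is a nontrivial portion of that sum, monotonicity of $\sigma$ and the growth of $\dotp{\wb_{y,r^\star}^{(\cdot)}}{y\vb}$ force $\frac{1}{m}\sum_{r}\sigma(\dotp{\wb_{y,r}^{(T^\star)}}{y\vb}) > \delta/2$, contradicting the definition of $T^\star$.

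The main obstacle is the Step~2 extension of the boundedness/sign-stability inductive argument past $T_{(\vb)}$, where the weak signal is no longer negligible in $f$. The cleanest way to handle this is to absorb the weak-signal term, uniformly bounded in size by $\delta + o(1)$ on $[\tau, T^\star]$, into the residual $1 - yf$ entering the strong-signal update, and then replay the induction of Lemma~\ref{lem: boundedness oscillating one data} with this slightly shifted residual; all the constants of that induction have enough slack to tolerate the additional $\cO(\delta)$ perturbation under Assumption~\ref{cond:model_params_one_data}.
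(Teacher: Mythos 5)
Your overall strategy mirrors the paper's: partition time after $T_{(\vb)}$ into windows where the weak-signal sum is below $\delta$, extend the boundedness/sign-stability induction into those windows, and invoke Lemma~\ref{lem: linear increment one data} to show the sum cannot drop below $\delta/2$. (Incidentally, your parenthetical ``it is at most $T_{(\vb)}$'' is backwards: by definition of $T_{(\vb)}$ the sum exceeds $\delta$ there, so $\tau>T_{(\vb)}$; the set defining $\tau$ is nonempty precisely because it contains $T_{(\vb)}$.) But there is a genuine gap in the middle step, and it is not a routine one: you need to know that the boundedness and sign-stability conditions hold \emph{at} time $\tau$ before you can ``replay'' the induction of Lemma~\ref{lem: boundedness oscillating one data} on $[\tau,T^\star]$. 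Time $\tau$ is reached only after traversing the interval $[T_{(\vb)},\tau-1]$, on which the weak-signal sum is $\geq\delta$ --- exactly the regime where that lemma's proof does \emph{not} apply, since the key estimate ``$yf>1+\delta$ forces the strong-signal sum $g\geq 1$'' loses its slack once the weak-signal contribution to $f$ can itself be as large as (or larger than) $\delta$. Saying the weak-signal term ``can be absorbed into the residual $1-yf$'' does not resolve this: you cannot bootstrap from the residual being in $[-2,1]$ because that bound is itself one of the conclusions you are trying to extend.

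The paper closes this gap with a dedicated argument on $[T_{(\vb)},\tau]$ (its $[T_{(\vb)},T_{(\vb),\delta,L}]$): since on that interval $\dotp{\wb_{y,r}^{(t)}}{y\vb}\geq\sqrt{m\beta_\vb^*\delta}\gg\dotp{\wb_{y,r}^{(0)}}{y\vb}$, and since the positive weak-signal inner products evolve multiplicatively as $\prod_s\bigl(1+\alpha\tilde\eta(1-yf_s)\bigr)\leq\exp\{\alpha\tilde\eta\sum_s(1-yf_s)\}$, one concludes $\sum_{s<t}(1-yf_s)>0$ for every $t$ in this interval. This positivity is then fed into the negative-neuron recursions to get the lower bounds $\dotp{\wb_{-y,r}^{(t)}}{-y\ub},\dotp{\wb_{-y,r}^{(t)}}{-y\vb}\geq-\MaxInitIPU$ on the interval, which in turn shows $E_t$ stays $o(1)$ so that Proposition~\ref{prop: f Tk-1 lower bound} and a one-step descent give the required strong-signal bounds at $\tau$. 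Only then can the $[\tau,T^\star]$ induction start. Your proposal should incorporate this ``exponential growth implies positive residual accumulation'' argument explicitly --- it is the crux of the proof and is not subsumed by constant-slack considerations. One cosmetic difference worth noting: you rely on the quantitative step-count bound $T^\star-\tau\gtrsim m/(\eta\|\vb\|_2^2)$ to make the exponent in \eqref{eq: exp rate increase} nonnegative, whereas the paper sidesteps this by first showing with a single-step computation that the sum at $\tau$ is still $\geq 3\delta/4$ and then simply dropping the (nonnegative) linear term in the exponent, leaving only the $-O(\|\vb\|_2^2/\|\ub\|_2^2)=-o(1)$ correction; both work, but the paper's version is shorter and avoids the lower-bound-on-step-count calculation.
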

\begin{proof}[Proof of Proposition~\ref{prop: doesn't decrease}]
    See Appendix~\ref{subsubsec: proof: doesn't decrease} for a detailed proof.
\end{proof}
This proposition finalizes the proof of Theorem~\ref{thm:one_data_big_eta}, and we are done. 
\end{proof}

\subsection{Proof of Lemmas in Appendix~\ref{subsec: basic props one data}}\label{subsec: proof of key lemmas}

% Conditions $1/2 < \tilde{\eta}< 4/5$. 

\subsubsection{Proof of Lemma~\ref{lem: dominate one data}}\label{subsubsec: proof: dominant one data}

\begin{proof}[Proof of Lemma~\ref{lem: dominate one data}]
By our assumption that the signs of the inner products does not change throughout $[t_1, t_2]$, it is straightforward that $\cU_{y,+}^{(t)}=\cU_{y,+}^{(t_1)}$ for every $t \in [t_1, t_2]$. 
The same is true for $\cU_{-y,-}^{(t)}$. Therefore, we are able to drop the superscript $(t)$ temporarily, as the attention is restricted to a local interval $[t_1, t_2]$. 

Regarding the maximal index, introducing $r'\neq r\in \cU_{y,+}$, we have following relation
\begin{align}
\argmax_{r\in [m]} \dotp{\wb_{y,r}^{(t)}}{y\ub} &= \argmax_{r\in \cU_{y,+}} \dotp{\wb_{y,r}^{(t)}}{y\ub} , \\
 &= \argmax_{r\in \cU_{y,+}} \dotp{\wb_{y,r}^{(t)}}{y\ub} /  \dotp{\wb_{y,r'}^{(t)}}{y\ub} \\
 &= \argmax_{r\in \cU_{y,+}}\frac{\dotp{\wb_{y,r}^{(t_1)}}{y\ub}\cdot \prod_{t' = t_1}^{t-1} \Big(1+ \frac{2\eta\norm{\ub}_2^2}{m}\cdot \big( 1- yf( \xb; \Wb^{(t)})\big)\Big)}{  \dotp{\wb_{y,r'}^{(t_1)}}{y\ub}\cdot \prod_{t' = t_1}^{t-1} \Big(1+ \frac{2\eta\norm{\ub}_2^2}{m} \cdot\big( 1- yf( \xb; \Wb^{(t)})\big)\Big)} \\ 
  &= \argmax_{r\in \cU_{y,+}} \dotp{\wb_{y,r}^{(t_1)}}{y\ub} /  \dotp{\wb_{y,r'}^{(t_1)}}{y\ub} \\
  &= \argmax_{r\in \cU_{y,+}} \dotp{\wb_{y,r}^{(t_1)}}{y\ub}.
\end{align}
The same relation can be verified for $\dotp{\wb_{-y,r}^{(t}}{y\ub}$ with $r\in \cU_{-y,-}$, finishing the proof.
\end{proof}

\subsubsection{Proof of Lemma~\ref{lem: single neuron behaves similarly}}\label{subsubsec: proof: similarly one data}

\begin{proof}[Proof of Lemma~\ref{lem: single neuron behaves similarly}]\label{proof: similarly one data}
Again, the local sign stability assumption ensures that each inner product grows proportionally and the superscript $(t)$ in the neuron index sets can be dropped, with
\begin{align}
    g(\xb,y;\Wb^{(t)}) &= \frac{1}{m} \sum_{r\in [m]} \sigma\big(\dotp{\wb_{y,r}^{(t)}}{y \ub}\big) \\ 
     &= \frac{1}{m} \sum_{r\in \cU_{y,+}} \sigma\big( \dotp{\wb_{y,r}^{(t_1 )}}{y \ub}\big) \cdot \prod_{t'=t_1}^{t-1} \left(1+ \frac{2\eta\norm{\ub}_2^2}{m}\cdot \big( 1- yf( \xb; \Wb^{(t')})\big)\right)^2 \\ 
     &= \frac{\max_{r\in [m]}\sigma\big( \dotp{\wb_{y,r}^{(t_1)}}{y \ub} \big)}{m\beta^{*,(t_1)}_{\ub}}   \prod_{t'=t_1}^{t-1} \left(1+ \frac{2\eta\norm{\ub}_2^2}{m}\cdot \big( 1- yf( \xb; \Wb^{(t')})\big)\right)^2 \\
     &= \frac{\sigma\big(\max_{r\in [m]} \dotp{\wb_{y,r}^{(t)}}{y \ub} \big)}{m\beta^{*,(t_1)}_{\ub}}.
     \label{eq: g(x,y) one data}
\end{align}
Here the second line and the last equality is true because \eqref{eq: ip positive gd one data} implies that all the positive $\dotp{\wb_{y,r}^{(t)}}{y\ub}$ iterates by sequentially multiplying the same factor 
\begin{align}
    1+ \frac{2\eta\norm{\ub}_2^2}{m} \cdot \big( 1- yf( \xb; \Wb^{(t')})\big).  
\end{align}
The third equality comes from the definition of $\beta^{*,(t_1)}_{\ub}$ in Lemma~\ref{lem: single neuron behaves similarly}. 
Thus, $g(\xb,y;\Wb^{(t)})\ge c$ implies that
\begin{align}
    \sigma\left(\max_{r\in [m]} \dotp{\wb_{y,r}^{(t)}}{y \ub}\right)>\beta^{*,(t_1)}_{\ub}\cdot mc
\end{align}
and the desired lower bound follows. 
The upper bound can be proved analogously and is omitted here. 
\end{proof}

\subsubsection{Proof of Lemma~\ref{lem: boundedness oscillating one data}}\label{subsubsec: proof: boundedness oscillating one data}

\begin{proof}[Proof of Lemma~\ref{lem: boundedness oscillating one data}]\label{proof: boundedness oscillating one data}

A roadmap is provided to help understand how every single step is achieved so that the readers can skip the details without leaving the key ideas behind. 
    
\paragraph{Recap on notations.} 
Recall that, $\cU_{j,+}^{(t)}$ is the set of indices $r\in [m]$ such that $\dotp{\wb_{j,r}^{(t)}}{j\ub} > 0$ and $\cU_{j,-}^{(t)}$ is the set of indices $r\in [m]$ such that $\dotp{\wb_{j,r}^{(t)}}{j\ub} \le 0$. 
Specially, let $\cU_{y,+} = \cU_{y,+}^{(0)}$ and $\cU_{-y,-} = \cU_{-y,-}^{(0)}$. With probability one, it holds that $\cU_{j,-} \cup \cU_{j,+}=[m]$. 
Let $r^*\coloneqq \argmax_{r\in [m]} \dotp{\wb_{y,r}^{(0)}}{y\ub}$ and $r_* \coloneqq \argmin_{r\in [m]} \dotp{\wb_{-y,r}^{(0)}}{-y\ub}$, i.e., $r^*$ (resp. $r_*$) denote the index of the maximum (resp. minimum) throughout the process as we are able to extend the results in Lemma~\ref{lem: dominate one data} globally.

We also introduce several supplemental notations here to facilitate the proof. 
Recursively, we define 
\begin{align}\label{eq: T bar}
    \bar{T}_{k}\coloneqq \min_{t\geq 0}\left\{t:t>\bar{T}_{k-1},\,\,yf(\xb;\Wb^{(t)})\ge 1 \text{ and } yf(\xb;\Wb^{(t-1)})<1\right\},
\end{align}
with $\bar{T}_0 = 0$. Similarly, we define that 
\begin{align}
\munderbar{T}_{k}\coloneqq \min_{t\geq 0}\left\{t:t>\munderbar{T}_{k-1},\,\,yf(\xb;\Wb^{(t)})< 1 \text{ and } yf(\xb;\Wb^{(t-1)})\ge 1 \right\},
\end{align}
and $\munderbar{T}_0 = 0$. 
Intuitively, $\bar{T}_{k}$ captures the times that  $yf(\xb;\Wb^{(t)})$ just exceeds $1$, and similarly for $\munderbar{T}_{k}$.

\paragraph{Roadmap.} From a high level, three steps are required to establish the full proof:
\vspace{-1mm}
\begin{enumerate}[leftmargin = 0.3in]
    \item  We verify that the lower bound in Inequality~\eqref{eqn:min_lb_one_data_oscil} in Lemma~\ref{lem: boundedness oscillating one data} holds for all $t\in [0,\bar{T}_1]$ with a direct monotonicity argument. 
    Additionally, we can prove that the upper bound in Inequality~\eqref{eqn:max_ub_one_data_oscil} holds for $t\in [0,\bar{T}_1)$ by using Lemma~\ref{lem: single neuron behaves similarly} and the definition of $\bar{T}_1$. The signs do not change in this stage, as shown in the details below.
    \item We extend the results in Lemma~\ref{lem: boundedness oscillating one data} to $t\in [\bar{T}_1, \bar{T}_2)$ with repeated use of Lemma~\ref{lem: single neuron behaves similarly}. The sign stability is guaranteed from an intermediate upper bound on $|1-yf(\xb;\Wb^{(t)})|$.
    \item Note that the condition on $[0,\bar{T}_1)$ (which is proved in the first step) required for the proof of the second step is again true for $t\in [0,\bar{T}_2)$, which is a consequence of the second step. 
    Therefore we can repeat the second step to extend the results in Lemma~\ref{lem: boundedness oscillating one data} to $t\in [\bar{T}_2, \bar{T}_3)$, and so on. So the results are true for all $t\le T_{(\vb)}$.
\end{enumerate}
The first and the last step above are relatively straightforward. 
However, the second step requires a delicate break-down analysis. 
Here we provide a more detailed roadmap for the second step.  
The goal is to prove the results in Lemma~\ref{lem: boundedness oscillating one data}, restricted to $t\in [\bar{T}_1, \bar{T}_2)$. 
This would be achieved in four split steps:
\vspace{-1mm}
\begin{enumerate}[label=2.\arabic*, leftmargin = 0.3in]
    % \item First, we prove a tight upper bound in Inequality~\eqref{eqn:max_ub_one_data_oscil} to be true for $t=\bar{T}_1$ with one-step gradient descent and the upper bound for $\dotp{\wb_{y,r}^{(\bar{T}_1)}}{y\ub}$. With a monotonicity argument, $\dotp{\wb_{y,r^*}^{(t)}}{y\ub}\le\dotp{\wb_{y,r^*}^{(\bar{T}_1)}}{y\ub},\; t\in [\bar{T}_1, \munderbar{T}_1 ]$. Then for $t\in[\bar{T}_1, \munderbar{T}_1)$ we give a lower bound on $\dotp{\wb_{y,r^*}^{(t)}}{y\ub}$ with the help of Lemma~\ref{lem: single neuron behaves similarly}. 
    \item Firstly, we prove the upper bound in Inequality~\eqref{eqn:max_ub_one_data_oscil} for $t=\bar{T}_1$ by tracking one-step gradient descent and the upper bound for $\dotp{\wb_{y,r^*}^{(\bar{T}_1 - 1)}}{y\ub}$. 
    Then with a monotonicity argument, 
    \begin{align}
        \dotp{\wb_{y,r^*}^{(t)}}{y\ub}\le\dotp{\wb_{y,r^*}^{(\bar{T}_1)}}{y\ub},\quad \forall t\in [\bar{T}_1, \munderbar{T}_1 ].
    \end{align}
    Besides, for $t\in[\bar{T}_1, \munderbar{T}_1)$ we can give a lower bound on $\dotp{\wb_{y,r^*}^{(t)}}{y\ub}$ with the help of Lemma~\ref{lem: single neuron behaves similarly}. 
    \item Based on the previous lower and upper bounds on $\dotp{\wb_{y,r^*}^{(\bar{T}_1)}}{y\ub}$, we can derive a lower bound on $\dotp{\wb_{y,r^*}^{(\munderbar{T}_1)}}{y\ub}$ by tracking one-step gradient descent. 
    We apply this worst-case tight lower bound to conclude the sign stability. 
    We note that this step is free of the lower bound on $\dotp{\wb_{-y,r_*}^{(t)}}{- y\ub}$ for $t\in (\bar{T}_1, \munderbar{T}_2]$, which we have not yet proved to be true. 
    \item Now we give lower bounds on $\dotp{\wb_{-y,r_*}^{(\munderbar{T}_1)}}{-y\ub}$ and $\dotp{\wb_{-y,r_*}^{(\munderbar{T}_1)}}{-y\vb}$. 
    We achieve this by a delicate usage of the lower bound on $\dotp{\wb_{y,r^*}^{(\munderbar{T}_1)}}{y\ub}$ (which we have proved in Step 2.2) plus an inequality that connects the relative increment of $\dotp{\wb_{y,r^*}^{(t)}}{y \ub}$ and $\dotp{\wb_{-y, r_*}^{(t)}}{-y\ub}$ (or $\dotp{\wb_{-y,r_*}^{(t)}}{-y\vb}$). 
    Thus we prove Inequalities~\eqref{eqn:min_lb_one_data_oscil} and \eqref{eqn:min_wv_lb_one_data_oscil} for $t=\munderbar{T}_1$, and this can be further extended to the entire $[\bar{T}_1,\bar{T}_2]$ by another monotonicity argument since $\munderbar{T}_1$ is the local minima.
    % With the lower bounds here for $\dotp{\wb_{-y,r_*}^{(t)}}{-y \ub}$ and that in the last step for $\dotp{\wb_{y,r^*}^{(t)}}{y \ub}$, the lower bounds in Inequality~\eqref{eqn:min_lb_one_data_oscil} is proved to be true for $t\in [\bar{T}_1, \bar{T}_2]$, for $\munderbar{T}_1$ being the minima.
    \item The remaining to is upper bound $\dotp{\wb_{y,r^*}^{(t)}}{y\ub}$ for $t\in (\munderbar{T}_1,\bar{T}_2)$. This is again a consequence of Lemma~\ref{lem: single neuron behaves similarly}, as exactly what has been done to upper bound $\dotp{\wb_{y,r^*}^{(t)}}{y\ub},t\le \bar{T}_1$ in Step 1. 
\end{enumerate}

Now with the roadmap in mind, we are ready to dive into the details of every step. 
\paragraph{Step 1: Pre-$\bar{T}_1$ Analysis.} Lemma~\ref{lem: initialization} 
indicates that the lower bound in Inequality~\eqref{eqn:min_lb_one_data_oscil} holds at initialization $t=0$ under Assumption~\ref{cond:model_params_one_data}. 
Moreover, the upper bounds on the maximal initial inner products in Lemma~\ref{lem: initialization} with Assumption~\ref{cond:model_params_one_data} indicate that 
\begin{align}\label{eq: |f| upper bound at 0}
    |yf(\xb;\Wb^{(0)})| &\le 2 \cdot \max_{r\in [m]} \dotp{\wb_{y,r}^{(0)}}{y\ub}^2 \vee  \max_{r\in [m]} \dotp{\wb_{-y,r}^{(0)}}{-y\ub}^2 \\
    &= \widetilde{\mathcal{O}}\big( \sigma_0^2 \norm{\ub}_2^2\big) \ll 1.
\end{align}
From this we know that $\bar{T}_1 \ge 1 $ and the upper bound in Inequality~\eqref{eqn:max_ub_one_data_oscil} is true at $t=0$.

Then the first step to do is to extend the lower on $\dotp{\wb_{-y,r}^{(t)}}{-y\ub}$ to $[1, \bar{T}_1]$. Definition of $\bar{T}_1$ implies that $yf(\xb; \Wb^{(t)})\le 1$ for $t\in [0,\bar{T}_1)$. 
Therefore for $r\in \cU_{-y,-}^{(0)}$, Equation~\eqref{eq: ip negative gd one data} gives that
\begin{align}
    \dotp{\wb_{-y,r}^{(t+1)}}{-y\ub} &= \dotp{\wb_{-y,r}^{(t)}}{-y\ub} + \frac{\eta\norm{\ub}_2^2}{m} \cdot \big(1 - y f(\xb; \Wb^{(t)})\big)\cdot \sigma'\big(- \dotp{\wb_{-y,r}^{(t)}}{-y\ub}\big) \\ 
    &= \dotp{\wb_{-y,r}^{(t)}}{-y\ub}  -  \frac{2\eta\norm{\ub}_2^2}{m} \cdot \big(1 - y f(\xb; \Wb^{(t)})\big)\cdot \dotp{\wb_{-y,r}^{(t)}}{-y\ub} \label{eq: ip neg iter expanded}\\
    &\ge \dotp{\wb_{-y,r}^{(t)}}{-y\ub}.
\end{align}
And furthermore 
\begin{align}
    \dotp{\wb_{-y,r}^{(\bar{T}_1)}}{-y\ub} &\ge \dotp{\wb_{-y,r}^{(0)}}{-y\ub} \ge -\MaxInitIPU.
\end{align}
Taking minimum with respect to $r\in[m]$ gives the result. Same argument can be applied to $\dotp{\wb_{-y,r}^{(\bar{T}_1)}}{-y\vb}$. So the lower bounds in Inequalities~\eqref{eqn:min_lb_one_data_oscil} and~\eqref{eqn:min_wv_lb_one_data_oscil} hold for $t\in [1, \bar{T}_1]$. 

Also, \eqref{eq: ip positive gd one data} and \eqref{eq: ipv positive gd one data} imply that $\dotp{\wb_{y,r}^{(t)}}{y\ub},r\in \cU^{(0)}_{y,+}$ and $\dotp{\wb_{y,r}^{(t)}}{y\vb},r\in \cV^{(0)}_{y,+}$ increase for all $t<\bar{T}_1$.
A natural consequence is that $yf(\xb;\Wb^{(t)})$ is non-decreasing in $t$ in this stage, since every summand (possibly with the negative sign before) in the summation is non-decreasing. 

Now we can prove the sign stability. 
For $r\in \cU_{y,+}^{(0)}$, we know from \eqref{eq: ip positive gd one data} that $\dotp{\wb_{y,r}^{(t)}}{y\ub}>\dotp{\wb_{y,r}^{(0)}}{y\ub}>0$, hence $r\in \cU_{y,+}^{(t)}$ is non-vanishing in $t$ by induction. 
Additionally, for $r\in\cU_{y,-}^{(0)}$, $\dotp{\wb_{y,r}^{(t)}}{y\ub}$ stays fixed in $t$, as mentioned in Appendix~\ref{subsec: basic props one data}. 
Two points together ensure that 
$\cU_{y,+}^{(t)}=\cU_{y,+}^{(0)}$ for $t\in [1, \bar{T}_1]$. 

For $r\in \cU_{-y,-}^{(0)}$, let's take a closer look at Equation~\eqref{eq: ip neg iter expanded} with $t=0$:
\begin{align}
    \dotp{\wb_{-y,r}^{(1)}}{-y\ub}  &= \dotp{\wb_{-y,r}^{(0)}}{-y\ub}  -  \frac{2\eta\norm{\ub}_2^2}{m} \cdot \big(1 - y f(\xb; \Wb^{(0)})\big)\cdot \dotp{\wb_{-y,r}^{(0)}}{-y\ub} \\
    &=\dotp{\wb_{-y,r}^{(0)}}{-y\ub}\cdot \underbrace{\left( 1 - \frac{2\eta\norm{\ub}_2^2}{m} \cdot \big(1 - y f(\xb; \Wb^{(0)})\big)\right) }_{>0\text{ if } \eta<(1-o(1))/2\cdot m\norm{\ub}_2^{-2}}.
\end{align}
Assumption~\ref{cond:model_params_one_data} ensures that $\eta < 0.4 m\norm{\ub}_2^{-2} < (1-o(1)) /2 \cdot m\norm{\ub}_2^{-2}$ so the sign change does not happen at $t=0$. As mentioned before, $yf(\xb;\Wb^{(t)})\le 1$ is non-decreasing in $t$ at this stage, and putting them together we know that
\begin{align}
     1 - \frac{2\eta\norm{\ub}_2^2}{m} \big(1 - y f(\xb; \Wb^{(t)})\big)\ge 0 ,\quad \forall t\in [1, \bar{T}_1).
\end{align}
The same sign stability can be verified for $\dotp{\wb_{-y,r}^{(t)}}{-y\vb}$, and therefore, the sign stability for $\mathcal{U}_{-y,-}^{(t)}$ and $\mathcal{V}_{-y,-}^{(t)}$ is ensured for $t\in [0, \bar{T}_1]$ and the lower bound in Inequality~\eqref{eqn:min_lb_one_data_oscil} holds for $t\in [1, \bar{T}_1]$. 

Now we turn to prove the upper bound \eqref{eqn:max_ub_one_data_oscil}. Note that $yf(\xb;\Wb^{(t)})\le 1$ for $t< \bar{T}_1$, and the definition of $T_{(\vb)}$ then implies that
\begin{align}
    g(\xb,y;\Wb^{(t)})\le 1+ 2\big(\MaxInitIPU\big)^2 \le 1.05.
\end{align}
Now that the sign stability holds for $t\in [0, \bar{T}_1)$, Lemma~\ref{lem: single neuron behaves similarly} gives that
\begin{align}
    \max_{r\in[m]} \dotp{\wb_{y,r}^{(t)}}{y\ub} \le \big(1.05\beta_{\ub}^* m\big)^{1/2}. \label{eq: max ub pre bar T_1}
\end{align}
Here $\beta^*_{\ub}$ is defined in \ref{lem: single neuron behaves similarly} with the superscript $(0)$ dropped.

\paragraph{Step 2.1: Bounding $\max_r\dotp{\wb_{y,r}^{(t)}}{y\ub}$ for $t\in [\bar{T}_1, \munderbar{T}_1)$.} In order for the upper bound to be tight, we need a lower bound on $yf(\xb;\Wb^{(\bar{T}_1-1)})$. 
Let $\tilde\eta = 2\eta \norm{\ub}_2^2 / m>1/2$, we have the following result.
\begin{proposition}\label{prop: f Tk-1 lower bound}
 For every $k\ge 1$, suppose that 
\begin{align}
     E_{\bar{T}_k - 1} \coloneqq \frac{1}{m} \sum_{r\in [m]}\sigma\big(-\dotp{\wb_{-y,r}^{(\bar{T}_k - 1)}}{-y\ub}\big)+\sigma\big(-\dotp{\wb_{-y,r}^{(\bar{T}_k - 1)}}{-y\vb}\big) < \frac{\delta}{2}.
\end{align}
Then we have that
\begin{align}
    yf(\xb;\Wb^{(\bar{T}_k - 1)}) \ge \frac{2+ \tilde{\eta}-\sqrt{\tilde{\eta}^2+4\tilde{\eta}}}{2\tilde{\eta}}.\label{eq: f Tk-1 lower bound}
\end{align}
Moreover, it holds that
\begin{align}
    yf(\xb;\Wb^{(\bar{T}_k )}) \le  yf(\xb;\Wb^{(\bar{T}_k - 1)})\cdot \left(1+ \tilde\eta\cdot  \big( 1- yf( \xb; \Wb^{(\bar{T}_{k}- 1)})\big)\right)^2 + 2 E_{\bar{T}_k - 1 }. \label{eq: f Tk upper bound E_t}
\end{align}
Also, it is notable that the result here still holds for $\bar{T}_{k}\ge  T_{(\vb)}$. 
\end{proposition}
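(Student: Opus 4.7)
The plan is to decompose the network output into three pieces (positive strong-signal, positive weak-signal, and negative contributions), write down their one-step updates across the crossing time $\bar{T}_k$, extract the lower bound on $yf(\xb;\Wb^{(\bar T_k-1)})$ by analyzing an associated cubic, and finally plug that lower bound into a coefficient estimate to obtain the upper bound on $yf(\xb;\Wb^{(\bar T_k)})$. Concretely, set $x := yf(\xb;\Wb^{(\bar T_k-1)}) < 1$, $\tilde\eta := 2\eta\|\ub\|_2^2/m$, and $\tilde\eta_v := 2\eta\|\vb\|_2^2/m$, noting $\tilde\eta \in (1/2, 4/5]$ and $\tilde\eta_v \le \tilde\eta$ under Assumption~\ref{cond:model_params_one_data}. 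Define $g^{(t)} := \frac{1}{m}\sum_{r} \sigma(\langle\wb_{y,r}^{(t)}, y\ub\rangle)$ and $h^{(t)} := \frac{1}{m}\sum_{r} \sigma(\langle\wb_{y,r}^{(t)}, y\vb\rangle)$, so that $yf(\xb;\Wb^{(t)}) = g^{(t)} + h^{(t)} - E^{(t)}$ and $F_y^{(\bar T_k-1)} := g^{(\bar T_k-1)} + h^{(\bar T_k-1)} = x + E^{(\bar T_k-1)}$. The sign stability inherited from the inductive boundedness argument lets me drop the $\sigma'$ indicators in the SGD updates~\eqref{eq: ip positive gd one data}--\eqref{eq: ipv negative gd one data} to get the clean one-step formulas $g^{(\bar T_k)} = g^{(\bar T_k-1)}(1+\tilde\eta(1-x))^2$, $h^{(\bar T_k)} = h^{(\bar T_k-1)}(1+\tilde\eta_v(1-x))^2$, and $E^{(\bar T_k)} \le E^{(\bar T_k-1)}$ (since $\tilde\eta(1-x) \le 4/5 < 1$ keeps every contraction factor in $[0,1]$).

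For the first claim, combining $yf(\xb;\Wb^{(\bar T_k)}) \ge 1$, $E^{(\bar T_k)} \ge 0$, and $\tilde\eta_v \le \tilde\eta$ gives
\begin{align*}
1 \le yf(\xb;\Wb^{(\bar T_k)}) \le g^{(\bar T_k)} + h^{(\bar T_k)} \le F_y^{(\bar T_k-1)}(1+\tilde\eta(1-x))^2 = (x + E^{(\bar T_k-1)})(1+\tilde\eta(1-x))^2.
\end{align*}
Setting $p(x) := \tilde\eta^2 x^3 - 2\tilde\eta(1+\tilde\eta)x^2 + (1+\tilde\eta)^2 x - 1$, this reads $p(x) \ge -E^{(\bar T_k-1)}(1+\tilde\eta(1-x))^2$. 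The key algebraic fact is that $p(1) = 0$, so $(x-1)$ divides $p$ with quotient $\tilde\eta^2 x^2 - \tilde\eta(\tilde\eta+2)x + 1$, whose two roots are $\frac{2+\tilde\eta \pm \sqrt{\tilde\eta^2 + 4\tilde\eta}}{2\tilde\eta}$; the smaller root is exactly $c^* := \frac{2+\tilde\eta - \sqrt{\tilde\eta^2+4\tilde\eta}}{2\tilde\eta}$, and one checks that $c^* < 1$ precisely because $\tilde\eta > 1/2$. Since $p$ is positive on $(c^*,1)$ and negative on $(0,c^*)$, a sign/monotonicity argument that uses the smallness $E^{(\bar T_k-1)} < \delta/2$ pins $x \in [c^*, 1)$, which is the stated lower bound.

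For the second claim, I reuse the same one-step bound $F_y^{(\bar T_k)} \le (x+E^{(\bar T_k-1)})(1+\tilde\eta(1-x))^2$ and drop $E^{(\bar T_k)} \ge 0$ to get
\begin{align*}
yf(\xb;\Wb^{(\bar T_k)}) \le x(1+\tilde\eta(1-x))^2 + E^{(\bar T_k-1)}(1+\tilde\eta(1-x))^2.
\end{align*}
Plugging in $x \ge c^*$ from the first claim and using $\tilde\eta \le 4/5$, a direct computation gives $\tilde\eta(1-c^*) \le \sqrt{2}-1$, so $(1+\tilde\eta(1-x))^2 \le 2$; this absorbs the error coefficient into the claimed $+2E^{(\bar T_k-1)}$, finishing the proposition.

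The main obstacle I foresee is in the first claim: the $E^{(\bar T_k-1)}$ correction on the right of the cubic inequality nominally lets the boundary root drift slightly leftward of $c^*$, so one must carefully exploit the strict sign pattern of $p$ on $(0,1)$, the specific range $\tilde\eta \in (1/2, 4/5]$, and the numerical smallness $E^{(\bar T_k-1)} < \delta/2 < 1/2$ to conclude the clean bound $x \ge c^*$ rather than merely $x \ge c^* - O(E)$. A secondary subtlety is that the sign stability used to justify the update formulas is itself part of the induction in Lemma~\ref{lem: boundedness oscillating one data}, so the proposition and the lemma are effectively proved in a coupled manner and the proposition must be invoked only with the inductive hypothesis in force.
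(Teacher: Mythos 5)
Your proof of the second conclusion is essentially sound (once the first is established), and your cubic factorization
\[
p(x)=h_{\tilde\eta}(x)-1=(x-1)\bigl(\tilde\eta^2 x^2-\tilde\eta(\tilde\eta+2)x+1\bigr)
\]
with roots $c^*=\frac{2+\tilde\eta-\sqrt{\tilde\eta^2+4\tilde\eta}}{2\tilde\eta}<1<z_3$ matches the paper's analysis of $h_{\tilde\eta}(z)=1$. But there is a genuine gap in your first claim, and you in fact flag it yourself: starting from $yf(\xb;\Wb^{(\bar T_k)})\ge 1$ you only obtain
\[
p(x)\ge -E_{\bar T_k-1}\bigl(1+\tilde\eta(1-x)\bigr)^2,
\]
whose right-hand side is strictly negative whenever $E_{\bar T_k-1}>0$. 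No sign/monotonicity manipulation on $p$ will promote $x\ge c^*-O(E)$ to $x\ge c^*$ from this inequality alone: for any fixed positive $E$, the threshold actually drifts left, and the numerical bound $E<\delta/2$ cannot push it back because there is no compensating $\delta$ on the other side of your inequality. The missing ingredient is Assumption~\ref{ass:oscilation_one_data}, which you never invoke: since $\bar T_k$ is a step where $yf\ge 1$ and the oscillation assumption guarantees $|yf(\xb;\Wb^{(\bar T_k)})-1|\ge\delta$, you in fact have the strict bound $yf(\xb;\Wb^{(\bar T_k)})\ge 1+\delta$. Pairing this with the second conclusion in the form $yf(\xb;\Wb^{(\bar T_k)})\le h_{\tilde\eta}(x)+2E_{\bar T_k-1}$ and the hypothesis $2E_{\bar T_k-1}<\delta$ gives $h_{\tilde\eta}(x)\ge 1+\delta-\delta=1$ exactly, i.e.\ $p(x)\ge 0$ and hence $x\ge c^*$ with no slack lost. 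This is how the paper closes the argument.

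This also reveals that your dependency order is reversed from the paper's and creates a circularity you cannot escape. You prove claim~1 first (with the gap), then use $x\ge c^*$ to bound $(1+\tilde\eta(1-x))^2\le 2$ and obtain the $+2E_{\bar T_k-1}$ in claim~2. The paper does the opposite: it proves the upper bound \eqref{eq: f Tk upper bound E_t} directly by expanding the one-step update \emph{without} dropping the negative-neuron contribution, instead pairing the positive factor $(1+\tilde\eta(1-x))^2$ against the negative factor $(1-\tilde\eta(1-x))^2$ so that the coefficient of $E_{\bar T_k-1}$ comes out as a difference of squares rather than as the much larger $(1+\tilde\eta(1-x))^2$ you get by discarding the negative term. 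That tighter coefficient is what makes claim~2 available \emph{before} claim~1. If you instead tried to patch your claim~1 by inserting $yf\ge 1+\delta$ while keeping your looser one-step bound, you would need $(1+\tilde\eta(1-x))^2\le 2$, which is exactly what you only learn once claim~1 holds --- the circularity remains. So the fix requires both adding the oscillation assumption and inverting the proof order (or replacing your one-step bound for claim~2 by the paper's sharper one that retains and cancels against the negative part).
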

\begin{proof}[Proof of Proposition~\ref{prop: f Tk-1 lower bound}]
    See Appendix~\ref{subsubsec: proof: f Tk-1 lower bound} for a detailed proof.
\end{proof}

Clearly, the conditions required for Proposition~\ref{prop: f Tk-1 lower bound} is true for before $\bar{T}_{1}$. 
So consider a one-step gradient descent at $t= \bar{T}_1 -1$, by Proposition~\ref{prop: f Tk-1 lower bound} and Equation~\eqref{eq: ip positive gd one data},
\begin{align}
    \dotp{\wb_{y,r^*}^{(\bar{T}_1)}}{y\ub} &= \dotp{\wb_{y,r^*}^{(\bar{T}_1 - 1)}}{y\ub} + \frac{\eta\norm{\ub}_2^2}{m}\cdot \big(1 - y f(\xb; \Wb^{(\bar{T}_1 - 1)})\big)\cdot \sigma'\big(\dotp{\wb_{y,r^*}^{(\bar{T}_1 - 1)}}{y\ub}\big) \\
    &\le \dotp{\wb_{y,r^*}^{(\bar{T}_1 - 1)}}{y\ub}\cdot \left(1+ \et \left(1-  \frac{\et+2-\sqrt{\et^2+4\et}}{2\et}\right)\right)  \\
    &\le  1.5 \cdot \big(1.05\beta_{\ub}^* m\big )^{1/2}.\label{eq: max ub bar T_1} 
\end{align}
Here the first inequality above comes from Inequality~\eqref{eq: f Tk-1 lower bound}, and the second inequality is derived from taking the suprema $\et = 1/2$ and Inequality~\eqref{eq: max ub pre bar T_1}. 
Additionally, we can further deliver an upper bound on $yf(\xb;\Wb^{(\bar{T}_1)})$. 
Inequality~\eqref{eq min wu negative lower bound} and Inequality~\eqref{eq min wv negative lower bound} that have been proved to be true on $[0, \bar{T}_{1}]$ indicate that $E_{\bar{T}_1}=\widetilde{\cO}(\sigma_0^2 \norm{\ub}_2^2)\ll 1 $. 
Combining with Inequality~\eqref{eq: f Tk upper bound E_t}, it holds that
\begin{align}
    yf(\xb;\Wb^{(\bar{T}_1)}) &\le  yf(\xb;\Wb^{(\bar{T}_k - 1)})\cdot \left(1+ \tilde\eta\cdot \big( 1- yf( \xb; \Wb^{(\bar{T}_{k}- 1)})\big)\right)^2 + o(1)\\
    &\le \left(1+ \tilde\eta \cdot\big( 1- yf( \xb; \Wb^{(\bar{T}_{k}- 1)})\big)\right)^2 + o(1)\\
    &\le \left(1+ \big( \et - 1-\et/2  + \sqrt{\et^2/4+\et} \big)\right)^2 +o(1)\\
    &=  \bigl(\et/2+\sqrt{\et^2/4+\et}\bigr)^2 +o(1).\label{eq: f Tk upper bound} 
\end{align}
Since $\et<1$, we know that $yf(\xb;\Wb^{(\bar{T}_1)})\le3$ and $ |yf(\xb;\Wb^{(\bar{T}_1)})-1|\le 2$. 
(We keep $\et$ in the upper bound above for deriving a sufficient condition on $\et$ for the sign stability later.) 

Now we look to the lower bound. 
The definition of $\bar{T}_1$, $T_{(\vb)}$ and Assumption~\ref{ass:oscilation_one_data} implies that $yf(\xb;\Wb^{(t)})>1+\delta$, hence $g(\xb,y;\Wb^{(t )})>1+\delta-\delta =1 $ for $t\in [\bar{T}_1, \munderbar{T}_1)$. 
Thus Lemma~\ref{lem: single neuron behaves similarly} implies that
\begin{align}
    \dotp{\wb_{y,r^*}^{(t)}}{y\ub} > (\beta^*_\ub m)^{1/2},\quad t\in [\bar{T}_1, \munderbar{T}_1).
\end{align}

\paragraph{Step 2.2: Lower bounding $\dotp{\wb_{y,r}^{(\munderbar{T}_1)}}{y \ub}$.}

Note that $yf(\xb;\Wb^{(t)})\le yf(\xb;\Wb^{(\bar{T}_1)})$ from the local monotonicity. Consider doing one-step gradient descent with Equation~\eqref{eq: ip positive gd one data}:
\begin{align}
\dotp{\wb_{y,r^*}^{(\munderbar{T}_1)}}{y\ub} &= \dotp{\wb_{y,r^*}^{(\munderbar{T}_1 - 1)}} {y\ub} \cdot \left( 1- \et \cdot \big(yf(\xb;\Wb^{(\munderbar{T}_1 -1)}) - 1 \big)\right)\\ 
&\ge  \dotp{\wb_{y,r^*}^{(\munderbar{T}_1 - 1)}} {y\ub} \cdot \left( 1- \et \cdot \big(yf(\xb;\Wb^{(\bar{T}_1 -1)}) - 1 \big)\right)\\
&\ge (\beta^*_\ub m)^{1/2}\cdot \underbrace{\left(1- \et \Big(\big(\et/2+ \sqrt{\et^2/4+\et}\big)^2 -1 \Big)-o(1)\right)}_{>0 \text{ with }\et < 4/5}  \label{eq: w t1 lower bound}.
\end{align}
Here the last inequality is a consequence of Inequality~\eqref{eq: f Tk upper bound} and the deterministic estimation that 
\begin{align}
    \min_{\et \in [1/2,4/5]}\Big\{1- \et \Big(\big(\et/2+ \sqrt{\et^2/4+\et}\big)^2 -1 \Big)\Big\}>1/4. \label{eq: et sign lower bound}
\end{align}
We note that every step above is free of the lower bound in Inequality~\eqref{eqn:min_lb_one_data_oscil}. Therefore, the sign stability is true on $[\bar{T}_1, \munderbar{T}_1]$, because all the inner products are non-decreasing in $t\in [\munderbar{T}_1, \bar{T}_2]$

\paragraph{Step 2.3: Lower bounding 
$\dotp{\wb_{-y,r}^{(\munderbar{T}_1)}}{-y \ub}$ and $\dotp{\wb_{-y,r}^{(\munderbar{T}_1)}}{-y \vb}$.}

The key is to notice that the sign stability on $[0, \munderbar{T}_1]$ guarantees that for every $t\le \munderbar{T}_1 - 1$, 
\begin{align}
    1\pm \et \cdot  \big(1- yf(\xb; \Wb^{(t)})\big)>0,\quad 1\pm \et \cdot \frac{\norm{\vb}_2^2}{\norm{\ub}_2^2} \cdot \big(1- yf(\xb; \Wb^{(t)})\big)>0.
\end{align}
From Equation~\eqref{eq: ip negative gd one data}, we have that
\begin{align}
    \dotp{\wb_{-y,r}^{(\munderbar{T}_1)}}{-y \ub} &= \dotp{\wb_{-y,}^{(0)}}{-y \ub}\cdot\prod_{t=0}^{\munderbar{T}_1 - 1} \left(1-\et \big(1- yf(\xb; \Wb^{(t)})\big)\right) \\
    &\ge\dotp{\wb_{-y,}^{(0)}}{-y \ub}\cdot\exp\left\{ -\et\sum_{t=0}^{\munderbar{T}_1 - 1}  \big(1- yf(\xb; \Wb^{(t)})\big)\right\}.  \label{eq: <w,-u>  lower bound}
\end{align}
On the other hand,
\begin{align}
\frac{\dotp{\wb_{y,r^*}^{(\munderbar{T}_1)}}{y\ub}}{\dotp{\wb_{y,r^*}^{(0)}}{y\ub}} & = \prod_{t=0}^{\munderbar{T}_1 - 1} \left(1+\et\cdot \big(1- yf(\xb; \Wb^{(t)})\big)\right)\le \exp\left\{\sum_{t=0}^{\munderbar{T}_1 - 1}\et \cdot \big(1- yf(\xb; \Wb^{(t)})\big) \right\}. \label{eq: 1-yf sum pos}
\end{align}
However, $\dotp{\wb_{y,r^*}^{(0)}}{y\ub} \le   (\beta^*_\ub m )^{1/2}\cdot\MaxInitIPU$ and Inequality~\eqref{eq: w t1 lower bound} imply that 
\begin{align}
    \frac{\dotp{\wb_{y,r^*}^{(\munderbar{T}_1)}}{y\ub}}{\dotp{\wb_{y,r^*}^{(0)}}{y\ub}} \ge \frac{(\beta^*_\ub m )^{1/2}\cdot (1/4 - o(1))}{ (\beta^*_\ub m )^{1/2}\cdot\MaxInitIPU} \ge 1. \label{eq: ratio lower bound}
\end{align}
Combining Inequalities~\eqref{eq: 1-yf sum pos} and~\eqref{eq: ratio lower bound}, we can obtain that $\sum_{t=0}^{\munderbar{T}_1 - 1 } \big(1- yf(\xb;\Wb^{(t)})\big) \ge 0$. Together with Inequality~\eqref{eq: <w,-u>  lower bound}, this in turns leads to
\begin{align}
     0> \min_{r\in[m]} \dotp{\wb_{-y,r}^{(\munderbar{T}_1)}}{-y \ub}&= \min_{r\in[m]}\dotp{\wb_{-y,r}^{(0)}}{-y \ub}\cdot\exp\left\{ -\et\sum_{t=0}^{\munderbar{T}_1 - 1}  \big(1- yf(\xb; \Wb^{(t)})\big)\right\}\\
    &\ge \min_{r\in[m]}  \dotp{\wb_{-y,r}^{(0)}}{-y \ub} \ge -\MaxInitIPU.
\end{align}
Analogously we have that 
\begin{align}
     0> \min_{r\in[m]} \dotp{\wb_{-y,r}^{(\munderbar{T}_1)}}{-y \vb}&=\dotp{\wb_{-y,r}^{(0)}}{-y \vb}\cdot\exp\left\{ -\et\cdot\frac{\|\mathbf{v}\|_2^2}{\|\mathbf{u}\|_2^2}\cdot \sum_{t=0}^{\munderbar{T}_1 - 1}  \big(1- yf(\xb; \Wb^{(t)})\big)\right\}.\\
    &\ge \min_{r\in[m]} \dotp{\wb_{-y,r}^{(0)}}{-y \vb}\ge -\MaxInitIPV.
\end{align}
In conclusion, we have that 
\begin{align}
\max_{r\in[m]} \dotp{\wb_{y,r}^{(t)}}{y\ub} \ge 0.2 (\beta^*_\ub m)^{1/2},\quad &t\in [\bar{T}_1, \bar{T}_2],\\\min_{r\in[m]} \dotp{\wb_{-y,r}^{(t)}}{-y\ub}\ge -\MaxInitIPU,\quad & t\in [\bar{T}_1, \bar{T}_2],\\
\min_{r\in[m]} \dotp{\wb_{-y,r}^{(t)}}{-y\vb}\ge -\MaxInitIPV,\quad & t\in [\bar{T}_1, \bar{T}_2].
\end{align}
Additionally, the lower bound on $\max_{r\in[m]} \dotp{\wb_{y,r}^{(t)}}{y\ub} $ indicates that the sign stability holds on $[\bar{T}_1, \bar{T}_2)$, since the last drop step does not change the sign of $\dotp{\wb^{(t)}_{y,r}}{y\ub}$. 
Furthermore, once the $\ub$-sign stability holds, the $\vb$-sign stability can be easily derived. 
To see this, we note that the $\ub$-sign stability implies that, for any $t\in [0,T_{(\vb)}]$, it holds that $1\pm   2\eta \norm{\ub}_2^2 \cdot (1-yf(\xb;\Wb^{(t)}))/m > 0$. Now that $\norm{\ub}_2> \norm{\vb}_2$, one clearly sees that $1\pm   2\eta \norm{\vb}_2^2  \cdot (1-yf(\xb;\Wb^{(t)}))/m > 0$ and the $\vb$-sign stability holds. 
% However, Condition~\ref{cond:model_params_one_data} indicates that $ |\frac{ \eta \norm{\ub}_2^2 }{m} \cdot \big(1-yf(\xb;\Wb^{(t})\big) | > |\frac{ \eta \norm{\vb}_2^2 }{m} \cdot \big(1-yf(\xb;\Wb^{(t})\big) |$, and consequently $1\pm  \frac{ \eta \norm{\ub}_2^2 }{m} \cdot \big(1-yf(\xb;\Wb^{(t})\big) > 0,\quad \forall t\le T_{(\vb)}$. The result then follows from 
% In order to lower bound $ \dotp{\wb_{-y,r}^{(\munderbar{T}_1)}}{-y \vb}$, we consider
% \begin{align}
%     \prod_{t=0}^{\munderbar{T}_1 - 1} \Big\{1-\et \frac{\norm{\vb}_2^2}{\norm{\ub}_2^2} \big(1- yf(\xb; \Wb^{(t)}\big)\Big\} = &\prod_{t=0}^{\bar{T}_1 - 1} \Big\{1-\et \frac{\norm{\vb}_2^2}{\norm{\ub}_2^2} \big(1- yf(\xb; \Wb^{(t)}\big)\Big\} \\ 
%     &\times \prod_{t=\bar{T}_1}^{\munderbar{T}_1 - 1} \Big\{1-\et \frac{\norm{\vb}_2^2}{\norm{\ub}_2^2} \big(1- yf(\xb; \Wb^{(t)}\big)\Big\}\\
%     \le  &\prod_{t=0}^{\bar{T}_1 - 1} \Big\{1-\et \frac{\norm{\vb}_2^2}{\norm{\ub}_2^2} \big(1- yf(\xb; \Wb^{(t)}\big)\Big\} \\ 
%     & \times \prod_{t=\bar{T}_1}^{\munderbar{T}_1 - 1} \Big\{1-\et \big(1- yf(\xb; \Wb^{(t)}\big)\Big\}
% \end{align}
% \begin{align}
%      \dotp{\wb_{-y,r}^{(\munderbar{T}_1)}}{-y \vb}=&\dotp{\wb_{-y,r}^{(0)}}{-y \vb}\prod_{t=0}^{\munderbar{T}_1 - 1} \Big\{1-\et \frac{\norm{\vb}_2^2}{\norm{\ub}_2^2} \big(1- yf(\xb; \Wb^{(t)}\big)\Big\} \\
%     \ge & \dotp{\wb_{-y,r_*}^{(0)}}{-y \vb} .
% \end{align}
\paragraph{Step 2.4: Upper bounding $\dotp{\wb_{y,r}^{(t)}}{y \ub}$ for $t\in (\munderbar{T}_1, \bar{T}_2)$.} This is exactly the same as the proof of Inequality~\eqref{eq: max ub pre bar T_1} in \textbf{Step 1}, and is thus omitted.

\paragraph{Step 3. Finalizing proofs.} At this point, all the results in Lemma~\ref{lem: boundedness oscillating one data} have been proved to be true on $t\in [\bar{T}_1, \bar{T}_2)$. It is important to note that the only inductive hypothesis used for the local extension on $[\bar{T}_1, \bar{T}_2)$ is the lower bound on $\dotp{\wb_{-y,r}^{(t)}}{-y\ub}$ for $t\le \bar{T}_1$. 
The rest part merely comes from the definition of $\bar{T}_1$ and $\munderbar{T}_2$ and Assumption~\ref{ass:oscilation_one_data}. 
So repeating the same argument extends previous steps to all $t\leq T_{(\mathbf{v})}$.
\end{proof}

\begin{remark}\label{rmk: finite period} In the proof, we implicitly utilize the fact that $\bar{T}_k,\munderbar{T}_k<+\infty, \; \forall k \ge 0$. One may conjecture that there could be cases that for some $k$, $yf(\xb; \Wb^{(t)})> 1$ (resp. $<1$) for all $t \ge \bar{T}_{k}$ (resp. $\munderbar{T}_k$). However, Assumption~\ref{ass:oscilation_one_data} (guaranteed by tuning a proper $\eta$) indicates that this cannot happen. One can argue that once $yf(\xb; \Wb^{(t)})> 1$, the Assumption~\ref{ass:oscilation_one_data} enables the dynamic to bounce back towards $1$ with at least exponential rate. Therefore, $yf(\xb; \Wb^{(t)})$ falls below $1+\delta$ within a few steps and Assumption~\ref{ass:oscilation_one_data} forces $yf(\xb; \Wb^{(t)})<1-\delta$, and $\munderbar{T}_{k}<+\infty$. Same argument can be used to prove that $\bar{T}_{k}<+\infty$. 
\end{remark}

\subsection{Proof of Fundamental Reasoning (Lemma~\ref{lem: linear increment one data})}\label{subsec: proof increment}

\begin{proof}[Proof of Lemma~\ref{lem: linear increment one data}]
Let $r^* := \argmax_{r\in[m]} \dotp{\wb_{y,r}^{(t_0)}}{y\ub}$. For the step $t_0< t_1$, \eqref{eq: ip positive gd one data} implies that
\begin{align}
\dotp{\wb_{y,r^*}^{(t_1)}}{y\ub} &= \dotp{\wb_{y,r^*}^{(t_0 )}}{y\ub}  + \frac{2\eta \norm{\ub}_2^2}{m} \cdot \sum_{\substack{s\in[t_0,t_1-1]:\\ yf(\xb;\Wb^{(s)})\ge 1}} \big(1-yf(\xb;\Wb^{(s)})\big)\cdot \dotp{\wb_{y,r^*}^{(s)}}{ y\ub } \\
&\qquad + \frac{2\eta \norm{\ub}_2^2}{m} \cdot \sum_{\substack{s\in[t_0,t_1-1]:\\ yf(\xb;\Wb^{(s)})<1}} \big(1-yf(\xb;\Wb^{(s)})\big)\cdot \dotp{\wb_{y,r^*}^{(s)}}{ y\ub } . \label{eq: diff expand}
\end{align}
By Condition 4 in Lemma~\ref{lem: linear increment one data}, for all $s\in [t_0,t_1]$, $\frac{1}{m} \sum_{r\in[m]}  \sigma(\dotp{\wb_{y,r}^{(s)}}{y\vb})<\delta$. 
Therefore by Assumption~\ref{ass:oscilation_one_data}, for $s$ such that $yf(\xb;\Wb^{(s)}) \ge 1$ (and thus $>1+\delta$), it holds from \eqref{eq: f expression one data simplified} that 
\begin{align}
    g(\xb,y;\Wb^{(t)}) = \frac{1}{m}\sum_{r\in[m]}\sigma\big(\dotp{\wb_{y,r}^{(s)}}{y\ub}\big)\geq 1+\delta - \delta = 1.
\end{align}
Hence by Lemma~\ref{lem: similar behavior}, we have that
\begin{align}
    \dotp{\wb_{y,r^*}^{(s)}}{y\ub} \ge (\beta^*_\ub m)^{1/2 }.  \label{eq: ws lower}
\end{align}
On the other hand, by Conditions 3 in Lemma~\ref{lem: linear increment one data}, for all $s\in [t_0,t_1]$, it holds that $\min_{r\in[m]} \dotp{\wb_{-y,r}^{(s)}}{-y \ub}>-0.1$ and $\min_{r\in[m]} \dotp{\wb_{-y,r}^{(s)}}{-y \vb}>-0.1$, which lead to
\begin{align}
    \frac{1}{m}\sum_{r\in [m]}  \sigma\big(-\dotp{\wb_{-y,r}^{(s)}}{-y\ub }\big) + \sigma\big(-\dotp{\wb_{-y,r}^{(s)}}{-y\vb }\big)\le 2\times 0.1^2 < 0.05.
\end{align}
Therefore by Assumption~\ref{ass:oscilation_one_data}, for $s$ such that $yf(\xb;\Wb^{(s)}) \le 1$ (and thus $<1-\delta$), it holds from \eqref{eq: f expression one data simplified} that 
\begin{align}
     g(\xb,y;\Wb^{(t)}) = \frac{1}{m}\sum_{r\in[m]}\sigma\big(\dotp{\wb_{y,r}^{(s)}}{y\ub}\big)\leq 1 - \delta + 0.05.
\end{align}
Hence by Lemma~\ref{lem: similar behavior}, we know that
\begin{align}
    \dotp{\wb_{y,r^*}^{(s)}}{y\ub} \le (1.05-\delta)^{1/2} \cdot(\beta^*_\ub m)^{1/2 }. \label{eq: ws upper}
\end{align}
Meanwhile, Condition 1 ($\ub$-sign stability) and Condition 2 (boundedness) in Lemma~\ref{lem: linear increment one data} imply that
\begin{align}
    \left|\dotp{\wb_{y,r^*}^{(t_1)}}{y\ub}  -  \dotp{\wb_{y,r^*}^{(t_0)}}{y\ub}  \right| \le  \left|\dotp{\wb_{y,r^*}^{(t_1)}}{y\ub}\right| \vee\left| \dotp{\wb_{y,r^*}^{(t_0)}}{y\ub} \right|\le B\cdot (\beta^*_\ub m)^{1/2}.\label{eq: diff upper}
\end{align}
Putting Inequality~\eqref{eq: ws lower}, \eqref{eq: ws upper}, \eqref{eq: diff upper} and Equation~\eqref{eq: diff expand} together, we have that
\begin{align}
B\cdot (\beta^*_\ub m)^{1/2} &\ge  \left|  \frac{2\eta \norm{\ub}_2^2}{m} \cdot \sum_{\substack{s\in[t_0,t_1-1]:\\ yf(\xb;\Wb^{(s)})\ge 1}} \big(1-yf(\xb;\Wb^{(s)})\big)\cdot \dotp{\wb_{y,r^*}^{(s)}}{ y\ub } \right.\\
&\qquad \left.+ \frac{2\eta \norm{\ub}_2^2}{m} \cdot \sum_{\substack{s\in[t_0,t_1-1]:\\ yf(\xb;\Wb^{(s)})<1}} \big(1-yf(\xb;\Wb^{(s)})\big)\cdot \dotp{\wb_{y,r^*}^{(s)}}{ y\ub } \right| \\
& \ge   \frac{2\eta \norm{\ub}_2^2}{m} \cdot \sum_{\substack{s\in[t_0,t_1-1]:\\ yf(\xb;\Wb^{(s)})\ge 1}} \big(yf(\xb;\Wb^{(s)})-1\big)\cdot \dotp{\wb_{y,r^*}^{(s)}}{ y\ub } \\
&\qquad - \frac{2\eta \norm{\ub}_2^2}{m} \cdot \sum_{\substack{s\in[t_0,t_1-1]:\\ yf(\xb;\Wb^{(s)})<1}} \big(1-yf(\xb;\Wb^{(s)})\big)\cdot \dotp{\wb_{y,r^*}^{(s)}}{ y\ub } \\
&\ge   \frac{2\eta \norm{\ub}_2^2 }{m}\cdot (\beta^*_\ub m)^{1/2} \cdot \sum_{\substack{s\in[t_0,t_1-1]:\\ yf(\xb;\Wb^{(s)})\ge 1}} \big(yf(\xb;\Wb^{(s)})-1\big) \\
&\qquad-\frac{2\eta \norm{\ub}_2^2 }{m}\cdot (\beta^*_\ub m)^{1/2} \cdot(1.05-\delta)^{1/2} \cdot\sum_{\substack{s\in[t_0,t_1-1]:\\ yf(\xb;\Wb^{(s)})<1}} \big(1-yf(\xb;\Wb^{(s)})\big).
\end{align}
% \begin{align}
% &B\cdot (\beta^*_\ub m)^{1/2} \\
% &\qquad\ge  \left|  \frac{2\eta \norm{\ub}_2^2}{m} \cdot \sum_{\substack{s\in[t_0,t_1-1]:\\ yf(\xb;\Wb^{(s)})\ge 1}} \big(1-yf(\xb;\Wb^{(s)})\big)\cdot \dotp{\wb_{y,r^*}^{(s)}}{ y\ub }
% + \frac{2\eta \norm{\ub}_2^2}{m} \cdot \sum_{\substack{s\in[t_0,t_1-1]:\\ yf(\xb;\Wb^{(s)})<1}} \big(1-yf(\xb;\Wb^{(s)})\big)\cdot \dotp{\wb_{y,r^*}^{(s)}}{ y\ub } \right| \\
% &\qquad \ge   \frac{2\eta \norm{\ub}_2^2}{m} \cdot \sum_{\substack{s\in[t_0,t_1-1]:\\ yf(\xb;\Wb^{(s)})\ge 1}} \big(yf(\xb;\Wb^{(s)})-1\big)\cdot \dotp{\wb_{y,r^*}^{(s)}}{ y\ub } - \frac{2\eta \norm{\ub}_2^2}{m} \cdot \sum_{\substack{s\in[t_0,t_1-1]:\\ yf(\xb;\Wb^{(s)})<1}} \big(1-yf(\xb;\Wb^{(s)})\big)\cdot \dotp{\wb_{y,r^*}^{(s)}}{ y\ub } \\
% &\qquad \ge   \frac{2\eta \norm{\ub}_2^2 }{m}\cdot (\beta^*_\ub m)^{1/2} \cdot \left\{\sum_{\substack{s\in[t_0,t_1-1]:\\ yf(\xb;\Wb^{(s)})\ge 1}} \big(yf(\xb;\Wb^{(s)})-1\big) -(1.05-\delta)^{1/2} \sum_{\substack{s\in[t_0,t_1-1]:\\ yf(\xb;\Wb^{(s)})<1}} \big(1-yf(\xb;\Wb^{(s)})\big)\cdot \dotp{\wb_{y,r^*}^{(s)}}{ y\ub }\right\}
% \end{align}
This is also equivalent to
\begin{align}
\sum_{\substack{s\in[t_0,t_1-1]:\\ yf(\xb;\Wb^{(s)})<1}} \big(1-yf(\xb;\Wb^{(s)})\big)\cdot \dotp{\wb_{y,r^*}^{(s)}}{ y\ub }& \ge \sum_{\substack{s\in[t_0,t_1-1]:\\ yf(\xb;\Wb^{(s)})\ge 1}}(1.05-\delta)^{-1/2}\cdot \big(yf(\xb;\Wb^{(s)})-1\big)  \\
& \qquad - \underbrace{\frac{m B}{2\eta \norm{\ub}_2^2 \sqrt{1.05-\delta}}}_{\coloneqq \Delta(B)}.\label{eq: balanced summation}
\end{align}
Now we are ready to lower bound the summation that we are interested in.
Since 
\begin{align}
    |\{s\in[t_0,t_1-1]:yf(\mathbf{x};\mathbf{W}^{(s)})<1\}| + |\{s\in[t_0,t_1-1]:yf(\mathbf{x};\mathbf{W}^{(s)})>1\}| = t_1 - t_0,
\end{align}
we have either $|\{s\in[t_0,t_1-1]:yf(\mathbf{x};\mathbf{W}^{(s)})>1\}|> (t_1 - t_0) / 2$, which by \eqref{eq: balanced summation} implies that 
\allowdisplaybreaks
\begin{align}
    \sum_{s=t_0}^{t_1-1}\big(1 - yf(\mathbf{x};\mathbf{W}^{(s)})\big) &= \sum_{\substack{s\in[t_0,t_1-1]:\\ yf(\xb;\Wb^{(s)})< 1}}\big(1 - yf(\mathbf{x};\mathbf{W}^{(s)})\big) - \sum_{\substack{s\in[t_0,t_1-1]:\\ yf(\xb;\Wb^{(s)})> 1}}\big(yf(\mathbf{x};\mathbf{W}^{(s)}) - 1\big)  \\ 
    &\geq \big((1.05-\delta)^{-1/2} - 1\big)\cdot \sum_{\substack{s\in[t_0,t_1-1]:\\ yf(\xb;\Wb^{(s)})> 1}}\big(yf(\mathbf{x};\mathbf{W}^{(s)}) - 1\big)  - \Delta(B)  \\
    &\geq \frac{1}{2}\big(\delta(1.05-\delta)^{-1/2} - \delta\big)\cdot (t_1 - t_0) - \Delta(B), \label{eq: main 11}
\end{align}
or $|\{s\in[t_0,t_1-1]:yf(\mathbf{x};\mathbf{W}^{(s)})<1\}| > (t - t_0 ) / 2$, which by \eqref{eq: balanced summation} implies that 
\begin{align}
    \sum_{s=t_0}^{t_1-1}\big(1 - yf(\mathbf{x};\mathbf{W}^{(s)})\big) &= \sum_{\substack{s\in[t_0,t_1-1]:\\ yf(\xb;\Wb^{(s)})< 1}}\big(yf(\mathbf{x};\mathbf{W}^{(s)}) - 1\big)  - \sum_{\substack{s\in[t_0,t_1-1]:\\ yf(\xb;\Wb^{(s)})> 1}}\big(1 - yf(\mathbf{x};\mathbf{W}^{(s)})\big) \notag \\ 
    &\geq \big(1 - (1.05-\delta)^{1/2}\big)\cdot \sum_{\substack{s\in[t_0,t_1-1]:\\ yf(\xb;\Wb^{(s)})< 1}}\big(1 - yf(\mathbf{x};\mathbf{W}^{(s)})\big)  - (1.05-\delta)^{1/2}\cdot \Delta(B) \notag \\
    &\geq \frac{1}{2}\big(\delta - \delta(1.05-\delta)^{1/2}\big)\cdot (t_1 - t_0) - (1.05-\delta)^{1/2}\cdot \Delta(B). \label{eq: main 12}
\end{align}
In both two cases we have used Assumption \ref{ass:oscilation_one_data} to bound $yf(\mathbf{x};\mathbf{W}^{(s)})$ from $1$.
Combining \eqref{eq: main 11} and \eqref{eq: main 12}, we have that
\begin{align}\label{eq: main 13}
    \sum_{s=t_0}^{t_1-1}\big(1 - yf(\mathbf{x};\mathbf{W}^{(s)})\big) \geq \frac{1}{2}\big(\delta - \delta(1.05-\delta)^{1/2}\big)\cdot (t_1 - t_0) - \Delta(B).
\end{align}
Now plugging in the definition of $\Delta(B)$ in \eqref{eq: balanced summation}, we have proved the linear increasing lower bound for the summation, which is the first conclusion of Lemma~\ref{lem: linear increment one data}.

Then we turn to prove the second conclusion of Lemma~\ref{lem: linear increment one data}. 
For simplicity, we denote $\alpha := \norm{\vb}^2_2 / \norm{\ub}^2_2$ and $\epsilon = (\delta - \delta(1.05-\delta)^{1/2})/4$. 
Note that from the $\vb$-sign stability (Condition 1) and \eqref{eq: ipv positive gd one data}, we have that
\begin{align}
1+ \frac{2\eta \norm{\vb}_2^2}{m} \cdot \big(1-yf(\xb;\Wb^{(t)})\big) > 0, \quad \forall t\in [t_0,t_1-1].
\end{align}
%For $r\in \cV_{y,+}$, note that $-2\le 1-yf(\xb;\Wb^{(t)})\le 1$. 
Therefore, we can lower bound the logarithmic ratio $\dotp{\wb_{y,r}^{(t)}}{y\vb}/ \dotp{\wb_{y,r}^{(0)}}{y\vb}$ for $r\in \cV_{y,+}$ as 
(recall that $\et = 2\eta\|\ub\|_2^2 / m$)
\begin{align}
    &\sum_{t=t_0}^{t_1-1} \log \Big(1+\alpha \et \big(1- yf(\xb;\Wb^{(t)})\big)\Big)\\
    &\qquad = \sum_{t=t_0}^{t_1-1} \int_0^\alpha \frac{\et\big(1- yf(\xb;\Wb^{(t)})\big) }{1+\et z \big(1- yf(\xb;\Wb^{(t)})\big)} dz \\
    &\qquad = \sum_{t=t_0}^{t_1-1} \int_0^\alpha \frac{\et\cdot\Big(\big(1- yf(\xb;\Wb^{(t)})\big) + 2\Big) }{1+\et z \big(1- yf(\xb;\Wb^{(t)})\big)} dz 
    -  \sum_{t=t_0}^{t_1-1} \int_0^\alpha \frac{2\et }{1+\et z \big(1- yf(\xb;\Wb^{(t)})\big)} dz. \label{eq: lower bound weak signal increasing}
\end{align}
Note that by Condition 5 in Lemma~\ref{lemma:multiple-data, lower bound fitting residual}, $-2\le 1-yf(\xb;\Wb^{(t)})\le 1$, which further lower bounds \eqref{eq: lower bound weak signal increasing} as 
\begin{align}
    \eqref{eq: lower bound weak signal increasing}&\ge  \sum_{t=t_0}^{t_1-1} \int_0^\alpha \frac{\et\cdot\Big(\big(1- yf(\xb;\Wb^{(t)})\big) + 2\Big) }{1+\et z} dz -  \sum_{t=t_0}^{t_1-1} \int_0^\alpha \frac{2\et }{1-2\et z} dz \\ 
    &= \int_0^\alpha \frac{\et\cdot\Big(\sum_{t=t_0}^{t_1-1}\big(1- yf(\xb;\Wb^{(t)})\big) + 2(t_1-t_0)\Big) }{1+\et z} dz -  \int_0^\alpha \frac{2\et (t_1 - t_0 )}{1-2\et z} dz.\label{eq: lower bound weak signal increasing 2}
\end{align}
Now applying \eqref{eq: main 13} to the summation in \eqref{eq: lower bound weak signal increasing 2}, we can arrive at 
\begin{align}
    \eqref{eq: lower bound weak signal increasing 2}&\ge \int_0^\alpha \frac{\et\cdot\Big( 2\epsilon (t_1 - t_0) - \Delta(B) + 2(t_1-t_0)\Big) }{1+\et z} dz -  \int_0^\alpha \frac{2\et (t_1 - t_0 )}{1-2\et z} dz \\ 
    &\ge \Big( 2\epsilon(t_1 -t_0 ) - \Delta(B)  + 2(t_1-t_0)\Big)\cdot\log(1+\alpha\et) + (t_1-t_0)\cdot\log(1-2\alpha\et )\\
    &\ge \big(2\epsilon(t_1 -t_0 ) - \Delta(B) \big)\cdot\log(1+\alpha\et)  + (t_1-t_0)\cdot\log\Big((1+\alpha\et)^2\cdot(1-2\alpha\et )\Big).\label{eq: lower bound weak signal increasing 3}
\end{align}
To further lower bound \eqref{eq: lower bound weak signal increasing 3}, we note that $\alpha\et >\log(1+\alpha\et)\ge \frac{1}{2}\alpha\et$ since by Assumption~\ref{cond:model_params_one_data}, $0<\alpha \et < 1$.  
Furthermore, Assumption~\ref{cond:model_params_one_data} guarantees that $\alpha < \epsilon/2 = (\delta - \delta \cdot (1.05-\delta)^{1/2})/8$ and $2\alpha^3\et^3+3\alpha^2 \et^2 \le 4 \alpha^2 < 1/5 \wedge 2\epsilon/5$, so we have that
\begin{align}
    \log\Big((1+\alpha\et)^2\cdot(1-2\alpha\et )\Big) &= \log \big(1- 3\alpha^2\et^2 -2\alpha^3\et^3\big)\\
    &= - \log \left(1+ \frac{3\alpha^2\et^2 +2\alpha^3\et^3}{1-3\alpha^2\et^2 -2\alpha^3\et^3}\right)\\ 
    &\ge \frac{-3\alpha^2\et^2 -2\alpha^3\et^3 }{1-3\alpha^2\et^2 -2\alpha^3\et^3}\\
    &\ge -5\alpha^2.\label{eq: lower bound weak signal increasing 4}
\end{align}
Putting \eqref{eq: lower bound weak signal increasing 3} and \eqref{eq: lower bound weak signal increasing 4} together, we have that
\begin{align}
    \sum_{t=t_0}^{t_1-1} \log \Big(1+\alpha \et \big(1- yf(\xb;\Wb^{(t)})\big)\Big)&\ge \big(\alpha \et \epsilon -5\alpha^2 \big)\cdot(t_1-t_0)-\Delta(B) \cdot \log(1+\alpha \et) \\
    &\ge \frac{1}{2}\alpha \et \epsilon\cdot (t_1- t_0 )-\Delta(B) \alpha \et\label{eq: lower bound weak signal increasing 5}
\end{align}
And consequently we have the following result, for all $r\in\mathcal{V}_{y,+}$,
\begin{align}
    \dotp{\wb_{y,r}^{(t_1)}}{y\vb} &=\dotp{\wb_{y,r}^{(t_0)}}{y\vb} \cdot\prod _{t=t_0}^{t_1 - 1} \Big(1+\alpha\et\big(1- yf(\xb;\Wb^{(t)})\big)\Big) \\
    & = \dotp{\wb_{y,r}^{(t_0)}}{y\vb} \cdot\exp\left\{\sum_{t = t_0}^{t_1 -1 }\log\Big(1+\alpha\et\big(1- yf(\xb;\Wb^{(t)})\big)\Big)\right\}\\
    &\ge\dotp{\wb_{y,r}^{(t_0)}}{y\vb}\cdot  \exp\left\{ \frac{1}{2}\alpha\et\epsilon\cdot (t_1 - t_0) - \Delta(B) \alpha \et \right\},\label{eq: lower bound weak signal increasing 6}
\end{align}
where in the last inequality we use \eqref{eq: lower bound weak signal increasing 5}.
Plugging in the expression of $\epsilon$, $\alpha$, $\et$, and $\Delta(B)$, we have proved the second conclusion of Lemma~\ref{lem: linear increment one data}.
This concludes the proof of Lemma~\ref{lem: linear increment one data}.
\end{proof}

\subsection{Proof of Technical Results}\label{subsec: proof of technical results single data}

\subsubsection{Proof of Proposition~\ref{prop: doesn't decrease}}\label{subsubsec: proof: doesn't decrease}

\begin{proof}[Proof of Proposition~\ref{prop: doesn't decrease}]

We want to track the sequence \eqref{eq: sequence} after it falls below $\delta$. To this end, we define two stopping times
\begin{align}
T_{(\vb), \delta, L} &= \min\left\{t \ge T_{(\vb)}:\frac{1}{m} \sum_{r\in[m]}  \sigma\big(\dotp{\wb_{y,r}^{(t)}}{y\vb}\big) < \delta \right\},\\
T_{(\vb)}^{+,2 } &= \min\left\{t \ge T_{(\vb), \delta, L}:\frac{1}{m} \sum_{r\in[m]}  \sigma\big(\dotp{\wb_{y,r}^{(t)}}{y\vb}\big) \ge  \delta \right\}.
\end{align}
If $T_{(\vb), \delta, L}=+\infty$, then the proof is over. 
Otherwise we prove that, before $T_{(\vb)}^{+,2 } \le +\infty $ (possibly equal), the sequence never falls below $\delta/2$.

Let's take a closer look at the controls over the negative parts while the weak signal remain learned. Note that for $t\in [T_{(\vb)},T_{(\vb), \delta, L}]$ and $r\in \cV_{y,+}$, we have that
\begin{align}
    \dotp{\wb_{y,r}^{(t)}}{y\vb} &= \dotp{\wb_{y,r}^{(0)}}{ y\vb} \cdot \prod_{s=0}^{t-1} \left(1+\frac{2\eta \norm{\vb}}{m} \cdot \big(1- yf(\xb;\Wb^{(s)})\big)\right) \\ &\le \dotp{\wb_{y,r}^{(0)}}{ y\vb} \cdot  \exp \left\{ \frac{2\eta \norm{\vb}_2^2}{m}\cdot\sum_{s=0}^{t-1} \big(1-yf(\xb;\Wb^{(s)})\big)\right\}.
\end{align}
Meanwhile, for $t\in [T_{(\vb)},T_{(\vb), \delta, L}]$, we have that $\dotp{\wb_{y,r}^{(t)}}{y\vb}  > (\beta_\vb^* m\delta )^{1/2} \gg  \dotp{\wb_{y,r}^{(0)}}{ y\vb}$. Hence 
\begin{align}
    \exp \left\{ \frac{2\eta \norm{\vb}_2^2}{m}\cdot\sum_{s=0}^{t-1} \big(1-yf(\xb;\Wb^{(s)})\big)\right\}  >1,  \quad \forall t\in [T_{(\vb )},T_{(\vb), \delta, L}-1],
\end{align}
and in consequence we have that 
\begin{align}
    \sum_{s=0}^{t-1} \big(1-yf(\xb;\Wb^{(s)})\big) > 0, \quad \forall t\in [T_{(\vb )},T_{(\vb), \delta, L}-1].\label{eq: doesn't decrease 1}
\end{align}
On the other hand, for $r\in \cV_{-y,-}$, it holds that $\dotp{\wb_{-y,r}^{(0)}}{-y\vb}<0$ and thus by \eqref{eq: doesn't decrease 1} we have that 
\begin{align}
    \dotp{\wb_{-y,r}^{(t)}}{-y\vb} &= \dotp{\wb_{-y,r}^{(0)}}{-y\vb} \cdot \prod_{s=0}^{t-1} \left(1- \frac{2\eta \norm{\vb}_2^2 }{m}\cdot \big(1-yf(\xb;\Wb^{(s)})\big)\right) \\
    & \ge \dotp{\wb_{-y,r}^{(0)}}{-y\vb } \cdot \exp \left\{ -\frac{2\eta \norm{\vb}_2^2 }{m}\cdot \sum_{s=0}^{t-1} \big(1-yf(\xb;\Wb^{(s)})\big)\right\}\\
    % & = \dotp{\wb_{-y,r}^{(0)}}{-y\vb } \big/ \underbrace{\exp \Big\{\frac{2\eta \norm{\vb}_2^2 }{m}\cdot \sum_{s=0}^{t-1} \big(1-yf(\xb;\Wb^{(s)})\big)\Big\}}_{<1 }\\
    & \ge \dotp{\wb_{-y,r}^{(0)}}{-y\vb } \ge -\MaxInitIPV, \label{eq: negative part lower bound}
\end{align}
for all $t\in [T_{(\vb )},T_{(\vb), \delta, L}]$. Analogously, we also have that 
\begin{align}
    \dotp{\wb_{-y,r}^{(t)}}{-y\ub}  \ge \dotp{\wb_{-y,r}^{(0)}}{-y\ub}\ge -\MaxInitIPU\label{eq: negative part lower bound u}
\end{align}
for all $t\in [T_{(\vb )},T_{(\vb), \delta, L}]$. Therefore, we have that for all $t\in [T_{(\vb )},T_{(\vb), \delta, L}]$,
\begin{align}
    E_t&\coloneqq \frac{1}{m} \sum_{r\in [m]} \sigma\big(-\dotp{\wb_{-y,r}^{(t)}}{-y\ub}\big) + \sigma\big(-\dotp{\wb_{-y,r}^{(t)}}{-y\vb}\big) \\ 
& \le  2  \cdot \max_{r \in [m]}\left\{\sigma\big(-\dotp{\wb_{-y,r}^{(t)}}{-y\ub}\big) \vee \sigma\big(-\dotp{\wb_{-y,r}^{(t)}}{-y\vb}\big)\right\} \\ 
&\le 2\big(\MaxInitIPU\big)^2  \\ 
&\ll \frac{\delta}{2}. \label{eq: Et upper bound}
\end{align}
This allows us to leverage Proposition~\ref{prop: f Tk-1 lower bound} to upper bound $yf(\xb;\Wb^{(t)})$ for $t\in [T_{(\vb )},T_{(\vb), \delta, L}]$. 
Specifically, 
we locate the last step before $T_{(\vb), \delta, L}$ when $yf$ just bounces up over $1$, which is, 
\begin{align}
    \bar{T}_{k^*}\coloneqq \max\left\{\bar{T}_k: \bar{T}_k\le T_{(\vb), \delta, L}\right\},
\end{align}
where $\bar{T}_k$ is defined in \eqref{eq: T bar}.
% $\bar{T}_{k^*}\coloneqq \max\{\bar{T}_k: \bar{T}_k\le T_{(\vb), \delta, L}\}$ (see \eqref{eq: T bar}). 
Then Proposition~\ref{prop: f Tk-1 lower bound} with Inequality~\eqref{eq: Et upper bound} implies that
\begin{align}
    yf(\xb;\Wb^{(\bar{T}_{k^*})}) &\le yf(\xb;\Wb^{(\bar{T}_{k^*} - 1)})\cdot \Big(1+ \tilde\eta \big( 1- yf( \xb; \Wb^{(\bar{T}_{k}- 1)})\big)\Big)^2 + 2 E_{\bar{T}_{k^*} - 1 }\\
    &\le\Big( 1+ \et \big(1- yf(\xb;\Wb^{(\bar{T}_{k^*}-1)})\big)\Big)^2 + o(1)\\
    &\le\Big( {1+ \et \Big(1- \frac{1}{2\et}\big(\et + 2 - \sqrt{\et^2 +4\et}\big)\Big)} \Big)^2 + o(1) \\
    & \le 3, \label{eq: Et upper bound 2}
\end{align}
where we have applied the fact that $\et<1$. 
On the other hand, we have that 
\begin{align}
    \frac{1}{m} \sum_{r\in [m]} \sigma\big(\dotp{\wb_{y,r}^{(\bar{T}_{k^*} - 1)}}{ y\ub}\big) \le 1.05,
\end{align}
for which Lemma~\ref{lem: similar behavior} indicates that $\max_{r\in [m]} \dotp{\wb_{y,r}^{(\bar{T}_{k^*} - 1)}}{ y\ub} \le (1.05 \beta^*_\ub m)^{1/2}$. 
As in Inequality~\eqref{eq: max ub bar T_1}, one step gradient descent then gives that
\begin{align}
    \dotp{\wb_{y,r}^{(\bar{T}_{k^*} )}}{ y\ub}  \le \dotp{\wb_{y,r}^{(\bar{T}_{k^*} - 1)}}{ y\ub} \cdot  \left(1+ \frac{2\eta \norm{\ub}_2^2}{m} \cdot\big(1- yf(\xb;\Wb^{(\bar{T}_{k^*} - 1)})\big)\right) \le 1.5\cdot \big(1.05\beta^*_\ub m\big)^{1/2}. 
\end{align}
Now we consider the scale of these inner products right at $T_{(\vb), \delta, L}$. 
From the definitions of $T_{(\vb), \delta, L}$ and $\bar{T}_{k^*}$ we know that $yf_t>1$ between these steps (otherwise the sequence wouldn't fall below $\delta$). 
Thus by \eqref{eq: Et upper bound 2},
\begin{align}
    1< yf(\xb;\Wb^{(t)})<yf(\xb;\Wb^{(\bar{T}_{k^*})})\le 3,\quad \forall t\in [\bar{T}_{k^*},T_{(\vb), \delta, L}].
\end{align}
% $1< yf(\xb;\Wb^{(t)})<yf(\xb;\Wb^{(\bar{T}_{k^*})})\le 3,\;t\in [\bar{T}_{k^*},T_{(\vb), \delta, L}]$. 
We first state that $\frac{1}{m}\sum_{r\in [m]} \sigma(\dotp{\wb_{y,r}^{(T_{(\vb), \delta, L})}}{y\vb})$ is not far away from $\delta$. 
Note that by \eqref{eq: doesn't decrease 1},
\begin{align}
\frac{1}{m}\sum_{r\in [m]} \sigma\big(\dotp{\wb_{y,r}^{(T_{(\vb), \delta, L})}}{y\vb}\big)&=  \frac{1}{m} \sum_{r\in [m]} \sigma\big(\dotp{\wb_{y,r}^{(T_{(\vb), \delta, L}-1)}}{y\vb}\big) \cdot \left(1+ \frac{2\eta\norm{\vb}_2^2}{m} \cdot \big(1-yf(\xb;\Wb^{(T_{(\vb), \delta, L} - 1)})\big)\right)\\ 
 & \ge \delta\cdot   \left(1+ \frac{2\eta\norm{\vb}_2^2}{m} \cdot(1-3)\right)\\
  & \ge \delta \cdot \left(1 - 2  \cdot\frac{\norm{\vb}_2^2}{ \norm{\ub}_2^2}\right) \\ 
  &\ge \frac{3}{4}\delta. \label{eq: doesn't decrease 2}
\end{align}
Last inequality uses the fact that $\norm{\vb}_2^2 /\norm{\ub}_2^2\le 1/4$. 
This shows that the sequence does not fall below $3\delta / 4$ at the step $T_{(\vb), \delta, L}$.
In the sequel, we study the behavior of the sequence after step $T_{(\vb), \delta, L}$.

Firstly, by Inequalities~\eqref{eq: negative part lower bound} and \eqref{eq: negative part lower bound u} with $t =  T_{(\vb), \delta, L}$, we obtain that the negative parts satisfy
\begin{align}
    \min_{r\in [m]}  \dotp{\wb_{-y,r}^{(T_{(\vb), \delta, L})}}{-y\ub } \ge -\MaxInitIPU,\label{eq: E_t exit 1}\\
    \min_{r\in [m]}  \dotp{\wb_{-y,r}^{(T_{(\vb), \delta, L})}}{-y\vb } \ge -\MaxInitIPV. \label{eq: E_t exit 2}
\end{align}
which by definition of $E_t$ in \eqref{eq: Et upper bound} implies that 
\begin{align}
    E_{T_{(\vb), \delta, L}} \le 2\left(\MaxInitIPU\right)^2\ll 1.\label{eq: E_t exit 3}
\end{align} 
For the positive part, we have that 
\begin{align}
    \max_{r\in[m]} \dotp{\wb_{y,r}^{(T_{(\vb), \delta, L})}}{y\ub} <\max_{r\in[m]} \dotp{\wb_{y,r}^{(\bar{T}_{k^*})}}{y\ub}< 1.5\cdot (1.05\beta^*_{\ub}m)^{1/2},\label{eq: E_t exit 4}
\end{align}
and by the same argument as Inequality~\eqref{eq: w t1 lower bound} we can obtain the lower bound of $\max_{r\in [m]} \dotp{\wb_{y,r}^{(T_{(\vb), \delta, L})}}{y\ub}$, 
\begin{align}
    \max_{r\in [m]} \dotp{\wb_{y,r}^{(T_{(\vb), \delta, L})}}{y\ub} &\ge \max_{r\in [m]} \dotp{\wb_{y,r}^{(T_{(\vb), \delta, L}-1)}}{y\ub}  \left(1+ \frac{2\eta \norm{\ub}_2^2}{m} \big(1- yf(\xb;\Wb^{(T_{(\vb), \delta, L} - 1)})\big)\right)\\
    &\ge 0.2 \cdot \big(\beta^*_\ub m\big)^{1/2},\label{eq: E_t exit 5}
\end{align}
and finally for $t\in [T_{(\vb), \delta, L}, T_{(\vb)}^{+,2 }]$, we have that 
\begin{align}
    \frac{1}{m}\sum_r \sigma(\dotp{\wb_{y,r}^{(t)}}{y\vb})<\delta.\label{eq: E_t exit 6}
\end{align}
%$\frac{1}{m}\sum_r \sigma(\dotp{\wb_{y,r}}{y\vb})<\delta$. 
% Moreover, \eqref{eq: E_t exit 1} and~\eqref{eq: E_t exit 2} imply that
% Additionally, we have that $\max_{r\in[m]} \dotp{\wb_{y,r}^{(T_{(\vb), \delta, L})}}{y\ub} <\max_{r\in[m]} \dotp{\wb_{y,r}^{(\bar{T}_{k^*})}}{y\ub}< 1.5\cdot (1.05\beta^*_{\ub}m)^{1/2}$. 
With all these initial conditions \eqref{eq: E_t exit 3}, \eqref{eq: E_t exit 4}, \eqref{eq: E_t exit 5}, and \eqref{eq: E_t exit 6}, one can then consider all the steps $\munderbar{T}_{k^{\prime}}$, $\bar{T}_{k^{\prime}}> T_{(\vb), \delta, L}$ and apply an inductive argument exactly the same as in the proof of Lemma~\ref{lem: boundedness oscillating one data} to conclude that for all $t\in [T_{(\vb), \delta, L}, T_{(\vb)}^{+,2 }]$ it holds that 
\begin{align}
    0.2 \cdot (\beta^*_{\ub }m)^{1/2}&\leq \max_{r} \dotp{\wb_{y,r}^{(t)}}{y\ub}\le 1.5 \cdot (1.05\beta^*_{\ub } m)^{1/2}, \\
    \min_{r} \dotp{-\wb_{-y,r}^{(t)}}{-y\ub} &> - \MaxInitIPU,\\
    \min_{r} \dotp{-\wb_{-y,r}^{(t)}}{-y\vb} &> - \MaxInitIPV,\\
    0&\le yf(\xb;\Wb^{(t)})\le 3,
\end{align}
and the sign stability is also true throughout $ t\in [T_{(\vb), \delta, L}, T_{(\vb)}^{+,2 }]$.
Therefore, Lemma~\ref{lem: linear increment one data} and \eqref{eq: doesn't decrease 2} implies that for any $ t\in [T_{(\vb), \delta, L}, T_{(\vb)}^{+,2 }]$, it holds that 
\begin{align}
\frac{1}{m}\sum _{r} \sigma\big(\dotp{\wb_{y,v}^{(t)}}{y\vb} \big)\ge&  \frac{1}{m}\sum _{r} \sigma\big(\dotp{\wb_{y,v}^{(T_{(\vb), \delta, L})}}{y\vb}\big)\cdot \exp\left\{- 2\cdot\frac{\norm{\vb}_2^2}{\norm{\ub}_2^2}\cdot \frac{1.5\sqrt{1.05}}{\sqrt{1.05-\delta}}\right\} \ge \frac{1}{2}\delta. 
\end{align}
In conclusion, we have obtained that 
\begin{align}
    \frac{1}{m}\sum_{r\in [m]} \sigma\big(\dotp{\wb_{y,r}}{y\vb}\big)\ge \delta/2,\quad \forall t\in [T_{(\vb), \delta, L}, T_{(\vb)}^{+,2 }].
\end{align}
If $T_{(\vb)}^{+,2 }\neq +\infty$, repeat the above argument and we can finish the proof of Proposition~\ref{prop: doesn't decrease}.
\end{proof}

\subsubsection{Proof of Proposition~\ref{prop: f Tk-1 lower bound}}\label{subsubsec: proof: f Tk-1 lower bound}

\begin{proof}[Proof of Proposition~\ref{prop: f Tk-1 lower bound}]
We continue with the notation $\tilde{\eta} = 2\eta \norm{\ub}_2^2 /m$. Define the function 
\begin{align}
    h_{\tilde{\eta}}(z) \coloneqq  \big(1+ \et (1-z)\big)^2\cdot z.
\end{align}
Note that $\norm{\vb}_2^2\le \norm{\ub}_2^2$ and that 
\begin{align}
    \frac{2\eta \norm{\ub}_2^2}{m} \cdot \big(1-yf(\xb;
\Wb^{(\bar{T}_k-1)})\big)< \frac{2\eta \norm{\ub}_2^2}{m} <1,
\end{align}
from the definition of $\bar{T}_{k}-1$.
Thus we have the following,
\allowdisplaybreaks
\begin{align}
    &yf(\xb,y;\Wb^{(\bar{T}_k )})\\
    &\qquad = \frac{1}{m} \sum_{r\in [m]} \sigma\big(\dotp{\wb_{y,r}^{(\bar{T}_k - 1)}}{y\ub}\big) \cdot \left(1+ \frac{2\eta\norm{\ub}_2^2}{m} \cdot\big( 1- yf( \xb; \Wb^{(\bar{T}_{k}- 1)})\big)\right)^2 \\
    &\qquad\qquad +\frac{1}{m} \sum_{r\in [m]} \sigma\big(\dotp{\wb_{y,r}^{(\bar{T}_k - 1)}}{y\vb}\big) \cdot \left(1+ \frac{2\eta\norm{\vb}_2^2}{m}\cdot \big( 1- yf( \xb; \Wb^{(\bar{T}_{k}- 1)})\big)\right)^2 \\ 
    &\qquad\qquad -\frac{1}{m} \sum_{r\in [m]} \sigma\big(-\dotp{\wb_{-y,r}^{(\bar{T}_k - 1)}}{-y\ub}\big) \cdot \left(1-\frac{2\eta\norm{\ub}_2^2}{m} \cdot\big( 1- yf( \xb; \Wb^{(\bar{T}_{k}- 1)})\big)\right)^2 \\ 
    &\qquad\qquad -\frac{1}{m} \sum_{r\in [m]} \sigma\big(-\dotp{\wb_{-y,r}^{(\bar{T}_k - 1)}}{-y\vb}\big) \cdot \left(1- \frac{2\eta\norm{\vb}_2^2}{m}\cdot \big( 1- yf( \xb; \Wb^{(\bar{T}_{k}- 1)})\big)\right)^2 \\ 
    &\qquad\le  \frac{1}{m} \sum_{r\in [m]} \Big(\sigma\big(\dotp{\wb_{y,r}^{(\bar{T}_k - 1)}}{y\ub}\big)+  \sigma\big(\dotp{\wb_{y,r}^{(\bar{T}_k - 1)}}{y\vb}\big)\Big) \cdot \left(1+ \frac{2\eta\norm{\ub}_2^2}{m} \cdot \big( 1- yf( \xb; \Wb^{(\bar{T}_{k}- 1)})\big)\right)^2 \\
    &\qquad\qquad -\frac{1}{m} \sum_{r\in [m]} \Big(\sigma\big(-\dotp{\wb_{-y,r}^{(\bar{T}_k - 1)}}{-y\ub}\big)   +\sigma\big(-\dotp{\wb_{-y,r}^{(\bar{T}_k - 1)}}{-y\vb}\big)\Big) \\
    &\qquad \qquad \qquad \cdot \left(1-\frac{2\eta\norm{\ub}_2^2}{m} \cdot\big( 1- yf( \xb; \Wb^{(\bar{T}_{k}- 1)})\big)\right)^2 \\
    &\qquad= yf(\xb;\Wb^{(\bar{T}_k - 1)})\cdot \left(1+ \tilde\eta \big( 1- yf( \xb; \Wb^{(\bar{T}_{k}- 1)})\big)\right)^2  \\
    &\qquad\qquad + \frac{1}{m} \sum_{r\in [m]} \Big(\sigma\big(-\dotp{\wb_{-y,r}^{(\bar{T}_k - 1)}}{-y\ub}\big) + \sigma\big(-\dotp{\wb_{-y,r}^{(\bar{T}_k - 1)}}{-y\vb}\big)\Big) \cdot \underbrace{\left(2\et\big( 1-yf( \xb; \Wb^{(\bar{T}_{k}- 1)})\big)\right)^2}_{<2\et <2 } \\
    &\qquad\le  yf(\xb;\Wb^{(\bar{T}_k - 1)})\cdot \left(1+ \tilde\eta \big( 1- yf( \xb; \Wb^{(\bar{T}_{k}- 1)})\big)\right)^2 + 2E_{\bar{T}_k-1}.    \label{eq: f Tk expand -} 
\end{align}
Thus by \eqref{eq: f Tk expand -} we have proved the second conclusion of Proposition~\ref{prop: f Tk-1 lower bound}. 
In the following, we prove the first conclusion of Proposition~\ref{prop: f Tk-1 lower bound}. 

By Assumption~\ref{ass:oscilation_one_data}, we know that $yf(\xb;\Wb^{(\bar{T}_k)})>1+\delta$. 
The definition of $T_{(\vb)}$ along with with Inequality~\eqref{eq: f Tk expand -} imply that
\begin{align}
    h_{\tilde{\eta}}\big(f(\xb;\Wb^{(\bar{T}_k-1)})\big) =yf(\xb;\Wb^{(\bar{T}_k - 1)})\cdot \left(1+ \tilde\eta \big( 1- yf( \xb; \Wb^{(\bar{T}_{k}- 1)})\big)\right)^2\ge 1+\delta-\delta= 1 . \label{eq: h(f Tk-1) lower bound}
\end{align}
Now it suffices to consider the equation $h_{\tilde\eta}-1 =0$, and one can easily verify that it has three roots
\begin{align}
    z_1 &= 1,\\
    z_2 &=\frac{\et+2- \sqrt{\et^2+4\et}}{2\et},\\
    z_3 &=\frac{\et+2 + \sqrt{\et^2+4\et}}{2\et} > \frac{2\et+2}{2\et}>1.
\end{align}
And the second root $z_2<1 $ if and only if $\et >1/2 $. 

Now if $0<\et \le 1/2$, then $z_1 \le z_2<z_3$, and then $h(z)<1$ for $z< z_1 = 1$. Therefore Inequality~\eqref{eq: h(f Tk-1) lower bound} implies that $yf(\xb;\Wb^{(\bar{T}_k - 1)})\ge z_1 = 1$, and recursively implies that $yf(\xb;\Wb^{(0)})\ge 1$, which contradicts with the fact that $yf(\xb;\Wb^{(0)})\le 2\MaxInitIPU$. 
So in order for $\bar{T}_1 <+\infty$, we must have $\et > 1/2$. In conclusion, one necessary condition for the stable oscillation Assumption~\ref{ass:oscilation_one_data} is that $\et>1/2$, and therefore
\begin{align}
    yf(\xb;\Wb^{(\bar{T}-1)}) \ge z_2 =  \frac{\et +2 - \sqrt{\et^2+4\et}}{2\et}.
\end{align}
This finishes the proof of Proposition~\ref{prop: f Tk-1 lower bound}.
\end{proof}

\subsection{Discussion: Necessary Condition for \texorpdfstring{$\delta$}--Oscillation}\label{subsec discuss}

Inequality~\eqref{eq: f Tk upper bound} provides an upper bound involving $\et$, which should be 
compatible with Assumption~\ref{ass:oscilation_one_data}, hence
\begin{align}
    \big(\et/2 + \sqrt{\et^2/4+\et}\big)^2 >1+\delta \quad \Leftrightarrow \quad \et> (1+\delta^{-1}) \big(\sqrt{1+\delta} -1\big). 
\end{align}
One can verify with software that RHS is monotonically increasing in $\delta \in [0,1]$ with minimal value $0.5$, when $\et = 0$, which is in line with the weakest oscillation condition discovered in the last part. And the maximal value taken at $\delta = 1$ is less than $0.83$. 

On the other hand, Inequality~\eqref{eq: f Tk-1 lower bound} should also be compatible with the Assumption~\ref{ass: oscillation}, which indicates 
\begin{align}
 1-\delta > yf(\xb; \Wb^{(\bar{T}_{k-1}})) > \frac{2+\et - \sqrt{\et^2 + 4\et }}{2\et}. 
\end{align}
The readers can see that it is equivalent to $\et > \delta^{-1} ((1-\delta)^{-1/2}-1)$. Furthermore, we have that $\delta^{-1} ((1-\delta)^{-1/2}-1) > (1+\delta^{-1}) (\sqrt{1+\delta} -1)$ thus it is a stronger requirement on $\eta$.

\section{Single Training Data Case: Small Learning Rate Regime}\label{sec: one data small lr}

This section focuses on training our model with single noiseless data point $(\mathbf{x},y)$, where $\mathbf{x}=(y\mathbf{u},y\mathbf{v})$ contains two signal patches with $\mathbf{u}$ much stronger than $\mathbf{v}$. Therefore, the whole objective can be rearranged by
\begin{align}
    L(\mathbf{W}) = \frac{1}{2}\big(f(\mathbf{x};\mathbf{W}) - y\big)^2=\frac{1}{2}\left(\sum_{j\in\{\pm 1\}} \frac{j}{m} \sum_{r\in[m]} \sigma(\langle\mathbf{w}_{j,r},y\mathbf{u}\rangle)+\sigma(\langle\mathbf{w}_{j,r},y\mathbf{v}\rangle)-y\right)^2.
\end{align}
In this simplified setting, each weight vector is updated by
\begin{align}
    \mathbf{w}_{j,r}^{(t+1)} = \mathbf{w}_{j,r}^{(t)} - \frac{jy\eta}{m}\cdot \big(f(\mathbf{x};\mathbf{W}^{(t)}) - y\big)\cdot\left( \sigma^{\prime}(\langle\mathbf{w}_{j,r}^{(t)},y\mathbf{u}\rangle)\mathbf{u}+\sigma^{\prime}(\langle\mathbf{w}_{j,r}^{(t)},y\mathbf{v}\rangle)\mathbf{v}\right).
\end{align}
Then we can directly obtain the following updating rules of the inner products,
\begin{align}
\langle\mathbf{w}_{j,r}^{(t+1)},\mathbf{u}\rangle=\langle\mathbf{w}_{j,r}^{(t)},\mathbf{u}\rangle-\frac{jy\eta}{m}\cdot \big(f(\mathbf{x};\mathbf{W}^{(t)}) - y\big)\cdot\sigma^{\prime}(\langle\mathbf{w}_{j,r}^{(t)},y\mathbf{u}\rangle)\cdot\|\mathbf{u}\|_2^2,\\
    \langle\mathbf{w}_{j,r}^{(t+1)},\mathbf{v}\rangle=\langle\mathbf{w}_{j,r}^{(t)},\mathbf{u}\rangle-\frac{jy\eta}{m}\cdot \big(f(\mathbf{x};\mathbf{W}^{(t)}) - y\big)\cdot\sigma^{\prime}(\langle\mathbf{w}_{j,r}^{(t)},y\mathbf{v}\rangle)\cdot\|\mathbf{v}\|_2^2,
\end{align}
since $\mathbf{u}$ and $\mathbf{v}$ are assumed to be orthogonal in our data generation model.
In this section, we also denote the fitting residual at iteration $t$ as $\ell^{(t)}=f(\mathbf{x};\mathbf{W}^{(t)}) - y$ for convenience.

To better prepare for the analysis of this section, we single out the following concentration results from Appendix~\ref{sec: preliminary lemmas} which provides a high-probability bound on the initialization $\mathbf{W}^{(0)}$.
\begin{lemma}[Initialization]
    Suppose that $d= \Omega(\log(m/p))$ and $m=\Omega(\log(1/p))$. Then with probability at least $1-p$, there holds
    \begin{align}
        \sigma_0\|\mathbf{u}\|/2\le& \max_{r\in[m]}\langle\mathbf{w}^{(t)}_{j,r},j\mathbf{u}\rangle\le \sqrt{2\log(16m/p)} \sigma_0\|\mathbf{u}\|,\\
        \sigma_0\|\mathbf{v}\|/2\le& \max_{r\in[m]}\langle\mathbf{w}^{(t)}_{j,r},j\mathbf{v}\rangle\le \sqrt{2\log(16m/p)} \sigma_0\|\mathbf{v}\|,
    \end{align}
    for all $j\in\{\pm 1\}$.
\end{lemma}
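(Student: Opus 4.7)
The plan is to track the four families of inner products $\{\dotp{\wb_{j,r}^{(t)}}{j\ub}\}$ and $\{\dotp{\wb_{j,r}^{(t)}}{j\vb}\}$ separately, exploiting the orthogonality $\ub\perp\vb$ and the fact that $\sigma'(z)=2\max\{z,0\}$ freezes every neuron whose relevant inner product is non-positive at initialization. Setting $r^{(t)}:=1-yf(\xb;\Wb^{(t)})$, $\et_{\ub}:=2\eta\norm{\ub}_2^2/m\le 1/3$, and $\et_{\vb}:=2\eta\norm{\vb}_2^2/m$, the non-frozen neurons obey
\begin{align*}
\dotp{\wb_{y,r}^{(t+1)}}{y\ub}&=\dotp{\wb_{y,r}^{(t)}}{y\ub}\cdot(1+\et_{\ub}r^{(t)}),\quad r\in\cU_{y,+}^{(0)},\\
\dotp{\wb_{-y,r}^{(t+1)}}{-y\ub}&=\dotp{\wb_{-y,r}^{(t)}}{-y\ub}\cdot(1-\et_{\ub}r^{(t)}),\quad r\in\cU_{-y,-}^{(0)},
\end{align*}
and analogous recursions for $\vb$. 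Three of the four signed $\vb$-branches (two frozen and one contracting in magnitude) are therefore automatically bounded by the $\widetilde{\cO}(\sigma_0\norm{\vb}_2)$ initialization bound from Lemma~\ref{lem: initialization}, so it suffices to control the growing branch $r\in\cV_{y,+}^{(0)}$.

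The next step is an induction on $t$ proving smooth convergence: $r^{(t)}\in[0,1]$, $r^{(t+1)}\le r^{(t)}$, and all the signs are preserved. Plugging the recursions into $yf(\xb;\Wb^{(t+1)})$ yields
\begin{align*}
r^{(t+1)}=r^{(t)}\bigl(1-2\et_{\ub}A^{(t)}-r^{(t)}B^{(t)}\bigr),
\end{align*}
where $A^{(t)}=U_+^{(t)}+U_-^{(t)}+\alpha(V_+^{(t)}+V_-^{(t)})$ with $\alpha:=\norm{\vb}_2^2/\norm{\ub}_2^2$, $U_\pm^{(t)},V_\pm^{(t)}$ are the partial $\sigma$-sums over the four signed neuron classes, and $B^{(t)}\le 2\et_{\ub}^2$. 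Under the inductive $yf(\xb;\Wb^{(t)})\le 1$ combined with the automatic $U_-^{(t)},V_-^{(t)}=o(1)$ from the preceding paragraph, one gets $A^{(t)}\le yf(\xb;\Wb^{(t)})+2(U_-^{(t)}+V_-^{(t)})\le 1+o(1)$. The threshold $\et_{\ub}\le 1/3$ then yields $2\et_{\ub}A^{(t)}+r^{(t)}B^{(t)}\le 2/3+2/9+o(1)<1$, closing the induction and proving the training loss converges smoothly.

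Given smooth convergence, I would bound the cumulative residual via the strong-signal recursion at $r^\star:=\argmax_r\dotp{\wb_{y,r}^{(0)}}{y\ub}$. Taking logs and using $\log(1+x)\ge 3x/4$ on $[0,1/3]$,
\begin{align*}
\et_{\ub}\sum_{s=0}^{t-1}r^{(s)}\le\frac{4}{3}\log\frac{\dotp{\wb_{y,r^\star}^{(t)}}{y\ub}}{\dotp{\wb_{y,r^\star}^{(0)}}{y\ub}}\le\widetilde{\cO}(1),
\end{align*}
where the numerator is $\widetilde{\cO}(1)$ by a Lemma~\ref{lem: similar behavior}-style bound from $yf(\xb;\Wb^{(t)})\le 1$ and the denominator is $\Omega(\sigma_0\norm{\ub}_2)$ by Lemma~\ref{lem: initialization}. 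Iterating the $\vb$-recursion and exponentiating gives, for every $r\in\cV_{y,+}^{(0)}$,
\begin{align*}
\dotp{\wb_{y,r}^{(t)}}{y\vb}\le\dotp{\wb_{y,r}^{(0)}}{y\vb}\cdot\exp\Bigl(\et_{\vb}\sum_{s=0}^{t-1}r^{(s)}\Bigr)\le\widetilde{\cO}(\sigma_0\norm{\vb}_2)\cdot\exp\bigl(\widetilde{\cO}(\alpha)\bigr)=\widetilde{\cO}(\sigma_0\norm{\vb}_2),
\end{align*}
where $\alpha\le(0.01)^2$ keeps the exponent within $\widetilde{\cO}(1)$. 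Combined with the free bounds on the other three signed branches, this yields the claimed max-norm bound.

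The main obstacle will be the induction in the second paragraph: without the oscillation hypothesis used in Lemma~\ref{lem: boundedness oscillating one data}, one must prove sign stability and overshoot-avoidance simultaneously, tracking all four partial energies $U_\pm^{(t)},V_\pm^{(t)}$ through a single quadratic-in-$r^{(t)}$ update. The chosen threshold $\eta\le m/(6\norm{\ub}_2^2)$ supplies precisely the slack needed to absorb the quadratic term $r^{(t)}B^{(t)}$ during the transient where $A^{(t)}$ has not yet saturated at $\approx 1$; a weaker threshold like $\eta\le m/(2\norm{\ub}_2^2)$ would only suffice asymptotically once $A^{(t)}$ has grown to order one.
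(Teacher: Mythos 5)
You have proven the wrong statement. The lemma in question is titled \emph{Initialization} and, as the sentence immediately preceding it makes explicit (``we single out the following concentration results from Appendix~\ref{sec: preliminary lemmas} which provides a high-probability bound on the initialization $\mathbf{W}^{(0)}$''), it is merely a restatement of Lemma~\ref{lem: initialization}; the superscript $(t)$ in the display is a typo for $(0)$. The correct proof is a two-line Gaussian max-concentration argument and has nothing to do with the training dynamics: $\langle\wb_{j,r}^{(0)},j\ub\rangle$ are $2m$ i.i.d.\ $N(0,\sigma_0^2\|\ub\|_2^2)$ variables, so a union bound with the Gaussian tail $\PP(Z>t)\le e^{-t^2/2}$ gives $\max_{j,r}\langle\wb_{j,r}^{(0)},j\ub\rangle\le\sqrt{2\log(16m/p)}\,\sigma_0\|\ub\|_2$ with probability at least $1-p/8$, and since each neuron independently exceeds $\sigma_0\|\ub\|_2/2$ with some absolute constant probability $c>0$, the lower bound fails with probability at most $(1-c)^m\le p/8$ once $m=\Omega(\log(1/p))$; the $\vb$ case is identical. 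The paper itself proves nothing here and defers to Lemma~B.3 of \cite{cao2022benign}.

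What you have written instead is a proof sketch of Proposition~\ref{prop:one-data small-lr} (formalized as Theorem~\ref{thm:one-data small-lr formal}), the small-learning-rate \emph{dynamics} statement $\max_{j,r}|\langle\wb_{j,r}^{(t)},j\vb\rangle|\le\widetilde{\cO}(\sigma_0\|\vb\|_2)$: the induction on smooth convergence, the quadratic residual recursion, the logarithmic bound on the cumulative residual, and the exponentiation all belong to that result and never touch the concentration bounds asked for here. In fact, taking the lemma's inequalities literally for all $t$ is inconsistent with your own argument, which correctly shows $\max_r\langle\wb_{y,r}^{(t)},y\ub\rangle$ grows to $\widetilde{\cO}(1)\gg\sqrt{2\log(16m/p)}\,\sigma_0\|\ub\|_2$; that contradiction should itself have signaled that the superscript is a typo for $0$ and that this is an initialization lemma, not a dynamics lemma.
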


The result of this section relies on the following conditions on the data model and the initialization.

\begin{assumption}[Conditions on hyperparameters]\label{cond:model_params_one_data small-lr}
    Suppose that the following holds:
    \begin{enumerate}[leftmargin = 0.3in]
        \item The weight initialization scale $\sigma_0 = \widetilde{\Theta}(\|\mathbf{u}\|^{-1}_2)$;
        \item The signal strength $\norm{\ub}_2> \widetilde{\Omega}(m^2)\cdot \norm{\vb}_2$;
        \item The dimension $d$ satisfies $d = \Omega(\mathrm{polylog}(m))$.
    \end{enumerate}
\end{assumption}

\begin{theorem}[Restatement of Proposition~\ref{prop:one-data small-lr}]\label{thm:one-data small-lr formal}
    Under Assumption~\ref{cond:model_params_one_data small-lr}, choosing the learning rate $\eta\le m/6\|\ub\|_2^2$ small enough and $\epsilon=0.01$, then with probability at least $1 - p = 1-1/\mathrm{poly}(d)$, there exist
    \begin{equation*}
        T^\dagger=\frac{m}{\eta(1-\tau)\|\mathbf{u}\|^2_2}\log\left(\frac{2\iota}{\sigma_0\|\mathbf{u}\|_2}\right),\quad T=T^\dagger+\left\lfloor\frac{Cm^3}{2\eta\epsilon\norm{\ub}^2}\right\rfloor,
    \end{equation*}
    with $\tau$, $\iota$ defined in \eqref{eq: tau}, \eqref{eq: iota},
    such that: (i) the average loss over iterations $[T^\dagger,T]$ decreased to $2\epsilon$, i.e. 
    \begin{align}
        \frac{1}{T-T^\dagger+1}\sum_{s=T^\dagger}^{T}L(\mathbf{W}^{(s)})\le 2\epsilon,
    \end{align}
    (ii) the model does not learn weak signal $\vb$ well enough, compared to initialization, i.e. 
    \begin{align}
        \max_{j\in\{\pm 1\},r\in[m]} \left|\langle\mathbf{w}_{j,r}^{(t)},\mathbf{v}\rangle\right|\le 2\sqrt{2\log(16m/p)}\cdot \sigma_0\|\mathbf{v}\|_2.
    \end{align}
\end{theorem}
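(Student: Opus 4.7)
The plan is to exploit the orthogonality of $\mathbf{u}$ and $\mathbf{v}$ to decouple the dynamics, establish monotone smooth convergence of $yf(\mathbf{x};\mathbf{W}^{(t)})$ to $1$ driven exclusively by the strong signal $\mathbf{u}$, and then use $\|\mathbf{v}\|_2 \ll \|\mathbf{u}\|_2$ to conclude that the weak signal coefficients $\beta_{j,r}^{(t)} := \langle\mathbf{w}_{j,r}^{(t)}, j\mathbf{v}\rangle$ grow by at most a constant factor. The key scalars to track are $\alpha_{j,r}^{(t)} := \langle\mathbf{w}_{j,r}^{(t)}, j\mathbf{u}\rangle$ and $\beta_{j,r}^{(t)}$, which evolve multiplicatively: for $\alpha_{j,r}^{(t)} > 0$ one has $\alpha_{j,r}^{(t+1)} = \alpha_{j,r}^{(t)}(1 + \tilde\eta_u(1 - yf^{(t)}))$ with $\tilde\eta_u := 2\eta\|\mathbf{u}\|_2^2/m \le 1/3$, and analogously for $\beta$ with $\tilde\eta_v := 2\eta\|\mathbf{v}\|_2^2/m$.

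First I would establish sign stability by induction, showing that for all $t \le T$ the sets $\mathcal{U}_{j,\pm}^{(t)}$, $\mathcal{V}_{j,\pm}^{(t)}$ coincide with their initial values and $0 \le yf^{(t)} \le 1$ throughout. The base case uses Lemma~\ref{lem: initialization} to get $|yf^{(0)}| = \widetilde{O}(\sigma_0^2\|\mathbf{u}\|_2^2) \ll 1$. For the inductive step, observe that all positive coefficients $\alpha_{y,r}^{(t)}$ with $r \in \mathcal{U}_{y,+}^{(0)}$ grow by a common factor per step, while for $r \in \mathcal{U}_{-y,-}^{(0)}$ the coefficient $\langle\mathbf{w}_{-y,r}^{(t)}, y\mathbf{u}\rangle$ contracts multiplicatively toward $0$ by a factor in $[2/3,1)$, and the remaining neurons are frozen because $\sigma' = 0$ there. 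This collapses $yf^{(t)}$ to an effective scalar iteration $z^{(t+1)} = z^{(t)}(1 + \tilde\eta_u(1 - z^{(t)}))^2 + \widetilde{O}(\sigma_0^2\|\mathbf{u}\|_2^2)$ whose fixed point at $z = 1$ has linearized contraction factor $|1 - 4\eta\|\mathbf{u}\|_2^2/m| \le 1/3$, giving monotone convergence without overshoot.

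Next I would split the analysis into an exponential-growth and a linear-contraction phase. In Phase 1, $t \le T^\dagger$, each step multiplies $\max_r \alpha_{y,r}^{(t)}$ by at least $1 + (1-\tau)\tilde\eta_u/2$, so the definition of $T^\dagger$ ensures $yf^{(T^\dagger)}$ is within $O(1)$ of $1$; a geometric sum then gives $\sum_{s \le T^\dagger}(1 - yf^{(s)}) = \widetilde{O}(m/(\eta\|\mathbf{u}\|_2^2))$. In Phase 2, $T^\dagger < t \le T$, the contraction $\delta^{(t+1)} \le \delta^{(t)}/3 + O((\delta^{(t)})^2)$ for $\delta^{(t)} := 1 - yf^{(t)}$ implies $\sum_{s > T^\dagger}(\delta^{(s)})^2 = O(1)$, so averaging over the $T - T^\dagger = \lfloor Cm^3/(2\eta\epsilon\|\mathbf{u}\|_2^2)\rfloor$ window drives the mean loss below $2\epsilon$, proving (i). Applying Cauchy--Schwarz in this phase gives $\sum_{s=T^\dagger}^T(1 - yf^{(s)}) \le \sqrt{2(T-T^\dagger)\sum_s L^{(s)}} = O(\sqrt\epsilon(T-T^\dagger)) = O(m^3/(\eta\sqrt\epsilon\|\mathbf{u}\|_2^2))$. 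Taking logarithms in the multiplicative update for $\beta$ and combining both phases yields $|\log(\beta_{j,r}^{(T)}/\beta_{j,r}^{(0)})| \le \tilde\eta_v \sum_{s=0}^T (1 - yf^{(s)}) = O(m^2\|\mathbf{v}\|_2^2/(\sqrt\epsilon\|\mathbf{u}\|_2^2))$, which is $\le \log 2$ under $\|\mathbf{u}\|_2 \ge \widetilde{\Omega}(m^2)\|\mathbf{v}\|_2$, giving claim (ii).

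The hardest part will be the Phase 2 contraction estimate. Turning the linearization $\delta^{(t+1)} \approx \delta^{(t)}/3$ into a uniform bound requires showing that $\delta^{(t)}$ actually enters the basin of linear contraction at $T^\dagger$, rather than merely oscillating at the perturbation scale $\widetilde{O}(\sigma_0^2\|\mathbf{u}\|_2^2)$ contributed by the decaying $\mathcal{U}_{-y,-}^{(0)}$ coefficients and the slowly-moving $\beta$-coordinates. A secondary subtlety is that monotonicity of $yf^{(t)}$ could in principle fail if this perturbation ever exceeds $\delta^{(t)}$, so one must verify via the sign-stability invariants from the first step that the perturbation budget stays $\widetilde{O}(\sigma_0^2\|\mathbf{u}\|_2^2) \ll \epsilon$ throughout. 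Once the scalar recursion is rigorously controlled, the average-loss bound and the weak-signal estimate follow by direct plug-in.
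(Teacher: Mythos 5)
Your plan for Stage 1 (exponential growth of $\langle\wb_{y,r},y\ub\rangle$ while the residual is bounded away from $0$, with the weak signal growing at most exponentially with a much smaller rate) is essentially the paper's argument. For Stage 2 you take a genuinely different route: a one-dimensional contraction analysis of $yf^{(t)}$ around its fixed point, whereas the paper introduces a \emph{reference point} $\Wb^\ast$ parallel to $\ub$ and applies a descent-lemma telescope (Lemma~\ref{lemma:inner-prod grad reference pt}, Lemma~\ref{lemma:one-data small-lr descent lemma}) to bound the average loss directly, without ever characterizing the per-step convergence rate of $yf^{(t)}$. The reference-point argument buys robustness: it only needs $\max_r\langle\wb_{j,r}^{(t)},j\ub\rangle\geq\iota/2$ to persist and automatically yields average loss $\leq \epsilon + \epsilon^2$ after a window of length $\|\Wb^{(T^\dagger)}-\Wb^\ast\|_F^2/(2\eta\epsilon)$, with no need to control whether $yf^{(t)}$ converges monotonically or at what rate.

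Your contraction route has some concrete quantitative problems. First, linearizing $h(z)=z(1+\tilde\eta_u(1-z))^2$ at $z=1$ gives $h'(1)=1-2\tilde\eta_u=1-4\eta\|\ub\|_2^2/m$, so for the hypothesis ``$\eta\leq m/(6\|\ub\|_2^2)$ small enough'' the contraction factor is $1-2\tilde\eta_u\to 1$, not $\leq 1/3$; the bound $\leq 1/3$ holds only at the boundary $\eta=m/(6\|\ub\|_2^2)$. As a consequence, $\sum_{s>T^\dagger}(\delta^{(s)})^2$ scales like $1/\tilde\eta_u = \Theta(m/(\eta\|\ub\|_2^2))$, not $O(1)$; the window $T-T^\dagger\propto 1/\eta$ does compensate, so the average-loss conclusion survives, but the intermediate claim as stated is wrong. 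Second, and more substantively, the endpoint of Phase 1 is the step $T^\dagger$ at which $\max_r\langle\wb_{y,r},y\ub\rangle\geq\iota$; this does \emph{not} give $yf^{(T^\dagger)}$ close to $1$ (it could be as small as $\iota^2/m$ up to polylog factors), so $\delta^{(T^\dagger)}=1-yf^{(T^\dagger)}$ is not yet in the basin of linear contraction and your recursion $\delta^{(t+1)}\leq\delta^{(t)}/3+O((\delta^{(t)})^2)$ does not kick in at $T^\dagger$. There is an intermediate regime between $T^\dagger$ and the time $yf^{(t)}$ reaches $\approx 1$ during which $\delta^{(t)}$ is still $\Theta(1)$; you would need to count those steps separately. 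The paper's reference-point descent lemma is precisely designed to avoid this bookkeeping. Finally, your Cauchy--Schwarz step for controlling $\sum_{s}(1-yf^{(s)})$ in order to bound $\beta_{j,r}$ is a nice refinement, but note the paper gets away with the crude bound $|\ell^{(t)}|\leq 3$ and the purely exponential estimate $\Psi^{(t)}\leq e^{6\eta\|\vb\|_2^2t/m}\Psi^{(0)}$ together with $T\leq T^{\ddagger}=\frac{m\log 2}{6\eta\|\vb\|_2^2}$; that is simpler and sufficient under Assumption~\ref{cond:model_params_one_data small-lr}.
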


In the small learning rate regime, the dynamics go through two stages, in which the strong signal $\mathbf{u}$ will be firstly learned exponentially fast, and subsequently fully fit the given data point $(\mathbf{x},y)$ therefore stabilizing the training process.
The following lemma plays an important role in the exponentially increasing stage, for which we single it out here.
\begin{lemma}[Derivative lower bound]\label{lemma:one-data, lower bound fitting residual}
    For any $0<\tau<1$ to be tuned later, suppose at some time $t$ there holds
    \begin{align}
        \max_{r\in[m], j\in\{\pm 1\}}\left\{\left|\langle\mathbf{w}_{j,r}^{(t)},\mathbf{u}\rangle\right|, \left|\langle\mathbf{w}_{j,r}^{(t)},\mathbf{v}\rangle\right|\right\}\le\sqrt{\frac{\tau}{2}},
    \end{align}
    then we can lower bound the fitting residual by $-y\ell^{(t)}\ge 1-\tau$.
\end{lemma}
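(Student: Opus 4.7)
The plan is to directly upper bound $yf(\mathbf{x};\mathbf{W}^{(t)})$ by exploiting the quadratic nature of the activation $\sigma(z)=(\max\{z,0\})^2$ together with the assumed pointwise bound on all relevant inner products. Since $-y\ell^{(t)} = -y\bigl(f(\mathbf{x};\mathbf{W}^{(t)})-y\bigr) = 1 - y f(\mathbf{x};\mathbf{W}^{(t)})$, it suffices to prove that $y f(\mathbf{x};\mathbf{W}^{(t)}) \le \tau$ under the hypothesis of the lemma.

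First, I would expand $yf(\mathbf{x};\mathbf{W}^{(t)})$ using the simplified single-data CNN representation \eqref{eq: single data cnn}, yielding
\begin{align}
y f(\mathbf{x};\mathbf{W}^{(t)}) = \frac{1}{m}\sum_{r\in[m]}\Bigl[\sigma(\langle \mathbf{w}_{y,r}^{(t)}, y\mathbf{u}\rangle) + \sigma(\langle \mathbf{w}_{y,r}^{(t)}, y\mathbf{v}\rangle) - \sigma(\langle \mathbf{w}_{-y,r}^{(t)}, y\mathbf{u}\rangle) - \sigma(\langle \mathbf{w}_{-y,r}^{(t)}, y\mathbf{v}\rangle)\Bigr].
\end{align}
Because $\sigma(\cdot)\ge 0$, discarding the two subtracted (non-negative) terms produces the upper bound
\begin{align}
y f(\mathbf{x};\mathbf{W}^{(t)}) \le \frac{1}{m}\sum_{r\in[m]}\Bigl[\sigma(\langle \mathbf{w}_{y,r}^{(t)}, y\mathbf{u}\rangle) + \sigma(\langle \mathbf{w}_{y,r}^{(t)}, y\mathbf{v}\rangle)\Bigr].
\end{align}

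Next, I would use the elementary inequality $\sigma(z)=(\max\{z,0\})^2 \le z^2$, combined with the fact that $|\langle \mathbf{w}_{y,r}^{(t)}, y\mathbf{u}\rangle| = |\langle \mathbf{w}_{y,r}^{(t)}, \mathbf{u}\rangle|$ (and likewise for $\mathbf{v}$), so that the hypothesis $\max_{r,j}\{|\langle\mathbf{w}_{j,r}^{(t)},\mathbf{u}\rangle|, |\langle\mathbf{w}_{j,r}^{(t)},\mathbf{v}\rangle|\}\le \sqrt{\tau/2}$ directly yields
\begin{align}
\sigma(\langle \mathbf{w}_{y,r}^{(t)}, y\mathbf{u}\rangle) \le \tau/2, \qquad \sigma(\langle \mathbf{w}_{y,r}^{(t)}, y\mathbf{v}\rangle) \le \tau/2
\end{align}
for every $r\in[m]$. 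Summing and averaging gives $y f(\mathbf{x};\mathbf{W}^{(t)}) \le \tau$, and hence $-y\ell^{(t)} = 1 - y f(\mathbf{x};\mathbf{W}^{(t)}) \ge 1 - \tau$, completing the proof.

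The argument is essentially a one-shot calculation: there is no real obstacle since both the ReLU$^2$ non-negativity (which lets us drop the $-y$ terms) and the quadratic domination $\sigma(z)\le z^2$ are immediate, and the inner-product bound is given as a hypothesis. The only conceptual point worth noting is that this lemma is tight only up to dropping the negative-label contributions; any refinement exploiting the subtracted terms would require additional information about the signs of $\langle\mathbf{w}_{-y,r}^{(t)},\cdot\rangle$, which is not available at this generality.
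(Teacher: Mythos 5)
Your proof is correct and is essentially identical to the paper's: the paper writes $-y\ell^{(t)} = 1 - F_y(\mathbf{x};\mathbf{W}^{(t)}) + F_{-y}(\mathbf{x};\mathbf{W}^{(t)}) \ge 1 - F_y(\mathbf{x};\mathbf{W}^{(t)})$ and then bounds $F_y \le \tau$ via $\sigma(z)\le z^2$ and the hypothesis, which is exactly your step of dropping the non-negative $-y$-contributions and averaging the two $\tau/2$ bounds.
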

\begin{proof}[Proof of Lemma~\ref{lemma:one-data, lower bound fitting residual}]
    Plug into the CNN model definition \eqref{eq: single data cnn}, we have that
    \begin{align}
    -y\ell^{(t)}=1-F_y(\mathbf{x};\mathbf{W}^{(t)})+F_{-y}(\mathbf{x};\mathbf{W}^{(t)})\ge 1-F_y(\mathbf{x};\mathbf{W}^{(t)}).
    \end{align}
    We can upper bound $F_y(\mathbf{x};\mathbf{W}^{(t)})$ further by
    \begin{align}
    F_y(\mathbf{x};\mathbf{W}^{(t)})&=\frac{1}{m}\sum_{r\in[m]}\sigma(\langle\mathbf{w}_{y,r}^{(t)},y\mathbf{u}\rangle)+\sigma(\langle\mathbf{w}_{y,r}^{(t)},y\mathbf{v}\rangle)\\
    &\le\max_{r\in[m]}\left\{\langle\mathbf{w}_{y,r}^{(t)},y\mathbf{u}\rangle^2+\langle\mathbf{w}_{y,r}^{(t)},y\mathbf{v}\rangle^2\right\}\\
    &\le \tau.
    \end{align}
    Then it follows that $-y\ell^{(t)}\ge 1-\tau$.
\end{proof}

\subsection{Stage 1. Exponential Growth}
We will mainly track the maximal inner product between $\mathbf{w}$ and the signal vectors $\mathbf{v}$ and $\mathbf{u}$, i.e., 
\begin{align}
    \Psi^{(t)}=\max_{j,r} \left|\langle\mathbf{w}_{j,r}^{(t)},\mathbf{v}\rangle\right|,\quad \Phi^{(t)}=\max_{j,r} \left|\langle\mathbf{w}_{j,r}^{(t)},\mathbf{u}\rangle\right|.
\end{align}
In the following, we would take
\begin{align}
    \tau&=\max\left\{2\sigma_0\|\mathbf{u}\|_2\left(2\log(16m/p)\right)^{1/2-\|\mathbf{v}\|_2^2/4\|\mathbf{u}\|_2^2},1-\frac{6\sqrt{2}\|\mathbf{v}\|_2^2}{\|\mathbf{u}\|^2_2\log(2/\sqrt{2\log(16m/p)})}\right\},\label{eq: tau}\\
    \iota&=\sigma_0\|\mathbf{u}\|_2\cdot\exp\left\{\frac{1-\tau}{6}\log\left(\frac{\sqrt{\tau/2}}{\sigma_0\|\mathbf{u}\|_2\sqrt{2\log(16m/p)}}\right)\right\}.\label{eq: iota}
\end{align}
By the conditions in Assumption~\ref{cond:model_params_one_data small-lr} on $\|\mathbf{v}\|_2^2/\|\mathbf{u}\|_2^2$, we find $\tau,\iota$ both constants in $(0,1)$.

\begin{lemma}[First stage: one training data case]\label{lemma:one-data small-lr 1st-stage}
Under the same conditions as Theorem~\ref{thm:one-data small-lr formal}, there exists time
\begin{align}
    T^\dagger=\frac{m}{\eta(1-\tau)\|\mathbf{u}\|^2_2}\log\left(\frac{2\iota}{\sigma_0\|\mathbf{u}\|_2}\right),
\end{align}
such that: 
(i) the model learns the strong signal to a constant level, i.e., 
\begin{align}
    \max_{r\in[m]} \langle\mathbf{w}_{y,r}^{(T^\dagger)},y\mathbf{u}\rangle\ge\iota,
\end{align} 
(ii) compared to the random initialization, the model does not learn weak signal that much, i.e., 
\begin{align}
    \max_{j\in\{\pm 1\},r\in[m]} \left|\langle\mathbf{w}_{j,r}^{(T^\dagger)},\mathbf{v}\rangle\right|\le2\sqrt{2\log(16m/p)}\cdot\sigma_0\|\mathbf{v}\|_2.
\end{align}
\end{lemma}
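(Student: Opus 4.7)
The plan is to track an exponential growth of $\langle\wb_{y,r},y\ub\rangle$ under a bootstrap invariant that keeps the derivative-lower-bound lemma firing, while exploiting the ratio $\|\vb\|_2^2/\|\ub\|_2^2 \ll 1$ to show that every $\langle\wb_{j,r},\vb\rangle$ grows by only a constant factor over the same $T^\dagger$ steps. First, I would unfold the SGD step in the $y\ub$ and $y\vb$ directions: the $\mathrm{ReLU}^2$ derivative yields, whenever $\langle\wb_{y,r}^{(t)},y\ub\rangle>0$,
\begin{align}
\langle\wb_{y,r}^{(t+1)},y\ub\rangle = \langle\wb_{y,r}^{(t)},y\ub\rangle\cdot \Big(1 + \frac{2\eta(1-yf^{(t)})\|\ub\|_2^2}{m}\Big),
\end{align}
and the analogous formula with $\|\ub\|_2^2$ replaced by $\|\vb\|_2^2$ for the $\vb$-inner product. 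Indices with non-positive initial inner products are frozen because $\sigma'=0$; inner products on the $j=-y$ branch either stay frozen or shrink in magnitude without changing sign under $\eta\le m/(6\|\ub\|_2^2)$. In particular the sign pattern is preserved throughout Stage~1, and $r_u^*:=\argmax_r\langle\wb_{y,r}^{(0)},y\ub\rangle$ remains the argmax thereafter, a single-data analogue of Lemma~\ref{lem: dominate one data}.

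Next I would close a bootstrap on the invariant $\max_{j,r}\{|\langle\wb_{j,r}^{(t)},\ub\rangle|,\,|\langle\wb_{j,r}^{(t)},\vb\rangle|\}\le\sqrt{\tau/2}$ for $t\le T^\dagger$. Under this invariant, Lemma~\ref{lemma:one-data, lower bound fitting residual} gives $-y\ell^{(t)}\ge 1-\tau$, and iterating the multiplicative recursion with $\log(1+x)\ge x/(1+x)\ge 3x/4$ (valid since $2\eta\|\ub\|_2^2/m\le 1/3$) produces a lower bound on $\langle\wb_{y,r_u^*}^{(T^\dagger)},y\ub\rangle$ that exceeds $\iota$ at the claimed $T^\dagger$. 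The explicit form of $\iota$ in \eqref{eq: iota}, whose exponent $(1-\tau)/6<1$ applied to $\sqrt{\tau/2}/(\sigma_0\|\ub\|_2\sqrt{2\log(16m/p)})$ places $\iota$ strictly between $\sigma_0\|\ub\|_2$ and $\sqrt{\tau/2}$, ensures that reaching $\iota$ does not violate the invariant. This yields the first conclusion.

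For the weak-signal control I would take logarithms of the $\vb$-recursion and use the two-sided estimate $\log(1+x)\le x\le (6/5)\log(1+x)$ (for $x\le 1/3$) to obtain, for any active $(j,r)$,
\begin{align}
\log\frac{|\langle\wb_{j,r}^{(T^\dagger)},\vb\rangle|}{|\langle\wb_{j,r}^{(0)},\vb\rangle|}\le \frac{6\|\vb\|_2^2}{5\|\ub\|_2^2}\log\frac{\langle\wb_{y,r_u^*}^{(T^\dagger)},y\ub\rangle}{\langle\wb_{y,r_u^*}^{(0)},y\ub\rangle}\le \frac{6\|\vb\|_2^2}{5\|\ub\|_2^2}\log\frac{\sqrt{2\tau}}{\sigma_0\|\ub\|_2}.
\end{align}
The signal-gap condition $\|\ub\|_2>\widetilde{\Omega}(m^2)\|\vb\|_2$ in Assumption~\ref{cond:model_params_one_data small-lr} forces the right-hand side to be at most $\log 2$, so every active weak-signal inner product grows by at most a factor of~$2$; frozen or shrinking indices trivially respect the same $2\times$ bound. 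Combined with the initial magnitude $\sqrt{2\log(16m/p)}\sigma_0\|\vb\|_2$ from Lemma~\ref{lem: initialization}, this yields the second conclusion.

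The main delicate point is closing the bootstrap, i.e., simultaneously reaching $\iota$ from below while never overshooting $\sqrt{\tau/2}$. The specific choice of $\tau$ in \eqref{eq: tau} is engineered so that $\iota\in(\sigma_0\|\ub\|_2,\sqrt{\tau/2})$ leaves room for the $(1+x)^T$ versus $\exp(Tx)$ rounding gap, the quantity $2\sqrt{2\log(16m/p)}\sigma_0\|\vb\|_2$ stays below $\sqrt{\tau/2}$ to accommodate the factor-two growth of $\vb$-inner products, and the frozen negative-initialization neurons (bounded by $\sqrt{2\log(16m/p)}\sigma_0\|\ub\|_2$) also lie below $\sqrt{\tau/2}$. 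All three conditions reduce to the two cases in the $\max$ defining $\tau$, which I would verify directly.
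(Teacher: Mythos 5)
The proposal is correct and follows the paper's proof strategy essentially step by step: both maintain a boundedness invariant $\max_{j,r}\{|\langle\wb_{j,r}^{(t)},\ub\rangle|,|\langle\wb_{j,r}^{(t)},\vb\rangle|\}\le\sqrt{\tau/2}$ so that Lemma~\ref{lemma:one-data, lower bound fitting residual} gives $-y\ell^{(t)}\ge 1-\tau$, which then drives an exponential lower bound on $\max_r\langle\wb_{y,r}^{(t)},y\ub\rangle$ reaching $\iota$ at $T^\dagger$, while the $\vb$-inner-products only double. The paper realizes the invariant by first deriving a crude exponential \emph{upper} bound on $\Phi^{(t)},\Psi^{(t)}$ with rate $6\eta\|\cdot\|_2^2/m$ (using $|\ell^{(t)}|\le3$), defining $T_1$ as the first time that bound could hit $\sqrt{\tau/2}$, and checking $T^\dagger=T_2\le T_1$; you phrase the same mechanism as a bootstrap, which is an equivalent organization of the argument. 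The one step that is substantively different is your weak-signal control: rather than plugging $T^\dagger$ into the independent exponential upper bound on $\Psi^{(t)}$ as the paper does, you couple the log-ratio of $\vb$-inner-products to that of $\ub$-inner-products via the elementary estimate $\log(1+\alpha x)\le\alpha x\le\tfrac{6}{5}\alpha\log(1+x)$ for $x\le1/3$, yielding $\log(|\langle\wb_{j,r}^{(T^\dagger)},\vb\rangle|/|\langle\wb_{j,r}^{(0)},\vb\rangle|)\le\tfrac{6\alpha}{5}\log(\langle\wb_{y,r_u^*}^{(T^\dagger)},y\ub\rangle/\langle\wb_{y,r_u^*}^{(0)},y\ub\rangle)$ with $\alpha=\|\vb\|_2^2/\|\ub\|_2^2$. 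This coupling makes the dependence on the signal-strength ratio more transparent and avoids referring to $T^\dagger$ explicitly, but is equivalent in strength: both reduce the factor-two claim to the condition $\|\ub\|_2\ge\widetilde\Omega(m^2)\|\vb\|_2$ of Assumption~\ref{cond:model_params_one_data small-lr}. The only place your write-up is lighter than the paper's is the bootstrap-closing step (never overshooting $\sqrt{\tau/2}$); you gesture at the ``$(1+x)^T$ versus $\exp(Tx)$'' gap without writing the matching exponential \emph{upper} bound on $\max_r\langle\wb_{y,r}^{(t)},y\ub\rangle$ that the paper makes explicit via $T_1$, but you clearly identify where the choice of $\tau$ and $\iota$ enters, so this is a presentation gap rather than a missing idea.
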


\begin{proof}[Proof of Lemma~\ref{lemma:one-data small-lr 1st-stage}]
Firstly, we would find $\{\Psi^{(t)},\Phi^{(t)}\}_{t\ge 0}$ having an exponentially growing upper bound. Recursively, we would have that
\begin{align}
    \Psi^{(t+1)}&\le \Psi^{(t)}+\max_{j\in\{\pm 1\},r\in[m]}\left|\frac{jy\eta}{m}\cdot \big(f(\mathbf{x};\mathbf{W}^{(t)}) - y\big)\cdot\sigma^{\prime}(\langle\mathbf{w}_{j,r}^{(t)},y\mathbf{v}\rangle)\cdot\|\mathbf{v}\|_2^2\right|\\
    & = \Psi^{(t)}+\frac{\eta}{m}\cdot \left|\ell^{(t)}\right|\cdot \|\mathbf{v}\|_2^2\cdot\max_{j\in\{\pm 1\},r\in[m]}\sigma^{\prime}(\langle\mathbf{w}_{j,r}^{(t)},y\mathbf{v}\rangle)\\
    & \le \Psi^{(t)}+\frac{2\eta}{m}\cdot\left|\ell^{(t)}\right|\cdot \|\mathbf{v}\|^2_2\cdot \Psi^{(t)}\\
    &\le\exp\left(\frac{6\eta\|\mathbf{v}\|_2^2}{m}\right)\cdot \Psi^{(t)}.
\end{align}
Therefore, we have that 
\begin{align}
    \Psi^{(t)}\le\exp\left(\frac{6\eta\|\mathbf{v}\|_2^2t}{m}\right)\cdot\Psi^{(0)}\le \exp\left(\frac{6\eta\|\mathbf{v}\|_2^2t}{m}\right)\cdot\sqrt{2\log(16m/p)} \cdot \sigma_0\|\mathbf{v}\|_2.\label{eq: psi exp bound}
\end{align} 
It follows by the same argument that that
\begin{align}
    \Phi^{(t)}\le\exp\left(\frac{6\eta\|\mathbf{u}\|^2_2t}{m}\right)\Phi^{(0)}\le \exp\left(\frac{6\eta\|\mathbf{u}\|_2^2t}{m}\right)\cdot\sqrt{2\log(16m/p)}\cdot \sigma_0\|\mathbf{u}\|_2.\label{eq: phi exp bound}
\end{align}
Note that the growing rates of these two bounds differ a lot due to the different magnitudes of $\|\mathbf{u}\|_2$ and $\|\mathbf{v}\|_2$.
Our subsequent analysis illustrates that $\Phi^{(t)}$ can grow into a constant-level magnitude since the strong signal $\mathbf{u}$ is significant enough. 
Now we can track how well our model learns $\mathbf{u}$ by
\begin{align}
    A^{(t)}=\max_{r\in[m]} \langle\mathbf{w}_{y,r}^{(t)},y\mathbf{u}\rangle.
\end{align}
By the definition, $A^{(t)}\le \Phi^{(t)}$ also admits an exponentially growing upper bound. For a certain $\tau\in(0,1)$, due to the previous upper bounds \eqref{eq: psi exp bound} and \eqref{eq: phi exp bound}, $\max\{\Phi^{(t)},\Psi^{(t)}\}\le \sqrt{\tau/2}$ remains true at least until
\begin{align}\label{eq: T1}
    T_1=\frac{m}{6\eta\|\mathbf{u}\|^2}\log\left(\frac{\sqrt{\tau/2}}{\sigma_0\|\mathbf{u}\|_2\sqrt{2\log(16m/p)}}\right).
\end{align}
Consequently, until at least $T_1$, we can use Lemma~\ref{lemma:one-data, lower bound fitting residual} to conclude that $-y\ell^{(t)}\ge 1-\tau$, which enables lower bounding $A^{(t)}$. 
Specifically, start with the updating rule
\begin{align}
   \langle\mathbf{w}_{y,r}^{(t+1)},y\mathbf{u}\rangle&=\langle\mathbf{w}_{y,r}^{(t)},y\mathbf{u}\rangle+\frac{\eta}{m}\cdot \big(-y\ell^{(t)}\big)\cdot\sigma^{\prime}(\langle\mathbf{w}_{y,r}^{(t)},y\mathbf{u}\rangle)\cdot\|\mathbf{u}\|^2_2\\
   &\ge\langle\mathbf{w}_{y,r}^{(t)},y\mathbf{u}\rangle+\frac{2\eta(1-\tau)\|\mathbf{u}\|^2_2}{m}\cdot \max_{r\in[m]}\left\{\langle\mathbf{w}_{y,r}^{(t)},y\mathbf{u}\rangle,0\right\},
\end{align}
and take maximum over $r\in[m]$ to see that
\begin{align}
    A^{(t+1)}\ge A^{(t)}+\frac{2\eta(1-\tau)\|\mathbf{u}\|^2_2}{m}\cdot A^{(t)}\ge\exp\left(\frac{\eta(1-\tau)\|\mathbf{u}\|^2_2}{m}\right)\cdot A^{(t)},
\end{align}
where the last equality is by $1+z\ge\exp(z/2)$ for any $0\le z\le2$. 
Consequently, we would have 
\begin{align}
A^{(t)}\ge \exp\left(\frac{\eta(1-\tau)\|\mathbf{u}\|_2^2t}{m}\right)\cdot A^{(0)}\ge \exp\left(\frac{\eta(1-\tau)\|\mathbf{u}\|_2^2t}{m}\right)\cdot \sigma_0\|\mathbf{u}\|_2/2,\label{eq: exp lower bound}
\end{align}
at least until $t\le T_1$ defined in \eqref{eq: T1}. 
Then we define another time
\begin{align}
    T_2=\frac{m}{\eta(1-\tau)\|\mathbf{u}\|^2_2}\log\left(\frac{2\iota}{\sigma_0\|\mathbf{u}\|_2}\right)\le T_1,
\end{align}
where the inequality is due to the scaling of $\iota$ upon $\tau$. Plugging $T_2$ into the exponential lower bound \eqref{eq: exp lower bound}, we can conclude that 
\begin{align}
    \Phi^{(T_2)}\ge A^{(T_2)}\ge\iota,   
\end{align}
 which already grows up to a constant level magnitude by the time $T_2$. Lastly, we plug the definition of $T_2$ to upper bound $\Psi^{(T_2)}$ as
\begin{equation*}
    \Psi^{(T_2)}\le \exp\left(\frac{6\|\mathbf{v}\|_2^2}{(1-\tau)\|\mathbf{u}\|_2^2}\log\left(\frac{2\iota}{\sigma_0\|\mathbf{u}\|_2}\right)\right)\cdot \sqrt{2\log(16m/p)}\cdot \sigma_0\|\mathbf{v}\|_2 \le 2\sqrt{2\log(16m/p)}\cdot \sigma_0\|\mathbf{v}\|_2.
\end{equation*}
In conclusion, by taking $T^\dagger=T_2$, this lemma is completely proved.
\end{proof}

\subsection{Stage 2. Stabilized Convergence}
In the second stage, our lemmas would suggest that before the model really learns the weak signal $\mathbf{v}$ (i.e. before $\max_{j,r} |\langle\mathbf{w}_{j,r}^{(t)},\mathbf{v}\rangle|$ breaks the $\widetilde{\mathcal{O}}(\sigma_0\|\mathbf{v}\|_2)$ upper bound), the model already fits the given data point by exploiting the strong signal $\mathbf{u}$ and decreasing the loss to $\epsilon$.

\begin{lemma}[Second stage: one training data case]\label{lemma:one-data small-lr 2nd-stage}
    Under the same conditions as Theorem~\ref{thm:one-data small-lr formal}, for any $\epsilon=0.01$, there exists time
    \begin{equation*}
        T=T^\dagger+\left\lfloor\frac{Cm^3}{2\eta\epsilon\norm{\ub}_2^2}\right\rfloor
    \end{equation*}
    such that: (i) the average loss over iterations within this stage has decreased to 
    $2\epsilon$, i.e., 
    \begin{align}
        \frac{1}{T-T^\dagger+1}\sum_{s=T^\dagger}^{T}L(\mathbf{W}^{(s)})\le 2\epsilon,
    \end{align}
    (ii) throughout the training dynamics $0\le t\le T$, there holds 
    \begin{align}
        \max_{j\in\{\pm 1\},r\in[m]} |\langle\mathbf{w}_{j,r}^{(t)},\mathbf{v}\rangle|\le 2\sqrt{2\log(16m/p)}\sigma_0\|\mathbf{v}\|_2.
    \end{align}
\end{lemma}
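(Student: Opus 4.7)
The plan is to show that after Stage~1 has positioned the strong-signal inner product $A^{(t)}=\max_{r\in[m]}\langle \mathbf{w}_{y,r}^{(t)}, y\mathbf{u}\rangle$ above the constant $\iota$, the dynamics enters a benign small-step regime in which (a) the squared loss decays geometrically because $\nabla f$ is bounded away from zero along the $\mathbf{u}$-direction, and (b) the updates on the weak-signal inner products are negligible because the gradient in the $\mathbf{v}$-direction carries an extra factor $\|\mathbf{v}\|_2^2/\|\mathbf{u}\|_2^2$ relative to the $\mathbf{u}$-direction. The two conclusions of the lemma both follow from averaging this geometric decay over the window of length $\lfloor Cm^3/(2\eta\epsilon\|\mathbf{u}\|_2^2)\rfloor$.

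First I would establish a one-step descent inequality. Using that at $t=T^\dagger$ both $\Phi^{(T^\dagger)}$ and $\Psi^{(T^\dagger)}$ are $O(1)$ by Lemma~\ref{lemma:one-data small-lr 1st-stage}, a direct computation shows $L$ is locally smooth with constant $O(\|\mathbf{u}\|_2^2/m)$; since $\eta \le m/(6\|\mathbf{u}\|_2^2)$, one obtains $L^{(t+1)} \le L^{(t)} - (\eta/2)\|\nabla L(\mathbf{W}^{(t)})\|_2^2$. Writing $L = (\ell^{(t)})^2/2$ gives $\|\nabla L\|_2^2 = 2L^{(t)}\|\nabla f(\mathbf{x};\mathbf{W}^{(t)})\|_2^2$, and restricting to the single neuron $r^\star = \argmax_r \langle \mathbf{w}_{y,r}^{(t)}, y\mathbf{u}\rangle$ yields $\|\nabla f(\mathbf{x};\mathbf{W}^{(t)})\|_2^2 \ge 4(A^{(t)})^2\|\mathbf{u}\|_2^2/m^2$. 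Under the invariant $A^{(t)} \ge \iota/2$ (see the obstacle below), this gives a Polyak-Lojasiewicz-style bound $L^{(t+1)} \le (1 - c\eta\iota^2\|\mathbf{u}\|_2^2/m^2)L^{(t)}$. Summing the geometric series produces $\sum_{s=T^\dagger}^{T-1} L^{(s)} = O(m^2/(\eta\|\mathbf{u}\|_2^2))$, and dividing by $T-T^\dagger = \Theta(m^3/(\eta\epsilon\|\mathbf{u}\|_2^2))$ yields the $\le 2\epsilon$ average bound of conclusion (i) for a sufficiently large absolute constant $C$.

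For conclusion (ii) I reuse the recursion derived in the proof of Lemma~\ref{lemma:one-data small-lr 1st-stage}, namely $\Psi^{(t+1)} \le (1 + (2\eta\|\mathbf{v}\|_2^2/m)|\ell^{(t)}|)\Psi^{(t)}$, which integrates to $\Psi^{(T)} \le \Psi^{(T^\dagger)}\exp\!\bigl((2\eta\|\mathbf{v}\|_2^2/m)\sum_{s=T^\dagger}^{T-1}|\ell^{(s)}|\bigr)$. Cauchy--Schwarz gives $\sum_s|\ell^{(s)}| \le \sqrt{2(T-T^\dagger)\sum_s L^{(s)}}$; combined with the summed-loss bound above and the stated window length, the exponent is of order $m^2\|\mathbf{v}\|_2^2/(\sqrt{\epsilon}\,\|\mathbf{u}\|_2^2)$, which is $o(1)$ under the assumption $\|\mathbf{u}\|_2 > \widetilde{\Omega}(m^2)\|\mathbf{v}\|_2$ in Assumption~\ref{cond:model_params_one_data small-lr}. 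Together with the Stage~1 bound $\Psi^{(T^\dagger)} \le 2\sqrt{2\log(16m/p)}\sigma_0\|\mathbf{v}\|_2$ this gives the target bound at $t=T$, and applying the same argument to every prefix $[T^\dagger,t]$ yields the uniform-in-$t$ bound on all of $[0,T]$.

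The main obstacle is maintaining the invariant $A^{(t)} \ge \iota/2$ throughout Stage~2, since the update $A^{(t+1)} = (1 + (2\eta\|\mathbf{u}\|_2^2/m)(-y\ell^{(t)}))A^{(t)}$ can shrink $A^{(t)}$ whenever $yf(\mathbf{x};\mathbf{W}^{(t)}) > 1$, and naive estimates leave open the possibility of a cascade. I would handle this by a joint induction with the descent step: since $L^{(t)} \le L^{(T^\dagger)} = O(1)$ along the trajectory, one has $|\ell^{(t)}| \le 2$, so the multiplicative factor is at least $1 - (2\eta\|\mathbf{u}\|_2^2/m)\cdot 2 \ge 1/3$ and $A^{(t)}$ cannot collapse in a single step; moreover, once $L^{(t)}$ has decayed below the threshold where $yf(\mathbf{x};\mathbf{W}^{(t)}) < 1$, the sign of $-y\ell^{(t)}$ becomes positive and $A$ in fact grows back. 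The inductive loop closes because the target bound in (ii) is strictly tighter than $\sqrt{\tau/2}$, so the local-smoothness region underlying the descent step is preserved throughout Stage~2.
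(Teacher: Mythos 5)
Your proposal takes a genuinely different route from the paper: you replace the potential-function argument (built around the reference point $\mathbf{W}^\ast$ in \eqref{eq: w star one data}, the $2$-homogeneity identity $\langle\nabla f,\mathbf{W}\rangle = 2f$, and Lemma~\ref{lemma:one-data small-lr descent lemma}) with a Polyak--\L{}ojasiewicz-style argument: local smoothness of $L$ combined with a lower bound on $\|\nabla L\|_2^2$ coming from the single dominant neuron $r^\star$. Your route is cleaner in that it yields a geometric decay $L^{(t+1)}\le(1-\Theta(\eta\iota^2\|\mathbf{u}\|_2^2/m^2))L^{(t)}$ rather than the paper's $O(1/T)$ average rate, so your conclusion (i) bound is in fact much stronger than $2\epsilon$. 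The paper's argument, in exchange, never needs an upper bound on $\langle\mathbf{w},\mathbf{u}\rangle$ to stay inside a smoothness region, because $2$-homogeneity makes the per-step inequality hold without any Hessian control. For (ii), the paper's route is also simpler: rather than applying Cauchy--Schwarz to $\sum_s|\ell^{(s)}|$ (which couples (ii) to the truth of (i)), the paper just verifies that $T\le T^\ddagger$, where $T^\ddagger$ comes from the unconditional exponential upper bound on $\Psi^{(t)}$ in Lemma~\ref{lemma:one-data small-lr 2nd starts}, so (ii) holds independently of whether the loss decays.

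There is a genuine gap in your maintenance of the invariant $A^{(t)}\ge\iota/2$, which both approaches require. Your sketch says that $|\ell^{(t)}|\le 2$ bounds the one-step shrinkage factor by $1/3$, and that ``once $L^{(t)}$ has decayed below the threshold where $yf<1$, the sign of $-y\ell^{(t)}$ becomes positive and $A$ grows back.'' This is logically confused: $L=(yf-1)^2/2$ decreasing tells you $|yf-1|$ shrinks, not that $yf$ crosses below $1$, and during any stretch where $yf>1$ persists your bound permits $A$ to shrink by $1/3$ per step without limit. The fact that closes the loop --- and that the paper uses in the proof of Lemma~\ref{lemma:one-data small-lr 2nd starts} --- is a \emph{pointwise} implication: whenever $yf^{(t)}>1$,
\begin{align}
1 < yf^{(t)} \le F_y(\mathbf{x};\mathbf{W}^{(t)}) \le \bigl(A^{(t)}\bigr)^2 + \widetilde{\mathcal{O}}\bigl(\sigma_0^2\|\mathbf{v}\|_2^2\bigr),
\end{align}
so $A^{(t)}\ge\sqrt{1-o(1)}$ regardless of any history, and hence a single bounded-factor shrinkage still leaves $A^{(t+1)}\ge\iota/2$. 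Without this observation your induction does not close. Relatedly, your PL descent step needs local smoothness of $L$ with constant $O(\|\mathbf{u}\|_2^2/m)$, which requires an explicit upper bound on $\max_r\langle\mathbf{w}_{y,r}^{(t)},y\mathbf{u}\rangle$ throughout Stage~2; this must be added to your joint induction (it can be done via $yf\le \max_r\langle\mathbf{w}_{y,r},y\mathbf{u}\rangle^2+o(1)$ together with the bound on $|\ell^{(t)}|$), but you have not stated it, whereas the paper's reference-point approach never needs such an upper bound.
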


The proof of Lemma~\ref{lemma:one-data small-lr 2nd-stage} relies on the following three lemmas (Lemmas~\ref{lemma:one-data small-lr 2nd starts}, \ref{lemma:inner-prod grad reference pt}, and \ref{lemma:one-data small-lr descent lemma}).
We present these lemmas with proofs and then combine them to give the proof of Lemma~\ref{lemma:one-data small-lr 2nd-stage}.

Firstly, we identify when the upper bound on $\langle\mathbf{w}_{j,r}^{(t)},\mathbf{v}\rangle$ breaks and find that the conclusions of Lemma~\ref{lemma:one-data small-lr 1st-stage} still holds before that time.

\begin{lemma}\label{lemma:one-data small-lr 2nd starts}
Under the same conditions in Theorem~\ref{thm:one-data small-lr formal}, take $\eta\le m/6\|\ub\|_2^2$. There exists a time
\begin{align}
    T^\ddagger=\frac{m}{6\eta\|\mathbf{v}\|_2^2}\log(2)\ge T^\dagger
\end{align}
such that
\begin{align}
    \max_{r\in[m]} \langle\mathbf{w}_{y,r}^{(t)},y\mathbf{u}\rangle\ge \iota/2,\quad \max_{j\in\{\pm 1\},r\in[m]} \left|\langle\mathbf{w}_{j,r}^{(t)},\mathbf{v}\rangle\right|\le2\sqrt{2\log(16m/p)}\cdot\sigma_0\|\mathbf{v}\|_2.
\end{align}
hold for any $T^\dagger\le t\le T^\ddagger$.
\end{lemma}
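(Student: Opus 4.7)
The weak-signal conclusion (ii) is essentially a drop-in extension of the exponential-growth estimate already derived in the proof of Lemma~\ref{lemma:one-data small-lr 1st-stage}. That proof gave $\Psi^{(t)} \le \exp(6\eta\|\mathbf{v}\|_2^2 t/m) \cdot \sqrt{2\log(16m/p)}\sigma_0\|\mathbf{v}\|_2$ whenever $|\ell^{(t)}| \le 3$. Evaluating this bound at $t = T^\ddagger = m\log 2/(6\eta\|\mathbf{v}\|_2^2)$ produces exactly the claimed factor of $2$. Hence the entire task on (ii) reduces to verifying the auxiliary invariant $|\ell^{(t)}| \le 3$ throughout $[0, T^\ddagger]$. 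I would establish this by running the symmetric exponential bound on $\Phi^{(t)}$: the same derivation yields $\Phi^{(t)} \le \exp(6\eta\|\mathbf{u}\|_2^2 t/m)\sqrt{2\log(16m/p)}\sigma_0\|\mathbf{u}\|_2$. Under Assumption~\ref{cond:model_params_one_data small-lr} (the separation $\|\mathbf{u}\|_2 \gg m^2 \|\mathbf{v}\|_2$) one has $T^\ddagger \ll T_1$ from \eqref{eq: T1}, so $\Phi^{(t)}$ stays bounded by $\sqrt{\tau/2}$ on the whole window. This in turn bounds $|f(\mathbf{x};\mathbf{W}^{(t)})| \le 2$ and therefore $|\ell^{(t)}| \le 3$.

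For the strong-signal conclusion (i), I would argue by induction on $t \in [T^\dagger, T^\ddagger]$. Setting $r^\star = \argmax_r \langle\mathbf{w}_{y,r}^{(t)}, y\mathbf{u}\rangle$ (which, by a one-data analogue of Lemma~\ref{lem: dominate one data}, is preserved in $t$ once sign stability is in force), the coordinate-wise recursion
\begin{align*}
\langle\mathbf{w}_{y,r^\star}^{(t+1)}, y\mathbf{u}\rangle = \langle\mathbf{w}_{y,r^\star}^{(t)}, y\mathbf{u}\rangle \cdot \bigl(1 + \tfrac{2\eta\|\mathbf{u}\|_2^2}{m}(-y\ell^{(t)})\bigr)
\end{align*}
combined with $\eta \le m/(6\|\mathbf{u}\|_2^2)$ keeps each multiplicative factor in $[1 - \tfrac13 |\ell^{(t)}|, 1 + \tfrac13 |\ell^{(t)}|]$. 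Starting from the base case $A^{(T^\dagger)} \ge \iota$ supplied by Lemma~\ref{lemma:one-data small-lr 1st-stage}, it then suffices to bound the cumulative residual $\sum_{s=T^\dagger}^{T^\ddagger-1} |\ell^{(s)}|$ by $m\log 2/(2\eta\|\mathbf{u}\|_2^2)$, which would guarantee a cumulative multiplicative shrinkage no worse than $\exp(-\log 2) = 1/2$ and hence $A^{(t)} \ge \iota/2$ throughout the window.

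The main obstacle is exactly controlling this residual sum, since the window length $\Theta(m/(\eta\|\mathbf{v}\|_2^2))$ dwarfs the naive budget afforded by the per-step bound $|\ell^{(s)}| \le 3$. To close the gap I would invoke the forthcoming descent-style estimate (Lemma~\ref{lemma:one-data small-lr descent lemma}) to show that, once $f^{(t)}$ is within an $O(1)$ band of $y$, the residual $|\ell^{(t)}|$ contracts geometrically at rate $1 - \Theta(\eta\|\mathbf{u}\|_2^2/m)$, making $\sum_s |\ell^{(s)}|$ of order $m/(\eta\|\mathbf{u}\|_2^2)$ and independent of the window length $T^\ddagger - T^\dagger$. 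The Assumption~\ref{cond:model_params_one_data small-lr} separation $\|\mathbf{u}\|_2 \gg m^2\|\mathbf{v}\|_2$ is the exact slack that lets one simultaneously keep the weak-signal bound small over the long window and keep the strong-signal bound comfortably above $\iota/2$; without that separation, the two requirements would be incompatible.
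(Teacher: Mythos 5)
The weak-signal conclusion (ii) does indeed follow from the exponential bound on $\Psi^{(t)}$ evaluated at $T^\ddagger$, exactly as you say and as the paper does. But your plan for justifying the ancillary invariant $|\ell^{(t)}|\le 3$ has a sign error: the conditions give $\|\mathbf{u}\|_2\gg m^2\|\mathbf{v}\|_2$, so since $T_1\propto 1/\|\mathbf{u}\|_2^2$ and $T^\ddagger\propto 1/\|\mathbf{v}\|_2^2$, one has $T^\ddagger\gg T_1$, the opposite of what you wrote. The exponential bound on $\Phi^{(t)}$ is saturated by time $T_1$ and then becomes vacuous, so it cannot keep $\Phi^{(t)}\le\sqrt{\tau/2}$ over the vastly longer window $[T_1,T^\ddagger]$. (In fact the whole point is that $\Phi^{(t)}$ \emph{does} grow to a constant level $\iota$ around $T^\dagger\le T_1$.) The bound on $|\ell^{(t)}|$ on $[T^\dagger,T^\ddagger]$ therefore needs a different source.

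For the strong-signal conclusion (i), your cumulative-residual plan has two problems. First, Lemma~\ref{lemma:one-data small-lr descent lemma} is a Lyapunov-type telescoping bound; summing it controls $\sum_s (\ell^{(s)})^2$ up to the $\epsilon^2$ floor, not $\sum_s|\ell^{(s)}|$, and it certainly does not establish the per-step geometric contraction $|\ell^{(t+1)}|\le (1-\Theta(\eta\|\mathbf{u}\|_2^2/m))|\ell^{(t)}|$ that you invoke. Second, and more fundamentally, Lemma~\ref{lemma:one-data small-lr descent lemma} rests on Lemma~\ref{lemma:inner-prod grad reference pt}, which explicitly requires $\max_r\langle\wb_{y,r}^{(t)},y\ub\rangle\ge\iota/2$ throughout $[T^\dagger,T^\ddagger]$ --- the very conclusion you are trying to prove --- so the argument is circular within the inductive step.

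The paper's actual proof of (i) sidesteps all of this with a one-step case analysis inside the induction, making no use of any cumulative budget. If $yf(\xb;\Wb^{(t)})\le 1$, the inner product is non-decreasing and the bound is immediate. If $yf(\xb;\Wb^{(t)})>1$, the CNN structure forces $\max_r\langle\wb_{y,r}^{(t)},y\ub\rangle^2\ge 1-\widetilde{\cO}(\sigma_0^2\|\vb\|_2^2)$, i.e.\ the maximal inner product is already within $o(1)$ of $1\gg\iota/2$; the single-step multiplicative drop is bounded below by $1-3\eta\|\ub\|_2^2/m\ge 1/2$ under $\eta\le m/(6\|\ub\|_2^2)$, so the inner product at step $t+1$ stays $\ge\iota/2$. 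The point is that the bound $\ge\iota/2$ is never in jeopardy unless $yf^{(t)}>1$, and whenever that happens the process is sitting at $\approx 1$, leaving enormous slack; accumulating residuals over the long window is therefore never the relevant quantity.

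To salvage your outline you would need (a) a correct argument for $|\ell^{(t)}|\le 3$ on $[T^\dagger,T^\ddagger]$ that does not rely on the saturated $\Phi^{(t)}$ exponential bound, and (b) a replacement for the cumulative-residual route in (i) that avoids the circular dependence on the descent lemma. Both obstacles disappear under the paper's direct case analysis.
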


\begin{proof}[Proof of Lemma~\ref{lemma:one-data small-lr 2nd starts}]
Firstly, we need to adopt the exponential upper bound derived in proving Lemma~\ref{lemma:one-data small-lr 1st-stage},
\begin{align}
    \Psi^{(t)}\le\exp\left(\frac{6\eta\|\mathbf{v}\|_2^2t}{m}\right)\cdot\Psi^{(0)}\le \exp\left(\frac{6\eta\|\mathbf{v}\|_2^2t}{m}\right)\cdot\sqrt{2\log(16m/p)} \cdot\sigma_0\|\mathbf{v}\|_2.
\end{align}
Then we naturally find that before $T^\ddagger$, it would always hold that 
\begin{align}
    \max_{j\in\{\pm 1\},r\in[m]} \left|\langle\mathbf{w}_{j,r}^{(t)},\mathbf{v}\rangle\right|\le 2\sqrt{2\log(16m/p)}\cdot\sigma_0\|\mathbf{v}\|_2.
\end{align}
Due to the conditions on $\norm{\ub}_2^2/\norm{\vb}_2^2$, $T^\ddagger$ is found to be much larger than $T^\dagger$.
Then we proceed by induction to prove the other assertion. 
At time $t=T^\dagger$, the lower bound $\max_{r} \langle\mathbf{w}_{y,r}^{(t)},y\mathbf{u}\rangle\ge \iota/2$ holds as a consequence of the previous lemma. 
Suppose it holds until time $t$. 
We restate the updating rule by
\begin{align}
\langle\mathbf{w}_{y,r}^{(t+1)},y\mathbf{u}\rangle=\langle\mathbf{w}_{y,r}^{(t)},y\mathbf{u}\rangle+\frac{\eta}{m}\cdot \big(1-yf(\mathbf{x};\mathbf{W}^{(t)})\big)\cdot\sigma^{\prime}(\langle\mathbf{w}_{y,r}^{(t)},y\mathbf{u}\rangle)\cdot \|\mathbf{u}\|_2^2,
\end{align}
from which we find $\max_{r\in[m]} \langle\mathbf{w}_{y,r}^{(t+1)},y\mathbf{u}\rangle\ge \max_{r\in[m]} \langle\mathbf{w}_{y,r}^{(t)},y\mathbf{u}\rangle$ must hold when $yf(\mathbf{x};\mathbf{W}^{(t)})\le 1$. Otherwise, once $yf(\mathbf{x};\mathbf{W}^{(t)})> 1$, it immediately follows that
\begin{align}
    1&<yf(\mathbf{x};\mathbf{W}^{(t)})=F_y(\mathbf{x};\mathbf{W}^{(t)})-F_{-y}(\mathbf{x};\mathbf{W}^{(t)})\\
    &\le F_y(\mathbf{x};\mathbf{W}^{(t)})=\frac{1}{m}\sum_{r\in[m]}\sigma(\langle\mathbf{w}^{(t)}_{y,r},y\mathbf{u}\rangle)+\sigma(\langle\mathbf{w}^{(t)}_{y,r},y\mathbf{v}\rangle)\\
    &\le \max_{r\in[m]} \langle\mathbf{w}_{y,r}^{(t)},y\mathbf{u}\rangle^2 + 2\log(16m/p)\cdot \sigma_0^2\|\mathbf{v}\|_2^2.
\end{align}
Consequently, for the specific neuron $r^\ast=\argmax_{r\in[m]}\langle\mathbf{w}_{y,r}^{(t)},y\mathbf{u}\rangle$, there holds
\begin{align}
    \langle\mathbf{w}_{y,r^\ast}^{(t+1)},y\mathbf{u}\rangle&\ge \langle\mathbf{w}_{y,r^\ast}^{(t)},y\mathbf{u}\rangle-\frac{3\eta}{m}\cdot \langle\mathbf{w}_{y,r^\ast}^{(t)},y\mathbf{u}\rangle\cdot \|\mathbf{u}\|_2^2\\
    &\ge\left(1-2\log(16m/p)\cdot\sigma_0^2\|\mathbf{v}\|^2_2\right)\cdot\left(1-\frac{3\eta}{m}\cdot\|\mathbf{u}\|^2_2\right)\\
    &\ge\frac{\iota}{2},
\end{align}
where the last inequality is enabled by taking $\eta\le m/6\|\ub\|^2_2$ and $\sigma_0\le \sqrt{1-\iota}/(2\log(16m/p)\|\vb\|_2)$.
Thus by induction, we have finished the proof of Lemma~\ref{lemma:one-data small-lr 2nd starts}.
\end{proof}

Our subsequently analysis confirms that even before $T^\ddagger$, the model can already fit the given data point by exploiting $\ub$.
For the given $0<\epsilon<1$, define a reference point $\mathbf{W}^\ast$ as
\begin{align}\label{eq: w star one data}
    \mathbf{w}_{j,r}^\ast = \frac{4m (1+\epsilon)}{\iota}\cdot\frac{j\mathbf{u}}{\|\mathbf{u}\|^2_2},\quad j\in\{\pm 1\},r\in[m].
\end{align}

\begin{lemma}\label{lemma:inner-prod grad reference pt}
    Under the same condition as the previous lemma, for all $T^\dagger\le t\le T^\ddagger$, there holds that 
    \begin{align}
        y\langle\nabla f(\mathbf{x};\mathbf{W}^{(t)}),\mathbf{W}^\ast\rangle\ge 2\cdot (1+\epsilon).
    \end{align}
\end{lemma}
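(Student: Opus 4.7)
\smallskip
\noindent\textbf{Proof proposal for Lemma~\ref{lemma:inner-prod grad reference pt}.}
My plan is a direct computation: unfold the gradient $\nabla_{\mathbf{w}_{j,r}} f(\mathbf{x};\mathbf{W}^{(t)})$ using the chain rule in the simplified single-data CNN \eqref{eq: single data cnn}, exploit the orthogonality $\mathbf{u}\perp\mathbf{v}$ together with the special form of $\mathbf{W}^\ast$ defined in \eqref{eq: w star one data}, and then plug in the signal lower bound guaranteed by Lemma~\ref{lemma:one-data small-lr 2nd starts}.

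Concretely, differentiating yields
\begin{align*}
\nabla_{\mathbf{w}_{j,r}} f(\mathbf{x};\mathbf{W}^{(t)}) = \frac{j}{m}\Big(\sigma'(\langle\mathbf{w}_{j,r}^{(t)},y\mathbf{u}\rangle)\, y\mathbf{u} + \sigma'(\langle\mathbf{w}_{j,r}^{(t)},y\mathbf{v}\rangle)\, y\mathbf{v}\Big).
\end{align*}
Since $\mathbf{w}_{j,r}^\ast$ is proportional to $j\mathbf{u}/\|\mathbf{u}\|_2^2$ and $\langle\mathbf{u},\mathbf{v}\rangle = 0$, the $\mathbf{v}$-component cancels and a single term survives:
\begin{align*}
\langle \nabla_{\mathbf{w}_{j,r}} f(\mathbf{x};\mathbf{W}^{(t)}), \mathbf{w}_{j,r}^\ast\rangle = \frac{4y(1+\epsilon)}{\iota}\,\sigma'(\langle\mathbf{w}_{j,r}^{(t)},y\mathbf{u}\rangle).
\end{align*}
Summing over $(j,r)$ and multiplying both sides by $y$ gives
\begin{align*}
y\langle \nabla f(\mathbf{x};\mathbf{W}^{(t)}), \mathbf{W}^\ast\rangle = \frac{4(1+\epsilon)}{\iota}\sum_{j\in\{\pm 1\},\,r\in[m]} \sigma'(\langle\mathbf{w}_{j,r}^{(t)},y\mathbf{u}\rangle).
\end{align*}

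Because $\sigma'(z)=2\max(z,0)\ge 0$, I can discard every term except the maximizing neuron $r^\ast = \argmax_r \langle\mathbf{w}_{y,r}^{(t)},y\mathbf{u}\rangle$ with $j=y$. By Lemma~\ref{lemma:one-data small-lr 2nd starts}, for any $t\in[T^\dagger,T^\ddagger]$ we have $\langle\mathbf{w}_{y,r^\ast}^{(t)},y\mathbf{u}\rangle \ge \iota/2$, so $\sigma'(\langle\mathbf{w}_{y,r^\ast}^{(t)},y\mathbf{u}\rangle) \ge \iota$. This gives
\begin{align*}
y\langle \nabla f(\mathbf{x};\mathbf{W}^{(t)}), \mathbf{W}^\ast\rangle \ge \frac{4(1+\epsilon)}{\iota}\cdot \iota = 4(1+\epsilon) \ge 2(1+\epsilon),
\end{align*}
which is the desired bound.

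There is essentially no obstacle here; the only care needed is tracking the sign (the double appearance of $j$ cancels into $j^2=1$, and the double appearance of $y$ cancels into $y^2=1$) and noticing that the reference point is designed so that the factor $\|\mathbf{u}\|_2^2$ in the gradient cancels exactly with the $1/\|\mathbf{u}\|_2^2$ in $\mathbf{W}^\ast$, leaving a clean constant multiple of $1/\iota$ that is precisely matched by the lower bound $\iota/2$ transported from Stage 1. The factor $4$ in the definition \eqref{eq: w star one data} of $\mathbf{W}^\ast$ is chosen to produce the slack (giving $4(1+\epsilon)$ rather than the needed $2(1+\epsilon)$), which will be useful when this lemma feeds into the subsequent descent-style argument in Lemma~\ref{lemma:one-data small-lr 2nd-stage}.
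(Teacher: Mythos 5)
Your proof is correct and follows essentially the same route as the paper's: expand the gradient, use $\mathbf{u}\perp\mathbf{v}$ and the form of $\mathbf{W}^\ast$ so that only the $\mathbf{u}$-direction survives, reduce the nonnegative sum to the maximal neuron, and plug in the bound $\max_r\langle\mathbf{w}_{y,r}^{(t)},y\mathbf{u}\rangle\geq\iota/2$ from Lemma~\ref{lemma:one-data small-lr 2nd starts}. The only (cosmetic) difference is that you explicitly retain the factor $2$ from $\sigma'(z)=2\max\{z,0\}$ and therefore land on the stronger bound $4(1+\epsilon)$, whereas the paper absorbs that factor into the slack and states the bound $2(1+\epsilon)$ directly; both establish the lemma.
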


\begin{proof}[Proof of Lemma~\ref{lemma:inner-prod grad reference pt}]
    Recall the definition of the CNN in \eqref{eq: single data cnn} and that $\mathbf{u}\perp\mathbf{v}$, so we have
    \begin{align*}
        y\langle\nabla f(\mathbf{x};\mathbf{W}^{(t)}),\mathbf{W}^\ast\rangle&=\frac{1}{m}\sum_{j\in\{\pm 1\},r\in[m]}\sigma^\prime(\langle \mathbf{w}_{j}^{(t)},y\mathbf{u}\rangle)\cdot\langle \mathbf{w}_{j,r}^{\ast},y\mathbf{u}\rangle\\
        &=\sum_{j\in\{\pm 1\},r\in[m]}\sigma^\prime(\langle \mathbf{w}_{j,r}^{(t)},y\mathbf{u}\rangle)\cdot\frac{4(1+\epsilon)}{\iota}\\
        &\ge \max_{r\in[m]} \langle\mathbf{w}_{y,r}^{(t)},y\mathbf{u}\rangle\cdot \frac{4(1+\epsilon)}{\iota}\\
        &\ge2\cdot(1+\epsilon),
    \end{align*}
    where the last inequality is by $\max_{r\in[m]} \langle\mathbf{w}_{y,r}^{(t)},y\mathbf{u}\rangle\ge \iota/2$ as shown by the previous lemma.
\end{proof}

\begin{lemma}\label{lemma:one-data small-lr descent lemma}
    Continued from the previous setting, we know that for $T^\dagger\le t\le T^\ddagger$, it holds that
    \begin{equation*}
        \|\mathbf{W}^{(t)}-\mathbf{W}^\ast\|_F^2-\|\mathbf{W}^{(t+1)}-\mathbf{W}^\ast\|_F^2\ge 2\eta L(\mathbf{W}^{(t)})-2\eta\epsilon^2.
    \end{equation*}
\end{lemma}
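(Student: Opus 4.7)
The plan is to establish the descent-style inequality by expanding the squared-distance gap induced by the one-step gradient update $\mathbf{W}^{(t+1)} = \mathbf{W}^{(t)} - \eta\nabla L(\mathbf{W}^{(t)})$, which gives
\begin{align*}
\|\mathbf{W}^{(t)} - \mathbf{W}^\ast\|_F^2 - \|\mathbf{W}^{(t+1)} - \mathbf{W}^\ast\|_F^2 = 2\eta\langle\nabla L(\mathbf{W}^{(t)}),\mathbf{W}^{(t)} - \mathbf{W}^\ast\rangle - \eta^2\|\nabla L(\mathbf{W}^{(t)})\|_F^2,
\end{align*}
so the problem reduces to lower-bounding the inner-product term and upper-bounding the gradient-norm term, targeting the combined lower bound $2\eta L(\mathbf{W}^{(t)}) - 2\eta\epsilon^2$.

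For the inner product, I would factor $\nabla L = \ell^{(t)}\nabla f$ and split $\langle \nabla L, \mathbf{W}^{(t)} - \mathbf{W}^\ast\rangle$ into a self-piece and a cross-piece. Euler's identity for the 2-homogeneous activation $\sigma(z)=(\max(z,0))^2$ collapses the self-piece to $2\ell^{(t)} f(\mathbf{x};\mathbf{W}^{(t)})$. For the cross-piece, writing $\ell^{(t)}\langle\nabla f,\mathbf{W}^\ast\rangle = (y\ell^{(t)})\cdot(y\langle\nabla f,\mathbf{W}^\ast\rangle)$ and invoking Lemma~\ref{lemma:inner-prod grad reference pt} (which gives $y\langle\nabla f,\mathbf{W}^\ast\rangle\ge 2(1+\epsilon)$) allows me to bound it in the correct direction when $y\ell^{(t)}\le 0$. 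With $q := yf(\mathbf{x};\mathbf{W}^{(t)})$, the two contributions combine as $(q-1)(2q-A)$ with $A\ge 2(1+\epsilon)$, and a short algebraic manipulation in the main case $q\le 1$ yields
\begin{align*}
\langle\nabla L(\mathbf{W}^{(t)}),\mathbf{W}^{(t)}-\mathbf{W}^\ast\rangle \ge 4L(\mathbf{W}^{(t)}) + 2\epsilon|\ell^{(t)}|.
\end{align*}

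For the gradient norm, I would exploit the orthogonality $\mathbf{u}\perp\mathbf{v}$ together with the useful identity $\sigma'(z)^2 = 4\sigma(z)$ to obtain
\begin{align*}
\|\nabla L(\mathbf{W}^{(t)})\|_F^2 \le \frac{4(\ell^{(t)})^2\|\mathbf{u}\|_2^2}{m}\bigl(F_{+1}(\mathbf{x};\mathbf{W}^{(t)})+F_{-1}(\mathbf{x};\mathbf{W}^{(t)})\bigr).
\end{align*}
In Stage~2 the iterate stays in the benign region tracked by Lemma~\ref{lemma:one-data small-lr 2nd starts}, so that $F_{+1}+F_{-1}$ remains of order $O(1)$; the learning-rate restriction $\eta\le m/(6\|\mathbf{u}\|_2^2)$ then yields $(\eta/2)\|\nabla L(\mathbf{W}^{(t)})\|_F^2 \le L(\mathbf{W}^{(t)})$. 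Combining this with the previous display gives $\langle \nabla L, \mathbf{W}^{(t)} - \mathbf{W}^\ast\rangle - (\eta/2)\|\nabla L\|_F^2 \ge L(\mathbf{W}^{(t)})$, which is strictly stronger than the claim.

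The main obstacle is the ``overshoot'' regime $yf(\mathbf{x};\mathbf{W}^{(t)})>1$, in which the one-sided bound on $y\langle\nabla f,\mathbf{W}^\ast\rangle$ from Lemma~\ref{lemma:inner-prod grad reference pt} pushes the cross term in the unfavorable direction and the simple lower bound $4L+2\epsilon|\ell^{(t)}|$ no longer applies. To handle this I expect to need either a refined analysis showing that the overshoot stays small enough (via the smallness of $\eta$ and the monotonicity argument already used in the proof of Lemma~\ref{lemma:one-data small-lr 2nd starts}) for the $-\epsilon^2$ slack to absorb the deficit, or a case-specific treatment that still yields the desired inequality through a tighter control of the quadratic term via the $F_{+1}+F_{-1}$ bound above. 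Along the way I also need to be careful in verifying the $O(1)$ upper bound on $F_{+1}+F_{-1}$ throughout Stage~2, since Lemma~\ref{lemma:one-data small-lr 2nd starts} supplies only a matching lower bound for the leading neuron.
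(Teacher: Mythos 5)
Your proposal follows essentially the same route as the paper's proof: the same squared-distance expansion, the same Euler-identity collapse of the self-piece, the same invocation of Lemma~\ref{lemma:inner-prod grad reference pt} for the cross-piece, and a comparable bound on the gradient-norm term. In the case $q = yf(\mathbf{x};\mathbf{W}^{(t)})\le 1$ your algebra is correct and agrees with the paper: writing the inner product as $4L - (A-2)(q-1)$ with $A = y\langle\nabla f,\mathbf{W}^\ast\rangle\ge 2(1+\epsilon)$ gives $4L + 2\epsilon|\ell^{(t)}| \ge 4L$, which is stronger than what the lemma needs.

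The overshoot concern you raise is genuine, and worth stating plainly: the paper's proof handles the cross term by a Young-type bound $|\ell^{(t)}y(2 - y\langle\nabla f,\mathbf{W}^\ast\rangle)| \le \tfrac{1}{2}(\ell^{(t)})^2 + 2\epsilon^2$, which is valid only if $|y\langle\nabla f,\mathbf{W}^\ast\rangle - 2|\le 2\epsilon$. Lemma~\ref{lemma:inner-prod grad reference pt} supplies only the one-sided estimate $y\langle\nabla f,\mathbf{W}^\ast\rangle \ge 2(1+\epsilon)$; an upper bound of the same size does not follow from the construction of $\mathbf{W}^\ast$ in \eqref{eq: w star one data}, since $y\langle\nabla f,\mathbf{W}^\ast\rangle = \frac{4(1+\epsilon)}{\iota}\sum_{j,r}\sigma'(\langle\mathbf{w}_{j,r}^{(t)},y\ub\rangle)$ can be of order $m/\iota$, not order $2$. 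So when $q>1$ the paper's absolute-value bound does not close as written, exactly the gap you flagged. In other words, you have not found a different proof with a hole in it — you have found the hole in the proof. To actually close it you would want either an a~priori upper bound on the overshoot $q-1$ during Stage~2 (which should be available from the one-step update, since the jump past $1$ in a single SGD step is controlled by $\eta\|\nabla L\|$), or a sharper reference point $\mathbf{W}^\ast$ with a matching two-sided estimate. Your secondary concern about bounding $F_{+1}+F_{-1} = O(1)$ in Stage~2 is also the same looseness the paper carries, where it is absorbed into ``the inner products are well bounded by $\mathcal{O}(1)$''; the identity $\sigma'(z)^2 = 4\sigma(z)$ you use there is a clean way to make that step explicit.
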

\begin{proof}[Proof of Lemma~\ref{lemma:one-data small-lr descent lemma}]
    Firstly we expand the difference by
    \begin{align}
        \|\mathbf{W}^{(t)}-\mathbf{W}^\ast\|_F^2-\|\mathbf{W}^{(t+1)}-\mathbf{W}^\ast\|_F^2=2\eta\cdot \langle\nabla L(\mathbf{W}^{(t}),\mathbf{W}^{(t)}-\mathbf{W}^\ast\rangle-\eta^2\cdot\|\nabla L(\mathbf{W}^{(t})\|_F^2.\label{eq:para-L2-distance difference expansion}
    \end{align}
    With one data point, $\nabla L(\mathbf{W}^{(t)})=\ell^{(t)}\nabla f(\mathbf{x};\mathbf{W}^{(t)})$ admits a simplified expression, where $\ell^{(t)}=f(\mathbf{W}^{(t)},\mathbf{x})-y$ denotes the fitting residual.
    Since the neural network $f(\mathbf{W},\mathbf{x})$ is $2$-homogeneous in $\mathbf{W}$ due to  the activation function $\sigma(z)=\max\{z,0\}^2$, we can have
    \begin{align}
        \langle\nabla f(\mathbf{x};\mathbf{W}^{(t)}),\mathbf{W}^{(t)}\rangle=2f(\mathbf{x};\mathbf{W}^{(t)}).
    \end{align}
    Stack these observations into the first term of previous difference expansion to obtain that
    \begin{align*}
        \langle\nabla L(\mathbf{W}^{(t}),\mathbf{W}^{(t)}-\mathbf{W}^\ast\rangle&=\ell^{(t)}\cdot\langle\nabla f(\mathbf{x};\mathbf{W}^{(t)}),\mathbf{W}^{(t)}-\mathbf{W}^\ast\rangle\\
        &=\ell^{(t)}\cdot\big(2f(\mathbf{x};\mathbf{W}^{(t)})-\langle\nabla f(\mathbf{x};\mathbf{W}^{(t)}),\mathbf{W}^\ast\rangle\big)\\
        &=2\ell^{(t)}\cdot\big(f(\mathbf{x};\mathbf{W}^{(t)})-y\big)+\ell^{(t)}\cdot y\cdot\big(2-y\langle\nabla f(\mathbf{x};\mathbf{W}^{(t)}),\mathbf{W}^\ast\rangle\big).
    \end{align*}
    Note that the first term is exactly $4L(\mathbf{W}^{(t)})$. As for the second term, we need to plug in Lemma~\ref{lemma:inner-prod grad reference pt} to see $2-y\langle\nabla f(\mathbf{x};\mathbf{W}^{(t)}),\mathbf{W}^\ast\rangle\leq-2\epsilon<0$, so that
    \begin{align*}
        \left|\ell^{(t)}\cdot y\cdot\big(2-y\langle\nabla f(\mathbf{x};\mathbf{W}^{(t)}),\mathbf{W}^\ast\rangle\big)\right|\leq\frac{1}{2}\big(\ell^{(t)}\big)^2+2\epsilon^2=L(\mathbf{W}^{(t)})+2\epsilon^2.
    \end{align*}
    As a result, we would know $\langle\nabla L(\mathbf{W}^{(t}),\mathbf{W}^{(t)}-\mathbf{W}^\ast\rangle\ge 3L(\mathbf{W}^{(t)})-2\epsilon^2$. Next, an upper bound on the second order term $\eta^2\cdot\|\nabla L(\mathbf{W}^{(t})\|_F^2$ is given by
    \begin{align*}
        \eta^2\cdot\|\nabla L(\mathbf{W}^{(t)})\|_F^2&=\eta^2\cdot\big(\ell^{(t)}\big)^2\cdot\left(\|\mathbf{u}\|^2_2\sum_{j\in\{\pm 1\},r\in[m]}\sigma^\prime(\langle\mathbf{w}^{(t)}_{j,r},y\mathbf{u}\rangle)+\|\mathbf{v}\|_2^2\sum_{j\in\{\pm 1\},r\in[m]}\sigma^\prime(\langle\mathbf{w}^{(t)}_{j,r},y\mathbf{v}\rangle)\right)\\
        &\le \mathcal{O}\big(\max\{\|\mathbf{u}\|_2^2,\|\mathbf{v}\|_2^2\}\big)\cdot\eta^2\cdot L(\mathbf{W}^{(t}),
    \end{align*}
    since the dynamics of the inner products $\langle\mathbf{w}^{(t)}_{j,r},y\mathbf{u}\rangle,\langle\mathbf{w}^{(t)}_{j,r},y\mathbf{v}\rangle$ are well bounded by $\mathcal{O}(1)$ throughout the time we are considering.
    By scaling $\eta \mathcal{O}(\max\{\|\mathbf{u}\|^2,\|\mathbf{v}\|^2\})\le 1$, we would know that $\eta^2\|\nabla L(\mathbf{W}^{(t})\|_F^2\le \eta L(\mathbf{W}^{(t})$. Eventually, continued from \eqref{eq:para-L2-distance difference expansion}, we can completely prove this lemma.
\end{proof}

Equipped with Lemmas~\ref{lemma:one-data small-lr 2nd starts}, \ref{lemma:inner-prod grad reference pt}, and \ref{lemma:one-data small-lr descent lemma}, we are ready to prove the main lemma for the second stage.

\begin{proof}[Proof of Lemma~\ref{lemma:one-data small-lr 2nd-stage}]
Continued from Lemma~\ref{lemma:one-data small-lr descent lemma}, for any $t\ge T^\dagger$, it holds that
\begin{equation*}
    \frac{1}{t-T^\dagger+1}\sum_{s=T^\dagger}^{t}L(\mathbf{W}^{(s)})\le \frac{\|\mathbf{W}^{(T^\dagger)}-\mathbf{W}^\ast\|_F^2}{2\eta(t-T^\dagger+1)}+\epsilon^2.
\end{equation*}
Before proceeding to scale time $t$, it would be helpful to decompose $\|\mathbf{W}^{(T^\dagger)}-\mathbf{W}^\ast\|_F^2$ and to have an upper bound on this term,
\begin{align}
    &\|\mathbf{W}^{(T^\dagger)}-\mathbf{W}^\ast\|_F^2\\
    &\qquad  = \sum_{j\in\{\pm 1\},r\in[m]}\frac{\langle \wb_{j,r}^{(T^\dagger)}-\wb_{j,r}^\ast,\ub\rangle^2}{\norm{\ub}_2^2}+\frac{\langle \wb_{j,r}^{(T^\dagger)}-\wb_{j,r}^\ast,\vb\rangle^2}{\norm{\vb}_2^2}+\left\|\left(\Ib_d-\frac{\vb\vb^\top}{\|\vb\|_2^2} - \frac{\ub\ub^\top}{\|\ub\|_2^2}\right)(\wb_{j,r}^{(T^\dagger)}-\wb_{j,r}^\ast)\right\|_2^2\\
    &\qquad \le\sum_{j\in\{\pm 1\},r\in[m]}\frac{2\langle \wb_{j,r}^{(T^\dagger)},\ub\rangle^2+2\langle \wb_{j,r}^\ast,\ub\rangle^2}{\norm{\ub}_2^2}+\frac{\langle \wb_{j,r}^{(T^\dagger)},\vb\rangle^2}{\norm{\vb}_2^2}+\left\|\left(\Ib_d-\frac{\vb\vb^\top}{\|\vb\|_2^2} - \frac{\ub\ub^\top}{\|\ub\|_2^2}\right)\wb_{j,r}^{(0)}\right\|_2^2,\label{eq: proof lemma stage 2}
\end{align}
where we exploit the fact that $\wb^\ast$ is parallel to $\ub$ by Lemma \eqref{eq: w star one data}, and the gradient steps only updates $\wb$ along the directions of $\ub,\vb$. 
Recall that by Lemma~\ref{lemma:one-data small-lr 2nd starts},
\begin{align}
    \max_{j\in \pm 1,r\in[m]}\langle\wb^{(T^\dagger)}_{j,r},j\ub\rangle=\Omega(1),\quad  \max_{j\in \pm 1,r\in[m]}\left|\langle\wb^{(T^\dagger)}_{j,r},\vb\rangle\right| = \widetilde{\mathcal{O}}(\sigma_0\|\mathbf{v}\|_2),
\end{align}
and also that $\norm{\wb^{(0)}_{j,r}}=\widetilde{\mathcal{O}}(\sigma_0\sqrt{d})$, the leading term in \eqref{eq: proof lemma stage 2} would be $\sum_{j\in\{\pm 1\},r\in[m]}\langle \wb_{j,r}^\ast,\ub\rangle^2/\norm{\ub}^2_2$.
Therefore, we conclude that $\|\mathbf{W}^{(T^\dagger)}-\mathbf{W}^\ast\|_F^2\le Cm^3/\norm{\ub}_2^2$ for some constant $C>0$. 
As a result, the average loss after iterations $T^\dagger$ can be bounded by
\begin{equation*}
    \frac{1}{t-T^\dagger+1}\sum_{s=T^\dagger}^{t}L(\mathbf{W}^{(s)})\le \frac{Cm^3}{2\eta\norm{\ub}_2^2(t-T^\dagger+1)}+\epsilon^2,\quad \forall t^{\dagger}\leq t\leq T^{\ddagger}.
\end{equation*}
Then we choose time $T=T^\dagger+\lfloor Cm^3/(2\eta\epsilon\norm{\ub}_2^2)\rfloor$ as stated in Lemma~\ref{lemma:one-data small-lr 2nd-stage}.
Since $\norm{\ub}_2^2/\norm{\vb}_2^2\ge\widetilde{\Omega}(m^2)$ by Assumption~\ref{cond:model_params_one_data small-lr}, we can verify that $T\le T^\ddagger$.
In conclusion, the final output would be 
\begin{align}
    \frac{1}{T-T^\dagger+1}\sum_{s=T^\dagger}^{T}L(\mathbf{W}^{(s)})\le \epsilon+\epsilon^2\le 2\epsilon.
\end{align}    
This finishes the proof of Lemma~\ref{lemma:one-data small-lr 2nd-stage}.
\end{proof}

Combine Lemmas~\ref{lemma:one-data small-lr 1st-stage} and \ref{lemma:one-data small-lr 2nd-stage} to obtain the full version of Theorem~\ref{thm:one-data small-lr formal}.

\section{Proofs for Main Theoretical Results (Section~\ref{sec: main theory})}\label{sec: multiple data case proof}

In this section, we give a detailed proof for our main theoretical results for the multiple training data case, i.e., Theorem~\ref{thm: main}.
The proofs follow the similar idea as the proofs for single training data case (Appendix~\ref{sec: one data case proof}).
The readers interested in the proofs are encouraged to first go through Appendix~\ref{sec: one data case proof} to get the idea of the core steps.
In Appendix~\ref{subsec: preliminary analysis}, we give a preliminary analysis of the SGD training dynamics. 
In Appendix~\ref{subsec: overview of analysis}, we give an overview of the proofs with our fundamental reasoning towards weak signal learning.
In Appendix~\ref{subsec: proof main}, we give the proof of Theorem~\ref{thm: main}. 
We prove other lemmas in subsequent sections.

\subsection{Preliminary Analysis}\label{subsec: preliminary analysis}

Recall that $\mathcal{W}$ is the index set of training data points which lack the strong feature patch.
By Equation~\eqref{eq: two way multiple data sgd}, the CNN weights are updated according to 
\begin{align}
    \mathbf{w}_{j,r}^{(t+1)} &= \mathbf{w}_{j,r}^{(t)} - \frac{j\eta}{m}\cdot\big(f(\mathbf{x}_{i_t};\mathbf{W}^{(t)}) - y_{i_t}\big)\cdot\left(\sigma^\prime(\langle\mathbf{w}_{j,r}^{(t)},y_{i_t}\mathbf{u}\rangle)\cdot y_{i_t}\mathbf{u}\cdot\mathbf{1}\{i_t\notin\mathcal{W}\}\right. \\
    &\qquad + \sigma^\prime(\langle\mathbf{w}_{j,r}^{(t)},y_{i_t}\mathbf{v}\rangle)\cdot y_{i_t}\mathbf{v} + \sigma^{\prime}(\langle\mathbf{w}_{j,r}^{(t)}\cdot \boldsymbol{\xi}_{i_t}\rangle)\cdot\boldsymbol{\xi}_{i_t} + \left.\sigma^{\prime}(\langle\mathbf{w}_{j,r}^{(t)}\cdot \widetilde{\boldsymbol{\xi}}_{i_t}\rangle)\cdot\widetilde{\boldsymbol{\xi}}_{i_t}\cdot\mathbf{1}\{i_t\in\mathcal{W}\} \right).\label{eq: weight update}
\end{align}
Also, recall that the correct index sets for the strong and weak signal patches are defined as
\begin{align}\label{eq: index set signal appendix}
    \cU_{j,+}^{(t)}:= \left\{r\in[m]: \langle \mathbf{w}_{j,r}^{(t)}, j\mathbf{u}\rangle \geq 0\right\},\quad \cV_{j,+}^{(t)}:= \left\{r\in[m]: \langle \mathbf{w}_{j,r}^{(t)}, j\mathbf{v}\rangle \geq 0\right\},
\end{align} 
By the CNN expression \eqref{eq: cnn}, for each $j\in\{\pm 1\}$, the inner products that matter are 
\begin{enumerate}
    \item \emph{positive neurons:} $\langle \mathbf{w}_{j,r}^{(t)},j\mathbf{u}\rangle$ for $r\in\cU_{j,+}^{(t)}$, $\langle \mathbf{w}_{j,r}^{(t)},j\mathbf{v}\rangle$ for $r\in\cV_{j,+}^{(t)}$;
    \item \emph{negative neurons:} $\langle \mathbf{w}_{-j,r}^{(t)},j\mathbf{u}\rangle$ for $r\notin\cU_{-j,+}^{(t)}$, $\langle \mathbf{w}_{-j,r}^{(t)},j\mathbf{v}\rangle$ for $r\notin\cV_{-j,+}^{(t)}$.
\end{enumerate}
% \emph{(i) positive neurons:} $\langle \mathbf{w}_{j,r}^{(t)},j\mathbf{u}\rangle$ for $r\in\cU_{j,+}^{(t)}$, $\langle \mathbf{w}_{j,r}^{(t)},j\mathbf{v}\rangle$ for $r\in\cV_{j,+}^{(t)}$, and $\langle \mathbf{w}_{j,r}^{(t)},\boldsymbol{\xi}_i\rangle$.
% \emph{(ii) negative neurons:} $\langle \mathbf{w}_{-j,r}^{(t)},j\mathbf{u}\rangle$ for $r\notin\cU_{-j,+}^{(t)}$, $\langle \mathbf{w}_{-j,r}^{(t)},j\mathbf{v}\rangle$ for $r\notin\cV_{-j,+}^{(t)}$, and $\langle \mathbf{w}_{-j,r}^{(t)},\boldsymbol{\xi}_i\rangle$.
By \eqref{eq: weight update}, the update formula of these inner products of interests are given by
\begin{align}
    \langle \mathbf{w}_{j,r}^{(t+1)},j\mathbf{u}\rangle &= \langle \mathbf{w}_{j,r}^{(t)},j\mathbf{u}\rangle + \frac{\eta\|\mathbf{u}\|_2^2}{m}\cdot \big(1 - y_{i_t} f(\mathbf{x}_{i_t};\mathbf{W}^{(t)})\big)\cdot\sigma^{\prime}(\langle \mathbf{w}_{j,r}^{(t)},j\mathbf{u}\rangle jy_{i_t})\cdot\mathbf{1}\{i_t\notin\mathcal{W}\}, \label{eq: strong signal update}\\ 
    \langle \mathbf{w}_{j,r}^{(t+1)},j\mathbf{v}\rangle &= \langle \mathbf{w}_{j,r}^{(t)},j\mathbf{v}\rangle + \frac{\eta\|\mathbf{v}\|_2^2}{m}\cdot \big(1 - y_{i_t} f(\mathbf{x}_{i_t};\mathbf{W}^{(t)})\big)\cdot\sigma^{\prime}(\langle\mathbf{w}_{j,r}^{(t)}, j\mathbf{v}\rangle jy_{i_t}),\label{eq: weak signal update}
\end{align}
Also by \eqref{eq: weight update}, the update formula of the inner products with noise vectors are given by
\begin{align}
    \langle \mathbf{w}_{j,r}^{(t+1)},\boldsymbol{\xi}_{i}\rangle &= \langle \mathbf{w}_{j,r}^{(t)},\boldsymbol{\xi}_i\rangle + \frac{\eta\cdot jy_{i_t}}{m}\cdot \big(1 - y_{i_t} f(\mathbf{x}_{i_t};\mathbf{W}^{(t)})\big)\cdot\left(\sigma^{\prime}(\langle\mathbf{w}_{j,r}^{(t)}, \boldsymbol{\xi}_{i_t}\rangle)\cdot\langle\boldsymbol{\xi}_{i_t},\boldsymbol{\xi}_i\rangle\right.\\
    & \qquad +\left.\sigma^{\prime}(\langle\mathbf{w}_{j,r}^{(t)},\widetilde{\boldsymbol{\xi}}_{i_t}\rangle)\cdot\langle\widetilde{\boldsymbol{\xi}}_{i_t},\boldsymbol{\xi}_i\rangle\cdot\mathbf{1}\{i_t\in\mathcal{W}\}\right),\quad i\in[n], \label{eq: noise update} \\
    \langle \mathbf{w}_{j,r}^{(t+1)},\widetilde{\boldsymbol{\xi}}_{i}\rangle &= \langle \mathbf{w}_{j,r}^{(t)},\widetilde{\boldsymbol{\xi}}_i\rangle + \frac{\eta\cdot jy_{i_t}}{m}\cdot \big(1 - y_{i_t} f(\mathbf{x}_{i_t};\mathbf{W}^{(t)})\big)\cdot\left(\sigma^{\prime}(\langle\mathbf{w}_{j,r}^{(t)}, \boldsymbol{\xi}_{i_t}\rangle)\cdot\langle\boldsymbol{\xi}_{i_t},\widetilde{\boldsymbol{\xi}}_i\rangle\right.\\
    & \qquad +\left.\sigma^{\prime}(\langle\mathbf{w}_{j,r}^{(t)}, \widetilde{\boldsymbol{\xi}}_{i_t}\rangle)\cdot\langle\widetilde{\boldsymbol{\xi}}_{i_t},\widetilde{\boldsymbol{\xi}}_i\rangle\cdot\mathbf{1}\{i_t\in\mathcal{W}\}\right),\quad i\in\mathcal{W}.\label{eq: noise update weak}
\end{align}
At first glance, the update formulas given above seem intangible. The following proposition indicates that we can separate the neurons into two parts, with each individual part learning one kind of sample independently.
For simplicity, we let $\et\coloneqq 2\eta \norm{\ub}^2_2/m$ and $\alpha = \norm{\vb}_2^2 /\norm{\ub}_2^2$ and $\ft_t = y_{{i_t}} f(\xb_{i_t}; \Wb^{(t)})$. 
For any $j\in \{\pm 1\}$ and $s\ge 0$, we define effective running time for learning label-$j$-samples as 
\begin{align}\label{eq: tjs}
t_j(s)=\min\big\{t\in \NN :t>t_j(s-1),\; y_{i_t} = j\big\},
\end{align}
with $t_j(0)=\min\{t\in \NN :y_{i_t} = j\}$. 

\begin{proposition}\label{prop: independent iteration}
Suppose that the sign stability condition holds before some $T_{\mathrm{sign}}$, i.e., $\cU^{(t)}_{\pm j,\pm} =\cU^{(0)}_{\pm j,\pm}:=\cU_{\pm j,\pm}$ and $\cV^{(t)}_{\pm j,\pm} =\cV^{(0)}_{\pm j,\pm}:=\cV_{\pm j,\pm}$ for $t\le T_{\mathrm{sign}}$. Then for $s$ such that $t_j(s)\le T_{\mathrm{sign}}$, it holds that 
    \begin{align}
        \dotp{\wb_{j,r}^{(t_j(s+1))}}{j\ub}&=\dotp{\wb_{j,r}^{(t_j(s))}}{j\ub} \cdot \big(1+\et (1-\ft_{t_j(s)})\cdot \mathbf{1}\{i_{t_j(s)} \in \cW\}\big), \quad \forall r\in \cU_{j,+};\label{eq: wu positive s ip iteration}\\
        \dotp{\wb_{-j,r}^{(t_j(s+1))}}{-j\ub}&=\dotp{\wb_{-j,r}^{(t_j(s))}}{-j\ub}\cdot \big(1-\et (1-\ft_{t_j(s)})\cdot \mathbf{1}\{i_{t_j(s)} \in \cW\}\big), \quad \forall r\in \cU_{-j,-};\label{eq: wu negative s ip iteration} \\ 
                \dotp{\wb_{j,r}^{(t_j(s+1))}}{j\vb}&=\dotp{\wb_{j,r}^{(t_j(s))}}{j\vb} \cdot \big(1+\alpha \et (1-\ft_{t_j(s)})\big), \quad \forall r\in \cV_{j,+};\label{eq: wv positive s ip iteration}\\
        \dotp{\wb_{-j,r}^{(t_j(s+1))}}{-j\vb}&=\dotp{\wb_{-j,r}^{(t_j(s))}}{-j\vb}\cdot \big(1-\alpha\et (1-\ft_{t_j(s)})\big), \quad \forall r\in \cV_{-j,-}.\label{eq: wv negative s ip iteration}
    \end{align}
Moreover, for every $t\in (t_j(s),t_j(s+1)]$, it holds that 
\begin{align}
    \dotp{\wb_{j,r}^{(t)}}{j\ub} &= \dotp{\wb_{j,r}^{(t_j(s)+1)}}{j\ub}, \quad \forall r\in \cU_{j,+};\\
    \dotp{\wb_{-j,r}^{(t)}}{-j\ub} &= \dotp{\wb_{-j,r}^{(t_j(s)+1)}}{-j\ub}, \quad \forall r\in \cU_{-j,-};\\
    \dotp{\wb_{j,r}^{(t)}}{j\vb} &= \dotp{\wb_{j,r}^{(t_j(s)+1)}}{j\vb}, \quad \forall r\in \cV_{j,+};\\
    \dotp{\wb_{-j,r}^{(t)}}{-j\vb} &= \dotp{\wb_{-j,r}^{(t_j(s)+1)}}{-j\vb}, \quad \forall r\in \cV_{-j,-}.
\end{align}
\end{proposition}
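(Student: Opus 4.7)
The plan is to derive all four multiplicative update formulas from the single-step iterations \eqref{eq: strong signal update}--\eqref{eq: weak signal update} by turning the $\sigma'(\cdot)$ switches into deterministic indicators under the sign-stability hypothesis, and then to observe that these indicators only fire at steps of the form $t=t_j(s)$.

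First I would use $\sigma'(z)=2\max\{z,0\}$ together with the sign stability $\cU^{(t)}_{\pm j,\pm}=\cU_{\pm j,\pm}$ and $\cV^{(t)}_{\pm j,\pm}=\cV_{\pm j,\pm}$ for $t\le T_{\mathrm{sign}}$. For $r\in\cU_{j,+}$ stability forces $\langle\wb_{j,r}^{(t)},j\ub\rangle\ge 0$, so
\[
\sigma'\bigl(\langle\wb_{j,r}^{(t)},j\ub\rangle\,jy_{i_t}\bigr)=2\langle\wb_{j,r}^{(t)},j\ub\rangle\cdot\mathbf{1}\{y_{i_t}=j\}.
\]
Substituting into \eqref{eq: strong signal update}, and noting that $y_{i_t}=j$ makes $1-y_{i_t}f(\xb_{i_t};\Wb^{(t)})=1-\ft_t$, produces the pointwise one-step recursion
\[
\langle\wb_{j,r}^{(t+1)},j\ub\rangle=\langle\wb_{j,r}^{(t)},j\ub\rangle\cdot\bigl(1+\et(1-\ft_t)\,\mathbf{1}\{y_{i_t}=j\}\,\mathbf{1}\{i_t\notin\cW\}\bigr).
\]
The same recipe applied to $r\in\cU_{-j,-}$ picks up an additional minus sign, because $\langle\wb_{-j,r}^{(t)},-j\ub\rangle<0$ forces the relevant $\sigma'$ to fire only when $y_{i_t}=j$ with the opposite sign, replacing $1+\et(\cdots)$ by $1-\et(\cdots)$. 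The two weak-signal formulas come from applying the identical argument to \eqref{eq: weak signal update}, with no $\mathbf{1}\{i_t\notin\cW\}$ factor because every data point carries a weak-signal patch.

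With these pointwise recursions in hand, both halves of the proposition follow by counting. By the definition of $t_j(s)$ in \eqref{eq: tjs}, for every $t$ strictly between $t_j(s)$ and $t_j(s+1)$ we have $y_{i_t}\ne j$, so the indicator $\mathbf{1}\{y_{i_t}=j\}$ is off and each of the four inner products is preserved by the one-step update. This immediately yields the ``between'' statement $\langle\wb_{j,r}^{(t)},j\ub\rangle=\langle\wb_{j,r}^{(t_j(s)+1)},j\ub\rangle$ for $t\in(t_j(s),t_j(s+1)]$, and likewise for the remaining three coordinates. Composing this piecewise-constancy with the single fired update at $t=t_j(s)$ then produces \eqref{eq: wu positive s ip iteration}--\eqref{eq: wv negative s ip iteration}. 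The only place demanding real care is the sign bookkeeping in the negative-neuron cases, where one must verify that the external $-j$ from the $\mathbf{w}_{-j,r}$ gradient, the $jy_{i_t}$ appearing inside $\sigma'$ after rewriting things in terms of $\langle\cdot,-j\ub\rangle$ or $\langle\cdot,-j\vb\rangle$, and the strict negativity of the inner product conspire to flip $1+\et(\cdots)$ into $1-\et(\cdots)$ without disturbing either indicator; I do not anticipate any obstacle beyond this bookkeeping.
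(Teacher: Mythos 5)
Your proposal is correct and follows essentially the same route as the paper's own proof: resolve $\sigma'$ into an indicator using the sign-stability hypothesis, observe that between consecutive effective times $t_j(s)$ and $t_j(s+1)$ the indicator $\mathbf{1}\{y_{i_t}=j\}$ is off so the inner products are frozen, and then compose the single firing update at $t_j(s)$ with this piecewise-constancy. One small remark: your pointwise recursion correctly carries the factor $\mathbf{1}\{i_t\notin\cW\}$ for the strong-signal coordinates, whereas the proposition as stated (and the paper's proof) writes $\mathbf{1}\{i_{t_j(s)}\in\cW\}$, which is a sign typo inherited from the display \eqref{eq: strong signal update}; your reading is the intended one.
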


\begin{proof}[Proof of Proposition~\ref{prop: independent iteration}]
See Appendix~\ref{proof: independent iter} for a detailed proof.
\end{proof}

This proposition helps to break down the dynamics of multiple data training into two folds. To see this, it suffices to note that the following components
\begin{align}
    \ft_{t_j(s)}, \quad \left\{\dotp{\wb_{j,r}^{(t)}}{j\ub}\right\}_{r\in \cU_{j,+}}, \quad \left\{\dotp{\wb_{-j,r}^{(t)}}{-j\ub}\right\}_{r\in \cU_{-j,-}}, \quad \left\{\dotp{\wb_{j,r}^{(t)}}{j\vb}\right\}_{r\in \cV_{j,+}}, \quad \left\{\dotp{\wb_{-j,r}^{(t)}}{-j\vb}\right\}_{r\in \cV_{-j,+}}
\end{align}
%are coupled in the SGD iteration and 
are independent with $\{\ft_{t}\}_{i_t = -j}$ and the rest inner products associated with $-j$.

\subsection{Overview of Analysis}\label{subsec: overview of analysis}

The roadmap towards proving our main theorem shares nearly the same logic with the proof of single data setup (Appendix~\ref{sec: one data case proof}).
Basically, as long as the weak signal component and the noise component are not learned in the sense that these inner products remain negligible, we can prove that the inner products associated with the strong signal would dominate and the oscillation would accumulate at a linear rate. 
This further gives Lemma~\ref{lem: linear accumulation} which shows the CNN would learn effectively learn the weak signal $\langle \mathbf{w}_{j,r}^{(t)},j\mathbf{v}\rangle$. 
Meanwhile, we can prove that the influences from the negative part of weak signal learning $\langle \mathbf{w}_{-j,r}^{(t)},j\mathbf{v}\rangle$ and the noise memorization $\langle \mathbf{w}_{\pm 1,r}^{(t)},\boldsymbol{\xi}_i\rangle$,  $\langle \mathbf{w}_{\pm 1,r}^{(t)},\widetilde{\boldsymbol{\xi}}_i\rangle$ can be well controlled (Propositions~\ref{prop: noise memorization}).
Putting all together, we can prove the main result Theorem~\ref{thm: main}.

To be formal, we define two important stopping times as follows:
\begin{align}
    T^{j}_{(\mathbf{v})} &= \min_{t\geq 0}\left\{t: \frac{1}{m}\sum_{r\in[m]}\sigma(\langle\mathbf{w}_{j,r}^{(t)},j\mathbf{v}\rangle)\ge  \delta/2\right\}, \label{eq: Tv j}\\
     T_{(\boldsymbol{\xi})} &= \min_{t\geq 0}\left\{t: \max_{r\in [m], j\in\{\pm 1\}}\left\{\max_{i\in[n]}\left|\langle \mathbf{w}_{j,r}^{(t)},\boldsymbol{\xi}_i\rangle\right|,\max_{i\in\mathcal{W}}\left|\langle \mathbf{w}_{j,r}^{(t)},\widetilde{\boldsymbol{\xi}}_i\rangle\right|\right\}\ge \delta/4\right\}. \label{eq: T xi}
\end{align}
We recap that $\mathcal{W}$ denotes the index set of weak data, and $\widetilde{\boldsymbol{\xi}}_i$ denotes the Gaussian noise appearing on the lacking strong signal patch for those weak data.
Also we note that $T^{j}_{(\mathbf{v})}$, $T_{(\boldsymbol{\xi})}\le +\infty$, where the equal sign is attainable.
We then define 
\begin{align}
    T_{\max}^j = \min\left\{T^{j}_{(\mathbf{v})}, T_{(\boldsymbol{\xi})}\right\}.    
\end{align}
  
% Then by definition \eqref{eq: event} we know that event $\mathcal{E}_j^{T_{\max}^j}$ holds.

In the first place, following the same arguments as in the single data setup (Appendix~\ref{sec: one data case proof}), we can derive the boundedness and sign stability results before time $T_{\max}^j$. 

\begin{lemma}[Boundedness and sign stability]\label{lem: boundedness}
Under Assumptions~\ref{ass: conditions} and~\ref{ass: oscillation}, for fixed $j\in \{\pm 1\}$, the followings hold with probability at least $1 - p = 1-1/\mathrm{poly}(d)$:
    \begin{enumerate}[leftmargin=0.6cm]
        \item it holds that $\cU_{j,+}^{(t)} = \cU_{j,+}^{(0)}\neq \emptyset$ and $\cV_{j,+}^{(t)} = \cV_{j,+}^{(0)}\neq \emptyset$ for any $t\in[0,T_{\max}^j]$. Hence the superscript $(t)$ can be dropped;
        % \item there exists some $r^{\dagger}, r^{\sharp}\in[m]$ such that $r^{\dagger}\in\cV_{1,+}^{(t)}$ and $r^{\sharp}\in\cV_{-1,+}^{(t)}$ for any $t\in[0,T]$;
        \item for any $t\in[0,T_{\max}^j]$, we have that
        \begin{align}
            % \langle\mathbf{w}_{j,r}^{(t)},y\mathbf{u}\rangle \leq \overline{c}\cdot(\widetilde{\alpha}_{j,r})^{-\frac{1}{2}},\quad \text{with}\quad \widetilde{\alpha}_{j,r} = \frac{1}{m}\!\!\sum_{r'\in\cV_{j,+}^{(1)}}\alpha_{j}(r,r')^2,\quad \alpha_{j}(r,r') = \frac{\langle\mathbf{w}_{j,r}^{(t)},j\mathbf{u}\rangle}{\langle\mathbf{w}_{j,r'}^{(t)},j\mathbf{u}\rangle};
            \max_{r\in [m]} \dotp{\wb_{j,r}^{(t)}}{j\ub} &\le 1.5\cdot(1.05 \beta_{\ub,j}^* m)^{1/2},\label{eq wu upper bound general}
        \end{align}
        where $\beta_{\ub,j}^{*}$ is defined in Lemma~\ref{lem: similar behavior};
        \item for any $t\in[0,T_{\max}^j]$ it holds that 
        \begin{align}
     \min_{r\in [m]} \dotp{\wb_{-j,r}^{(t)}}{-j\ub} &\ge  -\MaxInitIPU, \label{eq min wu negative lower bound}\\
     \min_{r\in [m]} \dotp{\wb_{-j,r}^{(t)}}{-j\vb} &\ge  -\MaxInitIPV; \label{eq min wv negative lower bound}
        \end{align}
        \item for any $t\in[0,T_{\max}^j]$ such that $y_{i_t} = j$, it holds that 
        \begin{align}
            \big|1 - y_{i_t}f(\mathbf{x}_{i_t};\mathbf{W}^{(t)})\big|\leq 2.
        \end{align}
    \end{enumerate}
    % \begin{align}
    %     0\le y_t f() \le 3  ,\quad t\le T_{\max};
% \end{align}
\end{lemma}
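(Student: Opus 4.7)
The plan is to mirror the three-step inductive argument used for the single-data case (Lemma~\ref{lem: boundedness oscillating one data}), but reduce the multi-data dynamics to an effective single-data-like dynamics via Proposition~\ref{prop: independent iteration}. Fix $j\in\{\pm 1\}$ and restrict attention to the ``effective timeline'' $\{t_j(s)\}_{s\geq 0}$ defined in \eqref{eq: tjs}. By Proposition~\ref{prop: independent iteration}, before any sign flip the inner products $\dotp{\wb_{j,r}^{(t)}}{j\ub}$, $\dotp{\wb_{\pm j,r}^{(t)}}{\pm j\ub}$, $\dotp{\wb_{\pm j,r}^{(t)}}{\pm j\vb}$ update only at the strong-data steps $t_j(s)$ with $i_{t_j(s)}\notin\mathcal{W}$ (for $\ub$) or at all label-$j$ steps (for $\vb$), with a multiplicative factor depending on $\tilde f_{t_j(s)}=y_{i_{t_j(s)}}f(\xb_{i_{t_j(s)}};\Wb^{(t_j(s))})$. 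This reduces the problem to an iteration that looks structurally identical to \eqref{eq: ip positive gd one data}--\eqref{eq: ipv negative gd one data}, only now $\tilde f_{t_j(s)}$ also contains contributions from weak-data memorization and from noise inner products.

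Using this reduction, I would define strong-data oscillation markers $\bar{T}_k, \munderbar{T}_k$ in terms of the effective timeline $\{t_j(s)\}$ with $i_{t_j(s)}\notin\mathcal{W}$, exactly as in \eqref{eq: T bar}, and run the same four-step induction as in Step 2 of the proof of Lemma~\ref{lem: boundedness oscillating one data}: (2.1) bound $\max_r\dotp{\wb_{j,r}^{(t)}}{j\ub}$ above by $1.5(1.05\beta^*_{\ub,j}m)^{1/2}$ using Lemma~\ref{lem: similar behavior} applied to the CNN's $j$-component, combined with a one-step analysis at $\bar{T}_k-1$ using Proposition~\ref{prop: f Tk-1 lower bound}; (2.2) derive a lower bound at $\munderbar{T}_k$ that preserves the sign via the estimate $1 - \et((\et/2+\sqrt{\et^2/4+\et})^2-1)>1/4$ under Assumption~\ref{ass: conditions}; (2.3) lower bound the negative neurons $\dotp{\wb_{-j,r}^{(t)}}{-j\ub}$ and $\dotp{\wb_{-j,r}^{(t)}}{-j\vb}$ by the initialization bound $-\MaxInitIPU$ and $-\MaxInitIPV$ respectively, using the ratio identity \eqref{eq: ratio lower bound} together with \eqref{eq: <w,-u>  lower bound} in the effective timeline; (2.4) close the upper bound on the interval $(\munderbar{T}_k,\bar{T}_{k+1})$ again via Lemma~\ref{lem: similar behavior} and Assumption~\ref{ass: oscillation}. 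The item on $|1-\tilde f_{t}|\le 2$ then follows from \eqref{eq: f Tk upper bound}, with $\et\le 4/5$ enforced by Assumption~\ref{ass: conditions}.

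The main obstacle, which is new compared to the single-data proof, is controlling the ``cross-talk'' between strong-data and weak-data/noise contributions that appear inside $\tilde f_t$. Specifically, in the single-data case $E_t$ contained only the $-j$-neuron inner products against $\ub,\vb$, whereas here $\tilde f_t$ on a strong-data step also receives contributions from $\sigma(\langle\wb_{\pm j,r}^{(t)},\boldsymbol{\xi}_{i_t}\rangle)$ and, at weak-data steps, from $\sigma(\langle\wb_{\pm j,r}^{(t)},\widetilde{\boldsymbol{\xi}}_{i_t}\rangle)$. This is exactly why the stopping time $T_{(\boldsymbol{\xi})}$ in \eqref{eq: T xi} is built into $T^j_{\max}$: before $T_{(\boldsymbol{\xi})}$ every such noise inner product is bounded by $\delta/4$, hence the noise contribution to $\tilde f_t$ is at most a constant times $\delta^2$, which can be absorbed into the ``$o(1)$/$E_t$'' slack in Proposition~\ref{prop: f Tk-1 lower bound} under Assumption~\ref{ass: conditions}. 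Similarly, before $T^j_{(\mathbf{v})}$ the weak-signal contribution is at most $\delta/2$, which is the exact counterpart of the budget $\delta$ used in the single-data proof (see Assumption~\ref{ass: oscillation}). The residual low-order terms coming from cross-correlations $\langle\boldsymbol{\xi}_{i_t},\boldsymbol{\xi}_i\rangle$ and $\langle\widetilde{\boldsymbol{\xi}}_{i_t},\boldsymbol{\xi}_i\rangle$ are $\widetilde{\mathcal{O}}(\sigma_p^2\sqrt{d\log(n/p)})$ by Lemma~\ref{lem: noise norm and correlation}, negligible under the dimension condition $d=\Omega(n^2,\mathrm{polylog}(m))$ of Assumption~\ref{ass: conditions}.

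With these controls in place, the inductive extension carries through on $[0,T^j_{\max}]$, giving all four conclusions of the lemma. The sign stability for $\cV_{j,+}^{(t)}$ is derived at the end from sign stability for $\cU_{j,+}^{(t)}$ via the comparison $\norm{\vb}_2<\norm{\ub}_2$ exactly as in the last paragraph of the single-data proof, and the bound $|1-y_{i_t}f(\xb_{i_t};\Wb^{(t)})|\le 2$ for strong-data steps is precisely \eqref{eq: f Tk upper bound}, while on weak-data steps it follows from the CNN value bound $|y_{i_t}f(\xb_{i_t};\Wb^{(t)})|\le 2\delta/2 + 2(\delta/4)^2\cdot 2 + o(1)\le 1$, so $|1-\tilde f_t|\le 2$ holds uniformly. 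The high-probability conclusion then follows from a union bound over the failure events in Lemmas~\ref{lem: balanced sample}--\ref{lem: initialization}.
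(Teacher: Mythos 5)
Your proposal is correct and mirrors the paper's own proof almost step by step: reduction to the effective label-$j$ timeline via Proposition~\ref{prop: independent iteration}, strong-data oscillation markers $\bar{S}_{j,k},\munderbar{S}_{j,k}$ in place of $\bar{T}_k,\munderbar{T}_k$, the same four-substep induction (with Proposition~\ref{prop: f tk-1 lower bound general ver} playing the role you assign to Proposition~\ref{prop: f Tk-1 lower bound}), and absorption of the weak-signal and noise contributions into the $E_t$/$\Upsilon$ slack exactly as you describe. No meaningful gaps; your treatment of cross-talk through $T_{(\vb)}^j$, $T_{(\boldsymbol{\xi})}$, and the $d=\Omega(n^2)$ condition is the same mechanism the paper uses.
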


\begin{proof}[Proof of Lemma~\ref{lem: boundedness}]
See Appendix~\ref{subsec: proof boundedness} for a detailed proof.
\end{proof}

This boundedness and sign stability result further implies weak signal learning with an exponential rate, which is formally presented as follows.
 
\begin{lemma}[Weak signal learning]\label{lem: linear accumulation}
    Under Assumptions \ref{ass: conditions} and \ref{ass: oscillation}, with probability at least $1-1/\mathrm{poly}(d)$, it holds that for any $j\in\{\pm 1\}$ and $0\leq t_1\leq t_2\leq T_{\max}^j$ that 
    \begin{align}
        \sum_{\substack{s=t_1 \\ y_{i_s} = j}}^{t_2} \big(1 - y_{i_s}f(\mathbf{x}_{i_s};\mathbf{W}^{(s)}) \big)\geq \frac{\delta}{16}\cdot\big(1 - (1.05 - \delta/4 )^{\frac{1}{2}}\big) \cdot(t_2 - t_1 + 1) - \frac{m {(1.05)^{\frac{1}{2}}}}{2\eta\|\mathbf{u}\|_2^2(1.05 - \delta/4)^{\frac{1}{2}}},
    \end{align}
    where $\delta$ is specified in Assumption \ref{ass: oscillation}. Moreover, for $r\in \cV_{j,+}^{(0)}$, we have that
        \begin{align}
        \!\!\!\!\!\!\!\!\!\!\dotp{\wb_{j,r}^{(t_2+1)}}{j\vb} &\geq \dotp{\wb_{j,r}^{(t_1)}}{j\vb} \!\cdot\!\exp\left\{\frac{\eta\|\mathbf{v}\|_2^2}{32m}\!\cdot\!\big(\delta - \delta(1.05 - \delta/4 )^{\frac{1}{2}}\big) \!\cdot\!(t_2 - t_1 + 1) - \frac{\|\mathbf{v}\|_2^2 {(1.05 )^{\frac{1}{2}}}}{\|\mathbf{u}\|_2^2(1.05 - \delta/4)^{\frac{1}{2}}}\right\}. \label{eq:exp linear acc}
    \end{align}
\end{lemma}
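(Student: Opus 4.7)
The plan is to mirror the single-data argument from Lemma~\ref{lem: linear increment one data} with the appropriate bookkeeping that accounts for (a) the SGD updates touching only one sample per step and (b) the fact that Assumption~\ref{ass: oscillation} applies only to strong training data ($i_t \notin \mathcal{W}$). The starting point will be Proposition~\ref{prop: independent iteration}, which lets me restrict attention to steps with $y_{i_t}=j$, and Lemma~\ref{lem: boundedness}, which yields sign stability on $[0,T_{\max}^j]$ together with the two-sided bounds $\max_r\langle \wb_{j,r}^{(t)},j\ub\rangle \le 1.5(1.05\beta_{\ub,j}^* m)^{1/2}$, $\min_r\langle \wb_{-j,r}^{(t)},-j\ub\rangle \ge -\widetilde{O}(\sigma_0\|\ub\|_2)$, and similarly for $\vb$. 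Together these reduce the problem to the strong-signal dynamics of the maximum neuron $r^\star = \argmax_r \langle\wb_{j,r}^{(t_1)},j\ub\rangle$.

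For the first conclusion, I will sum the iteration~\eqref{eq: wu positive s ip iteration} along the effective-time indices $t_j(s)$ and note that the weak-data steps ($i_{t_j(s)}\in\cW$) do not change $\langle\wb_{j,r}^{(t)},j\ub\rangle$ at all, so only strong-data steps contribute. This gives the telescoping identity
\begin{align}
\langle \wb_{j,r^\star}^{(t_2+1)},j\ub\rangle - \langle \wb_{j,r^\star}^{(t_1)},j\ub\rangle = \tfrac{2\eta\|\ub\|_2^2}{m}\!\!\!\!\sum_{\substack{s\in[t_1,t_2]:\\ y_{i_s}=j,\, i_s\notin\cW}}\!\!\!\!\big(1-y_{i_s}f(\xb_{i_s};\Wb^{(s)})\big)\cdot\langle \wb_{j,r^\star}^{(s)},j\ub\rangle.
\end{align}
Then I will split the sum according to whether $y_{i_s}f(\xb_{i_s};\Wb^{(s)})\ge 1$ or $<1$. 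Using Assumption~\ref{ass: oscillation} together with the definitions of $T^j_{(\vb)}$ and $T_{(\bxi)}$ and Lemma~\ref{lem: similar behavior} (applied with the constant $1.05-\delta/4$ coming from the noise/negative-part budget of order $\delta/4$), the inner products in the two sub-sums will be comparable to $(\beta_{\ub,j}^\star m)^{1/2}$ and $(1.05-\delta/4)^{1/2}(\beta_{\ub,j}^\star m)^{1/2}$ respectively. Combining the LHS bound from Lemma~\ref{lem: boundedness} with the split will give the key inequality relating the two half-sums, and then choosing whichever of $\{y_{i_s}f\ge 1\}$ or $\{y_{i_s}f<1\}$ contains at least half of the strong-data indices inside $[t_1,t_2]$ yields the advertised linear lower bound, with the constant-order boundary term $m(1.05)^{1/2}/(2\eta\|\ub\|_2^2(1.05-\delta/4)^{1/2})$.

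For the exponential growth~\eqref{eq:exp linear acc}, I will take logarithms in the weak-signal recursion~\eqref{eq: wv positive s ip iteration}, which is updated at \emph{every} step with $y_{i_s}=j$ (both strong and weak). Using the sign stability $1+\alpha\et(1-\tilde f_s)>0$ from Lemma~\ref{lem: boundedness} plus the elementary bound $\log(1+z)\ge z - 5z^2$ for the range of $z=\alpha\et(1-\tilde f_s)$ permitted by Assumption~\ref{ass: conditions} (which makes $\alpha=\|\vb\|_2^2/\|\ub\|_2^2$ small), I reduce the estimate to $\sum_{s\colon y_{i_s}=j}(1-\tilde f_s)$. The strong-data portion of this sum is already handled by the first conclusion; the weak-data portion is controlled because $|\cW|$ is small (Lemma~\ref{lem: balanced weak data}) and each summand is bounded in magnitude by $\widetilde O(1)$ thanks to Lemma~\ref{lem: boundedness} — an $O(|\cW|)$ additive error that is absorbed into the boundary constant. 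Exponentiating then yields~\eqref{eq:exp linear acc}.

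The main obstacle is the split between the two data regimes: unlike the single-data case, the oscillation hypothesis is silent on weak data, and I must verify that weak-data steps neither contaminate the oscillation-balance identity (they cannot, since the strong-signal update is switched off on $\cW$) nor destabilize the weak-signal growth (they can, but only through the small set $\cW$, whose contribution is bounded uniformly). A subsidiary technical point is to confirm that the Lemma~\ref{lem: similar behavior} bound applied here carries the correct $(1.05-\delta/4)$ slack rather than the $(1.05-\delta)$ one used in the single-data proof — the extra $\delta/4$ comes from the noise-memorization budget in~\eqref{eq: T xi}, and I will need to track this carefully through the split-sum inequality so that the final constants in~\eqref{eq:exp linear acc} come out as stated.
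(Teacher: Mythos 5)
The overall architecture of your proposal matches the paper's: telescoping the strong-signal recursion on the maximal neuron, splitting by $y_{i_s}f\gtrless 1$, applying Lemma~\ref{lem: similar behavior} with the $(1.05-\delta/4)$ slack, and then passing to the weak-signal recursion via a logarithmic bound. Your substitution of the elementary inequality $\log(1+z)\ge z-5z^2$ for the paper's integral-representation trick in the exponential step is a legitimate alternative and carries no cost.

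There is, however, a genuine gap in how you account for the weak-data steps. You assert that the weak-data portion of $\sum_{s:y_{i_s}=j}(1-y_{i_s}f(\xb_{i_s};\Wb^{(s)}))$ is ``an $O(|\cW|)$ additive error that is absorbed into the boundary constant.'' This is wrong: the SGD is multi-pass, so over the window $[t_1,t_2]$ it revisits each weak training point roughly $(t_2-t_1+1)/n$ times, and therefore the number of weak-data steps is $\Theta(\rho(t_2-t_1+1))$, not $O(|\cW|)$. Since $|1-y_{i_s}f(\xb_{i_s};\Wb^{(s)})|\le 2$ on these steps (Lemma~\ref{lem: boundedness}), the weak-data sum can be as negative as $-4\rho(t_2-t_1+1)$ — a term that grows \emph{linearly} in $t_2-t_1$, competing directly with the $\Theta(\delta(t_2-t_1+1))$ gain from the strong-data part. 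It cannot be stashed into the constant boundary term $m(1.05)^{1/2}/(2\eta\|\ub\|_2^2(1.05-\delta/4)^{1/2})$, which does not grow with time.

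The correct argument — and the paper's — invokes the smallness condition $\rho\le c$ in Assumption~\ref{ass: conditions}(v): under that condition one has $|\{t_1\le s\le t_2: y_{i_s}=j, i_s\in\cW\}|\le 2\rho(t_2-t_1+1)\le \frac{\delta}{32}(1-(1.05-\delta/4)^{1/2})(t_2-t_1+1)$, so the weak-data sum contributes at worst $-\frac{\delta}{16}(1-(1.05-\delta/4)^{1/2})(t_2-t_1+1)$. Subtracting this from the strong-data lower bound (which has coefficient $\frac{\delta}{8}$, coming from the factor $\frac{1}{4}$ in the strong-data-label-$j$ count times the factor $\frac{1}{2}$ in the half-majority split) yields precisely the $\frac{\delta}{16}$ coefficient advertised in the lemma. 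Your write-up should make this explicit: the $\delta/16$ in the statement is not what the strong-data calculation gives alone, and you do not get it for free; you need $\rho$ small, and the weak-data contribution is a competing first-order term, not a boundary correction.
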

\begin{proof}[Proof of Lemma~\ref{lem: linear accumulation}]
    See Appendix~\ref{subsec: proof linear accumulation} for a detailed proof.
\end{proof}

The last component to complete our proof of main theorem is controlling noise memorization during the training process. 
For simplicity, we define the maximum absolute value of the noise inner products over data as 
\begin{align}
    \Upsilon^{(t)} = \max_{r\in [m], j\in\{\pm 1\}}\left\{\max_{i\in[n]}\left|\langle \mathbf{w}_{j,r}^{(t)},\boldsymbol{\xi}_i\rangle\right|,\max_{i\in\mathcal{W}}\left|\langle \mathbf{w}_{j,r}^{(t)},\widetilde{\boldsymbol{\xi}}_i\rangle\right|\right\}.
\end{align}

We have the following proposition to control the growth of $\Upsilon^{(t)}$.

\begin{proposition}[Noise memorization]\label{prop: noise memorization}
    Under Assumptions \ref{ass: conditions} and \ref{ass: oscillation}, then with probability at least $1-1/\mathrm{poly}(d)$, it holds for any $0\leq t_1  \leq \min_{j\in\{\pm 1\}}\{T_{\max}^j\} - \widetilde{T}_{(\boldsymbol{\xi})}$ and $j\in\{\pm 1\}$ that 
    \begin{align}
        \Upsilon^{(t)}\leq \Upsilon^{(t_1)}\cdot(1+\epsilon),\quad \forall r\in[m],\quad \forall t_1\leq t\leq t_1+\widetilde{T}_{(\boldsymbol{\xi})}
    \end{align}
    where $\widetilde{T}_{(\boldsymbol{\xi})} = \widetilde{\Theta}(mn\cdot\eta^{-1}\cdot\epsilon\cdot(1+\epsilon)^{-1}\cdot(\sigma_p^2d)^{-1})$ for any $\epsilon>0$.
\end{proposition}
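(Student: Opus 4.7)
The plan is to bound the one-step growth of every noise inner product $\langle \mathbf{w}_{j,r}^{(t)}, \boldsymbol{\xi}_i\rangle$ and $\langle \mathbf{w}_{j,r}^{(t)}, \widetilde{\boldsymbol{\xi}}_i\rangle$ via the update rules \eqref{eq: noise update} and \eqref{eq: noise update weak}, then sum these contributions over the window $[t_1, t_1 + \widetilde{T}_{(\boldsymbol{\xi})}]$. Throughout this window we stay strictly before $\min_j T_{\max}^j$, so in particular before both $T_{(\boldsymbol{\xi})}$ and $T^j_{(\vb)}$, which gives us two decisive a priori controls: (i) $\Upsilon^{(t)} \leq \delta/4$, so $\sigma'(\langle \mathbf{w}_{j,r}^{(t)}, \boldsymbol{\xi}_{i_t}\rangle) = 2\langle \mathbf{w}_{j,r}^{(t)}, \boldsymbol{\xi}_{i_t}\rangle_+ \leq 2\Upsilon^{(t)}$, and (ii) by Lemma~\ref{lem: boundedness} and Assumption~\ref{ass: oscillation} we have $|1 - y_{i_t} f(\mathbf{x}_{i_t}; \mathbf{W}^{(t)})| \leq 3$ for the strong-data steps and a similar constant bound also holds for weak-data steps since the strong-signal component does not contribute there and the remaining components are of smaller order. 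The high-probability noise concentration from Lemma~\ref{lem: noise norm and correlation}, namely $\|\boldsymbol{\xi}_i\|_2^2 \leq 3\sigma_p^2 d / 2$ and $|\langle \boldsymbol{\xi}_i, \boldsymbol{\xi}_{i'}\rangle| \leq 2\sigma_p^2 \sqrt{d\log(2n/p)}$ for $i\neq i'$, will serve as the essential separation between diagonal and off-diagonal contributions.

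I would then split the one-step increment of $|\langle \mathbf{w}_{j,r}^{(t+1)}, \boldsymbol{\xi}_i\rangle|$ into a ``direct hit'' part (the term with $i_t = i$ in \eqref{eq: noise update}) and a ``cross'' part (all other sample indices). The direct-hit part is bounded by $C_1 \eta \sigma_p^2 d / m \cdot \Upsilon^{(t)}$ for some absolute constant $C_1$, using (i), (ii), and the noise-norm bound. The cross part, in contrast, involves $|\langle \boldsymbol{\xi}_{i_t}, \boldsymbol{\xi}_i\rangle|$, which is a factor $\widetilde{\mathcal{O}}(d^{-1/2})$ smaller than the diagonal term, so summed over all $n$ samples in one epoch it still contributes only $\widetilde{\mathcal{O}}(\eta \sigma_p^2 \sqrt{d} \cdot n / m) \cdot \Upsilon^{(t)}$, which is negligible compared with the direct hit under the condition $d = \Omega(n^2)$ in Assumption~\ref{ass: conditions}. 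Because we run multi-pass SGD with a fixed order, in every window of $n$ consecutive steps each index $i$ is touched exactly once, so $\Upsilon$ grows by a multiplicative factor of at most $1 + C_2 \eta \sigma_p^2 d / m$ per epoch.

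Iterating this multiplicative bound over $K = \lfloor \widetilde{T}_{(\boldsymbol{\xi})}/n \rfloor$ epochs gives
\begin{align}
\Upsilon^{(t_1 + Kn)} \leq \left( 1 + \frac{C_2 \eta \sigma_p^2 d}{m} \right)^K \cdot \Upsilon^{(t_1)} \leq \exp\!\left\{\frac{C_2 K \eta \sigma_p^2 d}{m}\right\} \cdot \Upsilon^{(t_1)}.
\end{align}
Choosing $\widetilde{T}_{(\boldsymbol{\xi})}$ so that $K \eta \sigma_p^2 d/m = \widetilde{\Theta}(\epsilon/(1+\epsilon))$, i.e.\ $\widetilde{T}_{(\boldsymbol{\xi})} = \widetilde{\Theta}(mn \eta^{-1} \epsilon (1+\epsilon)^{-1} (\sigma_p^2 d)^{-1})$, converts the exponential into the required factor $1+\epsilon$ via $e^x \leq 1 + x \cdot e^x$ for small $x$. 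Taking a maximum over $j \in \{\pm 1\}$, $r \in [m]$, $i \in [n]$, and over $\{\boldsymbol{\xi}_i\}, \{\widetilde{\boldsymbol{\xi}}_i\}$ only multiplies the failure probability by a polynomial factor absorbed in $1/\mathrm{poly}(d)$.

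The main technical obstacle I anticipate is keeping the ``cross'' contributions under control \emph{uniformly} over the window, because the cross part at step $t$ depends on $|\langle \mathbf{w}_{j,r}^{(t)}, \boldsymbol{\xi}_{i_t}\rangle|$ which is itself being driven up by earlier updates; naively bounding it by $\Upsilon^{(t)}$ would couple the neurons through all noise directions. The way to escape this is to note that $\sigma'(\langle \mathbf{w}_{j,r}^{(t)}, \boldsymbol{\xi}_{i_t}\rangle) \cdot \langle \boldsymbol{\xi}_{i_t}, \boldsymbol{\xi}_i\rangle$ is always dominated by $\Upsilon^{(t)} \cdot |\langle \boldsymbol{\xi}_{i_t}, \boldsymbol{\xi}_i\rangle|$, and the small-correlation bound on $|\langle \boldsymbol{\xi}_{i_t}, \boldsymbol{\xi}_i\rangle|$ gives a cross-term growth rate of order $\widetilde{\mathcal{O}}(\eta \sigma_p^2 n /(m\sqrt{d}))$ per step, which under $d = \Omega(n^2)$ is asymptotically smaller than the direct-hit rate. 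A discrete Gr\"onwall-style argument along the $n$-periodic structure then yields the uniform bound $\Upsilon^{(t)} \leq (1+\epsilon)\Upsilon^{(t_1)}$ for all $t \in [t_1, t_1 + \widetilde{T}_{(\boldsymbol{\xi})}]$, completing the proof.
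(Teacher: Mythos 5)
Your proposal is correct and follows essentially the same route as the paper's proof: you split the one-step noise increment into a direct-hit term ($i_t = i$) and a cross term ($i_t \neq i$), invoke Lemma~\ref{lem: noise norm and correlation} for $\|\boldsymbol{\xi}_i\|_2^2 = \Theta(\sigma_p^2 d)$ and the $\widetilde{\mathcal{O}}(\sigma_p^2\sqrt{d})$ off-diagonal correlation, use $\sigma'(\cdot) \le 2\Upsilon^{(s)}$ and the $|1-y_{i_s}f|\le 2$ bound from Lemma~\ref{lem: boundedness}, note that $d = \Omega(n^2)$ makes the cross contribution over an epoch subordinate to the single direct hit, and close with a Gr\"onwall-type argument. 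The only packaging difference is that the paper runs a single global induction on the hypothesis $\Upsilon^{(s)}\le(1+\epsilon)\Upsilon^{(t_1)}$ over the whole window and accumulates the increment additively (getting $\Upsilon^{(t+1)}\le\Upsilon^{(t_1)}[1+\widetilde{\Theta}(\eta\sigma_p^2 d(1+\epsilon)(t-t_1+1)/(mn))]$), while you restate it as a per-epoch multiplicative bound iterated $K$ times and then linearize $(1+c)^K$; these are equivalent and give the same $\widetilde{T}_{(\boldsymbol{\xi})}$. One small typo: the per-step cross rate should read $\widetilde{\mathcal{O}}(\eta\sigma_p^2\sqrt{d}/m)$ (so $\widetilde{\mathcal{O}}(\eta\sigma_p^2\sqrt{d}\,n/m)$ per epoch), not $\widetilde{\mathcal{O}}(\eta\sigma_p^2 n/(m\sqrt{d}))$; under $d = \Omega(n^2)$ the conclusion that the cross term is dominated by the direct hit still holds, so this does not affect the argument.
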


\begin{proof}[Proof of Proposition \ref{prop: noise memorization}]
    See Appendix~\ref{subsec: proof noise memorization} for a detailed proof.
\end{proof}

\subsection{Proof of Theorem~\ref{thm: main}}\label{subsec: proof main}

With Lemma~\ref{lem: boundedness} and~\ref{lem: linear accumulation} and Proposition~\ref{prop: noise memorization}, we are ready to prove Theorem~\ref{thm: main}.
\begin{proof}[Proof of Theorem~\ref{thm: main}]
For the $j=\argmin_{j'\in\{\pm 1\}}\{T_{\max}^{j'}\}$, we are going to prove that $T_{\max}^{j}$ is bounded by a polynomial time by using contradiction. 
Specifically, we prove that
\begin{align}
    T_{\max}^j \le  T_{j,0} \coloneqq \frac{32 m}{\eta \norm{\vb}_2^2}\cdot \big(\delta-\delta(1.05 - \delta/4)^{\frac{1}{2}}\big)^{-1}\cdot \Bigg\{\log\left(\frac{2\sqrt{m\delta}}{\sigma_0 \norm{\vb}_2}\right)+1.5\cdot \frac{\|\mathbf{v}\|_2^2}{\|\mathbf{u}\|_2^2} \cdot \sqrt{\frac{1.05}{1.05 - \delta/4}}\Bigg\}.\label{eq: tmax upper bound}
\end{align}
Suppose that the result fails, then by definition we have that
\begin{align}
    T_{\max}^j = T_{(\vb)}^j \wedge T_{(\bxi)} > T_{j,0}.
\end{align}
Then Lemma~\ref{lem: boundedness} and Lemma~\ref{lem: linear accumulation} hold on $[0,T_{j,0}] $. By applying the lower bound in Inequality~\eqref{eq:exp linear acc} as well as Lemma~\ref{lem: initialization}, we have that
\begin{align}
 \max_{r\in [m]} \dotp{\wb_{j,r}^{(T_{j,0})}}{j\vb}&\geq  \max_{r\in [m]}\dotp{\wb_{j,r}^{(0)}}{j\vb} \cdot\exp\left\{\frac{\eta\|\mathbf{v}\|_2^2}{32m}\cdot\big(\delta - \delta(1.05 - \delta/4 )^{\frac{1}{2}}\big) \!\cdot\! T_{j,0}- 1.5\cdot\frac{\|\mathbf{v}\|_2^2}{\|\mathbf{u}\|_2^2} \cdot\sqrt{\frac{1.05}{1.05 - \delta/4}}\right\}\\
&\ge \frac{1}{2} \sigma_0 \norm{\vb}_2\cdot\frac{2\sqrt{m\delta}}{\sigma_0 \norm{\vb}_2  } \\
&=\sqrt{m\delta}.
\end{align}
This leads to the following,
\begin{align}
    \frac{1}{m}\sum_{r\in[m]} \sigma(\dotp{\wb^{(T_{j,0})}_{j,r}}{j\mathbf{v}})\ge \frac{1}{m}\cdot \max_{r\in[m]}(\dotp{\wb^{(T_{j,0})}_{j,r}}{j\mathbf{v}})^2\ge \delta,
\end{align}
which clearly contradicts the definition of $T^j_{(\vb)}$ in \eqref{eq: Tv j}, 
hence it must be that $T_{\max}^j\leq T_{j,0}<+\infty$.

In the following, we prove that $T_{\max}^j = T_{(\vb)}^j< T_{(\boldsymbol{\xi})}$, for which our conclusion directly follows due to \eqref{eq: Tv j}. 
Again we prove by contradiction.
Suppose that $T_{\max}^j = T_{(\boldsymbol{\xi})}< T_{(\vb)}^j$, then $\Upsilon^{(t)}$ would reach $\delta/4$ for time less than $T_{j,0}$ due to definition of $T_{(\boldsymbol{\xi})}$ in \eqref{eq: T xi}. 
But by Proposition~\ref{prop: noise memorization} with $\epsilon=1$, we know $\Upsilon^{(t)}$ takes at least 
\begin{align}
    K\widetilde{T}_{(\boldsymbol{\xi})}:= \widetilde{\Theta}\left(\frac{mn\sigma_p^2d}{\eta}\cdot\log\left(\frac{\delta}{\sigma_0\sigma_p\sqrt{d}}\right)\right)
\end{align}
steps to reach $\delta/4$, where $\widetilde{T}_{(\boldsymbol{\xi})}$ is defined in Proposition~\ref{prop: noise memorization}.
Then due to the scaling of the signal strength in Assumption~\ref{ass: conditions}, we can also find that $K\widetilde{T}_{(\boldsymbol{\xi})} > T_{j,0}$, which is a contradiction with the definition of $T_{(\boldsymbol{\xi})}$. 
Therefore, we must have that $T_{\max}^j = T_{(\vb)}^j$, which means that at $t^{\star} = T_{\max}^j$, 
\begin{align}
    \frac{1}{m}\sum_{r\in[m]}\sigma(\langle\mathbf{w}_{j,r}^{(t^{\star})},j\mathbf{v}\rangle)\ge  \frac{\delta}{2}.
\end{align}
In the meanwhile, Proposition~\ref{lem: boundedness} guarantees that at time $t^{\star} = T_{\max}^j$, 
\begin{align}
    \frac{1}{m}\sum_{r\in[m]}\sigma(\langle\mathbf{w}_{-j,r}^{(t^{\star})},j\mathbf{v}\rangle)\leq   \frac{\delta}{4}.
\end{align}
Thus, we conclude that at time $t^{\star}$, 
\begin{align}
    \frac{1}{m}\sum_{r\in[m]}\sigma(\langle\mathbf{w}_{j,r}^{(t^{\star})},j\mathbf{v}\rangle) - \frac{1}{m}\sum_{r\in[m]}\sigma(\langle\mathbf{w}_{-j,r}^{(t^{\star})},j\mathbf{v}\rangle) \geq \frac{\delta}{4}.
\end{align}
This finishes the proof of Theorem~\ref{thm: main}.
\end{proof}

%hence it must be the case that, either $T^j_{(\vb)} \le T_{j,0}$ and $T^j_{(\vb)}\le  T_{(\bxi)}$, or $ T_{(\bxi)} \le T_{j,0}$ and $T_{(\bxi)}  \le T_{(\vb)}^j$. We prove that latter case is impossible. Suppose that it's true, then clearly the results in Lemma~\ref{lem: boundedness} hold on $[0, T_{(\bxi)}]$, therefore by applying Proposition~\ref{prop: noise memorization}, we have that
% \begin{align}
%     \Upsilon^{(T_{(\bxi)})} &=  \Upsilon^{(0)} \!\cdot\! (1+\epsilon)
% \end{align}

\subsection{Proof of Lemma~\ref{lem: boundedness}}\label{subsec: proof boundedness}

\newcommand{\xt}{\widetilde{\bxi}} 
\begin{proof}[Proof of Lemma~\ref{lem: boundedness}]
Recall that $T_{\max}^j = T_{(\vb)}^j \wedge T_{(\bxi)}$, where
\begin{align}
    T_{(\vb)}^{j} &= \min_{t\geq 0}\left\{t:\frac{1}{m} \sum_{r\in [m]} \sigma\big(\dotp{\wb_{j,r}^{(t )}}{j\vb}\big) \ge \delta/2 \right\},\\
    T_{(\bxi)} &= \min_{t\geq 0}\left\{t: \max_{j\in\{\pm 1\},r\in[m],i\in [n]} \left|\dotp{\wb_{j,r}^{(t)}}{\bxi_i}\right| \vee \max_{j\in\{\pm 1\},r\in[m],i\in \cW}\left|\dotp{\wb_{j,r}^{(t)}}{\xt_{i}}\right|  \ge \delta/4\right\}.
\end{align}
We now define some notations to simplify our presentation. Recall that $\ft = y_{i_t}f(\xb_{i_t};\Wb^{(t)})$, $\et = 2\eta \norm{\ub}_2^2 /m$ and $\alpha = \norm{\vb}_2^2/\norm{\ub}_2^2$. 
For fixed $j$, we define
\begin{align}\label{eq: bar sjk}
\bar{S}_{j,k}\coloneqq \min\left\{s\in \NN: s>\bar{S}_{j,k-1} \text{ such that } \ft_{t_j(s)} \ge 1 \text{ and } \ft_{t_{j}(\max\{ s'<s  : i_{t_j(s')}\notin \cW\})}< 1\right\},
\end{align}
 with $\bar{S}_{j,0} = 0$, and 
 \begin{align}\label{eq: munderbar sjk}
 \munderbar{S}_{j,k}\coloneqq \min\left\{s\in \NN: s>\munderbar{S}_{j,k-1} \text{ such that } i_{t_j(s)}\notin \cW,\;\ft_{t_j(s)} < 1 \text{ and } \ft_{t_{j}(\max\{ s'<s  : i_{t_j(s')}\notin \cW\})}\ge 1\right\},
 \end{align}
 with $\munderbar{S}_{j,0} = 0$. From the definition of $T_{\max}^j$, we know that $i_{t_j(\bar{S}_{j,k})} \notin \cW$ for $k$ such that $t_j(\bar{S}_{j,k}) \le T_{\max}^j$, since otherwise $\ft_{t_j(\bar{S}_{j,k})}<\delta<1$. 
 Moreover we define 
 \begin{align}
     \cU_{j,+} \coloneqq\left\{r\in [m]:\dotp{\wb_{j,r}^{(t_j(0))}}{j\ub}>0\right\},\quad \cU_{j,-} \coloneqq\left\{r\in [m]:\dotp{\wb_{j,r}^{(t_j(0))}}{j\ub}\le 0\right\},
 \end{align}
and $\cV_{j,\pm}$ are defined analogously. 

The following analysis is nearly the same as the proof of one data case in the sense that each step of the proof is  organized exactly the same to the proof of one data case. Therefore we suggest readers to refer to the roadmap provided in Appendix~\ref{subsec: basic props one data} frequently for better understanding. 

We note that for $t\le T_{(\vb)}^j \wedge T_{(\bxi)}$, which is the time scale we are mainly focusing on, it holds that
\begin{align}
    \max_{j\in\{\pm 1\},r\in[m],i\in[n],i'\in\mathcal{W}} \left|\dotp{\wb_{j,r}^{(t)}}{\bxi_i}\right| \vee \left|\dotp{\wb_{j,r}^{(t)}}{\xt_{i'}}\right| &\le  \frac{\delta}{4},\quad \frac{1}{m}\sum_{r\in [m]} \sigma \big(\dotp{\wb_{j,r}^{(t)}}{j\vb}\big) <\frac{\delta}{2}.
\end{align}

Now we start presenting the proof of boundedness and sign stability with step-by-step analysis. The $j$ is arbitrary but fixed throughout analysis. 
In view of the effective running time for learning label-$j$-samples defined in \eqref{eq: tjs}, \eqref{eq: bar sjk}, and \eqref{eq: munderbar sjk}, the proofs for a certain $j$ directly do induction over $s$.

\paragraph{Step 1: Pre-$\bar{S}_{j,1}$ analysis.} Clearly at $s=0$, the lower bounds in Inequalities~\eqref{eq min wu negative lower bound} and~\eqref{eq min wv negative lower bound} are both guaranteed by Lemma~\ref{lem: initialization}. 
Also we know that $\ft_{t_j(0)}\ll 1$ and so $\bar{S}_{j,1}\ge 1$. The initialization also guarantees that the upper bound in Inequality~\eqref{eq wu upper bound general} holds at $s=0$.

For $s\in [0, \bar{S}_{j,1})$, the definition of $\bar{S}_{j,1}$ indicates that $\ft_{t_j(s)}\le 1$. Therefore, from Proposition~\ref{prop: independent iteration} we can see that the $\dotp{\wb_{j,r}^{t_j(s)}}{j\ub},\; r\in \cU_{j,+}$ and $\dotp{\wb_{-j,r}^{t_j(s)}}{-j\ub},\; r\in \cU_{-j, -}$ are non-decreasing in $s$ during this stage. One naturally infers that for all  $r\in \cU_{-j,-}$, it holds that
\begin{align}
    \dotp{\wb_{-j,r}^{(t_j(\bar{S}_{j,1}))}}{-j\ub} \ge \dotp{\wb_{-j,r}^{(t_j(\bar{S}_{j,1} - 1))}}{-j\ub} \ge \dotp{\wb_{-j,r}^{(t_j(0))}}{-j\ub} \ge - \MaxInitIPU.
\end{align}
Same can be verified for $\dotp{\wb_{-j,r}^{(t_j(s))}}{-j\vb}$ with $r\in \cV_{-j,-}$. Hence the lower bounds in Inequality~\eqref{eq min wu negative lower bound} and~\eqref{eq min wv negative lower bound} are extended to $s\in[0, \bar{S}_{j,k}]$. Also, for these inner products, the sign stability holds on $[0, \bar{S}_{j,1}]$, since $\et <4/5$ and thus we have
\begin{align}
    1\pm \frac{2\eta \norm{\ub}_2^2}{m} \cdot ( 1- \ft_{t_j(s)}) &\ge 1- \et \big(1+o(1)\big) \ge 0,\\
    1\pm \frac{2\eta \norm{\vb}_2^2}{m} \cdot ( 1- \ft_{t_j(s)}) &\ge 1- \alpha \et \big(1+o(1)\big)  \ge 0.
\end{align}

Then we turn to upper bound $ \dotp{\wb_{j,r}^{(t_j(s))}}{j\ub}$ for $s\in[0, \bar{S}_{j,1})$. Note that, for $s$ such that $i_{t_j(s)} \notin \cW$, the definition of $T_{\max}^j$ implies that 
\begin{align}
    1-\delta \ge \ft_{t_j(s)} &\ge  \frac{1}{m} \sum_{r\in [m]} \sigma \big(\dotp{\wb_{j,r}^{(t_j(s))}}{j\ub}\big) + \frac{1}{m} \sum_{r\in [m]}\sigma \bigl(\dotp{\wb_{j,r}^{(t_j(s))}}{j\vb} \bigr) \\
    &\qquad  - \frac{1}{m} \sum_{r\in [m]} \sigma \big(-\dotp{\wb_{-j,r}^{(t_j(s))}}{-j\ub}\big) - \frac{1}{m} \sum_{r\in [m]} \sigma \big(-\dotp{\wb_{-j,r}^{(t_j(s))}}{-j\vb}\big)\\
    &\qquad  -  \Big| \frac{1}{m} \sum_{r\in [m]} \sigma \big(\dotp{\wb_{j,r}^{(t_j(s))}}{\bxi_i}\big) + \frac{1}{m} \sum_{r\in [m]} \sigma \big(\dotp{\wb_{-j,r}^{(t_j(s))}}{\bxi_i}\big)  \Big|\\ 
    &\ge  \frac{1}{m} \sum_{r\in [m]} \sigma \big(\dotp{\wb_{j,r}^{(t_j(s))}}{j\ub}\big) - 2\times o(1) - 2\times \delta/4.
\end{align}
As a consequence, $$\frac{1}{m}\sum_{r\in [m]} \sigma \big(\dotp{\wb_{j,r}^{(t_j(s))}}{j\ub}\big)\le  1-\delta/2 +o(1)\le 1.05.$$ Thanks to the local sign stability, Lemma~\ref{lem: similar behavior} implies that 
\begin{align}
    \max_{r\in [m]} \dotp{\wb_{j,r}^{(t_j(s))}}{j\ub} \le (1.05 \beta^*_{\ub,j} m)^{1/2}.    
\end{align}
If otherwise $s\in [0,\bar{S}_{j,k-1})$ such that $i_{t_j(s)} \in \cW$, then we choose $\tilde s =\min\{s'> s: i_{t_j(s)}\notin \cW\}$. It holds that either $\tilde{s} <  \bar{S}_{j,1}$, or $\tilde{s} = \bar{S}_{j,1}$. For the previous case, similar argument implies that
\begin{align}
    \dotp{\wb_{j,r}^{(t_j(s))}}{j\ub} \le \dotp{\wb_{j,r}^{(t_j(\tilde{s} ))}}{j\ub}\le (1.05\beta^*_{\ub,j} m)^{1/2}. \label{eq: upper bound tsk-1}
\end{align}
The latter case reduces to establish the upper bound for $t_j(\bar{S}_{j,1})$, which is derived in the next step.

\paragraph{Step 2.1: Bounding $\dotp{\wb_{j,r}^{(t_j(s))}}{j\ub}$ for $s\in [\bar{S}_{j,1},\munderbar{S}_{j,1})$.} 
% It's easy to verify that $i_{t_j(\bar{S}_{j,k}-1)} \notin \cW$ and $i_{t_j(\bar{S}_{j,k})} \notin \cW$  for $t_j(\bar{S}_{j,k}-1)\le T_{(\bv)}$, because otherwise the increment on $\dotp{\wb_{j,r}}{j\vb}$ at this step cannot force $\ft$ to bounce above 
% $1+\delta$, in case that $m^{-1} \sum \sigma(\dotp{\wb_{j,r}}{j\vb})< \delta/2$. 
Since the weak data does not contributes to learning strong signal, as indicated in Proposition~\ref{prop: independent iteration}, we know that 
$$
\dotp{\wb_{j,r}^{(t_j(\bar{S}_{j,1}))}}{j\ub}=\dotp{\wb_{j,r}^{(t_j(\tilde{S}_{j,1}))}}{j\ub}\cdot \big(1+ \et  (1-\ft_{t_j(\tilde{S}_{j,1})})\big), \label{eq: s_k-1}
$$
where $\tilde{S}_{j,1}= \max\{s\le \tilde{S}_{j,1} - 1 : i_{t_j(s)}\notin \cW\}$.

We begin with a proposition that is parallel to Proposition~\ref{prop: f Tk-1 lower bound}. 

\begin{proposition}\label{prop: f tk-1 lower bound general ver}
For simplicity, we assume that $i_{t_j(\bar{S}_{j,k}-1)} \notin \cW$ and $\tilde{S}_{j,k} =\bar{S}_{j,k}-1 $. Otherwise we can find the last step before $\bar{S}_{j,k}$ such that $i_{t_j(s)} \notin \cW$ and leverage the previous observation in Equation~\eqref{eq: s_k-1}. Suppose that
\begin{align}
    E_{t_j(\bar{S}_{j,k}-1)}\coloneqq& \frac{1}{m}\sum_{r\in [m]}\sigma\big(-\dotp{\wb_{-j,r}^{(t_j(\bar{S}_{j,k}-1))}}{-j\ub}\big)+ \sigma\big(-\dotp{\wb_{j,r}^{(t_j(\bar{S}_{j,k}-1))}}{-j\vb}\big) \le \delta/4,\\
    \Upsilon^{(t_j(\bar{S}_{j,k}))}\coloneqq &\max_{j\in\{\pm 1\},r\in[m],i\in [n]} \left|\dotp{\wb_{j,r}^{(t_j(\bar{S}_{j,k}))}}{\bxi_i}\right| \vee \max_{j\in\{\pm 1\},r\in[m],i\in \cW} \left|\dotp{\wb_{j,r}^{(t_j(\bar{S}_{j,k}))}}{\xt_i}\right| \le \delta/4,
\end{align}
then we have that 
\begin{align}
    \ft_{t_j(\bar{S}_{j,k}-1)} \ge \frac{\et +2 - \sqrt{\et^2+4\et}}{2\et}, \label{eq: f Sk - 1 lower bound general}
\end{align}
and that
\begin{align}
    \ft_{t_j(\bar{S}_{j,k})} &\le \left(1+\et\big(1-\ft_{t_j(\bar{S}_{j,k}-1)}\big)\right)^2 + 2E_{t_j(\bar{S}_{j,k}-1)} + {\Upsilon^{(t_j(\bar{S}_{j,k}))}}^2\\
    &\le \bigl(\et/2 + \sqrt{\et^2/4+\et}\bigr)^2 + 2E_{t_j(\bar{S}_{j,k}-1)} + {\Upsilon^{(t_j(\bar{S}_{j,k}))}}^2. \label{eq: f Sk upper bound general} 
\end{align}
\end{proposition}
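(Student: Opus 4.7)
The plan is to adapt the proof of Proposition~\ref{prop: f Tk-1 lower bound} from the single-data case, now tracking the additional noise-memorization contributions introduced by the multi-pass SGD dynamics. The two displayed bounds in \eqref{eq: f Sk upper bound general} and the lower bound \eqref{eq: f Sk - 1 lower bound general} are derived in sequence: the first upper bound feeds the lower bound, which in turn feeds the tighter second upper bound.

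I will start from a one-step expansion. Proposition~\ref{prop: independent iteration} together with the sign-stability hypothesis gives exact multiplicative updates for each relevant inner product over one SGD step on a strong datum $i_{t_j(\bar{S}_{j,k}-1)}\notin\cW$ carrying label $j$: each positive $\ub$-part $\dotp{\wb_{j,r}}{j\ub}$, $r\in\cU_{j,+}$, is multiplied by $1+\et\bigl(1-\ft_{t_j(\bar{S}_{j,k}-1)}\bigr)$, each positive $\vb$-part by $1+\alpha\et\bigl(1-\ft_{t_j(\bar{S}_{j,k}-1)}\bigr)$, and the corresponding negative parts are multiplied by the analogous factors with opposite sign. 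Under the condition $\et\le 4/5$ from Assumption~\ref{ass: conditions} all these factors remain nonnegative, so expanding $\ft_{t_j(\bar{S}_{j,k})}$ as the squared sum of these updated inner products and, using $\alpha\le 1$, replacing each $\vb$-factor by the corresponding $\ub$-factor (which is an upper bound on the signed contribution), I obtain a four-term sum that mirrors \eqref{eq: f Tk expand -}. The noise part of $\ft_{t_j(\bar{S}_{j,k})}$, which consists of the inner products $\dotp{\wb_{\pm j,r}^{(t_j(\bar{S}_{j,k}))}}{\bxi_{i_{t_j(\bar{S}_{j,k})}}}$ passed through $\sigma$, is pointwise bounded by ${\Upsilon^{(t_j(\bar{S}_{j,k}))}}^2$. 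Grouping positive and negative parts as in the single-data proof then yields
\[
\ft_{t_j(\bar{S}_{j,k})}\le \ft_{t_j(\bar{S}_{j,k}-1)}\bigl(1+\et(1-\ft_{t_j(\bar{S}_{j,k}-1)})\bigr)^2+2E_{t_j(\bar{S}_{j,k}-1)}+{\Upsilon^{(t_j(\bar{S}_{j,k}))}}^2.
\]
Since the definition of $\bar{S}_{j,k}$ forces $\ft_{t_j(\bar{S}_{j,k}-1)}<1$, the $\ft$-prefactor on the right can be dropped, giving the first line of \eqref{eq: f Sk upper bound general}.

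For the lower bound \eqref{eq: f Sk - 1 lower bound general}, the definition of $\bar{S}_{j,k}$ together with $i_{t_j(\bar{S}_{j,k})}\notin\cW$ and Assumption~\ref{ass: oscillation} give $\ft_{t_j(\bar{S}_{j,k})}\ge 1+\delta$. Combining with the upper bound above and using $E_{t_j(\bar{S}_{j,k}-1)}\le\delta/4$ together with ${\Upsilon^{(t_j(\bar{S}_{j,k}))}}^2\le(\delta/4)^2\le\delta/2$ leads to
\[
h_{\et}\bigl(\ft_{t_j(\bar{S}_{j,k}-1)}\bigr)\ge 1,\qquad h_{\et}(z):=z\bigl(1+\et(1-z)\bigr)^2,
\]
where $h_{\et}$ is the cubic whose root structure is analyzed in Appendix~\ref{subsubsec: proof: f Tk-1 lower bound}. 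Its unique root strictly below $1$ is $z_2=(\et+2-\sqrt{\et^2+4\et})/(2\et)$, so any feasible $\ft_{t_j(\bar{S}_{j,k}-1)}<1$ with $h_{\et}\ge 1$ must exceed $z_2$, giving \eqref{eq: f Sk - 1 lower bound general}. Plugging this lower bound back into $(1+\et(1-\ft_{t_j(\bar{S}_{j,k}-1)}))^2$ and invoking the identity $1+\et(1-z_2)=\et/2+\sqrt{\et^2/4+\et}$ then produces the second line of \eqref{eq: f Sk upper bound general}.

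The main obstacle is the upper-bound expansion: one must verify that all cross-terms generated by squaring the updated inner products can be absorbed into the remainder $2E_{t_j(\bar{S}_{j,k}-1)}+{\Upsilon^{(t_j(\bar{S}_{j,k}))}}^2$. The sign-stability hypothesis together with $\et\le 4/5$ from Assumption~\ref{ass: conditions} are essential: they ensure all one-step multiplicative factors stay nonnegative and of size $\mathcal{O}(1)$, so that squared terms may be compared term-by-term across positive, negative, weak-signal, and noise contributions, exactly mirroring the grouping that produced \eqref{eq: f Tk expand -} in the single-data proof.
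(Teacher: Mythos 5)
Your proof is correct and follows essentially the same path as the paper's: expand $\ft_{t_j(\bar{S}_{j,k})}$ via the one-step multiplicative updates of Proposition~\ref{prop: independent iteration} on the strong datum, majorize the $\vb$-factors by the $\ub$-factors via $\alpha\le 1$, bound the residual negative- and noise-parts by $2E_{t_j(\bar{S}_{j,k}-1)}+{\Upsilon^{(t_j(\bar{S}_{j,k}))}}^2$, drop the $\ft_{t_j(\bar{S}_{j,k}-1)}<1$ prefactor, and reduce to the root analysis of the cubic $h_{\et}(z)=z(1+\et(1-z))^2$ from Proposition~\ref{prop: f Tk-1 lower bound}. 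The only discrepancy is cosmetic: the paper's displayed final line writes $E_{t_j(\bar{S}_{j,k}-1)}$ where the statement (and your version) has $2E_{t_j(\bar{S}_{j,k}-1)}$, which appears to be a typo in the paper rather than an error on your side.
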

\begin{proof}[Proof of Proposition~\ref{prop: f tk-1 lower bound general ver}]
See Appendix~\ref{proof: f tk-1} for a detailed proof.
\end{proof}

With this proposition, we can derive an upper bound on $\dotp{\wb_{j,r}^{(t_j(\bar{S}_{j,1}))}}{j\ub}$. 
One step gradient \eqref{eq: wu positive s ip iteration} implies that, for $r\in \cU_{j,+}$, it holds that
\begin{align}
    \dotp{\wb_{j,r}^{(t_j(\bar{S}_{j,1}))}}{j\ub}&=\dotp{\wb_{j,r}^{(t_j(\bar{S}_{j,1}-1))}}{j\ub}\cdot  \left(1+\et \big(1-\ft_{t_j(\bar{S}_{j,1} - 1)}\big) \right)\\ 
    & \le\dotp{\wb_{j,r}^{(t_j(\bar{S}_{j,1}-1))}}{j\ub}\cdot \left( 1+ \et \left(1-\frac{\et+2- \sqrt{\et^2+4\et}}{2\et}\right)\right) \\ 
    & \le 1.5\cdot (1.05\beta^*_{\ub,j}m)^{1/2}.
\end{align}
Here the first inequality comes from Inequality~\eqref{eq: f Sk - 1 lower bound general}. Note that the upper bound on $\dotp{\wb_{j,r}^{(t_j(\bar{S}_{j,1}-1))}}{j\ub}$ has been derived in Inequality~\eqref{eq: upper bound tsk-1} in \textbf{Step 1.}. 
Taking $\et=4/5$ easily implies the second inequality above.

This upper bound continues to hold for $\dotp{\wb_{j,r}^{(t_j(s))}}{j\ub}$ with $s\in [\bar{S}_{j,1},\munderbar{S}_{j,1})$ because of monotonicity. We consider the lower bound for $\dotp{\wb_{j,r}^{(t_j(s))}}{j\ub}$ with $s\in [\bar{S}_{j,1},\munderbar{S}_{j,1})$ and $i_{t_j(s)}\notin \cW$. For these $s$, we have that
\begin{align}
 1+\delta< \ft_{ t_j(s)} &\le \frac{1}{m}\sum_{r\in [m]}\sigma\big(\dotp{\wb_{j,r}^{(t_j(s))}}{j\ub}\big) +  \frac{1}{m}\sum_{r\in [m]}\sigma\big(\dotp{\wb_{j,r}^{(t_j(s))}}{j\vb}\big) + (\text{negative part}) + (\text{noise part})\\ 
  & \le \frac{1}{m}\sum_{r\in [m]}\sigma\big(\dotp{\wb_{j,r}^{(t_j(s))}}{j\ub}\big) + \delta/2 + \delta/4. 
\end{align}
Combining with Lemma~\ref{lem: similar behavior}, we obtain that
\begin{align}
    \max_{r\in [m]} \dotp{\wb_{j,r}^{(t_j(s))}}{j\ub} \ge ( \beta^*_{\ub,j} m)^{1/2},    
\end{align}
for $s\in [\bar{S}_{j,1},\munderbar{S}_{j,1})$ and $i_{t_j(s)}\notin \cW$.

\paragraph{Step 2.2: Lower bounding $\max_r \dotp{\wb_{y,r}^{(t_j(\munderbar{S}_{j,k}))}}{y\ub}$.} Same to the proof of Proposition~\ref{prop: f tk-1 lower bound general ver}, we can assume that $i_{t_j(\munderbar{S}_{j,1} - 1)}\notin \cW$, without loss of generality. 

Note that, while  $\ft_{t_j(s)}$ is not monotonically changing in $s\in [\bar{S}_{j,1},\munderbar{S}_{j,1}]$ in the presence of noise components, the inner products are still changing monotonically, as indicated in Proposition~\ref{prop: independent iteration}. Therefore, we can leverage the control over noise components to derive an approximate monotonic property. Specifically, the inner products 
\begin{align}
    \sigma\big(\dotp{\wb_{j,r}^{(t_j(s))}}{j\ub}\big), \quad -\sigma\big(\dotp{\wb_{-j,r}^{(t_j(s))}}{j\ub}\big), \quad 
    \sigma\big(\dotp{\wb_{j,r}^{(t_j(s))}}{j\vb}\big),\quad- \sigma\big(\dotp{\wb_{-j,r}^{(t_j(s))}}{j\vb}\big), 
\end{align}
 are decreasing in $s\in [\bar{S}_{j,1},\munderbar{S}_{j,1}]$. From Inequality~\eqref{eq: ftsk expand} in the proof of Proposition~\ref{prop: f tk-1 lower bound general ver}, we know that
\begin{align}
    \ft_{t_j(\munderbar{S}_{j,1}-1)} 
&\le \ft_{t_j(\bar{S}_{j,1})} - \frac{1}{m} \sum_{r\in [m]}  \sigma\big(\dotp{\wb_{j,r}^{(t_j(\bar{S}_{j,1}))}}{\bxi_{i_{t_j(\bar{S}_{j,1})}}} \big)-\sigma\big( \dotp{\wb_{-j,r}^{(t_j(\bar{S}_{j,1}))}}{\bxi_{i_{t_j(\bar{S}_{j,1})}}} \big) \\
& \qquad + \frac{1}{m} \sum_{r\in [m]}  \sigma\big(\dotp{\wb_{j,r}^{(t_j(\munderbar{S}_{j,1}-1))}}{\bxi_{i_{t_j(\munderbar{S}_{j,1}-1)}}}\big)- \sigma\big(\dotp{\wb_{-j,r}^{(t_j(\munderbar{S}_{j,1}-1))}}{\bxi_{i_{t_j(\munderbar{S}_{j,1}-1)}}} \big) \\
&\le  \big(\et/2 + \sqrt{\et^2/4+\et}\big)^2+ 2E_{T_j(\bar{S}_{j,1}-1)}+{\Upsilon^{(t_j(\munderbar{S}_{j,1}))}}^2.\label{eq: f upper bound}
\end{align}
Recall that in \textbf{Step 1.}, we proved that $E_{j(\bar{S}_{j,1}-1)} \le 2(\MaxInitIPU)^2 =o(1)$. 
Thus from doing one-step gradient descent \eqref{eq: wu positive s ip iteration}, we know that for $r\in \cU_{j,+}$, it holds that
\begin{align}
    \dotp{\wb_{j,r}^{(t_j(\munderbar{S}_{j,1}))}}{j\ub}&=\dotp{\wb_{j,r}^{(t_j(\munderbar{S}_{j,1} - 1))}}{j\ub}\cdot  \left(1+\et \big(1-\ft_{t_j(\munderbar{S}_{j,1} - 1)}\big)  \right)\\ 
    % & \ge\dotp{\wb_{j,r}^{(t_j(\munderbar{S}_{j,1}- 1) )}}{j\ub}\cdot \big\{ 1+ \et \big(1-\ft_{t_j(\bar{S}_{j,1})}\big)-0.02\big\} \\ 
    & \ge\dotp{\wb_{j,r}^{(t_j(\munderbar{S}_{j,1}- 1) )}}{j\ub}\cdot \left( 1+ \et \big(1-(\et/2+\sqrt{\et^2/4+\et})^2\big) -0.1\right) \\ 
    & \ge 0.1\cdot (\beta^*_{\ub,j}m)^{1/2}. \label{eq: s_1 lower bound}
\end{align}
Here the first inequality is by \eqref{eq: f upper bound}, $E_{j(\bar{S}_{j,1}-1)} = o(1)$ and the definition of $T_{(\bxi)}$, and the second inequality is a consequence of Inequality~\eqref{eq: et sign lower bound}. 
This positive constant-level lower bound guarantees that our $\et$ choice is sufficient for the sign stability to hold for $s\in [\bar{S}_{j,1}, \munderbar{S}_{j,2}]$, as shown in the next step.

\paragraph{Step 2.3: Lower bounding $\dotp{\wb_{-j,r}^{(t_j(\munderbar{S}_{-j,1}))}}{-j\ub}$ and $\dotp{\wb_{-j,r}^{(t_j(\munderbar{S}_{j,1}))}}{-j\vb}$.} 
It suffices to consider $r\in \cV_{-j,-}$ and $r\in \cU_{-j,-}$.  Inequality~\eqref{eq: s_1 lower bound} indicates that
\begin{align}
    1\ll \frac{\dotp{\wb_{j,r}^{(t_j(\munderbar{S}_{j,1}))}}{j\ub}}{ \dotp{\wb_{j,r}^{(t_j(0))}}{j\ub}} = \prod_{s=0,i_{t_j(s)}\notin \cW}^{\munderbar{S}_{j,1} - 1} \left(1+\et \big(1-\ft_{t_j(s)}\big) \right) \le \exp\left\{\et \sum_{s=0,i_{t_j(s)}\notin \cW}^{\munderbar{S}_{j,1} - 1} \big(1-\ft_{t_j(s)}\big)\right\}.
\end{align}
Therefore, 
\begin{align}
    \sum_{s=0,i_{t_j(s)}\in \cW}^{\munderbar{S}_{j,1} - 1}\big(1-\ft_{t_j(s)}\big)>0.    
\end{align}
Now we can prove the lower bound. 
For $r\in \cU_{-j,-}$, we have that 
\begin{align}
    \dotp{\wb_{-j,r}^{(t_j(\munderbar{S}_{j,1}))}}{-j\ub}&=\dotp{\wb_{-j,r}^{(t_j(0))}}{-j\ub}\cdot \prod_{s=0,i_{t_j(s)}\notin \cW}^{\munderbar{S}_{j,1} - 1} \left(1-\et \big(1-\ft_{t_j(s)}\big) \right)\\ 
    &\ge  \dotp{\wb_{-j,r}^{(t_j(0))}}{-j\ub}\cdot\exp\left\{-\et \sum_{s=0,i_{t_j(s)}\notin \cW}^{\munderbar{S}_{j,1} - 1} \big(1-\ft_{t_j(s)}\big)\right\}\\
    &\ge  \dotp{\wb_{-j,r}^{(t_j(0))}}{-j\ub} \ge -\MaxInitIPU.\label{eq: S_1 neg wu lower bound}
\end{align}
On the other hand, for $s\in [0,\munderbar{S}_{j,1} - 1]$ such that $i_{t_j(s)}\in \cW$, condition $t\le T_{\max}^j = T_{(\vb)}^j \wedge T_{(\bxi)}$ guarantees that 
\begin{align}
    \ft_{t_j(s)}& =  \frac{1}{m}\sum_{r\in [m]}\sigma\big(\dotp{\wb_{j,r}^{(t_j(s))}}{j\vb}\big) - \sigma\big(-\dotp{\wb_{-j,r}^{(t_j(s))}}{-j\vb}\big)\\
    &\qquad + \frac{1}{m}\sum_{r\in [m]}\sigma\big(\dotp{\wb_{j,r}^{(t_j(s))}}{\bxi_{i_{t_j(s)}}}\big) - \sigma\big(\dotp{\wb_{-j,r}^{(t_j(s))}}{\bxi_{i_{t_j(s)}}}\big)\\
    &\qquad + \frac{1}{m}\sum_{r\in [m]}\sigma\big(\dotp{\wb_{j,r}^{(t_j(s))}}{\xt_{i_{t_j(s)}}}\big) - \sigma\big(\dotp{\wb_{-j,r}^{(t_j(s))}}{\xt_{i_{t_j(s)}}}\big) \\ 
    & \le  \delta/2  + 2\times \delta/4 \\
    &\le 1.
\end{align}
Hence we derive that
\begin{align}
    \sum_{s\in [0,\munderbar{S}_{j,1} - 1]} \bigl(1 - \ft_{t_j(s)}\bigr) =  \sum_{\substack{s\in [0,\munderbar{S}_{j,1} - 1] \\ i_{t_j(s)}\in \cW}} \bigl(1 - \ft_{t_j(s)}\bigr)+\sum_{\substack{s\in [0,\munderbar{S}_{j,1} - 1] \\ i_{t_j(s)}\notin \cW}} \bigl(1 - \ft_{t_j(s)}\bigr) \ge 0.\label{eq: sum 1-yf over all data non negative}
\end{align}
And in consequence, for $r\in \cV_{j,-}$ it holds that
\begin{align}
    \dotp{\wb_{-j,r}^{(t_j(\munderbar{S}_{j,1 }))}}{-j\vb} &= \dotp{\wb_{-j,r}^{(t_j(0))}}{-j\vb} \cdot \prod_{s=0}^{\munderbar{S}_{j,1} - 1}  \left(1-\alpha\et \big(1-\ft_{t_j(s)}\big) \right) \\
    &\ge \dotp{\wb_{-j,r}^{(t_j(0))}}{-j\vb} \cdot \exp\left\{-\alpha\et \sum_{s=0}^{\munderbar{S}_{j,1} - 1}\big(1-\ft_{t_j(s)}\big) \right\} \\
    &\ge \dotp{\wb_{-j,r}^{(t_j(0))}}{-j\vb}\ge -\MaxInitIPV. \label{eq: S_1 neg wv lower bound}
\end{align}
The monotonicity again extends lower bounds in Inequalities~\eqref{eq: S_1 neg wv lower bound} and~\eqref{eq: S_1 neg wu lower bound} to $s\in [\bar{S}_{j,1}, \bar{S}_{j,2}]$. And the sign stability naturally holds on this interval.

\paragraph{Step 2.4 \& Finalizing.} Thanks to the sign stability proved in the last step, we can now apply Lemma~\ref{lem: similar behavior} to derive that the upper bound in Inequality~\eqref{eq wu upper bound general} continues to hold for all $s\in [\munderbar{S}_{j,1}, \bar{S}_{j,2}]$, with exactly the same argument as the \textbf{Step 1.}. With an inductive argument that exactly repeats the argument above, we can infer that all the results in Lemma~\ref{lem: boundedness} holds for $t_j(s)$ with $s\in [\bar{S}_{j,1}, \bar{S}_{j,2}]$. Proposition~\ref{prop: independent iteration} implies that for any $t\in \NN$ we can find $s_t \in \NN$ such that $t_j(s_t)\le t < t_j(s_t+1)$ and $\dotp{\wb_{j,r}^{(t)}}{j\ub}=\dotp{\wb_{j,r}^{(t_j(s_t))}}{j\ub}$ (so are all other inner products), therefore all the results in Lemma~\ref{lem: boundedness} hold for $t\le T_{\max}^j$.
\end{proof}

\subsection{Proof of Lemma~\ref{lem: linear accumulation}}\label{subsec: proof linear accumulation}
% \subsection{Proof of Lemma~\ref{lem: linear accumulation}}\label{subsec: proof linear accumulation}

\begin{proof}[Proof of Lemma \ref{lem: linear accumulation}]
    For any $0\leq t_1\leq t_2 \leq T_{\max}^j$, $j\in\{\pm 1\}$, and $r\in\cU_{j,+}^{(t)}$, by \eqref{eq: strong signal update}, 
    \begin{align}
        \langle \mathbf{w}_{j,r}^{(t_2+1)},j\mathbf{u}\rangle &= \langle \mathbf{w}_{j,r}^{(t_1)},j\mathbf{u}\rangle + \frac{2\eta\|\mathbf{u}\|_2^2}{m}\cdot\sum_{\substack{t_1\leq s\leq t_2\\y_{i_s} = j,i_s\notin\mathcal{W}\\y_{i_s}f(\mathbf{x}_{i_s};\mathbf{W}^{(s)}) > 1}}\big(1 - y_{i_s}f(\mathbf{x}_{i_s};\mathbf{W}^{(s)})\big)\cdot\langle \mathbf{w}_{j,r}^{(s)},j\mathbf{u}\rangle \\ 
        &\qquad+  \frac{2\eta\|\mathbf{u}\|_2^2}{m}\cdot\sum_{\substack{t_1\leq s\leq t_2\\y_{i_s} = j,i_s\notin\mathcal{W}\\y_{i_s}f(\mathbf{x}_{i_s};\mathbf{W}^{(s)}) < 1}}\big(1 - y_{i_s}f(\mathbf{x}_{i_s};\mathbf{W}^{(s)})\big)\cdot\langle \mathbf{w}_{j,r}^{(s)},j\mathbf{u}\rangle,\label{eq: proof linear accumulation 1}
    \end{align}
    Note that for $0\leq t_1\leq t_2\leq T_{\max}^j$, we can apply the conclusions of Lemma~\ref{lem: boundedness}. 
    Specifically, we consider the maximal neuron $r^* = \argmax_{r\in[m]} \dotp{\wb_{j,r}^{(t)}}{j\ub}$, and 
    \begin{align}
         {(1.05)^{\frac{1}{2}}}\cdot(\beta^*_{\ub,j} m)^{\frac{1}{2}} &\geq \left|\langle \mathbf{w}_{j,r^*}^{(t_2+1)},j\mathbf{u}\rangle - \langle \mathbf{w}_{j,r^*}^{(t_1)},j\mathbf{u}\rangle\right|  \label{eq: proof linear accumulation 2}\\
        & = \frac{2\eta\|\mathbf{u}\|_2^2}{m}\cdot\left|\sum_{\substack{t_1\leq s\leq t_2\\y_{i_s} = j,i_s\notin\mathcal{W}\\y_{i_s}f(\mathbf{x}_{i_s};\mathbf{W}^{(s)}) > 1}}\big(1 - y_{i_s}f(\mathbf{x}_{i_s};\mathbf{W}^{(s)})\big)\cdot\underbrace{\langle \mathbf{w}_{j,r^*}^{(s)},j\mathbf{u}\rangle}_{\textcolor{red}{> (\beta^*_{\ub,j} m)^{\frac{1}{2}}}} \right.\\
        &\qquad \left.- \sum_{\substack{t_1\leq s\leq t_2\\y_{i_s} = j,i_s\notin\mathcal{W}\\y_{i_s}f(\mathbf{x}_{i_s};\mathbf{W}^{(s)}) < 1}}\big(1 - y_{i_s}f(\mathbf{x}_{i_s};\mathbf{W}^{(s)})\big)\cdot\underbrace{\langle \mathbf{w}_{j,r^*}^{(s)},j\mathbf{u}\rangle}_{\textcolor{red}{< (1.05 - \delta/4)^{\frac{1}{2}}\cdot(\beta^*_{\ub,j} m)^{\frac{1}{2}}}} \right|,
    \end{align}
    where the red remarks follow from Lemma~\ref{lem: similar behavior}.
    Rearranging terms, we conclude from \eqref{eq: proof linear accumulation 2} that
    \begin{align}
        &\sum_{\substack{t_1\leq s\leq t_2\\y_{i_s} = j,i_s\notin\mathcal{W}\\y_{i_s}f(\mathbf{x}_{i_s};\mathbf{W}^{(s)}) < 1}}\big(1 - y_{i_s}f(\mathbf{x}_{i_s};\mathbf{W}^{(s)})\big)\label{eq: proof linear accumulation 3} \\
        &\qquad \geq \left(1.05 - \delta/4 \right)^{-\frac{1}{2}}\cdot \sum_{\substack{t_1\leq s\leq t_2\\y_{i_s} = j,i_s\notin\mathcal{W}\\y_{i_s}f(\mathbf{x}_{i_s};\mathbf{W}^{(s)}) > 1}} \big(y_{i_s}f(\mathbf{x}_{i_s};\mathbf{W}^{(s)})-1\big) - \frac{m {(1.05)^{\frac{1}{2}}}}{2\eta\|\mathbf{u}\|_2^2(1.05 - \delta/4)^{\frac{1}{2}}},
    \end{align}
    Under Assumption \ref{ass: conditions}, with probability at least $1-1/\mathrm{poly}(d)$, it holds that 
    \begin{align}
        \left|\left\{t_1\leq s\leq t_2:y_{i_s} = j,i_s\notin\mathcal{W}\right\}\right|\geq \frac{1}{4}\cdot(t_2 -t_1 + 1),\label{eq: proof linear accumulation 4}
    \end{align}
    Combining \eqref{eq: proof linear accumulation 3} and \eqref{eq: proof linear accumulation 4}, we can finally prove that
    \begin{align}
        \sum_{\substack{s=t_1 \\ y_{i_s} = j,i_s\notin\mathcal{W}}}^{t_2} \!\!\!\!\!\!1 - y_{i_s}f(\mathbf{x}_{i_s};\mathbf{W}^{(s)}) \geq \frac{\delta}{8}\!\cdot \!\big(1 - (1.05 - \delta/4)^{\frac{1}{2}}\big)\!\cdot\! (t_2 - t_1 + 1) - \frac{m {(1.05)^{\frac{1}{2}}}}{2\eta\|\mathbf{u}\|_2^2(1.05 - \delta/4)^{\frac{1}{2}}}.\label{eq: proof linear accumulation 5}
    \end{align}
    Finally, for the weak data $i_s\in\mathcal{W}$, under Assumption \ref{ass: conditions}, with probability at least $1-1/\mathrm{poly}(d)$,
    \begin{align}
        \left|\left\{t_1\leq s\leq t_2:y_{i_s} = j,i_s\in\mathcal{W}\right\}\right| \leq 2\rho\cdot(t_2 -t_1 + 1) \leq \frac{\delta}{32}\cdot\big(1 - (1.05 - \delta/4)^{\frac{1}{2}}\big)\cdot (t_2 - t_1 + 1),\label{eq: proof linear accumulation 6}
    \end{align}
    where the second inequality follows from the condition on $\rho$ by Assumption \ref{ass: conditions}.
    By Lemma~\ref{lem: boundedness}, we have that $|1 - y_{i_s}f(\xb_{i_s};\mathbf{W}^{(s)})|\leq 2$ for $0\leq s\leq T_{\max}^j$ and $y_s = j$.
    Therefore, we have that 
    \begin{align}
        \sum_{\substack{s=t_1 \\ y_{i_s} = j,i_s\in\mathcal{W}}}^{t_2} 1 - y_{i_s}f(\mathbf{x}_{i_s};\mathbf{W}^{(s)}) \geq -\frac{\delta}{16}\cdot\big(1 - (1.05 - \delta/4)^{\frac{1}{2}}\big)\cdot (t_2 - t_1 + 1).\label{eq: proof linear accumulation 7}
    \end{align}
    Combining \eqref{eq: proof linear accumulation 5} and \eqref{eq: proof linear accumulation 7}, we can conclude that
    \begin{align}
        \sum_{\substack{s=t_1 \\ y_{i_s} = j}}^{t_2} 1 - y_{i_s}f(\mathbf{x}_{i_s};\mathbf{W}^{(s)}) \geq \frac{\delta}{16}\!\cdot \!\big(1 - (1.05 - \delta/4)^{\frac{1}{2}}\big)\!\cdot\! (t_2 - t_1 + 1) - \frac{m {(1.05)^{\frac{1}{2}}}}{2\eta\|\mathbf{u}\|_2^2(1.05 - \delta/4)^{\frac{1}{2}}}.
    \end{align}
    This finishes the proof of the first part in Lemma~\ref{lem: linear accumulation}.
Now we consider the second part. For simplicity, we denote by $\alpha = \norm{\vb}_2^2 / \norm{\ub}_2^2$ and $\tilde{\eta} = 2\|\mathbf{u}\|_2^2/m$.
Consider that for any $0 \leq t\leq T_{\max}^j$, $j\in\{\pm 1\}$, and $r\in\cV_{j,+}^{(t)}$, due to Lemma~\ref{lem: boundedness}, the $\vb$-sign stability condition is true on $[0, T_{\max}^j]$.
In view of \eqref{eq: weak signal update}, this means that
\begin{align}
1+ \alpha\tilde{\eta} \cdot\big(1-y_{i_s}f(\xb_{i_s};\Wb^{(s)})\big) > 0, \quad \forall 0\leq s\le T_{\max}^j\,\,\,\text{s.t.}\,\,\,y_{i_s} = j.
\end{align}
Then for any $t_1 \le t \le T_{\max}^j$, $t_1\leq s\leq t$,  and $r\in \cC_{j,+}^{(t)}$, since $-2\le 1-y_{i_s}f(\xb_{i_s};\Wb^{(t)})\le 2$ due to Lemma~\ref{lem: boundedness}, we can lower bound the relative increment as
\begin{align}
    \sum_{\substack{s=t_1\\y_{i_s} = j}}^{t} &\log \Big(1+\alpha \et \big(1- y_{i_s}f(\xb_{i_s};\Wb^{(s)})\big)\Big)= \sum_{\substack{s=t_1\\y_{i_s} = j}}^{t} \int_0^\alpha \frac{\et\big(1- y_{i_s}f(\xb_{i_s};\Wb^{(s)})\big) }{1+\et z \big(1- y_{i_s}f(\xb_{i_s};\Wb^{(s)})\big)} dz \\
    &= \sum_{\substack{s=t_1\\y_{i_s} = j}}^{t} \int_0^\alpha \frac{\et\Big(\big(1- y_{i_s}f(\xb_{i_s};\Wb^{(s)})\big) + 2\Big) }{1+\et z \big(1- y_{i_s}f(\xb_{i_s};\Wb^{(s)})\big)} dz 
    -  \sum_{\substack{s=t_1\\y_{i_s} = j}}^{t} \int_0^\alpha \frac{2\et }{1+\et z \big(1- y_{i_s}f(\xb_{i_s};\Wb^{(s)})\big)} dz \\
    &\ge  \sum_{\substack{s=t_1\\y_{i_s} = j}}^{t} \int_0^\alpha \frac{\et\Big(\big(1- y_{i_s}f(\xb_{i_s};\Wb^{(s)})\big) + 2\Big) }{1+2\et z} dz -  \sum_{\substack{s=t_1\\y_{i_s} = j}}^{t} \int_0^\alpha \frac{2\et }{1-2\et z} dz\\
    &\ge \int_0^\alpha \frac{\et\Big(\sum_{s=t_1,y_{i_s} = j}^{t}\big(1- y_{i_s}f(\xb_{i_s};\Wb^{(t)})\big) + 2N_j(t_0,t)\Big) }{1+2\et z} dz -  \int_0^\alpha \frac{2\et N_j(t_1,t)}{1-2\et z} dz, \label{eq: proof signal learning 1}
\end{align}
where for simplicity we denote $N_j(t_1,t) = |\{t_1\leq s\leq t:y_{i_s} = j\}|$.
Now we can use Proposition~\ref{lem: linear accumulation} to lower bound the right hand side of \eqref{eq: proof signal learning 1},
Denoting $\epsilon = \delta\cdot(1-(1.05 - \delta/4)^{\frac{1}{2}})/16$, we have that
\begin{align}
    \sum_{\substack{s=t_1\\y_{i_s} = j}}^{t}&\log \Big\{1+\alpha \et \big(1- y_{i_s}f(\xb_{i_s};\Wb^{(s)})\big)\Big\}\\
    &\ge \int_0^\alpha \frac{\et\Big( \epsilon (t - t_1 +1) - \Delta + 2N_j(t_1,t)\Big) }{1+2\et z} dz -  \int_0^\alpha \frac{2\et N_j(t_1,t)}{1-2\et} dz \\ 
    &\ge \bigg(\frac{\epsilon}{2}(t -t_1 +1 ) - \frac{\Delta}{2}  + N_j(t_1,t)\bigg)\cdot \log(1+2\alpha\et) + N_j(t_1,t)\cdot\log(1-2\alpha\et )\\
    &\ge \bigg(\frac{\epsilon}{2}(t -t_1 +1 ) - \frac{\Delta}{2} \bigg)\cdot \log(1+2\alpha\et)  + N_j(t_1,t)\cdot \log\big\{(1+2\alpha\et)\cdot(1-2\alpha\et )\},
\end{align}
where $\Delta$ is defined in Lemma~\ref{lem: linear accumulation}.
Moreover since for our choice of $\alpha\et \ll 1$ in Assumption~\ref{ass: conditions},
\begin{align}
    \log(1+2\alpha\et)\geq \alpha\et,\quad \log\big\{(1+2\alpha\et)\cdot(1-2\alpha\et )\big\} = \log \big\{1-4 \alpha^2\et^2\big\}
    \ge -2\alpha^2\et^2,
\end{align}
and using the fact that with probability at least $1-1/\mathrm{poly}(d)$ it holds that $N_j(t_1,t) \leq (t - t_1 + 1) / 2$,
 we finally have that
\begin{align}
    \sum_{\substack{s=t_1\\y_{i_s} = j}}^{t} \log \Big\{1+\alpha \et \big(1- y_{i_s}f(\xb_{i_s};\Wb^{(s)})\big)\Big\}&\ge \frac{\alpha\et}{2}\!\cdot\!(\epsilon -2\alpha\et)\!\cdot\! (t_1-t_1+1)-\Delta \cdot \log(1+2\alpha \et) \\
    &\ge \frac{1}{4}\alpha\et\cdot \epsilon\cdot (t_1- t_1 + 1)-2\alpha \et\cdot \Delta.
\end{align}
Finally, using \eqref{eq: weak signal update} again, we can lower bound our target as 
\begin{align}
    \dotp{\wb_{j,r}^{(t+1)}}{j\vb} &=\dotp{\wb_{j,r}^{(t_1)}}{j\vb} \cdot\prod _{\substack{s=t_1\\y_{i_s} = j}}^{t} \Big(1+\alpha\et \big(1- y_{i_s}f(\xb_{i_s};\Wb^{(s)})\big)\Big) \\
    &= \dotp{\wb_{j,r}^{(t_1)}}{j\vb} \cdot\exp\left\{\sum_{\substack{s=t_1\\y_{i_s} = j}}^{t} \log \Big\{1+\alpha \et \big(1- y_{i_s}f(\xb_{i_s};\Wb^{(s)})\big)\Big\}\right\} \\
    &\geq \dotp{\wb_{j,r}^{(t_1)}}{j\vb} \cdot\exp\left\{\frac{1}{4}\alpha\et\cdot \epsilon\cdot (t_1- t_1 + 1)-2\alpha \et\cdot \Delta\right\}.
\end{align}
Plugging in the definition of $\epsilon$, $\Delta$, $\alpha$, and $\et$, we can arrive at 
\begin{align}
    \dotp{\wb_{j,r}^{(t+1)}}{j\vb} \geq \dotp{\wb_{j,r}^{(t_1)}}{j\vb} \cdot\exp\left\{\frac{\eta\|\mathbf{v}\|_2^2}{32m}\!\cdot\!\big(\delta - \delta(1.05 - \delta/4 )^{\frac{1}{2}}\big) \!\cdot\!(t - t_1 + 1) - \frac{\|\mathbf{v}\|_2^2 {(1.05 )^{\frac{1}{2}}}}{\|\mathbf{u}\|_2^2(1.05 - \delta/4)^{\frac{1}{2}}}\right\}.
\end{align}
This finishes the proof of Lemma~\ref{lem: linear accumulation}.
\end{proof}

\subsection{Proof of Technical Results}\label{sec: technical lemmas}

\subsubsection{Proof of Proposition~\ref{prop: independent iteration}}\label{subsubsec: proof: independent iter}

\begin{proof}[Proof of Proposition~\ref{prop: independent iteration}]\label{proof: independent iter}
    From Equation~\eqref{eq: strong signal update}, for $r\in \cU_{j,+}$ and $t'\in  (t_j(s),t_j(s+1)]$, we can infer that 
    \begin{align}
        \dotp{\wb_{j,r}^{(t')}}{j\ub}&=\dotp{\wb_{j,r}^{(t_j(s))}}{j\ub} + \sum_{t = t_j(s)}^{t'- 1} \frac{\eta\norm{\ub}_2^2}{m}\cdot \big(1 -\ft_{t}\big)\cdot\sigma' \bigl(\dotp{\wb_{j,r}^{(t)}}{j\ub}\cdot jy_{i_t}\bigr)\cdot \mathbf{1}\{i_t \in \cW \}.
    \end{align}
    The definition of $t_j(s)$ implies that for $t\in (t_j(s), t_j(s+1))$, $y_{i_t}j =-1$. On the other hand $y_{i_{t_j(s)}} j = 1$, hence we obtain that
    \begin{align}
        \dotp{\wb_{j,r}^{(t')}}{j\ub}&=\dotp{\wb_{j,r}^{(t_j(s))}}{j\ub} + \frac{2\eta\norm{\ub}_2^2}{m}\cdot \big(1 -\ft_{t_j(s)}\big)\cdot\dotp{\wb_{j,r}^{(t)}}{j\ub}\cdot \mathbf{1}\{i_{t_j(s)} \in \cW \}.
    \end{align}
    Since the sign stability holds, this multiplication by a non-negative factor does not change the sign of the inner products. Therefore we have that
    \begin{align}
        \dotp{\wb_{j,r}^{(t_j(s+1))}}{j\ub} &= \dotp{\wb_{j,r}^{(t_j(s+1) -1)}}{j\ub}  = \cdots = \dotp{\wb_{j,r}^{(t_j(s) +1 )}}{j\ub} \\
        & = \dotp{\wb_{j,r}^{(t_j(s))}}{j\ub} \cdot \big(1+\et\cdot(1-\ft_{t_j(s)})\big).
    \end{align}
    which concludes our result for $\dotp{\wb_{j,r}^{(t)}}{j\ub},\;r\in \cU_{j,+}$. Other result can be proved analogously and are omitted here. 
\end{proof}

\subsubsection{Proof of Lemma~\ref{lem: similar behavior} for Multiple Data Setting}\label{subsubsec: proof similar behavior multiple data setting}

\begin{proof}[Proof of Lemma~\ref{lem: similar behavior} (multiple data setting)]
In the multiple data setting, the (positive) inner products only changes at the steps where the corresponding data label aligns with the directions of the neurons (i.e., $j= \pm 1$). Define
\begin{align}
\beta^{*,(t_1)}_{\ub,j}&= \frac{\max_{r\in[m]}\sigma(\dotp{\wb_{j,r}^{(t_1)}}{j\ub})}{\sum_{r\in[m]} \sigma(\dotp{\wb_{j,r}^{(t_1)}}{j\ub})}.
\end{align}

Again, the local sign stability assumption ensures that each inner product grows proportionally and the superscript $(t)$ in the neuron index sets can be dropped, with
\begin{align}
   \frac{1}{m} \sum_{r\in [m]} \sigma\big(\dotp{\wb_{j ,r}^{(t)}}{j \ub}\big) & = \frac{1}{m} \sum_{r\in \cU_{j,+}} \sigma\big( \dotp{\wb_{j,r}^{(t_1 )}}{j \ub}\big) \cdot \prod_{\substack{t'\in [t_1,t-1]:\\ y_{i_t}=j,\;i_t\notin \cW}} \left(1+ \frac{2\eta\norm{\ub}_2^2}{m} \cdot\big( 1- y_{i_{t'}}f( \xb_{i_{t'}}; \Wb^{(t')})\big)\right)^2 \\ 
     &= \frac{\max_{r\in [m]}\sigma\big( \dotp{\wb_{j,r}^{(t_1)}}{j\ub} \big)}{m\beta^{*,(t_1)}_{\ub,j}}   \cdot \prod_{\substack{t'\in [t_1,t-1]:\\ y_{i_{t'}}=j,\;i_t\notin \cW}} \left(1+ \frac{2\eta\norm{\ub}_2^2}{m} \cdot \big( 1- y_{i_{t'}}f( \xb_{i_{t'}}; \Wb^{(t')})\big)\right)^2 \\
     &= \frac{\sigma\big(\max_{r\in [m]} \dotp{\wb_{j,r}^{(t)}}{j \ub} \big)}{m\beta^{*,(t_1)}_{\ub,j}} 
     \label{eq: g(x,y) mutiple}
\end{align}
Here the first and the third equality is true because Equation~\eqref{eq: ip positive gd one data} implies that all the positive $\dotp{\wb_{y,r}^{(t)}}{y\ub}$ iterates by sequentially multiplying the same factor 
\begin{align}
    1+ \frac{2\eta\norm{\ub}_2^2}{m} \cdot \big( 1- yf( \xb; \Wb^{(t')})\big).    
\end{align}
The second equality comes from the definition of $\beta^{*,(t_1)}_{\ub}$ in Lemma~\ref{lem: single neuron behaves similarly}. 

Therefore, $m^{-1}\cdot \sum_{r\in [m]} \sigma(\dotp{\wb_{j,r}^{(t)}}{j\ub})>c$ implies that $\sigma(\max_{r\in [m]} \dotp{\wb_{y,r}^{(t)}}{y \ub})>\beta^{*,(t_1)}_{\ub,j}mc$ and the desired lower bound follows. The upper bound can be proved analogously. 
\end{proof}

\subsubsection{Proof of Proposition \ref{prop: noise memorization}}\label{subsubsec: proof: noise memorization}
\begin{proof}[Proof of Proposition \ref{prop: noise memorization}]\label{subsec: proof noise memorization}
    We prove the result by induction.
    For the step $t = t_1$, the result holds trivially.
    Suppose that this result holds for each step $t_1,\cdots,t$, then 
    by \eqref{eq: noise update}, we have that
    \begin{align}
        \langle \mathbf{w}_{j,r}^{(t+1)},\boldsymbol{\xi}_{i}\rangle &= \langle \mathbf{w}_{j,r}^{(t_1)},\boldsymbol{\xi}_i\rangle + \sum_{s=t_1}^t\frac{\eta\cdot jy_{i_s}}{m}\cdot \big(1 - y_{i_s} f(\mathbf{x}_{i_s};\mathbf{W}^{(s)})\big)\cdot\left(\sigma^{\prime}(\langle\mathbf{w}_{j,r}^{(s)}\cdot \boldsymbol{\xi}_{i_s}\rangle)\cdot\langle\boldsymbol{\xi}_{i_s},\boldsymbol{\xi}_i\rangle\right.\\
    & \qquad +\left.\sigma^{\prime}(\langle\mathbf{w}_{j,r}^{(s)}\cdot \widetilde{\boldsymbol{\xi}}_{i_s}\rangle)\cdot\langle\widetilde{\boldsymbol{\xi}}_{i_s},\boldsymbol{\xi}_i\rangle\cdot\mathbf{1}\{i_s\in\mathcal{W}\}\right)\\
    & = \langle \mathbf{w}_{j,r}^{(t_1)},\boldsymbol{\xi}_i\rangle + \sum_{\substack{s=t_1\\i_s = i}}^t\frac{\eta\|\boldsymbol{\xi}_i\|_2^2\cdot jy_{i_s}}{m}\cdot \big(1 - y_{i} f(\mathbf{x}_{i};\mathbf{W}^{(s)})\big)\cdot\sigma^{\prime}(\langle\mathbf{w}_{j,r}^{(s)}\cdot \boldsymbol{\xi}_{i}\rangle)\\
    & \qquad + \sum_{\substack{s=t_1\\i_s\neq i}}^t\frac{\eta\|\boldsymbol{\xi}_i\|_2^2\cdot jy_{i_s}}{m}\cdot \big(1 - y_{i_s} f(\mathbf{x}_{i_s};\mathbf{W}^{(s)})\big)\cdot\left(\sigma^{\prime}(\langle\mathbf{w}_{j,r}^{(s)}\cdot \boldsymbol{\xi}_{i_s}\rangle)\cdot\frac{\langle\boldsymbol{\xi}_{i_s},\boldsymbol{\xi}_i\rangle}{\|\boldsymbol{\xi}_i\|_2^2}\right.\\
    & \qquad \qquad +\left.\sigma^{\prime}(\langle\mathbf{w}_{j,r}^{(s)}\cdot \widetilde{\boldsymbol{\xi}}_{i_s}\rangle)\cdot\frac{\langle\widetilde{\boldsymbol{\xi}}_{i_s},\boldsymbol{\xi}_i\rangle}{\|\boldsymbol{\xi}_i\|_2^2}\cdot\mathbf{1}\{i_s\in\mathcal{W}\}\right)
    \end{align}
    Taking absolute value, we have that
    \begin{align}
        \left|\langle \mathbf{w}_{j,r}^{(t+1)},\boldsymbol{\xi}_{i}\rangle\right| &\leq  \left|\langle\mathbf{w}_{j,r}^{(t_1)},\boldsymbol{\xi}_i\rangle\right| + \left|\sum_{\substack{s=t_1\\i_s = i}}^t\frac{\eta\|\boldsymbol{\xi}_i\|_2^2\cdot jy_{i_s}}{m}\cdot \big(1 - y_{i} f(\mathbf{x}_{i};\mathbf{W}^{(s)})\big)\cdot\sigma^{\prime}(\langle\mathbf{w}_{j,r}^{(s)}\cdot \boldsymbol{\xi}_{i}\rangle)\right|\\
    & \qquad + \left|\sum_{\substack{s=t_1\\i_s\neq i}}^t\frac{\eta\|\boldsymbol{\xi}_i\|_2^2\cdot jy_{i_s}}{m}\cdot \big(1 - y_{i_s} f(\mathbf{x}_{i_s};\mathbf{W}^{(s)})\big)\right.\\
    & \qquad\qquad\left.\cdot\left(\sigma^{\prime}(\langle\mathbf{w}_{j,r}^{(s)}\cdot \boldsymbol{\xi}_{i_s}\rangle)\cdot\frac{\langle\boldsymbol{\xi}_{i_s},\boldsymbol{\xi}_i\rangle}{\|\boldsymbol{\xi}_i\|_2^2}
     +\sigma^{\prime}(\langle\mathbf{w}_{j,r}^{(s)}\cdot \widetilde{\boldsymbol{\xi}}_{i_s}\rangle)\cdot\frac{\langle\widetilde{\boldsymbol{\xi}}_{i_s},\boldsymbol{\xi}_i\rangle}{\|\boldsymbol{\xi}_i\|_2^2}\cdot\mathbf{1}\{i_s\in\mathcal{W}\}\right)\right| 
\end{align}
Applying the definition of $\Upsilon^{(s)}$, we further obtain that 
\begin{align}
     \left|\langle \mathbf{w}_{j,r}^{(t+1)},\boldsymbol{\xi}_{i}\rangle\right|&\leq \Upsilon^{(t_1)} + \sum_{\substack{s=t_1\\i_s = i}}^t\frac{2\eta\|\boldsymbol{\xi}_i\|_2^2}{m}\cdot \big|1 - y_{i} f(\mathbf{x}_{i};\mathbf{W}^{(s)})\big|\cdot\Upsilon^{(s)}\\
    &\qquad + \sum_{\substack{s=t_1\\i_s\neq i}}^t\frac{2\eta\|\boldsymbol{\xi}_i\|_2^2}{m}\cdot \big|1 - y_{i_s} f(\mathbf{x}_{i_s};\mathbf{W}^{(s)})\big|\cdot\Upsilon^{(s)}\cdot\left(\frac{|\langle\boldsymbol{\xi}_{i_s},\boldsymbol{\xi}_i\rangle|}{\|\boldsymbol{\xi}_i\|_2^2}
     +\frac{|\langle\widetilde{\boldsymbol{\xi}}_{i_s},\boldsymbol{\xi}_i\rangle|}{\|\boldsymbol{\xi}_i\|_2^2}\right).\label{eq: proof noise memorization}
\end{align}
Now using Lemma~\ref{lem: boundedness}, for $s\leq \min_{j\in \{\pm 1\}}\{T_{\max}^j\}$, we have that $|1 - y_{i_s}f(\mathbf{x}_{i_s};\mathbf{W}^{(s)})|\leq 2$.
Also, by Lemma~\ref{lem: noise norm and correlation},  it holds that$\|\boldsymbol{\xi}_i\|_2^2\leq 3\sigma_p^2d/2$ and $|\langle\boldsymbol{\xi},\boldsymbol{\xi}'\rangle|/\|\boldsymbol{\xi}\|_2^2 \leq \widetilde{\mathcal{O}}(d^{-1/2})$.
Thus we can further upper bound \eqref{eq: proof noise memorization} as 
\begin{align}
    \max_{i\in[n]}\left|\langle \mathbf{w}_{j,r}^{(t+1)},\boldsymbol{\xi}_{i}\rangle\right| &\leq \Upsilon^{(t_1)} + \frac{6\eta\sigma_p^2d}{m}\cdot\left(\sum_{\substack{s=t_1\\i_s= i}}^t\Upsilon^{(s)} + \widetilde{\cO}(d^{-1/2})\cdot\sum_{\substack{s=t_1\\i_s\neq i}}^t\Upsilon^{(s)}\right) \label{eq: proof noise memorization 2}\\
    & =  \Upsilon^{(t_1)} + \frac{6\eta\sigma_p^2d}{m}\cdot\left(\sum_{\substack{s=t_1\\i_s= i}}^t\Upsilon^{(t_1)} + \widetilde{\cO}(d^{-1/2})\cdot\sum_{\substack{s=t_1\\i_s\neq i}}^t\Upsilon^{(t_1)}\right) \\
    &\qquad + \frac{6\eta\sigma_p^2d}{m}\cdot\left(\sum_{\substack{s=t_1\\i_s= i}}^t\big(\Upsilon^{(t_1)} - \Upsilon^{(s)}\big) + \widetilde{\cO}(d^{-1/2})\cdot\sum_{\substack{s=t_1\\i_s\neq i}}^t\big(\Upsilon^{(t_1)} - \Upsilon^{(s)}\big) \right).
\end{align}
By our induction, we have that $\Upsilon^{(s)} - \Upsilon^{(t_1)} \leq \Upsilon^{(t_1)}\cdot\epsilon$, for which we can further bound \eqref{eq: proof noise memorization 2} as 
\begin{align}
    \max_{i\in[n]}\left|\langle \mathbf{w}_{j,r}^{(t+1)},\boldsymbol{\xi}_{i}\rangle\right| &\leq \Upsilon^{(t_1)}\cdot\left[1+\frac{6\eta\sigma_p^2d}{m}\cdot(1+\epsilon)\cdot\left(\sum_{\substack{s=t_1\\i_s= i}}^t1+\widetilde{\cO}(d^{-1/2})\cdot\sum_{\substack{s=t_1\\i_s\neq i}}^t1\right)\right] \\
    &\leq \Upsilon^{(t_1)}\cdot\left[1+\frac{6\eta\sigma_p^2d}{m}\cdot(1+\epsilon)\cdot\left(\frac{2(t - t_1 + 1)}{n}+\widetilde{\cO}(d^{-1/2})\cdot(t - t_1 + 1)\right)\right] \\
    &\leq \Upsilon^{(t_1)}\cdot\left[1+\frac{18\eta\sigma_p^2d}{mn}\cdot(1+\epsilon)\cdot(t - t_1 + 1)\right],
\end{align}
where in the first inequality we have utilized the fact that $|\{t_1\leq s\leq t:i_s=i\}|\leq2(t-t_1+1)/n$, and in the last inequality we apply the condition that $d = \widetilde{\Omega}(n^2)$ by Assumption~\ref{ass: conditions}.
Therefore, when $t\leq t_1 + \widetilde{T}_{(\boldsymbol{\xi})}-1$ with $\widetilde{T}_{(\boldsymbol{\xi})} = \widetilde{\Theta}(mn\cdot\eta^{-1}\cdot\epsilon\cdot(1+\epsilon)^{-1}\cdot(\sigma_p^2d)^{-1})$, it holds that 
\begin{align}
    \max_{i\in[n]}\left|\langle \mathbf{w}_{j,r}^{(t+1)},\boldsymbol{\xi}_{i}\rangle\right| &\leq \Upsilon^{(t_1)}\cdot (1+\epsilon).\label{eq: proof noise memorization 3}
\end{align}
By using the same argument as proving \eqref{eq: proof noise memorization 3}, we can also show that for $t\leq t_1 + \widetilde{T}_{(\boldsymbol{\xi})}-1$
\begin{align}
    \max_{i\in\mathcal{W}}\left|\langle \mathbf{w}_{j,r}^{(t+1)},\widetilde{\boldsymbol{\xi}}_{i}\rangle\right| \leq \Upsilon^{(t_1)}\cdot(1+\epsilon).\label{eq: proof noise memorization 4}
\end{align}
By combining \eqref{eq: proof noise memorization 3} and \eqref{eq: proof noise memorization 4}, we can arrive at 
\begin{align}
    \Upsilon^{(t+1)} = \max_{j\in\{\pm 1\},r\in[m]}\left\{\max_{i\in[n]}\left|\langle \mathbf{w}_{j,r}^{(t+1)},\boldsymbol{\xi}_{i}\rangle\right|, \max_{i\in\mathcal{W}}\left|\langle \mathbf{w}_{j,r}^{(t+1)},\widetilde{\boldsymbol{\xi}}_{i}\rangle\right|\right\} \leq \Upsilon^{(t_1)}\cdot(1+\epsilon).
\end{align}
Thus we have proved our induction statement for step $t+1$.
Repeating the induction completes the proof of Proposition \ref{prop: noise memorization}.
\end{proof}

\subsubsection{Proof of Proposition~\ref{prop: f tk-1 lower bound general ver}}\label{subsubsec: proof: f tk-1}

\begin{proof}[Proof of Proposition~\ref{prop: f tk-1 lower bound general ver}]\label{proof: f tk-1}
We expand $\ft_{t_j(\bar{S}_{j,k})}$ as follows.
\allowdisplaybreaks
\begin{align}
    \ft_{t_j(\bar{S}_{j,k})} &\le \frac{1}{m} \sum_{r\in [m]}\sigma\big(\dotp{\wb_{j,r}^{(t_j(\bar{S}_{j,k}-1))}}{j\ub}\big) \cdot \left(1+ \et \big( 1- \ft_{t_j(\bar{S}_{j,k}-1)}\big)\right)^2 \\
    &\qquad + \frac{1}{m}\sum_{r\in [m]}\sigma\big(\dotp{\wb_{j,r}^{(t_j(\bar{S}_{j,k}-1))}}{j\vb}\big) \cdot \left(1+ \alpha \et \big( 1- \ft_{t_j(\bar{S}_{j,k}-1)}\big)\right)^2 \\
    &\qquad - \frac{1}{m} \sum_{r\in [m]}\sigma\big(-\dotp{\wb_{-j,r}^{(t_j(\bar{S}_{j,k}-1))}}{-j\ub}\big) \cdot \left(1- \et \big( 1- \ft_{t_j(\bar{S}_{j,k}-1)}\big)\right)^2 \\
    &\qquad -\frac{1}{m}\sum_{r\in [m]}\sigma\big(-\dotp{\wb_{j,r}^{(t_j(\bar{S}_{j,k}-1))}}{-j\vb}\big) \cdot \left(1- \alpha \et \big( 1- \ft_{t_j(\bar{S}_{j,k}-1)}\big)\right)^2 \\ 
    &\qquad+ \left|\frac{1}{m} \sum_{r\in [m]} \sigma\big(\dotp{\wb_{j,r}^{(t_j(\bar{S}_{j,k}))}}{\bxi_{i_{t_j(\bar{S}_{j,k})}}} \big)- \frac{1}{m} \sum_{r\in [m]}\sigma\big(\dotp{\wb_{-j,r}^{(t_j(\bar{S}_{j,k}))}}{\bxi_{i_{t_j(\bar{S}_{j,k})}}}\big)\right| \\ 
    & \le  \frac{1}{m} \sum_{r\in [m]}\left(\sigma\big(\dotp{\wb_{j,r}^{(t_j(\bar{S}_{j,k}-1))}}{j\ub}\big)+\sigma\big(\dotp{\wb_{j,r}^{(t_j(\bar{S}_{j,k}-1))}}{j\vb}\big)\right. \\ 
    & \qquad\qquad \left.-\sigma\big(-\dotp{\wb_{-j,r}^{(t_j(\bar{S}_{j,k}-1))}}{-j\ub}\big)-\sigma\big(-\dotp{\wb_{j,r}^{(t_j(\bar{S}_{j,k}-1))}}{-j\vb}\big)\right) \cdot \left(1+ \et \big( 1- \ft_{t_j(\bar{S}_{j,k}-1)}\big)\right)^2 \\
    &\qquad +\frac{1}{m}\sum_{r\in [m]}\left(\sigma\big(-\dotp{\wb_{-j,r}^{(t_j(\bar{S}_{j,k}-1))}}{-j\ub}\big)+ \sigma\big(-\dotp{\wb_{j,r}^{(t_j(\bar{S}_{j,k}-1))}}{-j\vb}\big)\right)\cdot \left(2\et \big( 1- \ft_{t_j(\bar{S}_{j,k}-1)}\big)\right)\\
    &\qquad\qquad+ {\Upsilon^{(t_j(\bar{S}_{j,k}))} }^2\\ 
    & \le \ft_{t_j(\bar{S}_{j,k}-1)} \cdot  \left(1+\et \big( 1- \ft_{t_j(\bar{S}_{j,k}-1)}\big)\right)^2 + 2E_{t_j(\bar{S}_{j,k}-1)}+{\Upsilon^{(t_j(\bar{S}_{j,k}))}}^2. \label{eq: ftsk expand}
\end{align}
By Assumption~\ref{ass:oscilation_one_data} we know that $\ft_{t_j(\bar{S}_{j,k})}>1+\delta$. 
From the discussion in the proof of Lemma~\ref{prop: f Tk-1 lower bound}, we know that once 
\begin{align}
    2E_{t_j(\bar{S}_{j,k}-1)}+{\Upsilon^{(t_j(\bar{S}_{j,k}))}}^2 \le \delta,
\end{align}
we have that 
\begin{align}
    \ft_{t_j(\bar{S}_{j,k}-1)} \ge \frac{\et +2 - \sqrt{\et^2+4\et}}{2\et},
\end{align}
and that
\begin{align}
    \ft_{t_j(\bar{S}_{j,k})} &\le \left(1+\et(1-\ft_{t_j(\bar{S}_{j,k}-1)})\right)^2 + E_{t_j(\bar{S}_{j,k}-1)} +{\Upsilon^{(t_j(\bar{S}_{j,k}))}}^2  \\
    &\le \bigl(\et/2+\sqrt{\et^2/4+\et}\bigr)^2 + E_{t_j(\bar{S}_{j,k}-1)} +{\Upsilon^{(t_j(\bar{S}_{j,k}))}}^2 .
\end{align}
This finishes the proof of Proposition~\ref{prop: f tk-1 lower bound general ver}
\end{proof}

\section{Multiple Training Data Case: Small Learning Rate Regime}\label{sec:multiple-data small lr}

This section focuses on the multiple training data setup with a small learning rate.

Recall that $\mathcal{W}$ is the index set of training data points which lack the strong feature patch.
By \eqref{eq: two way multiple data sgd}, the CNN weights are updated according to 
\begin{align}
    \mathbf{w}_{j,r}^{(t+1)} &= \mathbf{w}_{j,r}^{(t)} - \frac{j\eta}{m}\cdot\big(f(\mathbf{x}_{i_t};\mathbf{W}^{(t)}) - y_{i_t}\big)\cdot\left(\sigma^\prime(\langle\mathbf{w}_{j,r}^{(t)},y_{i_t}\mathbf{u}\rangle)\cdot y_{i_t}\mathbf{u}\cdot\mathbf{1}\{i_t\notin\mathcal{W}\}\right. \\
    &\qquad + \sigma^\prime(\langle\mathbf{w}_{j,r}^{(t)},y_{i_t}\mathbf{v}\rangle)\cdot y_{i_t}\mathbf{v} + \sigma^{\prime}(\langle\mathbf{w}_{j,r}^{(t)}\cdot \boldsymbol{\xi}_{i_t}\rangle)\cdot\boldsymbol{\xi}_{i_t}\\
    &\qquad + \left.\sigma^{\prime}(\langle\mathbf{w}_{j,r}^{(t)}\cdot \widetilde{\boldsymbol{\xi}}_{i_t}\rangle)\cdot\widetilde{\boldsymbol{\xi}}_{i_t},\cdot\mathbf{1}\{i_t\in\mathcal{W}\}\right).\label{eq: small lr sgd}
\end{align}
Subsequently, by \eqref{eq: small lr sgd} the update formulas of those inner products of interests are given by
\begin{align}
    \langle \mathbf{w}_{j,r}^{(t+1)},j\mathbf{u}\rangle &= \langle \mathbf{w}_{j,r}^{(t)},j\mathbf{u}\rangle + \frac{\eta\|\mathbf{u}\|_2^2}{m}\cdot \big(1 - y_{i_t} f(\mathbf{x}_{i_t};\mathbf{W}^{(t)})\big)\cdot\sigma^{\prime}(\langle \mathbf{w}_{j,r}^{(t)},j\mathbf{u}\rangle jy_{i_t})\cdot\mathbf{1}\{i_t\notin\mathcal{W}\}, \\ 
    \langle \mathbf{w}_{j,r}^{(t+1)},j\mathbf{v}\rangle &= \langle \mathbf{w}_{j,r}^{(t)},j\mathbf{v}\rangle + \frac{\eta\|\mathbf{v}\|_2^2}{m}\cdot \big(1 - y_{i_t} f(\mathbf{x}_{i_t};\mathbf{W}^{(t)})\big)\cdot\sigma^{\prime}(\langle\mathbf{w}_{j,r}^{(t)}\cdot j\mathbf{v}\rangle jy_{i_t}),
\end{align}
and 
\begin{align}
    \langle \mathbf{w}_{j,r}^{(t+1)},\boldsymbol{\xi}_{i}\rangle &= \langle \mathbf{w}_{j,r}^{(t)},\boldsymbol{\xi}_i\rangle + \frac{\eta\cdot jy_{i_t}}{m}\cdot \big(1 - y_{i_t} f(\mathbf{x}_{i_t};\mathbf{W}^{(t)})\big)\cdot\left(\sigma^{\prime}(\langle\mathbf{w}_{j,r}^{(t)}\cdot \boldsymbol{\xi}_{i_t}\rangle)\cdot\langle\boldsymbol{\xi}_{i_t},\boldsymbol{\xi}_i\rangle\right.\\
    & \qquad +\left.\sigma^{\prime}(\langle\mathbf{w}_{j,r}^{(t)}\cdot \widetilde{\boldsymbol{\xi}}_{i_t}\rangle)\cdot\langle\widetilde{\boldsymbol{\xi}}_{i_t},\boldsymbol{\xi}_i\rangle\cdot\mathbf{1}\{i_t\in\mathcal{W}\}\right). \\
    \langle \mathbf{w}_{j,r}^{(t+1)},\widetilde{\boldsymbol{\xi}}_{i}\rangle &= \langle \mathbf{w}_{j,r}^{(t)},\widetilde{\boldsymbol{\xi}}_i\rangle + \frac{\eta\cdot jy_{i_t}}{m}\cdot \big(1 - y_{i_t} f(\mathbf{x}_{i_t};\mathbf{W}^{(t)})\big)\cdot\left(\sigma^{\prime}(\langle\mathbf{w}_{j,r}^{(t)}, \boldsymbol{\xi}_{i_t}\rangle)\cdot\langle\boldsymbol{\xi}_{i_t},\widetilde{\boldsymbol{\xi}}_i\rangle\right.\\
    & \qquad +\left.\sigma^{\prime}(\langle\mathbf{w}_{j,r}^{(t)}, \widetilde{\boldsymbol{\xi}}_{i_t}\rangle)\cdot\langle\widetilde{\boldsymbol{\xi}}_{i_t},\widetilde{\boldsymbol{\xi}}_i\rangle\cdot\mathbf{1}\{i_t\in\mathcal{W}\}\right),\quad i\in\mathcal{W}.
\end{align}
For convenience, we also write $\ell^{(t)}_i=f(\xb_i;\Wb^{(t)})-y_i$ as the fitting residual. 

The result of this section relies on the following conditions on the data model and the initialization.

\begin{assumption}[Conditions on hyperparameters]\label{cond:model_params_multiple_data small-lr}
    Suppose that the following holds. 
    For some $\epsilon\in(0,1),$
    \begin{enumerate}
        \item The weight initialization scale $\sigma_0 = \widetilde{\Theta}(\|\mathbf{u}\|_2^{-1})$;
        \item Strong signal strength $\norm{\ub}_2> \widetilde{\Omega}(m/\sqrt{n\epsilon})\cdot \sigma_p\sqrt{d}$ and weak signal strength $\sigma_p\sqrt{d}\ge \widetilde{\Omega}\left(m/\sqrt{n\epsilon}\right)\cdot\norm{\vb}_2$;
        \item The dimension $d$ satisfies $d = \Omega(\mathrm{polylog}(m,n))$.
    \end{enumerate}
\end{assumption}

\begin{theorem}[Restatement of Proposition~\ref{prop:multiple-data small-lr}]\label{thm:multiple-data small-lr formal}
    Under Assumption~\ref{cond:model_params_multiple_data small-lr}, choosing the learning rate $\eta\le m/6\|\ub\|^2_2$ small enough and $\epsilon^\prime\in(0,1)$, then with high probability $1-1/\mathrm{poly}(d)$, there exist
    \begin{align*}
        T^\dagger=\frac{4m}{\eta(1-\tau)(1-\rho)\|\mathbf{u}\|_2^2}\log\left(\frac{2\iota}{\sigma_0\|\mathbf{u}\|_2}\right),\quad T=T^\dagger+\left\lfloor\frac{Cm^3}{2\eta\epsilon\norm{\ub}_2^2}\right\rfloor,\quad T^\prime =T+\left\lfloor\frac{\left\|\Wb^{(T)}-\Wb^\star\right\|_F^2}{2\eta\epsilon}\right\rfloor,
    \end{align*}
    with $\tau$, $\iota$ defined in \eqref{eq: tau multi}, \eqref{eq: iota multi} such that: (i) the average loss on samples $\mathcal{W}^c$ decreases to $3\epsilon$ over iterations $[T^\dagger,T]$, i.e. 
    \begin{equation*}
         \frac{1}{2n}\sum_{i\in\mathcal{W}^c}\min_{T^\dagger\le t\le T}\left\{y_i-f(\xb_i;\Wb^{(t)})\right\}^2\le 3\epsilon,
    \end{equation*}
    (ii) average loss on samples $\mathcal{W}$ decreases to $3\epsilon^\prime$ over iterations $[T,T^\prime]$, i.e.
    \begin{equation*}
        \frac{1}{2n}\sum_{i\in\mathcal{W}}\min_{T\le t\le T^\prime}\left(y_i-f(\xb_i;\Wb^{(t)})\right)^2\le 3\epsilon^\prime,
    \end{equation*}
    (iii) the model does not learn weak signal $\vb$ well enough even until $T^\prime$, compared to initialization, i.e. 
    \begin{align}
        \max_{j,r} \left|\langle\mathbf{w}_{j\in \{\pm 1\},r\in[m]}^{(t)},\mathbf{v}\rangle\right|\le 2\sqrt{2\log(16m/p)}\cdot\sigma_0\|\mathbf{v}\|_2,\quad \forall t\le T^\prime.
    \end{align}
\end{theorem}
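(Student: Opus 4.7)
The plan is to adapt the two-stage analysis behind Lemmas~\ref{lemma:one-data small-lr 1st-stage} and~\ref{lemma:one-data small-lr 2nd-stage} to the multi-data setting, adding a third stage to account for the crucial new phenomenon: samples in $\mathcal{W}$ lack a strong-signal patch and must be fitted by noise memorization on $\boldsymbol{\xi}_i,\widetilde{\boldsymbol{\xi}}_i$ instead. First I would set up three running maxima
\begin{align*}
\Phi^{(t)} = \max_{j,r}|\langle\mathbf{w}_{j,r}^{(t)},\mathbf{u}\rangle|, \qquad
\Psi^{(t)} = \max_{j,r}|\langle\mathbf{w}_{j,r}^{(t)},\mathbf{v}\rangle|, \qquad
\Xi^{(t)} = \max_{j,r,i}|\langle\mathbf{w}_{j,r}^{(t)},\boldsymbol{\xi}_i\rangle| \vee \max_{j,r,i\in\mathcal{W}}|\langle\mathbf{w}_{j,r}^{(t)},\widetilde{\boldsymbol{\xi}}_i\rangle|,
\end{align*}
and, using the correlation bounds in Lemma~\ref{lem: noise norm and correlation} to handle the cross terms $\langle\boldsymbol{\xi}_{i_t},\boldsymbol{\xi}_i\rangle=\widetilde{\mathcal{O}}(\sigma_p^2\sqrt{d})\ll\sigma_p^2 d$, derive global exponential upper bounds of the form $\Phi^{(t)}\le\exp(C\eta\|\mathbf{u}\|_2^2 t/m)\Phi^{(0)}$, $\Psi^{(t)}\le\exp(C\eta\|\mathbf{v}\|_2^2 t/m)\Psi^{(0)}$, $\Xi^{(t)}\le\exp(C\eta\sigma_p^2 d\,t/m)\Xi^{(0)}$. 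The separation in Assumption~\ref{cond:model_params_multiple_data small-lr}, namely $\|\mathbf{u}\|_2^2\gg\max\{\sigma_p^2 d,\|\mathbf{v}\|_2^2\}\cdot\widetilde{\Omega}(m^2/(n\epsilon))$, guarantees that $\Phi$ reaches constant order long before $\Psi$ or $\Xi$ escape from their $\widetilde{\mathcal{O}}(\sigma_0\cdot\mathrm{scale})$ regimes.

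Stage~1 (exponential growth of $\mathbf{u}$ through $\mathcal{W}^c$ samples). While $\Phi^{(t)},\Psi^{(t)},\Xi^{(t)}\le\sqrt{\tau/2}$, a multi-data analogue of Lemma~\ref{lemma:one-data, lower bound fitting residual} gives $-y_i\ell_i^{(t)}\ge 1-\tau$ for every $i\in\mathcal{W}^c$. Because only steps with $y_{i_t}=j$ and $i_t\in\mathcal{W}^c$ contribute to $\langle\mathbf{w}_{j,r}^{(t)},j\mathbf{u}\rangle$, Lemmas~\ref{lem: balanced sample}--\ref{lem: balanced weak data} produce an effective growth rate of $\eta(1-\tau)(1-\rho)\|\mathbf{u}\|_2^2/(4m)$, yielding the stated $T^\dagger$ by the same telescoping calculation as in the single-data case, with $\tau,\iota$ defined by
\begin{align}\label{eq: tau multi}
\tau = \max\Bigl\{2\sigma_0\|\mathbf{u}\|_2(2\log(16m/p))^{1/2-\|\mathbf{v}\|_2^2/4\|\mathbf{u}\|_2^2},\; 1-\tfrac{6\sqrt{2}\max\{\|\mathbf{v}\|_2^2,\sigma_p^2 d\}}{\|\mathbf{u}\|_2^2\log(2/\sqrt{2\log(16m/p)})}\Bigr\},
\end{align}
\begin{align}\label{eq: iota multi}
\iota = \sigma_0\|\mathbf{u}\|_2\cdot\exp\Bigl\{\tfrac{1-\tau}{6}\log\bigl(\tfrac{\sqrt{\tau/2}}{\sigma_0\|\mathbf{u}\|_2\sqrt{2\log(16m/p)}}\bigr)\Bigr\}.
\end{align}

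Stage~2 (convergence on $\mathcal{W}^c$). I would take the reference network $\mathbf{w}_{j,r}^\ast = 4m(1+\epsilon)\iota^{-1}\cdot j\mathbf{u}/\|\mathbf{u}\|_2^2$, exactly as in \eqref{eq: w star one data}. By Stage~1 we have $\max_r\langle\mathbf{w}_{y_i,r}^{(t)},y_i\mathbf{u}\rangle\ge\iota/2$ for every $i\in\mathcal{W}^c$, so Lemma~\ref{lemma:inner-prod grad reference pt}'s argument gives $y_i\langle\nabla_{\mathbf{W}} f(\mathbf{x}_i;\mathbf{W}^{(t)}),\mathbf{W}^\ast\rangle\ge 2(1+\epsilon)$ for each $i\in\mathcal{W}^c$ and $t\in[T^\dagger,T]$. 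The per-step descent inequality
\begin{align*}
\|\mathbf{W}^{(t)}-\mathbf{W}^\ast\|_F^2 - \|\mathbf{W}^{(t+1)}-\mathbf{W}^\ast\|_F^2 \ge 2\eta\,\ell(f(\mathbf{x}_{i_t};\mathbf{W}^{(t)}),y_{i_t})\cdot\mathbf{1}\{i_t\in\mathcal{W}^c\} - 2\eta\epsilon^2 - \text{err}_{\mathcal{W}}
\end{align*}
follows by the same $2$-homogeneity / Young's inequality manipulation as Lemma~\ref{lemma:one-data small-lr descent lemma}, where the extra error term $\text{err}_\mathcal{W}$ (coming from steps $i_t\in\mathcal{W}$ for which $\langle\nabla f,\mathbf{W}^\ast\rangle$ vanishes) is controlled by $\rho$. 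Telescoping over $[T^\dagger,T]$ and using $\|\mathbf{W}^{(T^\dagger)}-\mathbf{W}^\ast\|_F^2=\mathcal{O}(m^3/\|\mathbf{u}\|_2^2)$ yields $\sum_{i\in\mathcal{W}^c}\min_{T^\dagger\le t\le T}\ell(f(\mathbf{x}_i;\mathbf{W}^{(t)}),y_i)\le 3n\epsilon$.

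Stage~3 (noise memorization on $\mathcal{W}$) and weak-signal non-learning. For $i\in\mathcal{W}$ the samples carry no $\mathbf{u}$, so I define a new reference $\mathbf{w}_{j,r}^{\star\star} = \mathbf{w}_{j,r}^{(T)} + (Cm/|\mathcal{W}|)\sum_{i\in\mathcal{W}}jy_i\boldsymbol{\xi}_i/\|\boldsymbol{\xi}_i\|_2^2$; the near-orthogonality from Lemma~\ref{lem: noise norm and correlation} ensures $y_i\langle\nabla_{\mathbf{W}} f(\mathbf{x}_i;\mathbf{W}^{(t)}),\mathbf{W}^{\star\star}\rangle\ge 2(1+\epsilon^\prime)$ for every $i\in\mathcal{W}$ simultaneously, and the same descent-lemma machinery yields (ii) over $[T,T^\prime]$. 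Throughout all three stages the global bound $\Psi^{(t)}\le\exp(6\eta\|\mathbf{v}\|_2^2 t/m)\cdot\sqrt{2\log(16m/p)}\sigma_0\|\mathbf{v}\|_2$ remains valid (the bounded-residual regime persists since the loss is uniformly bounded), and with $T^\prime=\widetilde{\mathcal{O}}(m^3/(\eta\epsilon^\prime\sigma_p^2 d))$ dominated by Stage~3, the exponent $\eta\|\mathbf{v}\|_2^2 T^\prime/m=\widetilde{\mathcal{O}}(m^2\|\mathbf{v}\|_2^2/(\epsilon^\prime\sigma_p^2 d))=o(1)$ by Assumption~\ref{cond:model_params_multiple_data small-lr}, giving (iii). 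The principal obstacle will be the Stage~3 descent calculation: one must verify that $\|\mathbf{W}^{(T)}-\mathbf{W}^{\star\star}\|_F^2$ remains polynomial in $m,n$ despite the $|\mathcal{W}|$-term sum (using the orthogonality of distinct $\boldsymbol{\xi}_i$ to cancel cross terms in the Frobenius norm), and that the $\mathcal{W}^c$-samples previously fitted in Stage~2 do not re-inflate the loss during Stage~3---which requires reusing the Stage~2 reference simultaneously to keep the $\mathcal{W}^c$ residuals small, a standard but delicate two-reference argument.
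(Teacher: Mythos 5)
Your three-stage architecture is the same as the paper's, and Stages~1 and~2 match closely: exponential growth of the strong-signal inner products while noise and weak-signal inner products stay $\widetilde{\mathcal{O}}(\sigma_0\cdot\text{scale})$, followed by a descent-lemma argument with the $\ub$-aligned reference $\wb^\ast_{j,r}=4m(1+\epsilon)\iota^{-1}\,j\ub/\|\ub\|_2^2$, with the $\mathcal{W}$-step error controlled via $\rho$ (the paper's Lemma~\ref{lemma:multiple-data small-lr descent lemma 2}). One technical mismatch: your $\iota$ in~\eqref{eq: iota multi} is copied from the single-data version~\eqref{eq: iota}; the paper's multi-data $\iota$ reads $2\sigma_0\|\ub\|_2\exp\{(1-\tau)(1-\rho)\|\ub\|_2^2/(3(4+\rho)\sigma_p^2 d)\}$, whose $\sigma_p^2 d$ denominator encodes the noise clock $T_+$ (the time until which Lemma~\ref{lemma:multiple-data small-lr upper bound Gamma} keeps noise inner products controlled); your form does not deliver the needed inequality $T^\dagger\le T_+$. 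Similarly, the exponent in your $\tau$ should use $\sigma_p^2 d$ where you wrote $\|\ub\|_2^2$, since the dominant non-$\ub$ component in the multi-data model is noise.

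Stage~3 has a real gap. The paper's reference is
\[
\wb^\star_{j,r} = \wb^\ast_{j,r} + \frac{4m(1+\epsilon')}{\iota}\sum_{i\in\mathcal{W}}\mathbf{1}\{y_i=j\}\left(\frac{\boldsymbol{\xi}_i}{\|\boldsymbol{\xi}_i\|_2}+\frac{\widetilde{\boldsymbol{\xi}}_i}{\|\widetilde{\boldsymbol{\xi}}_i\|_2}\right),
\]
with two features your $\wb^{\star\star}$ lacks. First, the base is $\wb^\ast$ (not $\wb^{(T)}$): retaining the $\ub$-direction inside the single reference is precisely what keeps $y_i\langle\nabla f(\xb_i;\Wb^{(t)}),\Wb^\star\rangle\ge 2(1+\epsilon')$ for the $\mathcal{W}^c$ samples throughout Stage~3, so there is no two-reference argument to carry out --- one well-chosen reference handles both subsets. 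Anchoring instead to the iterate $\wb^{(T)}$ gives no alignment guarantee and leaves you needing exactly the delicate argument you flag as an obstacle. Second, you omit the $\widetilde{\boldsymbol{\xi}}_i$ components: weak data $\xb_i=(\widetilde{\boldsymbol{\xi}}_i,y_i\vb,\boldsymbol{\xi}_i)$ has two noise patches, and both must appear in $\wb^\star$ for the alignment to hold. Finally, the claim that Lemma~\ref{lem: noise norm and correlation} already gives $y_i\langle\nabla f,\Wb^{\star\star}\rangle\ge 2(1+\epsilon')$ for all $t\ge T$ cannot be right as stated: at $t=T$ the activations $\sigma'(\langle\wb^{(t)}_{j,r},\boldsymbol{\xi}_i\rangle)$ are still $\widetilde{\mathcal{O}}(\sigma_0\sigma_p\sqrt{d})$, so the gradient magnitude is tiny regardless of how well $\Wb^{\star\star}$ aligns with $\boldsymbol{\xi}_i$. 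As the paper notes in its (admittedly sketched) Lemma~\ref{lemma:multiple-data small-lr 3rd-stage}, Stage~3 itself begins with an exponential-growth substage in which the noise inner products rise to constant order, and only then does the descent mechanism apply. Your derivation of (iii) from the global bound $\Psi^{(t)}\le\exp(6\eta\|\vb\|_2^2 t/m)\Psi^{(0)}$ and the Assumption~\ref{cond:model_params_multiple_data small-lr} scaling is correct in spirit and matches the paper's reasoning.
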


In the multiple training data small learning rate regime, the dynamics go through three stages, which we characterize in Appendices~\ref{subsec: stage 1}, \ref{subsec: stage 2}, and \ref{subsec: stage 3}, respectively. 
The following lemma plays an important role in the exponentially increasing stage (stage 1), for which we single it out here.

\begin{lemma}[Derivative lower bound]\label{lemma:multiple-data, lower bound fitting residual}
    For any $0<\tau<1$ to be tuned later, suppose at some time $t$ there holds
    \begin{equation*}
        \max_{j\in\{\pm 1\},r\in[m]}\left\{\left|\langle\mathbf{w}_{j,r}^{(t)},\mathbf{u}\rangle\right|, \left|\langle\mathbf{w}_{j,r}^{(t)},\mathbf{v}\rangle\right|, \max_{i\in[n]}\left|\langle\mathbf{w}_{j,r}^{(t)},\boldsymbol{\xi}_i\rangle\right|, \max_{i\in\mathcal{W}}\left|\langle\mathbf{w}_{j,r}^{(t)},\widetilde{\boldsymbol{\xi}}_i\rangle\right|\right\}\le\sqrt{\frac{\tau}{3}},
    \end{equation*}
    then we can lower bound the fitting residual $-y\ell^{(t)}_i\ge 1-\tau$ for every $i\in[n]$.
\end{lemma}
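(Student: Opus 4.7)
The plan is to extend the proof of the single-data analogue Lemma~\ref{lemma:one-data, lower bound fitting residual} to the multiple-data setting in a direct, patch-wise fashion. First I would write
\begin{align*}
    -y_i \ell_i^{(t)} = 1 - y_i f(\mathbf{x}_i;\mathbf{W}^{(t)}) = 1 - F_{y_i}(\mathbf{x}_i;\mathbf{W}^{(t)}) + F_{-y_i}(\mathbf{x}_i;\mathbf{W}^{(t)}) \geq 1 - F_{y_i}(\mathbf{x}_i;\mathbf{W}^{(t)}),
\end{align*}
since $F_{-y_i}(\mathbf{x}_i;\mathbf{W}^{(t)}) \geq 0$ by the non-negativity of the $\mathrm{ReLU}^2$ activation. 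This reduces the task to proving the uniform upper bound $F_{y_i}(\mathbf{x}_i;\mathbf{W}^{(t)})\leq \tau$ for every $i\in[n]$.

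To establish that bound, I would split into two cases depending on whether $i\in\mathcal{W}$ or $i\notin\mathcal{W}$, since this determines whether the first patch is a noise block $\widetilde{\boldsymbol{\xi}}_i$ or a strong-signal block $y_i\mathbf{u}$. In both cases the three patches of $\mathbf{x}_i$ lie in the set $\{y_i\mathbf{u},\,y_i\mathbf{v},\,\boldsymbol{\xi}_i,\,\widetilde{\boldsymbol{\xi}}_i\}$, and the hypothesis of the lemma ensures $|\langle\mathbf{w}_{y_i,r},\mathbf{x}_i^{(p)}\rangle|\leq \sqrt{\tau/3}$ for every patch index $p\in\{1,2,3\}$ and every neuron $r\in[m]$. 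Using $\sigma(z)=\max(z,0)^2 \leq z^2$, each summand satisfies $\sigma(\langle\mathbf{w}_{y_i,r},\mathbf{x}_i^{(p)}\rangle)\leq \tau/3$, so summing the three patches per neuron gives $\sum_{p=1}^3\sigma(\langle\mathbf{w}_{y_i,r},\mathbf{x}_i^{(p)}\rangle)\leq \tau$, and averaging over $r\in[m]$ yields $F_{y_i}(\mathbf{x}_i;\mathbf{W}^{(t)})\leq \tau$.

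There is no genuine obstacle here; the argument is an essentially routine computation parallel to the single-data version. The only thing worth flagging is that the denominator ``$3$'' in the hypothesis (rather than the ``$2$'' of the one-data case) is exactly what is needed to absorb the third (noise) patch that is always present in the multiple-data inputs, which is what makes the bound $F_{y_i}\leq \tau$ go through uniformly over both $i\notin\mathcal{W}$ and $i\in\mathcal{W}$.
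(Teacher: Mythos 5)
Your proposal matches the paper's proof essentially verbatim: the same reduction $-y_i\ell_i^{(t)}\geq 1-F_{y_i}(\mathbf{x}_i;\mathbf{W}^{(t)})$ via nonnegativity of $F_{-y_i}$, the same case split on $i\in\mathcal{W}$ versus $i\notin\mathcal{W}$, and the same patchwise bound $\sigma(\langle\mathbf{w}_{y_i,r}^{(t)},\mathbf{x}_i^{(p)}\rangle)\leq\tau/3$ summed over the three patches. Your closing remark correctly identifies why the constant is $3$ here instead of $2$ as in the single-data version.
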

\begin{proof}[Proof of Lemma~\ref{lemma:multiple-data, lower bound fitting residual}]
    Plug into the CNN model definition \eqref{eq: cnn}, we have that
    \begin{align*}
    -y\ell^{(t)}_i=1-F_y(\mathbf{x}_i;\mathbf{W}^{(t)})+F_{-y}(\mathbf{x}_i;\mathbf{W}^{(t)})\ge 1-F_y(\mathbf{x}_i;\mathbf{W}^{(t)}).
    \end{align*}
    If $i\in\mathcal{W}^c$, we can upper bound $F_y(\mathbf{x}_i;\mathbf{W}^{(t)})$ further by
    \begin{align*}
    F_y(\mathbf{x}_i;\mathbf{W}^{(t)})&=\frac{1}{m}\sum_{r\in[m]}\sigma(\langle\mathbf{w}_{y,r}^{(t)},y\mathbf{u}\rangle)+\sigma(\langle\mathbf{w}_{y,r}^{(t)},y\mathbf{v}\rangle)+\sigma(\langle\mathbf{w}_{y,r}^{(t)},\boldsymbol{\xi}_i\rangle)\\
    &\le\max_{r\in[m]}\left\{\langle\mathbf{w}_{y,r}^{(t)},y\mathbf{u}\rangle^2+\langle\mathbf{w}_{y,r}^{(t)},y\mathbf{v}\rangle^2+\langle\mathbf{w}_{y,r}^{(t)},\boldsymbol{\xi}_i\rangle^2\right\}\le \tau.
    \end{align*}
    Otherwise, if $i\in\mathcal{W}$, we also have
    \begin{align*}
    F_y(\mathbf{x}_i;\mathbf{W}^{(t)})=&\frac{1}{m}\sum_{r\in[m]}\sigma(\langle\mathbf{w}_{y,r}^{(t)},\widetilde{\boldsymbol{\xi}}_i\rangle)+\sigma(\langle\mathbf{w}_{y,r}^{(t)},y\mathbf{v}\rangle)+\sigma(\langle\mathbf{w}_{y,r}^{(t)},\boldsymbol{\xi}_i\rangle)\\
    \le&\max_{r\in[m]}\left\{\langle\mathbf{w}_{y,r}^{(t)},\widetilde{\boldsymbol{\xi}}_i\rangle^2+\langle\mathbf{w}_{y,r}^{(t)},y\mathbf{v}\rangle^2+\langle\mathbf{w}_{y,r}^{(t)},\boldsymbol{\xi}_i\rangle^2\right\}\le \tau.
    \end{align*}
    Then it follows that $-y\ell^{(t)}\ge 1-\tau$. 
\end{proof}

\subsection{Stage 1. Learning Strong Signal Exponentially Fast}\label{subsec: stage 1}
In this stage, we mainly track the maximal inner product between $\mathbf{w}$ and the signal vectors $\mathbf{v}$ and $\mathbf{u}$, with extra attention to the maximal inner product between $\mathbf{w}$ and the noise vectors.
\begin{align*}
    \Psi^{(t)}&=\max_{j\in\{\pm 1\},r\in[m]} \left|\langle\mathbf{w}_{j,r}^{(t)},\mathbf{v}\rangle\right|,\quad \Phi^{(t)}=\max_{j\in\{\pm 1\},r\in[m]} \left|\langle\mathbf{w}_{j,r}^{(t)},\mathbf{u}\rangle\right|,\\
    \Gamma^{(t)}_i&=\max_{j\in\{\pm 1\},r\in[m]} \left|\langle\mathbf{w}_{j,r}^{(t)},\boldsymbol{\xi}_i\rangle\right|,\quad i\in[n],\\
    \widetilde{\Gamma}^{(t)}_i&=\max_{j\in\{\pm 1\},r\in[m]} \left|\langle\mathbf{w}_{j,r}^{(t)},\widetilde{\boldsymbol{\xi}}_i\rangle\right|,\quad i\in\mathcal{W}.
\end{align*}
\begin{lemma}[First stage: noise]\label{lemma:multiple-data small-lr upper bound Gamma}
Under the same conditions as Theorem~\ref{thm:multiple-data small-lr formal}, ever since initialization, at least until time 
\begin{align}
    T_+:=\frac{nm}{3\eta(4+\rho)\sigma_p^2d},
\end{align}
there still holds that 
    \begin{equation}\label{eq:multiple-data small-lr upper bound Gamma}
        \max_{i\in[n]}\Gamma_i^{(t)}\le \sigma_0\sigma_p\sqrt{d},\quad \max_{i\in\mathcal{W}}\widetilde{\Gamma}_i^{(t)}\le \sigma_0\sigma_p\sqrt{d}. 
    \end{equation}
\end{lemma}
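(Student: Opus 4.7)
} The plan is to prove the two bounds simultaneously by induction on $t$, using the fact that the noise update per step is controlled by the noise correlations from Lemma~\ref{lem: noise norm and correlation}, so that the noise inner products only drift from their initialization values at a rate that is slow relative to $T_+$. Throughout, constants involving $\sqrt{\log(16mn/p)}$ should be read as absorbed into the statement; we really show $\Gamma_i^{(t)},\widetilde{\Gamma}_i^{(t)}\le \widetilde{\mathcal{O}}(\sigma_0\sigma_p\sqrt d)$, staying at the initialization scale.

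The base case $t=0$ follows directly from Lemma~\ref{lem: initialization}. For the inductive step, assume \eqref{eq:multiple-data small-lr upper bound Gamma} holds for all $s\le t$ with $t+1\le T_+$. Unrolling the update for $\langle\wb_{j,r}^{(t+1)},\bxi_i\rangle$ and using the triangle inequality and $|\sigma'(z)|=2|z|$, I would split the accumulated change into (i) self-correlation terms, i.e. steps with $i_s=i$, each contributing at most $(2\eta\|\bxi_i\|_2^2/m)\cdot|\ell^{(s)}_{i_s}|\cdot \Gamma_i^{(s)}$ together with an analogous $\widetilde{\bxi}_i$ term when $i\in\cW$ and $i_s=i$; and (ii) cross-correlation terms with $i_s\neq i$, each contributing at most $(2\eta/m)\cdot|\langle\bxi_{i_s},\bxi_i\rangle|\cdot|\ell^{(s)}|\cdot\Gamma_{i_s}^{(s)}$, plus the analogous $\widetilde{\bxi}_{i_s}$ term when $i_s\in\cW$. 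Applying Lemma~\ref{lem: noise norm and correlation} to bound $\|\bxi_i\|_2^2\le 3\sigma_p^2d/2$ and $|\langle\bxi_{i_s},\bxi_i\rangle|\vee|\langle\widetilde{\bxi}_{i_s},\bxi_i\rangle|\le 2\sigma_p^2\sqrt{d\log(2n/p)}$, and using multi-pass SGD (each $i$ visited at most $\lceil T_+/n\rceil\le 2T_+/n$ times), plugging in the induction hypothesis $\Gamma^{(s)}_{i_s},\widetilde\Gamma^{(s)}_{i_s}\le \sigma_0\sigma_p\sqrt d$ gives
\begin{align*}
\Gamma_i^{(t+1)}\le \Gamma_i^{(0)}+\sigma_0\sigma_p\sqrt d\cdot\Bigl[\tfrac{2\eta\cdot 3\sigma_p^2d}{m}\cdot\tfrac{2T_+}{n}\cdot |\ell|_{\max}+\widetilde{\mathcal{O}}\Bigl(\tfrac{\eta\sigma_p^2\sqrt d}{m}\Bigr)\cdot T_+\cdot|\ell|_{\max}\Bigr].
\end{align*}
By the choice $T_+=nm/(3\eta(4+\rho)\sigma_p^2d)$ and $d=\Omega(n^2)$ from Assumption~\ref{cond:model_params_multiple_data small-lr}, the first bracketed term is a small constant and the second is negligible, so the bound closes. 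Replacing $\bxi_i$ by $\widetilde\bxi_i$ for $i\in\cW$ throughout yields the second inequality by the same argument.

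The main obstacle is the uniform control of the fitting residual $|\ell^{(s)}|$ for $s\le t$. This is where the small-learning-rate assumption is used crucially: since \emph{all} inner products of interest (signal and noise alike) remain $\widetilde{\mathcal{O}}(\sqrt{\tau/3})$ in Stage~1 --- the noise part by the induction hypothesis, and $\Psi^{(s)},\Phi^{(s)}$ by the same exponential-growth argument used for the single-data case in Lemma~\ref{lemma:one-data small-lr 1st-stage} (which yields $\Phi^{(s)}\le \exp(6\eta\|\ub\|_2^2s/m)\cdot\sqrt{2\log(16m/p)}\sigma_0\|\ub\|_2$ and an analogous bound on $\Psi^{(s)}$) --- Lemma~\ref{lemma:multiple-data, lower bound fitting residual} applies and $|\ell^{(s)}|$ is bounded by a constant. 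To carry this through cleanly, I would in fact state and prove the noise bound, the $\Phi^{(s)}$ bound, and the $\Psi^{(s)}$ bound jointly within a single inductive loop, so that each bound supplies the ingredient needed for the others at the next step; the noise bound above is the sole new piece, all other pieces being direct ports of the single-data analysis.
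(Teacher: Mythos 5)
Your proposal follows essentially the same path as the paper's proof: induct on $t$, split the accumulated change in $\langle\wb_{j,r}^{(t)},\bxi_i\rangle$ into self-correlation terms ($i_s = i$, at most $\widetilde T/n$ of them, each of size $\widetilde\Theta(\eta\sigma_p^2 d/m)\cdot\sigma_0\sigma_p\sqrt d$) and cross-correlation terms ($i_s\neq i$, each of size $\widetilde\Theta(\eta\sigma_p^2\sqrt d/m)\cdot\sigma_0\sigma_p\sqrt d$), invoke Lemma~\ref{lem: noise norm and correlation} for the noise-noise inner products, and close the loop via $T_+ = nm/(3\eta(4+\rho)\sigma_p^2 d)$ and $d=\Omega(n^2)$. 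The decomposition, the constants, and the role of the multi-pass structure are all as in the paper.

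Where you go beyond the paper's write-up is in the treatment of $|\ell^{(s)}_{i_s}|$. The paper's one-step bound silently replaces $\frac{\eta}{m}|\ell^{(t)}_{i_t}|\cdot\sigma'(\cdot)\le \frac{2\eta}{m}|\ell^{(t)}_{i_t}|\cdot|\langle\wb,\bxi\rangle|$ by $\frac{6\eta}{m}|\langle\wb,\bxi\rangle|$, i.e.\ it absorbs a factor $|\ell^{(t)}_{i_t}|\le 3$ without comment. That bound on the residual is in turn only available once \emph{all} inner products --- signal as well as noise --- are controlled, which is formally established only in the companion Lemma~\ref{lemma:multiple-data small-lr 1st-stage} (whose own exponential bounds on $\Phi^{(t)},\Psi^{(t)}$ use the same implicit $|\ell|\le 3$, and which in turn cites the noise lemma). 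You are right that the cleanest way to break this apparent circularity is the joint induction you propose, carrying the noise bound, the $\Phi$ bound, and the $\Psi$ bound simultaneously so that at each step the small-inner-product hypothesis yields $|yf|\le\tau<1$ and hence $|\ell^{(t)}|\le 1+\tau<3$. One small wrinkle worth tightening in your write-up: you invoke Lemma~\ref{lemma:multiple-data, lower bound fitting residual}, which delivers the lower bound $-y\ell^{(t)}\ge 1-\tau$; for the step at hand you need the upper bound $|\ell^{(t)}|\le 1+\tau$, which follows not from that lemma but directly from $|f|\le 3\max_{j,r,p}\langle\wb_{j,r}^{(t)},\xb^{(p)}\rangle^2\le\tau$ under the inductive smallness hypothesis --- the two directions are cousins of the same boundedness fact, but it is the upper one you are actually using here.
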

\begin{proof}[Proof of Lemma~\ref{lemma:multiple-data small-lr upper bound Gamma}]
    For those inner products with noise vectors, $\forall i\in[n]$, the updating rules become
\begin{align}
    \left|\langle \mathbf{w}_{j,r}^{(t+1)},\boldsymbol{\xi}_{i}\rangle\right| &\le \left|\langle \mathbf{w}_{j,r}^{(t)},\boldsymbol{\xi}_i\rangle\right| + \frac{\eta}{m}\cdot \left|\ell^{(t)}_{i_t}\right|\cdot\left(\sigma^{\prime}(\langle\mathbf{w}_{j,r}^{(t)}\cdot \boldsymbol{\xi}_{i_t}\rangle)\cdot\left|\langle\boldsymbol{\xi}_{i_t},\boldsymbol{\xi}_i\rangle\right| +\sigma^{\prime}(\langle\mathbf{w}_{j,r}^{(t)}\cdot \widetilde{\boldsymbol{\xi}}_{i_t}\rangle)\cdot\left|\langle\widetilde{\boldsymbol{\xi}}_{i_t},\boldsymbol{\xi}_i\rangle\right|\cdot\mathbf{1}\{i_t\in\mathcal{W}\}\right)\\
    &\le \left|\langle \mathbf{w}_{j,r}^{(t)},\boldsymbol{\xi}_i\rangle\right| + \frac{6\eta}{m}\cdot \left(\left|\langle\mathbf{w}_{j,r}^{(t)}, \boldsymbol{\xi}_{i_t}\rangle\right|\cdot\left|\langle\boldsymbol{\xi}_{i_t},\boldsymbol{\xi}_i\rangle\right|+\left|\langle\mathbf{w}_{j,r}^{(t)}, \widetilde{\boldsymbol{\xi}}_{i_t}\rangle\right|\cdot\left|\langle\widetilde{\boldsymbol{\xi}}_{i_t},\boldsymbol{\xi}_i\rangle\right|\cdot\mathbf{1}\{i_t\in\mathcal{W}\}\right).
\end{align}
By taking maximum over $r\in[m]$, we conclude that
\begin{equation*}
    \Gamma^{(t+1)}_i\le \Gamma^{(t)}_i+\frac{6\eta}{m}\cdot \left(\Gamma_{i_t}^{(t)}\cdot \left|\langle\boldsymbol{\xi}_{i_t},\boldsymbol{\xi}_i\rangle\right|+\widetilde{\Gamma}_{i_t}^{(t)}\cdot \left|\langle\widetilde{\boldsymbol{\xi}}_{i_t},\boldsymbol{\xi}_i\rangle\right|\cdot\mathbf{1}\{i_t\in\mathcal{W}\}\right),\quad\forall i\in[n].
\end{equation*}
Similarly, we also have that 
\begin{equation*}
    \widetilde{\Gamma}^{(t+1)}_i\le \widetilde{\Gamma}^{(t)}_i+\frac{6\eta}{m}\cdot \left(\Gamma_{i_t}^{(t)}\cdot \left|\langle\boldsymbol{\xi}_{i_t},\widetilde{\boldsymbol{\xi}}_i\rangle\right|+\widetilde{\Gamma}_{i_t}^{(t)}\cdot \left|\langle\widetilde{\boldsymbol{\xi}}_{i_t},\widetilde{\boldsymbol{\xi}}_i\rangle\right|\cdot\mathbf{1}\{i_t\in\mathcal{W}\}\right),\quad\forall i\in\mathcal{W}.
\end{equation*}
We then use induction to rigorously prove our conclusions.
Firstly, \eqref{eq:multiple-data small-lr upper bound Gamma} holds at time $t=0$.
Now suppose that \eqref{eq:multiple-data small-lr upper bound Gamma} holds until some $\widetilde{T}< T_+$. Fixing some $i\in[n]$,
\begin{align*}
    \Gamma^{(\widetilde{T}+1)}_i &\le \frac{6\eta \sigma_0\sigma_p\sqrt{d}}{m}\cdot \sum_{t=0}^{ \widetilde{T}}\left(\left|\langle\boldsymbol{\xi}_{i_t},\boldsymbol{\xi}_i\rangle\right|+\left|\langle\widetilde{\boldsymbol{\xi}}_{i_t},\boldsymbol{\xi}_i\rangle\right|\cdot\mathbf{1}\{i_t\in\mathcal{W}\}\right)\\
    &\le \frac{6\eta \sigma_0\sigma_p\sqrt{d}}{m}\cdot \left(\frac{3\widetilde{T}\sigma_p^2d}{2n}+2\widetilde{T}\cdot (1+\rho)\cdot \sigma_p^2\sqrt{d\log(4n^2/p)}\right)\\
    &\le \frac{3\eta \sigma_0\sigma_p\sqrt{d}(4+\rho)\widetilde{T}\sigma_p^2d}{nm}\\
    &\le \sigma_0\sigma_p\sqrt{d}.
\end{align*}
The first inequality is by induction hypothesis. The second inequality is because that there are at most $\widetilde{T}/n$ many $i_t$'s would equal $i$ and at most $\rho\widetilde{T}$ many $i_t$'s would be in $\mathcal{W}$, and we also use Lemma~\ref{lem: noise norm and correlation} to control the correlations between noise vectors. The third inequality is by $d\ge 16n^2\log(4n^2/p)$ for Assumption~\ref{cond:model_params_multiple_data small-lr} , while the last inequality is due to $\widetilde{T}<T_+$.
Similarly, we can also control $\widetilde{\Gamma}_i^{(t+1)}$ for some fixed $i\in\mathcal{W}$ as
\begin{align*}
    \widetilde{\Gamma}^{(\widetilde{T}+1)}_i &\le \frac{6\eta \sigma_0\sigma_p\sqrt{d}}{m}\cdot \sum_{t=0}^{ \widetilde{T}}\left(\left|\langle\boldsymbol{\xi}_{i_t},\widetilde{\boldsymbol{\xi}}_i\rangle\right|+\left|\langle\widetilde{\boldsymbol{\xi}}_{i_t},\widetilde{\boldsymbol{\xi}}_i\rangle\right|\cdot\mathbf{1}\{i_t\in\mathcal{W}\}\right)\\
    &\le \frac{6\eta \sigma_0\sigma_p\sqrt{d}}{m}\left(\frac{3\widetilde{T}\sigma_p^2d}{2n}+2\widetilde{T}\cdot (1+\rho)\cdot \sigma_p^2\sqrt{d\log(4n^2/p)}\right)\\
    &\le \sigma_0\sigma_p\sqrt{d},
\end{align*}
where the second inequality is because there are at most $\widetilde{T}/n$ many $i_t$'s would equal $i\in\mathcal{W}$. In conclusion, \eqref{eq:multiple-data small-lr upper bound Gamma} holds at least until $T_+$.
\end{proof}

In the following, we would take
\begin{align}
    \tau&=\max\left\{2\sigma_0\|\mathbf{u}\|_2\left(2\log(16m/p)\right)^{1/2-\|\mathbf{v}\|_2^2/4(\sigma_p^2d)},1-\frac{6\sqrt{2}\sigma_p^2d}{\|\mathbf{u}\|_2^2\log(2/\sqrt{2\log(16m/p)})}\right\},\label{eq: tau multi}\\
    \iota&=2\sigma_0\|\mathbf{u}\|_2\cdot\exp\left\{\frac{(1-\tau)(1-\rho)\norm{\ub}_2^2}{3(4+\rho)\sigma_p^2d}\right\}.\label{eq: iota multi}
\end{align}
By the conditions in Assumption~\ref{cond:model_params_multiple_data small-lr} on $\|\mathbf{v}\|_2^2/\|\mathbf{u}\|_2^2$, we find $\tau,\iota$ both constant in $(0,1)$.
\begin{lemma}[First stage: signal]\label{lemma:multiple-data small-lr 1st-stage}
Under the same conditions as Theorem~\ref{thm:multiple-data small-lr formal}, there exists time
\begin{equation*}
    T^\dagger=\frac{4m}{\eta(1-\tau)(1-\rho)\|\mathbf{u}\|_2^2}\log\left(\frac{2\iota}{\sigma_0\|\mathbf{u}\|_2}\right),
\end{equation*}
such that: (i) the model learns strong signal to a constant level,
\begin{equation*}
    \max_{r\in[m]} \langle\mathbf{w}_{j,r}^{(T^\dagger)},j\mathbf{u}\rangle\ge\iota,\quad\forall j\in\{\pm 1\},   
\end{equation*}
(ii) compared to the random initialization, the model does not learn weak signal that much, i.e., 
\begin{align}
    \max_{j\in\{\pm 1\},r\in[m]} \left|\langle\mathbf{w}_{j,r}^{(T^\dagger)},\mathbf{v}\rangle\right|\le 2\sqrt{2\log(16m/p)}\cdot\sigma_0\|\mathbf{v}\|_2
\end{align}
\end{lemma}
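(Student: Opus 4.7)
The plan is to extend the single-data first-stage argument (Lemma~\ref{lemma:one-data small-lr 1st-stage}) to the multi-data setup, with the two new ingredients being (i) control of the noise inner products, which is supplied by Lemma~\ref{lemma:multiple-data small-lr upper bound Gamma}, and (ii) an accounting that tracks separately, for each $j\in\{\pm 1\}$, how often $\max_r\langle\wb_{j,r}^{(t)},j\ub\rangle$ actually receives an update. A step $t$ contributes to the latter only when $y_{i_t}=j$ and $i_t\notin\mathcal{W}$, so I expect the effective growth rate to be damped by a factor of order $(1-\rho)/2$ compared to the one-data case, which is exactly where the extra constant $4$ in the statement of $T^\dagger$ enters.

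First I would derive the crude exponential upper bounds for $\Psi^{(t)}$ and $\Phi^{(t)}$. The single-data calculation carries over verbatim once we note that the derivative factor $|\ell_{i_t}^{(t)}|$ is bounded by $3$ as long as all inner products are $\widetilde{O}(1)$, giving
\begin{equation*}
\Psi^{(t)}\le \exp\!\left(\tfrac{6\eta\|\vb\|_2^2 t}{m}\right)\cdot\sqrt{2\log(16m/p)}\,\sigma_0\|\vb\|_2,\quad \Phi^{(t)}\le \exp\!\left(\tfrac{6\eta\|\ub\|_2^2 t}{m}\right)\cdot\sqrt{2\log(16m/p)}\,\sigma_0\|\ub\|_2.
\end{equation*}
Combining these with Lemma~\ref{lemma:multiple-data small-lr upper bound Gamma} (which gives $\Gamma_i^{(t)}\vee\widetilde{\Gamma}_i^{(t)}\le\sigma_0\sigma_p\sqrt{d}$ up to time $T_+$), one obtains a window $[0,T_1]$ on which all quantities entering Lemma~\ref{lemma:multiple-data, lower bound fitting residual} lie below $\sqrt{\tau/3}$. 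Concretely, one chooses $T_1=(m/6\eta\|\ub\|_2^2)\log\!\big(\sqrt{\tau/3}/(\sigma_0\|\ub\|_2\sqrt{2\log(16m/p)})\big)$; the signal-strength conditions in Assumption~\ref{cond:model_params_multiple_data small-lr} ensure $T_1\le T_+$ and that the noise and weak-signal caps are not violated first. Lemma~\ref{lemma:multiple-data, lower bound fitting residual} then yields $-y_{i_t}\ell_{i_t}^{(t)}\ge 1-\tau$ uniformly on this window.

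The core step is to lower bound $A_j^{(t)}:=\max_{r\in[m]}\langle\wb_{j,r}^{(t)},j\ub\rangle$. For each $j$, $A_j^{(t+1)}\ge \big(1+2\eta(1-\tau)\|\ub\|_2^2/m\big)A_j^{(t)}$ at every step $t$ with $y_{i_t}=j$ and $i_t\notin\mathcal{W}$, and $A_j^{(t+1)}=A_j^{(t)}$ otherwise. By Lemma~\ref{lem: balanced sample} and Lemma~\ref{lem: balanced weak data} (applied to both labels simultaneously and to weak-data counts within each label), with probability at least $1-1/\mathrm{poly}(d)$ the number of favorable steps in $[0,t]$ is at least $(1-\rho)t/4$ for each $j$. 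Using $1+z\ge\exp(z/2)$ on $z\in[0,2]$, this yields
\begin{equation*}
A_j^{(t)}\ge \exp\!\left(\tfrac{\eta(1-\tau)(1-\rho)\|\ub\|_2^2 t}{4m}\right)\cdot A_j^{(0)}\ge \exp\!\left(\tfrac{\eta(1-\tau)(1-\rho)\|\ub\|_2^2 t}{4m}\right)\cdot \tfrac{\sigma_0\|\ub\|_2}{2},
\end{equation*}
and plugging in $t=T^\dagger$ gives $A_j^{(T^\dagger)}\ge \iota$ for both $j\in\{\pm 1\}$, as desired.

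The weak-signal cap at $T^\dagger$ follows by substituting $T^\dagger$ back into the exponential upper bound for $\Psi^{(t)}$; the exponent becomes $O\big(\|\vb\|_2^2/((1-\tau)(1-\rho)\|\ub\|_2^2)\cdot\log(2\iota/\sigma_0\|\ub\|_2)\big)$, which is $o(1)$ under the separation $\|\vb\|_2^2\ll\|\ub\|_2^2$ imposed by Assumption~\ref{cond:model_params_multiple_data small-lr}, hence $\Psi^{(T^\dagger)}\le 2\sqrt{2\log(16m/p)}\,\sigma_0\|\vb\|_2$. The main obstacles I anticipate are the bookkeeping items: verifying simultaneously $T^\dagger\le T_1\wedge T_+$ (requires both $\|\ub\|_2\gg\sigma_p\sqrt{d}$ and $\sigma_p\sqrt{d}\gg\|\vb\|_2$ from Assumption~\ref{cond:model_params_multiple_data small-lr}), and guaranteeing the high-probability lower bound on favorable step counts holds for \emph{both} labels $j$ on the entire interval. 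These fit together cleanly once the strength separations are used, and no conceptually new ingredient beyond the one-data proof is needed.
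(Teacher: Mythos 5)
Your proof follows essentially the same route as the paper's: derive exponential upper bounds for $\Psi^{(t)}$ and $\Phi^{(t)}$, invoke Lemma~\ref{lemma:multiple-data small-lr upper bound Gamma} to keep the noise inner products in check, apply Lemma~\ref{lemma:multiple-data, lower bound fitting residual} to get the residual lower bound $1-\tau$ on an initial window, and convert favorable-step counts (via Lemmas~\ref{lem: balanced sample} and~\ref{lem: balanced weak data}) into an exponential lower bound on $A_j^{(t)}$ with the damped rate $(1-\tau)(1-\rho)/4$, then substitute $T^\dagger$. The only minor divergence is that you keep the cruder upper bound $\Phi^{(t)}\le\exp(6\eta\|\ub\|_2^2t/m)\Phi^{(0)}$ without the $(1-\rho)$ improvement the paper uses, which yields a slightly smaller admissible window $T_1$; this costs only a constant factor and is in fact cleaner because it avoids a concentration argument on the upper-bound side, and you also correctly match the $\sqrt{\tau/3}$ threshold from Lemma~\ref{lemma:multiple-data, lower bound fitting residual} where the paper writes $\sqrt{\tau/2}$.
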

\begin{proof}[Proof of Lemma~\ref{lemma:multiple-data small-lr 1st-stage}]
Firstly, we would find $\{\Psi^{(t)},\Phi^{(t)}\}_{t\geq 0}$ having an exponentially growing upper bound. Recursively, we would have that 
\begin{align}
    \Psi^{(t+1)}&\le \Psi^{(t)}+\max_{j\in\{\pm 1\},r\in[m]}\left|\frac{jy\eta}{m}\cdot \big(f(\mathbf{x}_i;\mathbf{W}^{(t)}) - y\big)\cdot\sigma^{\prime}(\langle\mathbf{w}_{j,r}^{(t)},y\mathbf{v}\rangle)\cdot\|\mathbf{v}\|_2^2\right|\\
    & = \Psi^{(t)}+\frac{\eta}{m}\cdot \left|\ell^{(t)}_i\right|\cdot \|\mathbf{v}\|_2^2\cdot\max_{j\in\{\pm 1\},r\in[m]}\sigma^{\prime}(\langle\mathbf{w}_{j,r}^{(t)},y\mathbf{v}\rangle)\\
    & \le \Psi^{(t)}+\frac{2\eta}{m}\cdot \left|\ell^{(t)}_i\right|\cdot \|\mathbf{v}\|_2^2\cdot \Psi^{(t)}\\
    &\le\exp\left(\frac{6\eta\|\mathbf{v}\|_2^2}{m}\right)\cdot \Psi^{(t)}.
\end{align}
Therefore, we have that 
\begin{align}
    \Psi^{(t)}\le\exp\left(\frac{6\eta\|\mathbf{v}\|_2^2t}{m}\right)\cdot\Psi^{(0)}\le \exp\left(\frac{6\eta\|\mathbf{v}\|_2^2t}{m}\right)\cdot\sqrt{2\log(16m/p)}\cdot  \sigma_0\|\mathbf{v}\|_2   \label{eq: psi exp bound multi} 
\end{align}
It follows similarly that
\begin{align}
    \Phi^{(t)}\le\exp\left(\frac{6\eta(1-\rho)\|\mathbf{u}\|_2^2t}{m}\right)\cdot\Phi^{(0)}\le \exp\left(\frac{6\eta(1-\rho)\|\mathbf{u}\|_2^2t}{m}\right)\cdot\sqrt{2\log(16m/p)} \cdot\sigma_0\|\mathbf{u}\|_2.\label{eq: phi exp bound multi} 
\end{align}
The extra factor $1-\rho$ appears because only a $1-\rho$ proportion of data points would contain $\mathbf{u}$, and therefore contribute to evolution of $\Phi^{(t)}$. Note that growing rates of these two bounds differ a lot due to the different magnitudes of $(1-\rho)\|\mathbf{u}\|_2$ and $\|\mathbf{v}\|_2$.

Our subsequent analysis illustrates that $\Phi^{(t)}$ can grow into a constant-level magnitude since strong signal $\mathbf{u}$ is significant enough. We can track how well our model learns $\mathbf{u}$ by
\begin{equation*}
    A^{(t)}_1=\max_{r\in[m], i_t\notin\mathcal{W}} \langle\mathbf{w}_{1,r}^{(t)},\mathbf{u}\rangle,\quad A^{(t)}_{-1}=\max_{r\in[m],i_t\notin\mathcal{W}} \langle\mathbf{w}_{-1,r}^{(t)},-\mathbf{u}\rangle.
\end{equation*}
By definition, $A^{(t)}_1,A^{(t)}_{-1}\le \Phi^{(t)}$ also admits an exponentially upper bound in \eqref{eq: phi exp bound multi} . 
For a certain $\tau\in(0,1)$, due to the exponential upper bounds~\eqref{eq: psi exp bound multi} and \eqref{eq: phi exp bound multi}, $\max\{\Phi^{(t)},\Psi^{(t)}\}\le \sqrt{\tau/3}$ is true at least until
\begin{equation*}
    T_1=\frac{m}{6\eta(1-\rho)\|\mathbf{u}\|_2^2}\log\left(\frac{\sqrt{\tau/2}}{\sigma_0\|\mathbf{u}\|_2\sqrt{2\log(16m/p)}}\right).
\end{equation*}
Moreover, since we have $(1-\rho)\|\ub\|_2^2\gg\sigma_p^2d/n$ by Assumption~\ref{cond:model_params_multiple_data small-lr}, we also know $T_1\le T_+$ where $T_+$ comes from Lemma~\ref{lemma:multiple-data small-lr upper bound Gamma}.
Therefore 
\begin{align}
    \Gamma_i^{(t)}\le \sigma_0\sigma_p\sqrt{d}\le\sqrt{\tau/3},\quad \widetilde{\Gamma}_i^{(t)}\le \sigma_0\sigma_p\sqrt{d}\le\sqrt{\tau/3},\quad \forall t\leq T_1
\end{align}
Consequently, until at least time $T_1$, we can use Lemma~\ref{lemma:multiple-data, lower bound fitting residual} to conclude that $-y_{i_t}\ell^{(t)}_{i_t}\ge 1-\tau$, which enables lower bounding $A^{(t)}$ in the following. 

The $i_t$-th sample would be used to update parameters, according to our multi-pass SGD updates \eqref{eq: two way multiple data sgd}. 
If $i_t\in\mathcal{W}$, then $\langle\mathbf{w}_{j,r}^{(t+1)},j\mathbf{u}\rangle=\langle\mathbf{w}_{j,r}^{(t)},j\mathbf{u}\rangle$ holds for any $j\in\{\pm 1\}$ and $r\in[m]$. 
If $i_t\notin\mathcal{W}$ but $y_{i_t}=-1$, then $\max_{r}\langle\mathbf{w}_{1,r}^{(t+1)},\mathbf{u}\rangle=\max_{r}\langle\mathbf{w}_{1,r}^{(t)},\mathbf{u}\rangle\ge 0$ since that neuron will not be activated. Otherwise, only if $i_t\notin\mathcal{W}$ and $y_{i_t}=1$, the updating rule becomes
\begin{align*}
   \langle\mathbf{w}_{1,r}^{(t+1)},\mathbf{u}\rangle&=\langle\mathbf{w}_{1,r}^{(t)},\mathbf{u}\rangle+\frac{\eta}{m}\cdot \big(-y_{i_t}\ell^{(t)}_{i_t}\big)\cdot\sigma^{\prime}(\langle\mathbf{w}_{1,r}^{(t)},\mathbf{u}\rangle)\cdot \|\mathbf{u}\|_2^2\\
   &\ge\langle\mathbf{w}_{1,r}^{(t)},\mathbf{u}\rangle+\frac{2\eta(1-\tau)\|\mathbf{u}\|_2^2}{m}\cdot \max\left\{\langle\mathbf{w}_{1,r}^{(t)},\mathbf{u}\rangle,0\right\}.
\end{align*}
Take maximum over $r\in[m]$ to see that 
\begin{align*}
    A^{(t+1)}_1\ge A^{(t)}_1+\frac{2\eta(1-\tau)\|\mathbf{u}\|_2^2}{m}\cdot A^{(t)}_1\ge\exp\left(\frac{\eta(1-\tau)\|\mathbf{u}\|^2}{m}\right)\cdot A^{(t)}_1,
\end{align*}
where the last equality is by $1+z\ge\exp(z/2)$ for any $0\le z\le2$. Consequently, when $t$ is large (larger than $n$), we would have 
\begin{align}
A^{(t)}_1\ge& \exp\left(\frac{\eta(1-\tau)\|\mathbf{u}\|_2^2}{m}\cdot \sum_{t^\prime\le t}\mathbf{1}\{i_{t^\prime}\notin\mathcal{W},y_{i_t}=1\}\right)\cdot A^{(0)}_1\\
\ge& \exp\left(\frac{\eta(1-\tau)\|\mathbf{u}\|_2^2\cdot(1-\rho)\cdot t}{4m}\right)\cdot \sigma_0\|\mathbf{u}\|_2/2,\label{eq: exp lower bound multi 1}
\end{align}
at least until step $t\le T_1$. 
We use the fact that $\sum_{t^\prime\le t}\mathbf{1}\{i_{t^\prime}\notin\mathcal{W},y_{i_t}=1\}\ge (1-\rho)t/4$ because the sample labels are balanced (Lemma~\ref{lem: balanced sample}) and $1-\rho$ proportion of samples come with the strong signal. In the same manner, we would have that 
\begin{align}
    A^{(t)}_{-1}\ge \exp\left(\frac{\eta(1-\tau)\|\mathbf{u}\|_2^2(1-\rho) t}{4m}\right)\cdot \sigma_0\|\mathbf{u}\|_2/2.\label{eq: exp lower bound multi -1}
\end{align}
Define the time when $A^{(t)}_{\pm 1}$ both break $\iota$,
\begin{align}
    T_2=\frac{4m}{\eta(1-\tau)(1-\rho) \|\mathbf{u}\|_2^2}\log\left(\frac{2\iota}{\sigma_0\|\mathbf{u}\|_2}\right)\le T_1,
\end{align}
where the inequality is due to the scaling of $\iota$ upon $\tau$. 
Moreover, we also need that $T_2\le T_+$, where $T_+$ is the time that $\langle\wb_{j,r}^{(t)},\widetilde{\boldsymbol{\xi}}_i\rangle,\langle\wb_{j,r}^{(t)},\boldsymbol{\xi}_i\rangle$ remains in $\mathcal{O}(\sigma_0\sigma_p\sqrt{d})$. And this requirement is also achieved by the selection of $\tau$ and $\iota$.
Plugging $T_2$ into the exponential lower bounds~\eqref{eq: exp lower bound multi 1} and \eqref{eq: exp lower bound multi -1}, we can conclude that 
\begin{align}
    \Phi^{(T_2)}\ge A^{(T_2)}_{\pm 1}\ge\iota,
\end{align}
which already grows up to a constant level magnitude by the time $T_2$. Lastly, plug the definition of $T_2$ to upper bound $\psi^{(T_2)}$ as 
\begin{align}
    \Psi^{(T_2)}\le \exp\left(\frac{24\|\mathbf{v}\|_2^2}{(1-\tau)(1-\rho)\|\mathbf{u}\|_2^2}\log\left(\frac{2\iota}{\sigma_0\|\mathbf{u}\|_2}\right)\right)\cdot \sqrt{2\log(16m/p)} \cdot \sigma_0\|\mathbf{v}\|_2 \le 2\sqrt{2\log(16m/p)} \cdot \sigma_0\|\mathbf{v}\|_2.
\end{align}
In conclusion, by taking $T^\dagger=T_2$, this lemma is completely proved.
\end{proof}

\subsection{Stage 2. Exploiting Strong Signal}\label{subsec: stage 2}
In the second stage, our lemmas suggest that before the model really learns the weak signal $\mathbf{v}$ or memorizes any noise vector, the model already fits a proportion $1-\rho$ of the data points (i.e., strong data) by exploiting strong signal $\mathbf{u}$.
\begin{lemma}[Second stage]\label{lemma:multiple-data small-lr 2nd-stage}
    Under the same conditions as Theorem~\ref{thm:multiple-data small-lr formal}, there exists time
    \begin{equation*}
        T=T^\dagger+\left\lfloor\frac{Cm^3}{2\eta\epsilon\norm{\ub}_2^2}\right\rfloor
    \end{equation*}
    such that: (i) the average loss over iterations within this stage has decreased to $2\epsilon$, i.e., 
    \begin{equation*}
        \frac{1}{2n}\sum_{i\in\mathcal{W}^c}\min_{T^\dagger\le t\le T}\left\{y_i-f(\bx_i;\Wb^{(t)})\right\}^2\le 3\epsilon,
    \end{equation*}
    (ii) all through the training dynamics $0\le t\le T$, there holds that 
    \begin{align}
        \max_{j\in\{\pm 1\},r\in[m]} \left|\langle\mathbf{w}_{j,r}^{(t)},\mathbf{v}\rangle\right|\le 2\sqrt{2\log(16m/p)}\sigma_0\cdot \|\mathbf{v}\|_2,
    \end{align}
    (iii) all through the training dynamics $0\le t\le T$, there holds that 
    \begin{align}
        \max_{j\in\{\pm 1\},r\in[m],i\in[n]} \left|\langle\mathbf{w}_{j,r}^{(t)},\boldsymbol{\xi}_i\rangle\right|\le \sigma_0\sigma_p\sqrt{d},\quad\max_{j\in\{\pm 1\},r\in[m],i\in\mathcal{W}} \left|\langle\mathbf{w}_{j,r}^{(t)},\widetilde{\boldsymbol{\xi}}_i\rangle\right|\le\sigma_0\sigma_p\sqrt{d}.
    \end{align}
\end{lemma}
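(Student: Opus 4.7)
The plan is to adapt the online-descent argument of Lemma~\ref{lemma:one-data small-lr 2nd-stage} from the single-data case to multi-pass SGD with multiple samples, treating weak samples in $\mathcal{W}$ as controlled perturbations to a potential function. First I would establish a multi-data analog of Lemma~\ref{lemma:one-data small-lr 2nd starts}: via induction on $t$, use $\eta\le m/(6\norm{\ub}_2^2)$ together with the conclusions (ii) and (iii) (inductively assumed for earlier iterates) to show that for each $j\in\{\pm 1\}$, $\max_{r\in[m]}\dotp{\wb_{j,r}^{(t)}}{j\ub}\ge\iota/2$ is preserved throughout $[T^\dagger, T]$; this captures the fact that, once Stage~1 has planted the strong signal, subsequent small learning rate updates cannot destroy it.

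Next I define the reference point $\wb_{j,r}^\star = 4(1+\epsilon)m/\iota\cdot j\ub/\norm{\ub}_2^2$, in direct analogy with \eqref{eq: w star one data}. Since $\vb$, $\bxi_i$, $\widetilde{\bxi}_i$ are all orthogonal to $\ub$ by construction of the data generation model, the same computation as Lemma~\ref{lemma:inner-prod grad reference pt} yields
\begin{align*}
y_i\cdot\dotp{\nabla f(\xb_i;\Wb^{(t)})}{\Wb^\star}\ \ge\ 2(1+\epsilon),\qquad \forall\, i\notin\mathcal{W},\ t\in[T^\dagger,T],
\end{align*}
because only the $\ub$-patch contributes. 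Repeating the computation of Lemma~\ref{lemma:one-data small-lr descent lemma} verbatim (the only inputs are $2$-homogeneity and a per-sample gradient-norm bound, both of which transfer to single-sample SGD) gives the per-step descent
\begin{align*}
\|\Wb^{(t)}-\Wb^\star\|_F^2-\|\Wb^{(t+1)}-\Wb^\star\|_F^2\ \ge\ 2\eta\cdot\ell\bigl(f(\xb_{i_t};\Wb^{(t)}),y_{i_t}\bigr)-2\eta\epsilon^2,\qquad i_t\notin\mathcal{W}.
\end{align*}
For weak-data steps $i_t\in\mathcal{W}$, the cross-term $\dotp{\nabla f(\xb_{i_t};\Wb^{(t)})}{\Wb^\star}$ is negligible (the $\ub$-patch is absent), so the descent degenerates, but $\|\Wb^{(t)}-\Wb^\star\|_F^2$ shrinks by at most $O(\eta)$ per such step because $\|\nabla L_{i_t}\|_F^2$ is controlled by (ii)--(iii).

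Telescoping from $T^\dagger$ to $T$, and using Lemma~\ref{lem: balanced weak data} together with multi-pass accounting to bound the weak-step count by $O(\rho(T-T^\dagger+1))$, the weak-data erosion totals $O(\rho\eta(T-T^\dagger+1))$, which is absorbed into the $\epsilon$-budget under the small $\rho$ condition. Combined with $\|\Wb^{(T^\dagger)}-\Wb^\star\|_F^2\le Cm^3/\norm{\ub}_2^2$ (same decomposition as in the proof of Lemma~\ref{lemma:one-data small-lr 2nd-stage}, since only the $\ub$-component of $\Wb^\star$ contributes at leading order while Stage~1 guarantees all other inner products are $\widetilde{O}(1)$), choosing $T-T^\dagger=\lfloor Cm^3/(2\eta\epsilon\norm{\ub}_2^2)\rfloor$ forces the average strong-sample loss below $3\epsilon$; the pointwise statement (i) then follows from pigeonholing over the $\Omega((T-T^\dagger)/n)$ visits of each $i\in\mathcal{W}^c$ in multi-pass SGD.

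The main obstacle is not the descent telescoping but the self-consistent verification that (ii) and (iii) genuinely persist through the extended horizon $[0,T]$, rather than only through the Stage~1 window. The exponential-growth upper bounds for $\Psi^{(t)}$ in Lemma~\ref{lemma:multiple-data small-lr 1st-stage} and for $\Gamma_i^{(t)},\widetilde{\Gamma}_i^{(t)}$ in Lemma~\ref{lemma:multiple-data small-lr upper bound Gamma} continue to apply as long as the residuals $|\ell_i^{(t)}|$ stay $O(1)$, which is the case throughout Stage~2 because the network output is bounded whenever (ii) and (iii) themselves hold. The scalings $\norm{\ub}_2\gg(m/\sqrt{n\epsilon})\sigma_p\sqrt{d}$ and $\sigma_p\sqrt{d}\gg(m/\sqrt{n\epsilon})\norm{\vb}_2$ in Assumption~\ref{cond:model_params_multiple_data small-lr} are exactly what makes the time horizon $T-T^\dagger\lesssim m^3/(\eta\epsilon\norm{\ub}_2^2)$ fit comfortably within the doubling times of both the weak-signal and noise-memorization inner products, closing the induction and completing the proof.
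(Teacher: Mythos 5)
Your proposal reproduces the paper's structure exactly: establish a multi-data analog of Lemma~\ref{lemma:one-data small-lr 2nd starts} by induction to preserve $\max_r\dotp{\wb_{j,r}^{(t)}}{j\ub}\ge\iota/2$, define the $\ub$-aligned reference point as in \eqref{eq:multiple-data reference pt 2nd}, transfer the gradient--reference inner-product bound to $i\notin\mathcal{W}$, telescope the per-step descent, and close the induction by checking the horizon $T-T^\dagger\lesssim m^3/(\eta\epsilon\norm{\ub}_2^2)$ fits inside the weak-signal and noise doubling times (Lemmas~\ref{lemma:multiple-data small-lr upper bound Gamma} and~\ref{lemma:multiple-data small-lr 2nd starts}) via Assumption~\ref{cond:model_params_multiple_data small-lr}. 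That part is all consistent with the paper.

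However, your handling of the weak-data steps $i_t\in\mathcal{W}$ has a genuine gap. You bound the per-step erosion of $\|\Wb^{(t)}-\Wb^\star\|_F^2$ by $O(\eta)$, sum over $O(\rho(T-T^\dagger))$ weak steps to get $O(\rho\eta(T-T^\dagger))$, divide by $2\eta(T-T^\dagger)$, and then assert the resulting $O(\rho)$ term is absorbed by a ``small $\rho$ condition.'' Assumption~\ref{cond:model_params_multiple_data small-lr} (which is what Theorem~\ref{thm:multiple-data small-lr formal} invokes) contains no such condition; the $\rho\le c$ requirement lives only in Assumption~\ref{ass: conditions} for the large learning rate regime, so this step does not close under the stated hypotheses. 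The paper avoids this by proving a much tighter per-weak-step estimate (Lemma~\ref{lemma:multiple-data small-lr descent lemma 2}): since the $\ub$ patch is absent from weak samples and $\Wb^\star$ is parallel to $\ub$, the cross term vanishes, leaving only $4\eta|\ell^{(t)}_{i_t}\cdot f(\xb_{i_t};\Wb^{(t)})|$, which is $O\bigl(\eta\sigma_0^2(\norm{\vb}_2^2+\sigma_p^2 d)\bigr)$ under (ii)--(iii). The total weak-step erosion then scales with $\sigma_0^2$ and is killed by a mild $\sigma_0$ scaling rather than a $\rho$ restriction. You need this refinement; the crude $O(\eta)$ per step is genuinely too lossy and leaves the argument dependent on a $\rho$ assumption you do not have.
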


In studying the second stage, we firstly identify when the upper bound on $\langle\mathbf{w}_{j,r}^{(t)},\mathbf{v}\rangle$ breaks and find that the conclusions of Lemma~\ref{lemma:multiple-data small-lr 1st-stage} still holds before that time.
\begin{lemma}\label{lemma:multiple-data small-lr 2nd starts}
Under the same conditions as Theorem~\ref{thm:multiple-data small-lr formal}, take $\eta\le m / 6\|\ub\|_2^2$. There exists a time
\begin{align}
    T^\ddagger=\frac{m}{6\eta\|\mathbf{v}\|_2^2}\log\left(\frac{\sqrt{\tau/2}}{\sigma_0\|\mathbf{v}\|_2\sqrt{2\log(16m/p)}}\right)\ge T^\dagger
\end{align}
such that \eqref{eq:multiple-data small-lr upper bound Gamma} and
\begin{align}
    \max_{j\in\{\pm 1\},r\in[m]} \left|\langle\mathbf{w}_{j,r}^{(t)},\mathbf{v}\rangle\right|\le 2\sqrt{2\log(16m/p)}\sigma_0\|\mathbf{v}\|,\quad
    \max_{r\in[m]} \langle\mathbf{w}_{j,r}^{(t)},j\mathbf{u}\rangle\ge \iota/2,\quad\forall j\in\{\pm 1\},
\end{align}
hold for any $T^\dagger\le t\le T^\ddagger$.
\end{lemma}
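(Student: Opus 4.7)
The argument extends the single-data proof of Lemma~\ref{lemma:one-data small-lr 2nd starts} to multiple samples; I would organize it as a joint induction over $t$, simultaneously maintaining all three assertions: the noise bound \eqref{eq:multiple-data small-lr upper bound Gamma}, the weak-signal bound, and the strong-signal lower bound for each $j\in\{\pm 1\}$. The base case $t=T^\dagger$ is handled by Lemma~\ref{lemma:multiple-data small-lr 1st-stage} together with Lemma~\ref{lemma:multiple-data small-lr upper bound Gamma}, since $T^\dagger\le T_+$ under Assumption~\ref{cond:model_params_multiple_data small-lr}.

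For the weak-signal part, I would recycle the exponential recursion from the proof of Lemma~\ref{lemma:multiple-data small-lr 1st-stage}: provided the joint bounds hold at step $t$, the residual satisfies $|\ell^{(t)}_{i_t}|\le 3$, hence $\Psi^{(t+1)}\le \Psi^{(t)}\exp(6\eta\|\vb\|_2^2/m)$. Unrolling yields $\Psi^{(t)}\le \exp(6\eta\|\vb\|_2^2 t/m)\cdot\sqrt{2\log(16m/p)}\sigma_0\|\vb\|_2$, and by the choice of $T^\ddagger$ this stays $\le\sqrt{\tau/2}$, with the stronger bound $2\sqrt{2\log(16m/p)}\sigma_0\|\vb\|_2$ persisting across the relevant window (the gap from $T^\dagger$ to $T^\ddagger$ is absorbed by the slow $\|\vb\|_2^2$ growth rate). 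For the noise part, I would re-run the counting argument of Lemma~\ref{lemma:multiple-data small-lr upper bound Gamma} verbatim; since the residuals are $O(1)$ under the induction hypothesis, the same estimate $\Gamma^{(t)}_i,\widetilde{\Gamma}^{(t)}_i\le \sigma_0\sigma_p\sqrt{d}$ holds for all $t\le nm/(3\eta(4+\rho)\sigma_p^2d)$, and I would verify that the stated $T^\ddagger$ lies within this window by invoking Assumption~\ref{cond:model_params_multiple_data small-lr} on $\|\ub\|_2$ vs $\sigma_p\sqrt{d}$.

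The core of the proof, and what most directly parallels the single-data argument, is the induction maintaining $\max_{r\in[m]}\langle\wb_{j,r}^{(t)},j\ub\rangle\ge \iota/2$ for each fixed $j$. I would split on the current sample: if $i_t\in\mathcal{W}$ or $y_{i_t}\ne j$, the strong-signal coordinate for direction $j$ is frozen by \eqref{eq: strong signal update}; if $y_{i_t}=j$ and $i_t\notin\mathcal{W}$ with $y_{i_t}f(\xb_{i_t};\Wb^{(t)})\le 1$, the coordinate is non-decreasing. The only substantive case is $y_{i_t}=j$, $i_t\notin\mathcal{W}$ and $y_{i_t}f(\xb_{i_t};\Wb^{(t)})> 1$, where expanding $F_{y_{i_t}}$ using the induction hypotheses on the weak signal and noise gives
\[
1<y_{i_t}f(\xb_{i_t};\Wb^{(t)})\le \max_{r}\langle\wb_{j,r}^{(t)},j\ub\rangle^2+2\log(16m/p)\cdot\sigma_0^2\|\vb\|_2^2+\sigma_0^2\sigma_p^2 d,
\]
forcing $\max_r\langle\wb_{j,r}^{(t)},j\ub\rangle^2\ge 1-o(1)$. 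A one-step gradient descent at $r^\ast=\argmax_r\langle\wb_{j,r}^{(t)},j\ub\rangle$ then yields $\langle\wb_{j,r^\ast}^{(t+1)},j\ub\rangle\ge(1-3\eta\|\ub\|_2^2/m)\langle\wb_{j,r^\ast}^{(t)},j\ub\rangle\ge \iota/2$ under $\eta\le m/(6\|\ub\|_2^2)$.

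The main obstacle is the bookkeeping inside the induction: one has to verify that all three assertions jointly imply $|\ell^{(t)}_{i}|\le O(1)$ \emph{uniformly in} $i\in[n]$ (not only at the current sample), so that the weak-signal and noise recursions continue to close; this is where the separation of weak data $\mathcal{W}$ vs.\ strong data $\mathcal{W}^c$ becomes delicate, because on weak data the model has not yet fit the label and the residual stays of order one. I expect that a uniform bound $|\ell^{(t)}_i|\le 2$ for strong data (by $F_{y_i}\le 1+o(1)$ from the current envelopes) together with $|\ell^{(t)}_i|\le 1+o(1)$ for weak data (since $F_{\pm y_i}=o(1)$ there) is sufficient, and once this is established the three recursions decouple cleanly and the induction closes up to $t=T^\ddagger$.
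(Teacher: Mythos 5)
Your proposal is correct and takes essentially the same approach as the paper: the same exponential recursion controlling $\Psi^{(t)}$, the same reuse of the noise envelope from Lemma~\ref{lemma:multiple-data small-lr upper bound Gamma}, and the same case split ($i_t\in\mathcal{W}$, $y_{i_t}\neq j$, $y_{i_t}f\le 1$, $y_{i_t}f>1$) feeding into a one-step gradient estimate to maintain $\max_r\langle\wb_{j,r}^{(t)},j\ub\rangle\ge\iota/2$. You package the three assertions as a single joint induction whereas the paper handles the weak-signal and noise bounds up front and then runs a standalone induction on the $\ub$-coordinate, but the key estimates and the reasoning that closes the induction step are identical.
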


\begin{proof}[Proof of Lemma~\ref{lemma:multiple-data small-lr 2nd starts}]
Firstly, we need to adopt the exponential upper bound derived in proving Lemma~\ref{lemma:multiple-data small-lr 1st-stage},
\begin{equation*}
    \Psi^{(t)}\le\exp\left(\frac{6\eta\|\mathbf{v}\|_2^2t}{m}\right)\cdot\Psi^{(0)}\le \exp\left(\frac{6\eta\|\mathbf{v}\|_2^2t}{m}\right)\cdot \sqrt{2\log(16m/p)}\cdot \sigma_0\|\mathbf{v}\|_2.
\end{equation*}
Then we naturally find that before $T^\ddagger$, it would always hold that
\begin{align}
    \max_{j\in\{\pm 1\},r\in[m]} \left|\langle\mathbf{w}_{j,r}^{(t)},\mathbf{v}\rangle\right|\le2\sqrt{2\log(16m/p)}\cdot\sigma_0\|\mathbf{v}\|_2.
\end{align}
Due to the conditions on $\norm{\ub}_2^2/\norm{\vb}_2^2$, $T^\ddagger$ is found to be much larger than $T^\dagger$.
Then we proceed to prove the other assertion by induction. 
At time $t=T^\dagger$, the lower bound $\max_{j,r} \langle\mathbf{w}_{j,r}^{(t)},j\mathbf{u}\rangle\ge \iota/2$ holds as a consequence of the previous lemma. 
Suppose it holds until time $t$. 
If $i_t\in\mathcal{W}$, then $\langle\mathbf{w}_{j,r}^{(t+1)},j\mathbf{u}\rangle=\langle\mathbf{w}_{j,r}^{(t)},j\mathbf{u}\rangle$ holds for any $j\in\{\pm 1\},r\in[m]$. 
If $i_t\notin\mathcal{W}$ but $y_{i_t}=-1$, then $\max_{r\in[m]}\langle\mathbf{w}_{1,r}^{(t+1)},\mathbf{u}\rangle=\max_{r\in[m]}\langle\mathbf{w}_{1,r}^{(t)},\mathbf{u}\rangle\ge 0$ since that neuron will not be activated. 
Otherwise, if $i_t\notin\mathcal{W}$ and $y_{i_t}=1$, consider the updating rule 
\begin{align}\langle\mathbf{w}_{1,r}^{(t+1)},\mathbf{u}\rangle=\langle\mathbf{w}_{1,r}^{(t)},\mathbf{u}\rangle+\frac{\eta}{m}\cdot \big(1-y_{i_t}f(\mathbf{x}_{i_t};\mathbf{W}^{(t)})\big)\cdot\sigma^{\prime}(\langle\mathbf{w}_{1,r}^{(t)},\mathbf{u}\rangle)\cdot \|\mathbf{u}\|_2^2,
\end{align}
from which we find $\max_{r\in[m],i_t\notin\mathcal{W}} \langle\mathbf{w}_{y_i,r}^{(t+1)},y_i\mathbf{u}\rangle\ge \max_{r\in[m],i_t\notin\mathcal{W}} \langle\mathbf{w}_{y_i,r}^{(t)},y_i\mathbf{u}\rangle$ must hold if $y_{i_t}f(\mathbf{x}_{i_t};\mathbf{W}^{(t)})\le 1$. Otherwise, once $y_{i_t}f(\mathbf{x}_{i_t};\mathbf{W}^{(t)})> 1$, it immediately follows that
\begin{align}
    1&<y_{i_t}f(\mathbf{x}_{i_t};\mathbf{W}^{(t)})=F_{y_{i_t}}(\mathbf{x}_{i_t};\mathbf{W}^{(t)})-F_{-{y_{i_t}}}(\mathbf{x}_{i_t};\mathbf{W}^{(t)})\\
    &\le F_{y_{i_t}}(\mathbf{x}_{i_t};\mathbf{W}^{(t)})=\frac{1}{m}\sum_{r\in[m]}\sigma(\langle\mathbf{w}_{1,r}^{(t)},y_{i_t}\mathbf{u}\rangle)+\sigma(\langle\mathbf{w}_{1,r}^{(t)},y_{i_t}\mathbf{v}\rangle)+\sigma(\langle\mathbf{w}_{1,r}^{(t)},\mathbf{\xi}_{i_t}\rangle)\\
    &\le \max_{r\in[m]} \langle\mathbf{w}_{1,r}^{(t)},\mathbf{u}\rangle^2 + \sigma_0^2\|\mathbf{v}\|^2+\sigma_0^2\sigma_p^2d.
\end{align}
Consequently, for the specific neuron $r^\ast=\argmax_{r\in[m]}\langle\mathbf{w}_{1,r}^{(t)},\mathbf{u}\rangle^2$, there holds
\begin{align}
    \langle\mathbf{w}_{1,r^\ast}^{(t+1)},\mathbf{u}\rangle&\ge \langle\mathbf{w}_{1,r^\ast}^{(t)},\mathbf{u}\rangle-\frac{3\eta}{m}\cdot \langle\mathbf{w}_{1,r^\ast}^{(t)},\mathbf{u}\rangle\cdot \|\mathbf{u}\|_2^2\\
    &\ge\left(1-8\log(16m/p)\cdot \sigma_0^2\|\mathbf{v}\|_2^2-\sigma_0^2\sigma_p^2d\right)\cdot\left(1-\frac{3\eta}{m}\cdot \|\mathbf{u}\|_2^2\right)\\
    &\ge\frac{\iota}{2},
\end{align}
where the last inequality is enabled by taking 
\begin{align}
    \eta\le \frac{m}{6\|\ub\|_2^2},\quad \sigma_0\le \sqrt{\frac{1-\iota}{8\log(16m/p)\cdot \|\vb\|_2^2+\sigma_p^2d}}.
\end{align}
Therefore, we find that $\max_{r\in[m]}\langle\mathbf{w}_{1,r}^{(t+1)},\mathbf{u}\rangle\ge\iota/2$ must hold no matter what $i_t$ is. In the same way, one can also obtain $\max_{r\in[m]}\langle\mathbf{w}_{-1,r}^{(t+1)},-\mathbf{u}\rangle\ge\iota/2$. 
By induction, the induction proof is complete.
\end{proof}

Our subsequently analysis confirms that even before $T^\ddagger$, the model can already fit those data points with strong signal by exploiting $\ub$.
For the given $0<\epsilon<1$, define a reference point $\mathbf{W}^\ast$ as
\begin{equation}\label{eq:multiple-data reference pt 2nd}
    \mathbf{w}_{j,r}^\ast = \frac{4m (1+\epsilon)}{\iota}\cdot\frac{j\mathbf{u}}{\|\mathbf{u}\|_2^2},\quad j\in\{\pm 1\},r\in[m].
\end{equation}
\begin{lemma}\label{lemma:multiple-data inner-prod grad reference pt}
    Under the same condition as the previous lemma, for all $T^\dagger\le t\le T^\ddagger$, there holds 
    \begin{align}
        y_i\langle\nabla f(\mathbf{x}_i;\mathbf{W}^{(t)}),\mathbf{W}^\ast\rangle\ge2\cdot(1+\epsilon)    
    \end{align}
    for any $i\notin\mathcal{W}$.
\end{lemma}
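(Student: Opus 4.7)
The plan is to mirror the single-data computation (Lemma~\ref{lemma:inner-prod grad reference pt}) but to make careful use of the structure of $\mathbf{x}_i$ for the multi-data case. Since $i\notin\mathcal{W}$, the input is $\mathbf{x}_i=(y_i\mathbf{u}, y_i\mathbf{v}, \boldsymbol{\xi}_i)$, so
\begin{align*}
\nabla_{\mathbf{w}_{j,r}} f(\mathbf{x}_i;\mathbf{W}^{(t)}) = \frac{j}{m}\Big(\sigma'(\langle\mathbf{w}_{j,r}^{(t)},y_i\mathbf{u}\rangle)\cdot y_i\mathbf{u} + \sigma'(\langle\mathbf{w}_{j,r}^{(t)},y_i\mathbf{v}\rangle)\cdot y_i\mathbf{v} + \sigma'(\langle\mathbf{w}_{j,r}^{(t)},\boldsymbol{\xi}_i\rangle)\cdot\boldsymbol{\xi}_i\Big).
\end{align*}
The reference point $\mathbf{w}_{j,r}^\ast = \frac{4m(1+\epsilon)}{\iota}\cdot\frac{j\mathbf{u}}{\|\mathbf{u}\|_2^2}$ is aligned with $j\mathbf{u}$, so I will first exploit the orthogonality built into the data model, namely $\langle\mathbf{u},\mathbf{v}\rangle=0$ and $\langle\mathbf{u},\boldsymbol{\xi}_i\rangle=0$ (the latter holding exactly because $\boldsymbol{\xi}_i$ is drawn from a Gaussian supported on the orthogonal complement of $\mathrm{span}(\mathbf{u},\mathbf{v})$). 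Consequently, the $\mathbf{v}$ and $\boldsymbol{\xi}_i$ contributions vanish when taking inner products with $\mathbf{W}^\ast$.

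After this reduction, the computation collapses to
\begin{align*}
y_i\langle\nabla f(\mathbf{x}_i;\mathbf{W}^{(t)}),\mathbf{W}^\ast\rangle = \frac{4(1+\epsilon)}{\iota}\sum_{j\in\{\pm 1\},r\in[m]}\sigma'(\langle\mathbf{w}_{j,r}^{(t)},y_i\mathbf{u}\rangle),
\end{align*}
which is a sum of nonnegative terms. I will then lower bound this sum by dropping every summand except the one with $j=y_i$ and with the maximizing $r$. This is exactly where the earlier auxiliary Lemma~\ref{lemma:multiple-data small-lr 2nd starts} enters: it guarantees that for every $j\in\{\pm 1\}$, in particular for $j=y_i$, we have $\max_{r\in[m]}\langle\mathbf{w}_{j,r}^{(t)},j\mathbf{u}\rangle\ge\iota/2$ throughout $[T^\dagger,T^\ddagger]$.

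Combining these two ingredients yields
\begin{align*}
y_i\langle\nabla f(\mathbf{x}_i;\mathbf{W}^{(t)}),\mathbf{W}^\ast\rangle \ge \frac{4(1+\epsilon)}{\iota}\cdot\max_{r\in[m]}\langle\mathbf{w}_{y_i,r}^{(t)},y_i\mathbf{u}\rangle \ge \frac{4(1+\epsilon)}{\iota}\cdot\frac{\iota}{2} = 2(1+\epsilon),
\end{align*}
as desired. There is no substantive obstacle here once Lemma~\ref{lemma:multiple-data small-lr 2nd starts} is in place: the two crucial inputs (exact orthogonality of $\mathbf{u}$ to the other data components, and the persistence of strong-signal learning at scale $\iota/2$) have both been established. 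The main thing to be careful about is the restriction $i\notin\mathcal{W}$, since otherwise the data would contain $\widetilde{\boldsymbol{\xi}}_i$ instead of $y_i\mathbf{u}$ and the activation factor $\sigma'(\langle\mathbf{w}_{y_i,r}^{(t)},y_i\mathbf{u}\rangle)$ would not appear in $\nabla_{\mathbf{w}_{y_i,r}}f(\mathbf{x}_i;\mathbf{W}^{(t)})$. Hence this lemma genuinely only applies to the strong data, which is why the stage-2 averaged-loss bound in Lemma~\ref{lemma:multiple-data small-lr 2nd-stage} is stated over $\mathcal{W}^c$ rather than over all indices.
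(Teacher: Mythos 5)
Your proof is correct and follows the same route as the paper: compute the inner product exactly using the orthogonality $\mathbf{u}\perp\mathrm{span}(\mathbf{v},\boldsymbol{\xi}_i)$ and the $\mathbf{u}$-alignment of $\mathbf{W}^\ast$, drop all nonnegative summands except the maximizing neuron with $j=y_i$, then invoke the $\max_r\langle\mathbf{w}_{y_i,r}^{(t)},y_i\mathbf{u}\rangle\geq\iota/2$ bound from Lemma~\ref{lemma:multiple-data small-lr 2nd starts}. Your closing remark explaining why $i\notin\mathcal{W}$ is essential (so the $y_i\mathbf{u}$ patch actually appears in $\nabla f$) is a useful gloss, but the argument itself matches the paper's.
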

\begin{proof}[Proof of Lemma~\ref{lemma:multiple-data inner-prod grad reference pt}]
    Recall that the definition of CNN in \eqref{eq: cnn}
    %$f(\mathbf{x}_i;\mathbf{W})=\sum_{j,r} \frac{j}{m} \left[\sigma(\langle\mathbf{w}_{j,r},y_i\mathbf{u}\rangle)+\sigma(\langle\mathbf{w}_{j,r},y_i\mathbf{v}\rangle)+\sigma(\langle\mathbf{w}_{j,r},\boldsymbol{\xi}_i\rangle)\right]$ 
    and that $\mathbf{u}\perp\mathrm{span}(\mathbf{v},\boldsymbol{\xi}_i)$. 
    We have that
    \begin{align*}
        y_i\langle\nabla f(\mathbf{x}_i;\mathbf{W}^{(t)}),\mathbf{W}^\ast\rangle&=\frac{1}{m}\sum_{j\in\{\pm 1\},r\in[m]}\sigma^\prime(\langle \mathbf{w}_{j,r}^{(t)},y_i\mathbf{u}\rangle)\cdot \langle \mathbf{w}_{j,r}^{\ast},y_i\mathbf{u}\rangle\\
        &=\sum_{j\in\{\pm 1\},r\in[m]}\sigma^\prime(\langle \mathbf{w}_{j,r}^{(t)},y_i\mathbf{u}\rangle)\cdot\frac{4(1+\epsilon)}{\iota}\\
        &\ge \max_{r\in[m]} \langle\mathbf{w}_{y_i,r}^{(t)},y_i\mathbf{u}\rangle\cdot \frac{4(1+\epsilon)}{\iota}\\
        &\ge 2\cdot (1+\epsilon),
    \end{align*}
    where the last inequality is by $\max_{r} \langle\mathbf{w}_{j,r}^{(t)},j\mathbf{u}\rangle\ge \iota/2$ for any $j\in\{\pm 1\}$ as shown by the previous lemma.
\end{proof}

\begin{lemma}\label{lemma:multiple-data small-lr descent lemma}
    Continued from the previous setting, for $T^\dagger\le t\le T^\ddagger$, if $i_t\notin\mathcal{W}$, there holds
    \begin{equation*}
        \|\mathbf{W}^{(t)}-\mathbf{W}^\ast\|_F^2-\|\mathbf{W}^{(t+1)}-\mathbf{W}^\ast\|_F^2\ge 2\eta \big(f(\mathbf{x}_{i_t};\mathbf{W}^{(t)})-y_{i_t}\big)^2-2\eta\epsilon^2.
    \end{equation*}
\end{lemma}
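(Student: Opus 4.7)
The plan is to mirror the descent argument used in Lemma~\ref{lemma:one-data small-lr descent lemma} for the one-data case, but now adapted to a single stochastic sample $\xb_{i_t}$ with $i_t\notin\mathcal{W}$. Writing $\ell_{i_t}(\Wb)=\tfrac{1}{2}(f(\xb_{i_t};\Wb)-y_{i_t})^2$ for the per-sample loss, the SGD update $\Wb^{(t+1)}=\Wb^{(t)}-\eta\nabla\ell_{i_t}(\Wb^{(t)})$ yields the identity
\begin{equation*}
    \|\Wb^{(t)}-\Wb^\ast\|_F^2-\|\Wb^{(t+1)}-\Wb^\ast\|_F^2 = 2\eta\,\big\langle\nabla\ell_{i_t}(\Wb^{(t)}),\,\Wb^{(t)}-\Wb^\ast\big\rangle - \eta^2\|\nabla\ell_{i_t}(\Wb^{(t)})\|_F^2,
\end{equation*}
so it remains to (i) lower bound the first-order cross term and (ii) upper bound the second-order gradient-squared term.

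For the first-order term, I would use $\nabla\ell_{i_t}(\Wb)=\ell^{(t)}_{i_t}\nabla f(\xb_{i_t};\Wb)$ and the $2$-homogeneity of $f$ (from the $\mathrm{ReLU}^2$ activation) to write $\langle\nabla f(\xb_{i_t};\Wb^{(t)}),\Wb^{(t)}\rangle=2f(\xb_{i_t};\Wb^{(t)})$. Combining with $f(\xb_{i_t};\Wb^{(t)})=y_{i_t}+\ell^{(t)}_{i_t}$, the cross term rewrites as
\begin{equation*}
    4\eta(\ell^{(t)}_{i_t})^2 + 2\eta\,\ell^{(t)}_{i_t}\,y_{i_t}\big(2-y_{i_t}\langle\nabla f(\xb_{i_t};\Wb^{(t)}),\Wb^\ast\rangle\big).
\end{equation*}
Since $i_t\notin\mathcal{W}$, Lemma~\ref{lemma:multiple-data inner-prod grad reference pt} gives $y_{i_t}\langle\nabla f(\xb_{i_t};\Wb^{(t)}),\Wb^\ast\rangle\ge 2(1+\epsilon)$, while the constant-order upper bound on $\max_{r}\langle\wb_{y_{i_t},r}^{(t)},y_{i_t}\ub\rangle$ from Lemma~\ref{lemma:multiple-data small-lr 2nd starts} (together with the definition of $\Wb^\ast$ in \eqref{eq:multiple-data reference pt 2nd}) gives a matching constant-order upper bound. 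Applying Young's inequality $|xy|\le\tfrac{1}{2}x^2+\tfrac{1}{2}y^2$ with $x=\ell^{(t)}_{i_t}$ and $y=y_{i_t}(2-y_{i_t}\langle\nabla f,\Wb^\ast\rangle)$ lets me absorb one-half of $(\ell^{(t)}_{i_t})^2$ and leaves a residual of order $\epsilon^2$, delivering a lower bound of $3\eta(\ell^{(t)}_{i_t})^2-2\eta\epsilon^2$ (up to constants) for the first-order term.

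For the second-order term, I would expand $\|\nabla f(\xb_{i_t};\Wb^{(t)})\|_F^2$ using orthogonality of $\ub$, $\vb$, $\boldsymbol{\xi}_{i_t}$ to obtain a sum of $\sigma'(\cdot)^2\|\ub\|_2^2$, $\sigma'(\cdot)^2\|\vb\|_2^2$, and $\sigma'(\cdot)^2\|\boldsymbol{\xi}_{i_t}\|_2^2$ over $(j,r)$. Each $\sigma'$-activation is controlled by the corresponding inner product, which by Lemmas~\ref{lemma:multiple-data small-lr upper bound Gamma} and~\ref{lemma:multiple-data small-lr 2nd starts} stays $\widetilde{\cO}(1)$ throughout $[T^\dagger,T^\ddagger]$. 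Thus $\|\nabla f\|_F^2=\mathcal{O}\big(\max\{\|\ub\|_2^2,\|\vb\|_2^2,\sigma_p^2 d\}/m\big)$, and the choice $\eta\le m/(6\|\ub\|_2^2)$ (together with Assumption~\ref{cond:model_params_multiple_data small-lr} giving $\|\ub\|_2^2\gtrsim\sigma_p^2 d$) guarantees $\eta^2(\ell^{(t)}_{i_t})^2\|\nabla f\|_F^2\le \eta(\ell^{(t)}_{i_t})^2$. Subtracting this from the first-order lower bound yields $2\eta(\ell^{(t)}_{i_t})^2-2\eta\epsilon^2$, which is the claim.

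The main obstacle, and the reason this extends nontrivially from the one-data proof, is the noise patch in $\nabla f$: one must verify uniformly over $[T^\dagger,T^\ddagger]$ that $\langle\wb_{j,r}^{(t)},\boldsymbol{\xi}_{i_t}\rangle$ has not yet been learned to any appreciable scale, since otherwise it would inflate both $\|\nabla f\|_F^2$ (breaking the second-order control) and potentially the upper bound on $y_{i_t}\langle\nabla f,\Wb^\ast\rangle$. Fortunately the entire purpose of Lemma~\ref{lemma:multiple-data small-lr upper bound Gamma} is to guarantee exactly this, so once that lemma is in place the descent inequality follows by careful bookkeeping rather than any new dynamical argument.
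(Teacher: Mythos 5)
Your proposal matches the paper's proof essentially step by step: you use the same one-step SGD identity, the same $2$-homogeneity rewrite $\langle\nabla f,\Wb^{(t)}\rangle=2f$, the same reference-point lower bound from Lemma~\ref{lemma:multiple-data inner-prod grad reference pt} combined with a Young-type absorption for the cross term, and the same crude bound on $\|\nabla f\|_F^2$ for the second-order term. The one refinement worth noting is that you keep the $1/m$ factor in $\|\nabla f\|_F^2$, which the paper silently drops; retaining it is what makes the stated learning-rate condition $\eta\le m/(6\|\ub\|_2^2)$ actually sufficient for the second-order control, so your version is the more careful bookkeeping of the same argument.
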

\begin{proof}[Proof of Lemma~\ref{lemma:multiple-data small-lr descent lemma}]
    Firstly we expand the difference by
    \begin{align}
        &\|\mathbf{W}^{(t)}-\mathbf{W}^\ast\|_F^2-\|\mathbf{W}^{(t+1)}-\mathbf{W}^\ast\|_F^2\notag\\
        &\qquad =2\eta\langle \ell^{(t)}_{i_t}\nabla f(\xb_{i_t};\mathbf{W}^{(t)}),\mathbf{W}^{(t)}-\mathbf{W}^\ast\rangle-\eta^2\cdot\left|\ell^{(t)}_{i_t}\right|^2\cdot \|\nabla f(\xb_{i_t};\mathbf{W}^{(t)})\|_F^2.\label{eq:multiple-data para-L2-distance difference expansion}
    \end{align}
    Since the neural network $f(\mathbf{x};\Wb)$ is $2$-homogeneous in $\mathbf{W}$ due to  the activation function $\sigma(z)=\max\{z,0\}^2$, we have that
    \begin{equation*}
        \langle\nabla f(\mathbf{x};\mathbf{W}^{(t)}),\mathbf{W}^{(t)}\rangle=2f(\mathbf{x};\mathbf{W}^{(t)}).
    \end{equation*}
    Stack these observations into the first term of previous difference expansion to obtain
    \begin{align*}
        \langle \ell^{(t)}_{i_t}\nabla f(\xb_{i_t};\mathbf{W}^{(t)}),\mathbf{W}^{(t)}-\mathbf{W}^\ast\rangle&=\ell^{(t)}_{i_t}\cdot \big(2f(\mathbf{x}_{i_t};\mathbf{W}^{(t)})-\langle\nabla f(\mathbf{x}_{i_t};\mathbf{W}^{(t)}),\mathbf{W}^\ast\rangle\big)\\
        &=2\ell^{(t)}_{i_t}\cdot\big(f(\mathbf{x}_{i_t};\mathbf{W}^{(t)})-y_{i_t}\big)+\ell^{(t)}_{i_t}\cdot y_{i_t}\cdot \big(2-y_{i_t}\langle\nabla f(\mathbf{x}_{i_t};\mathbf{W}^{(t)}),\mathbf{W}^\ast\rangle\big).
    \end{align*}
    Note that the first term is exactly $2(f(\mathbf{x}_{i_t};\mathbf{W}^{(t)})-y_{i_t})^2$. As for the second term, since $i_t\notin\mathcal{W}$, we need to plug in Lemma~\ref{lemma:multiple-data inner-prod grad reference pt} to see $2-y_{i_t}\langle\nabla f(\mathbf{x}_{i_t};\mathbf{W}^{(t)}),\mathbf{W}^\ast\rangle\le-2\epsilon<0$, so that
    \begin{align*}
        \left|\ell^{(t)}_{i_t}\cdot y_{i_t}\cdot \big(2-y_{i_t}\langle\nabla f(\mathbf{x}_{i_t};\mathbf{W}^{(t)}),\mathbf{W}^\ast\rangle\big)\right|\leq\frac{1}{2}\big(\ell^{(t)}_{i_t}\big)^2+2\epsilon^2.
    \end{align*}
    As a result, we would know that 
    \begin{align}
        \langle \ell^{(t)}_{i_t}\nabla f(\xb_{i_t};\mathbf{W}^{(t)}),\mathbf{W}^{(t)}-\mathbf{W}^\ast\rangle\ge \frac{3}{2}(f(\mathbf{x}_{i_t};\mathbf{W}^{(t)})-y_{i_t})^2-2\epsilon^2.
    \end{align}
    Next, an upper bound on the second order term $\eta^2\|\nabla L(\mathbf{W}^{(t})\|_F^2$ is given by
    \begin{align*}
       &\eta^2\cdot \left|\ell^{(t)}_{i_t}\right|^2\cdot \|\nabla f(\xb_{i_t};\mathbf{W}^{(t)})\|_F^2\\
       &\qquad =\eta^2\cdot \ell^{(t)2}_{i_t}\cdot \left(\|\mathbf{u}\|_2^2\cdot\sum_{j\in\{\pm 1\},r\in[m]}\sigma^\prime(\langle\mathbf{w}_{j,r}^{(t)},y\mathbf{u}\rangle)^2 \right.\\
       &\qquad\qquad\left.+\|\mathbf{v}\|_2^2\cdot\sum_{j\in\{\pm 1\},r\in[m]}\sigma^\prime(\langle\mathbf{w}_{j,r}^{(t)},y\mathbf{v}\rangle)^2+\|\boldsymbol{\xi}_{i_t}\|_2^2\cdot \sum_{j\in\{\pm 1\},r\in[m]}\sigma^\prime(\langle\mathbf{w}_{j,r}^{(t)},\boldsymbol{\xi}_{i_t}\rangle)^2\right)\\
        &\qquad \le \mathcal{O}(\max\{\|\mathbf{u}\|_2^2,\|\mathbf{v}\|_2^2,\|\boldsymbol{\xi}_{i_t}\|_2^2\})\cdot\eta^2\cdot\ell^{(t)2}_{i_t},
    \end{align*}
    since the dynamics of inner products $\langle\mathbf{w}^{(t)}_{j,r},y\mathbf{u}\rangle,\langle\mathbf{w}^{(t)}_{j,r},y\mathbf{v}\rangle,\langle\mathbf{w}^{(t)}_{j,r},\boldsymbol{\xi}_i\rangle$ are well bounded by $\mathcal{O}(1)$. Via scaling $\eta\cdot \mathcal{O}(\max\{\|\mathbf{u}\|_2^2,\|\mathbf{v}\|_2^2,\|\boldsymbol{\xi}_{i_t}\|_2^2\})\le 1$, we would know $\eta^2|\ell^{(t)}_{i_t}|^2\|\nabla f(\xb_{i_t};\mathbf{W}^{(t)})\|_F^2\le \eta\ell^{(t)2}_{i_t}$. Eventually, continued from \eqref{eq:multiple-data para-L2-distance difference expansion}, we can completely prove this lemma.
\end{proof}

\begin{lemma}\label{lemma:multiple-data small-lr descent lemma 2}
    Continued from the previous setting, for $T^\dagger\le t\le T^\ddagger$, if $i_t\in\mathcal{W}$, there holds
    \begin{equation}\label{eq:multiple-data small-lr descent 2}
        \|\mathbf{W}^{(t)}-\mathbf{W}^\ast\|_F^2-\|\mathbf{W}^{(t+1)}-\mathbf{W}^\ast\|_F^2\ge -C\eta \sigma_0^2\cdot \big(\norm{\vb}_2^2+\sigma_p^2d\big).
    \end{equation}
\end{lemma}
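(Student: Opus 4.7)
The plan is to follow exactly the distance-decrease expansion used in the proof of Lemma~\ref{lemma:multiple-data small-lr descent lemma}, but to exploit the fact that a weak data point $\xb_{i_t}$ with $i_t\in\mathcal{W}$ carries no strong-signal patch, so the reference point $\mathbf{W}^\ast$ from \eqref{eq:multiple-data reference pt 2nd} is essentially invisible to $\nabla f(\xb_{i_t};\mathbf{W}^{(t)})$. Starting from
\begin{align*}
    &\|\mathbf{W}^{(t)}-\mathbf{W}^\ast\|_F^2-\|\mathbf{W}^{(t+1)}-\mathbf{W}^\ast\|_F^2 \\
    &\qquad = 2\eta\,\ell^{(t)}_{i_t}\cdot\langle\nabla f(\xb_{i_t};\mathbf{W}^{(t)}),\mathbf{W}^{(t)}-\mathbf{W}^\ast\rangle - \eta^2|\ell^{(t)}_{i_t}|^2 \cdot \|\nabla f(\xb_{i_t};\mathbf{W}^{(t)})\|_F^2,
\end{align*}
I would use the $2$-homogeneity $\langle\nabla f(\xb_{i_t};\mathbf{W}^{(t)}),\mathbf{W}^{(t)}\rangle = 2f(\xb_{i_t};\mathbf{W}^{(t)})$, so that only three quantities remain to be controlled: $\langle\nabla f(\xb_{i_t};\mathbf{W}^{(t)}),\mathbf{W}^\ast\rangle$, $|f(\xb_{i_t};\mathbf{W}^{(t)})|$, and $\|\nabla f(\xb_{i_t};\mathbf{W}^{(t)})\|_F^2$.

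The key observation is that $\xb_{i_t}=(\widetilde{\boldsymbol{\xi}}_{i_t},y_{i_t}\vb,\boldsymbol{\xi}_{i_t})$ has no $\ub$-patch, and, by the data generation model, $\boldsymbol{\xi}_{i_t}$ and $\widetilde{\boldsymbol{\xi}}_{i_t}$ are sampled from $N(0,\sigma_p^2(\mathbf{I}_d-\vb\vb^\top/\|\vb\|_2^2-\ub\ub^\top/\|\ub\|_2^2))$, so their inner products with $\ub$ (and with $\vb$) vanish exactly. Since each $\wb^\ast_{j,r}$ is a scalar multiple of $j\ub$, this makes $\langle\wb^\ast_{j,r},\boldsymbol{\xi}_{i_t}\rangle=\langle\wb^\ast_{j,r},\widetilde{\boldsymbol{\xi}}_{i_t}\rangle=\langle\wb^\ast_{j,r},y_{i_t}\vb\rangle=0$, and hence $\langle\nabla f(\xb_{i_t};\mathbf{W}^{(t)}),\mathbf{W}^\ast\rangle=0$. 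Consequently, the linear term simplifies to $2\eta\,\ell^{(t)}_{i_t}\,f(\xb_{i_t};\mathbf{W}^{(t)})$, and the entire sign of the reference-point shift in the single-data argument disappears.

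At this stage, Lemma~\ref{lemma:multiple-data small-lr 2nd starts} (together with Lemma~\ref{lemma:multiple-data small-lr upper bound Gamma}) controls all three kinds of inner products of the current weights against $\vb,\boldsymbol{\xi}_{i_t},\widetilde{\boldsymbol{\xi}}_{i_t}$ at the scales $\widetilde{\mathcal{O}}(\sigma_0\|\vb\|_2)$ and $\widetilde{\mathcal{O}}(\sigma_0\sigma_p\sqrt{d})$, respectively. Since $f(\xb_{i_t};\mathbf{W}^{(t)})$ is a signed average of $\sigma(\cdot)$ of these inner products only, this gives $|f(\xb_{i_t};\mathbf{W}^{(t)})|\le C\sigma_0^2(\|\vb\|_2^2+\sigma_p^2 d)$, which together with $|\ell^{(t)}_{i_t}|\le |f(\xb_{i_t};\mathbf{W}^{(t)})|+1\le 2$ produces the linear-term lower bound $2\eta\,\ell^{(t)}_{i_t} f(\xb_{i_t};\mathbf{W}^{(t)})\ge -C\eta\sigma_0^2(\|\vb\|_2^2+\sigma_p^2 d)$. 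A parallel expansion of $\|\nabla f(\xb_{i_t};\mathbf{W}^{(t)})\|_F^2$, using $(a+b+c)^2\le 3(a^2+b^2+c^2)$, the derivative bound $\sigma'(z)^2\le 4z^2$, and the norm controls $\|\boldsymbol{\xi}_{i_t}\|_2^2,\|\widetilde{\boldsymbol{\xi}}_{i_t}\|_2^2\le\tfrac{3}{2}\sigma_p^2 d$ from Lemma~\ref{lem: noise norm and correlation}, yields $\|\nabla f(\xb_{i_t};\mathbf{W}^{(t)})\|_F^2\le C\sigma_0^2(\|\vb\|_2^4+\sigma_p^4 d^2)/m$. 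Then the elementary inequality $(a^2+b^2)/(a+b)\le a+b$ with $a=\sigma_p^2 d,b=\|\vb\|_2^2$, combined with the learning-rate condition $\eta\le m/(6\|\ub\|_2^2)$ and the scaling $\|\ub\|_2^2\gg\sigma_p^2 d+\|\vb\|_2^2$ from Assumption~\ref{cond:model_params_multiple_data small-lr}, lets me absorb one factor of $(\eta/m)(\|\vb\|_2^2+\sigma_p^2 d)\le 1/6$ and obtain $\eta^2|\ell^{(t)}_{i_t}|^2\|\nabla f\|_F^2\le C\eta\sigma_0^2(\|\vb\|_2^2+\sigma_p^2 d)$. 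Summing the two contributions gives \eqref{eq:multiple-data small-lr descent 2}.

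The main subtlety is ensuring that the orthogonality used in the second paragraph is \emph{exact} rather than approximate; it is this exact vanishing, built directly into the data model's Gaussian covariance, that removes the large positive linear term and reduces the argument to a magnitude bound on $f$ and $\nabla f$. Beyond that, the rest is routine bookkeeping of polylogarithmic factors across the three kinds of inner products. The resulting bound is essentially $-\widetilde{\mathcal{O}}(\eta\sigma_0^2(\|\vb\|_2^2+\sigma_p^2 d))$ per weak iteration, which is small enough that, summed over the $\rho$-fraction of weak iterates, it cannot disrupt the Stage-2 convergence statement obtained in Lemma~\ref{lemma:multiple-data small-lr descent lemma} for the strong iterates.
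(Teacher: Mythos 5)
Your proof is correct and takes essentially the same approach as the paper's: the same distance-decrease expansion, the same use of $2$-homogeneity, the same exact orthogonality observation that $\langle\nabla f(\xb_{i_t};\mathbf{W}^{(t)}),\mathbf{W}^\ast\rangle=0$ since $\wb^\ast_{j,r}\parallel\ub$ while $\nabla_{\wb_{j,r}} f(\xb_{i_t};\cdot)\in\mathrm{span}(\vb,\boldsymbol{\xi}_{i_t},\widetilde{\boldsymbol{\xi}}_{i_t})\perp\ub$, and the same magnitude bounds on $f$ and $\nabla f$ from Lemmas~\ref{lemma:multiple-data small-lr upper bound Gamma} and~\ref{lemma:multiple-data small-lr 2nd starts} together with the learning-rate condition to render the quadratic term negligible. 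The only (inconsequential) slip is that after applying $2$-homogeneity the surviving linear term is $4\eta\,\ell^{(t)}_{i_t}\,f(\xb_{i_t};\mathbf{W}^{(t)})$ rather than $2\eta\,\ell^{(t)}_{i_t}\,f(\xb_{i_t};\mathbf{W}^{(t)})$, a factor that is in any case absorbed into the constant $C$.
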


\begin{proof}[Proof of Lemma~\ref{lemma:multiple-data small-lr descent lemma 2}]
Same as the last lemma, from the SGD setting, we have that 
    \begin{align}
        &\|\mathbf{W}^{(t)}-\mathbf{W}^\ast\|_F^2-\|\mathbf{W}^{(t+1)}-\mathbf{W}^\ast\|_F^2\notag\\
        &\qquad =2\eta\langle \ell^{(t)}_{i_t}\nabla f(\xb_{i_t};\mathbf{W}^{(t)}),\mathbf{W}^{(t)}-\mathbf{W}^\ast\rangle-\eta^2\cdot \left|\ell^{(t)}_{i_t}\right|^2\cdot \|\nabla f(\xb_{i_t};\mathbf{W}^{(t)})\|_F^2,\label{eq:para-L2-distance difference expansion 2}
    \end{align}
    and from the $2$-homogeneity, it follows that
    \begin{equation*}
        \langle\nabla f(\mathbf{x}_{i_t};\mathbf{W}^{(t)}),\mathbf{W}^{(t)}\rangle=2f(\mathbf{x}_{i_t};\mathbf{W}^{(t)}).
    \end{equation*}
    Since $i_t\in\mathcal{W}$, every $\nabla_{\wb_{j,r}}f(\mathbf{x}_{i_t};\mathbf{W}^{(t)})$ is in $\mathrm{span}(\vb,\boldsymbol{\xi}_{i_t},\widetilde{\boldsymbol{\xi}}_{i_t})\perp\ub$, so
    \begin{equation*}
        \langle\nabla f(\mathbf{x}_{i_t};\mathbf{W}^{(t)}),\mathbf{W}^{\ast}\rangle=0.
    \end{equation*}
    As a result, the first term in \eqref{eq:para-L2-distance difference expansion 2} can be bounded by
    \begin{align*}
        &\left|2\eta\langle \ell^{(t)}_{i_t}\nabla f(\xb_{i_t};\mathbf{W}^{(t)}),\mathbf{W}^{(t)}-\mathbf{W}^\ast\rangle\right|\\
        &\qquad =4\eta\cdot\left|\big(f(\mathbf{x}_{i_t};\mathbf{W}^{(t)})-y_{i_t}\big)\cdot f(\mathbf{x}_{i_t};\mathbf{W}^{(t)})\right|\\
        &\qquad \le \frac{12\eta}{m}\cdot \left|\sum_{j\in\{\pm 1\},r\in[m]}\sigma(\langle\mathbf{w}_{j,r}^{(t)},\widetilde{\boldsymbol{\xi}}_{i_t}\rangle)+\sum_{j\in\{\pm 1\},r\in[m]}\sigma(\langle\mathbf{w}_{j,r}^{(t)},y_{i_t}\mathbf{v}\rangle)+\sum_{j\in\{\pm 1\},r\in[m]}\sigma(\langle\mathbf{w}_{j,r}^{(t)},\boldsymbol{\xi}_{i_t}\rangle)\right|\\
        &\qquad \le \mathcal{O}\big(\eta \sigma_0^2\norm{\vb}_2^2+\eta\sigma_0^2\sigma_p^2d\big).
    \end{align*}
    We can also deal with the second term in \eqref{eq:para-L2-distance difference expansion 2} by
    \begin{align*}
        &\eta^2\cdot \left|\ell^{(t)}_{i_t}\right|^2\cdot \|\nabla f(\xb_{i_t};\mathbf{W}^{(t)})\|_F^2\\
        &\qquad \le \eta^2\cdot \ell^{(t)2}_{i_t}\cdot \left(\|\widetilde{\boldsymbol{\xi}}_{i_t}\|^2_2\cdot\sum_{j\in\{\pm 1\},r\in[m]}\sigma^\prime(\langle\mathbf{w}_{j,r}^{(t)},\widetilde{\boldsymbol{\xi}}_{i_t}\rangle)^2\right.\\
        &\qquad\qquad\left.+\|\mathbf{v}\|_2^2\cdot \sum_{j\in\{\pm 1\},r\in[m]}\sigma^\prime(\langle\mathbf{w}_{j,r}^{(t)},y_{i_t}\mathbf{v}\rangle)^2+\|\boldsymbol{\xi}_{i_t}\|_2^2\cdot\sum_{j\in\{\pm 1\},r\in[m]}\sigma^\prime(\langle\mathbf{w}_{j,r}^{(t)},\boldsymbol{\xi}_{i_t}\rangle)^2\right)\\
        &\qquad \le \mathcal{O}\big(\eta^2(\sigma_0^4\norm{\vb}_2^4+\sigma_0^4\sigma_p^4d^2)\big),
    \end{align*}
    where the last inequality is due to $\ell^{(t)}_{i_t}$ being $\mathcal{O}(1)$. Since we already take $\eta\le m/6\|\ub\|_2^2$, the second term of \eqref{eq:para-L2-distance difference expansion 2} is ignorable compared to the first term of \eqref{eq:para-L2-distance difference expansion 2}. Therefore, we can conclude \eqref{eq:multiple-data small-lr descent 2}.
\end{proof}

Equipped with Lemmas~\ref{lemma:multiple-data small-lr 2nd starts}, \ref{lemma:multiple-data inner-prod grad reference pt}, \ref{lemma:multiple-data small-lr descent lemma}, and \ref{lemma:multiple-data small-lr descent lemma 2}, we are ready to prove the main lemma of second stage.

\begin{proof}[Proof of Lemma~\ref{lemma:multiple-data small-lr 2nd-stage}]
Continued from Lemmas~\ref{lemma:multiple-data small-lr descent lemma} and \ref{lemma:multiple-data small-lr descent lemma 2}, for any $t\ge T^\dagger$, it holds that 
\begin{align*}
    &\frac{1}{t-T^\dagger+1}\sum_{s=T^\dagger}^{t}\mathbf{1}\{i_s\notin\mathcal{W}\}\cdot\frac{1}{2}\big(f(\mathbf{x}_{i_t};\mathbf{W}^{(t)})-y_{i_t}\big)^2 \\
    &\qquad \le \frac{\|\mathbf{W}^{(T^\dagger)}-\mathbf{W}^\ast\|_F^2}{2\eta(t-T^\dagger+1)}+\epsilon^2(1-\rho)+C\sigma_0^2\big(\norm{\vb}_2^2+\sigma_p^2d\big)\rho.
\end{align*}
Before proceeding to scale time $t$, it is helpful to decompose $\|\mathbf{W}^{(T^\dagger)}-\mathbf{W}^\ast\|_F^2$ and have an upper bound,
\begin{align}
    \|\mathbf{W}^{(T^\dagger)}-\mathbf{W}^\ast\|_F^2&=\sum_{j\in\{\pm 1\},r\in[m]}\frac{\langle \wb_{j,r}^{(T^\dagger)}-\wb_{j,r}^\ast,\ub\rangle^2}{\norm{\ub}_2^2}+\frac{\langle \wb_{j,r}^{(T^\dagger)}-\wb_{j,r}^\ast,\vb\rangle^2}{\norm{\vb}_2^2}+\left\|\mathbf{P}_{\boldsymbol{\xi},\widetilde{\boldsymbol{\xi}}}(\wb_{j,r}^{(T^\dagger)}-\wb_{j,r}^\ast)\right\|^2_2\\
    &\qquad+\left\|\left(\Ib_d-\frac{\vb\vb^\top}{\|\vb\|_2^2} - \frac{\ub\ub^\top}{\|\ub\|_2^2}-\mathbf{P}_{\boldsymbol{\xi},\widetilde{\boldsymbol{\xi}}}\right)(\wb_{j,r}^{(T^\dagger)}-\wb_{j,r}^\ast)\right\|^2_2\\
    &\le\sum_{j\in\{\pm 1\},r\in[m]}\frac{2\langle \wb_{j,r}^{(T^\dagger)},\ub\rangle^2+2\langle \wb_{j,r}^\ast,\ub\rangle^2}{\norm{\ub}_2^2}+\frac{\langle \wb_{j,r}^{(T^\dagger)},\vb\rangle^2}{\norm{\vb}_2^2}+\left\|\mathbf{P}_{\boldsymbol{\xi},\widetilde{\boldsymbol{\xi}}}\wb_{j,r}^{(T^\dagger)}\right\|^2\\
    &\qquad+\left\|\left(\Ib_d-\frac{\vb\vb^\top}{\|\vb\|_2^2} - \frac{\ub\ub^\top}{\|\ub\|_2^2}-\mathbf{P}_{\boldsymbol{\xi},\widetilde{\boldsymbol{\xi}}}\right)(\wb_{j,r}^{(T^\dagger)}-\wb_{j,r}^\ast)\right\|_2^2,\label{eq: proof stage 2 multi}
\end{align}
where $\mathbf{P}_{\boldsymbol{\xi},\widetilde{\boldsymbol{\xi}}}$ denotes the projection matrix onto linear space $\mathrm{span}_{i\in[n], i'\in\mathcal{W}}(\boldsymbol{\xi}_i,\widetilde{\boldsymbol{\xi}}_{i'})$. 
In these derivations, we exploit the fact that $\wb^\ast$ is parallel to $\ub$, and the gradient steps only updates $\wb$ along the directions of $\ub,\vb$. 
Recall that by Lemma~\ref{lemma:multiple-data small-lr 2nd starts}, 
\begin{align}
    &\max_{j\in\{\pm 1\},r\in[m]}\langle\wb_{j,r}^{(T^\dagger)},j\ub\rangle=\Omega(1), \quad \max_{j\in\{\pm 1\},r\in[m]}\left|\langle\wb^{(T^\dagger)}_{j,r},\vb\rangle\right|=\widetilde{\mathcal{O}}(\sigma_0\norm{\vb}_2), \quad \norm{\wb^{(0)}_{j,r}}_2=\widetilde{\mathcal{O}}(\sigma_0\sqrt{d}),\\
    &\max_{j\in\{\pm 1\},r\in[m], i\in[n]}\left|\langle\wb^{(T^\dagger)}_{j,r},\boldsymbol{\xi}_i\rangle\right|=\widetilde{\mathcal{O}}(\sigma_0\sigma_p\sqrt{d}),
\end{align}
% $\langle\wb^{(T^\dagger)},\ub\rangle=\Omega(1)$, $\langle\wb^{(T^\dagger)},\vb\rangle=O(\sigma_0\norm{\vb})$, $\norm{\wb^{(0)}_{j,r}}=O(\sigma_0\sqrt{d})$ and $\langle\wb^{(T^\dagger)},\boldsymbol{\xi}_i\rangle=O(\sigma_0\sigma_p\sqrt{d})$,
the leading term in \eqref{eq: proof stage 2 multi} is $\sum_{j\in\{\pm 1\},r\in[m]}\langle \wb_{j,r}^\ast,\ub\rangle^2/\norm{\ub}_2^2$. 
Therefore, we would conclude that $\|\mathbf{W}^{(T^\dagger)}-\mathbf{W}^\ast\|_F^2\le Cm^3/\norm{\ub}_2^2$. As a result, the average loss after iterations $T^\dagger$ can be bounded by
\begin{align*}
    &\frac{1}{t-T^\dagger+1}\sum_{s=T^\dagger}^{t}\mathbf{1}\{i_s\notin\mathcal{W}\}\cdot\frac{1}{2}\big(f(\mathbf{x}_{i_t};\mathbf{W}^{(t)})-y_{i_t}\big)^2 \\
    &\qquad\le \frac{Cm^3}{2\eta\norm{\ub}_2^2(t-T^\dagger+1)}+\epsilon^2(1-\rho)+C\sigma_0^2(\norm{\vb}_2^2+\sigma_p^2d)\rho.
\end{align*}
Then choose $T=T^\dagger+\lfloor Cm^3/(2\eta\epsilon\norm{\ub}_2^2)\rfloor$ as stated in Lemma~\ref{lemma:multiple-data small-lr 2nd-stage}.
Since $\norm{\ub}_2^2/\norm{\vb}_2^2\ge\widetilde{\Omega}(m^2)$, we can verify that $T\le T^\ddagger$ where $T^\ddagger$ is given in Lemma~\ref{lemma:multiple-data small-lr 2nd starts} until when the weak signal cannot be fully learned. Moreover, we also have $T\le T_+$ where $T_+$ is given in Lemma~\ref{lemma:multiple-data small-lr upper bound Gamma} when the noise is not memorized.

In conclusion, via scaling $\sigma_0^2\le \epsilon/(C\rho(\norm{\vb}_2^2+\sigma_p^2d))$, the final output would be
\begin{align*}
    \frac{1}{2n}\sum_{i\in\mathcal{W}^c}\min_{T^\dagger\le t\le T}\left\{y_i-f(\xb_i;\Wb^{(t)})\right\}^2&\le\frac{1}{t-T^\dagger+1}\sum_{s=T^\dagger}^{t}\mathbf{1}\{i_s\notin\mathcal{W}\}\cdot\frac{1}{2}\big(f(\mathbf{x}_{i_t};\mathbf{W}^{(t)})-y_{i_t}\big)^2 \\
    &\le \epsilon+\epsilon^2(1-\rho)+C\sigma_0^2\big(\norm{\vb}_2^2+\sigma_p^2d\big)\rho\\
    &\le 3\epsilon,
\end{align*}
ending the proof of Lemma~\ref{lemma:multiple-data small-lr 2nd-stage}.
\end{proof}

\subsection{Stage 3. Memorizing Noise}\label{subsec: stage 3}
After the second stage, the model already fits those data points with strong signal by exploiting $\ub$. Subsequently, in the following third stage, the residual $\ell^{(t)}_{i}$ for $i\in\mathcal{W}^c$ would remain quite small, preventing the model from learning $\ub$.

On the contrary, since $f(\xb_{i};\Wb^{(t)})=\widetilde{\mathcal{O}}(\sigma_0^2)$ is still far from its label $y_i$ for each sample $i\in\mathcal{W}$ without the strong signal $\ub$. 
Therefore, the weight vectors would still evolve in the directions perpendicular to $\ub$. In Assumption~\ref{cond:model_params_multiple_data small-lr}, the ratio between the weak signal $\mathbf{v}$ and the typical noise norm $\sigma_p\sqrt{d}$ is scaled by
\begin{align}
    \frac{\norm{\vb}_2}{\sigma_p\sqrt{d}}\le \widetilde{\mathcal{O}}\left(\frac{1}{\sqrt{n}}\right).\label{eq: stage 3 scale}
\end{align}
Therefore, the model will eventually interpolates the whole dataset by memorizing noise vectors $(\widetilde{\boldsymbol{\xi}}_i,\boldsymbol{\xi}_i),i\in\mathcal{W}$, Now we define a new reference point $\Wb^\star$ by
\begin{equation*}
    \wb^{\star}_{j,r}=\wb^\ast_{j,r}+\frac{4m(1+\epsilon^\prime)}{\iota}\cdot \left(\sum_{i\in\mathcal{W}}\mathbf{1}\{y_i=j\}\cdot\frac{\boldsymbol{\xi}_i}{\norm{\boldsymbol{\xi}_i}_2}+\mathbf{1}\{y_i=j\}\cdot\frac{\widetilde{\boldsymbol{\xi}}_i}{\norm{\widetilde{\boldsymbol{\xi}}_i}_2}\right),\quad j\in\{\pm 1\},r\in[m],
\end{equation*}
where $\wb^\ast_{j,r}$ defined in \eqref{eq:multiple-data reference pt 2nd} is the reference point we used in the second stage. The following lemma is an adaptation of Theorem 4.4 of \cite{cao2022benign} onto SGD with square loss.
\begin{lemma}[Third stage]\label{lemma:multiple-data small-lr 3rd-stage}
    Under the same conditions as Theorem~\ref{thm:multiple-data small-lr formal}, for some $\epsilon^\prime\in(0,1)$, let
    \begin{equation*}
        T^\prime=T+\left\lfloor\frac{\left\|\Wb^{(T)}-\Wb^\star\right\|_F^2}{2\eta\epsilon}\right\rfloor,
    \end{equation*}
    where $T$ is the end of the second stage in Lemma~\ref{lemma:multiple-data small-lr 2nd-stage}. 
    Then we would have that 
    \begin{align}
        \max_{j\in\{\pm 1\},r\in[m]}\left|\langle\wb^{(t)}_{j,r},\vb\rangle\right|\le \widetilde{\mathcal{O}}(\sigma_0\norm{\vb}_2),
    \end{align}
    even until $t\le T^\prime$. 
    But the whole dataset has already been interpolated during this interval,
    \begin{equation*}
        \frac{1}{2\rho n}\sum_{i\in\mathcal{W}}\min_{T\le t\le T^\prime}\left\{y_i-f(\xb_i;\Wb^{(t)})\right\}^2\le 3\epsilon^\prime.
    \end{equation*}
\end{lemma}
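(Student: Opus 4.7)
The plan is to parallel the Stage 2 proof with the new reference point $\Wb^\star$, which adds unit-normalized components along the noise directions $\boldsymbol{\xi}_i,\widetilde{\boldsymbol{\xi}}_i$ for $i\in\mathcal{W}$ on top of the strong-signal reference $\Wb^\ast$. After Stage 2 the residuals on strong-signal data are already driven below $\sqrt{2\epsilon}$, so the remaining $O(1)$ residuals on weak-signal data can only be killed by moving weights along the noise directions of $\xb_i,\,i\in\mathcal{W}$. The argument decomposes into (a) proving the analog of Lemma~\ref{lemma:multiple-data inner-prod grad reference pt} for weak samples against $\Wb^\star$, (b) combining it with the Stage 2 descent lemmas and telescoping to obtain the interpolation bound, and (c) controlling weak-signal learning uniformly over the much longer horizon $[0,T^\prime]$.

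For step (a), I would expand $y_i\langle \nabla f(\xb_i;\Wb^{(t)}),\Wb^\star\rangle$ for $i\in\mathcal{W}$. The $\ub$-component of $\Wb^\star$ inherited from $\Wb^\ast$ pairs with no patch of $\xb_i$ (since $i\in\mathcal{W}$) and drops out; among the noise components of $\Wb^\star$ only the diagonal pieces $\boldsymbol{\xi}_i/\|\boldsymbol{\xi}_i\|_2$ and $\widetilde{\boldsymbol{\xi}}_i/\|\widetilde{\boldsymbol{\xi}}_i\|_2$ survive up to cross correlations of size $\widetilde{\mathcal{O}}(d^{-1/2})$ controlled by Lemma~\ref{lem: noise norm and correlation} together with $d=\Omega(n^2)$. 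The dominant term is
\begin{align*}
    \frac{1}{m}\sum_{r\in[m]}\sigma'\bigl(\langle\wb_{y_i,r}^{(t)},\boldsymbol{\xi}_i\rangle\bigr)\cdot\frac{4m(1+\epsilon^\prime)}{\iota}\cdot \|\boldsymbol{\xi}_i\|_2,
\end{align*}
which, using the initialization lower bound $\max_r y_i\langle\wb_{y_i,r}^{(0)},\boldsymbol{\xi}_i\rangle\ge \sigma_0\sigma_p\sqrt{d}/4$ from Lemma~\ref{lem: initialization} and the noise-containment Lemma~\ref{lemma:multiple-data small-lr upper bound Gamma} (which guarantees the activation has not collapsed), yields $y_i\langle \nabla f(\xb_i;\Wb^{(t)}),\Wb^\star\rangle\ge 2(1+\epsilon^\prime)$ after tuning $\iota,\sigma_0$ via Assumption~\ref{cond:model_params_multiple_data small-lr}. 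For strong-signal samples $i\notin\mathcal{W}$, the noise part of $\Wb^\star$ is nearly orthogonal to $\nabla f(\xb_i;\Wb^{(t)})$ by the same correlation estimates, so the original $y_i\langle\nabla f(\xb_i;\Wb^{(t)}),\Wb^\star\rangle\ge 2(1+\epsilon)$ bound carries over. Step (b) then reuses Lemmas~\ref{lemma:multiple-data small-lr descent lemma} and \ref{lemma:multiple-data small-lr descent lemma 2} verbatim with $\Wb^\star$ in place of $\Wb^\ast$; the leading noise contribution to $\|\Wb^{(T)}-\Wb^\star\|_F^2$ is $\widetilde{\mathcal{O}}(m^3 \rho n/\iota^2)$, and telescoping over $[T,T^\prime]$ produces the stated average-loss inequality on $\mathcal{W}$ after invoking $\sigma_0^2\cdot(\|\vb\|_2^2+\sigma_p^2 d)=o(\epsilon^\prime)$ from the initialization scale in Assumption~\ref{cond:model_params_multiple_data small-lr}.

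The main obstacle is step (c), bounding $\max_{j,r}|\langle\wb_{j,r}^{(t)},\vb\rangle|$ uniformly for $t\le T^\prime$, since $T^\prime-T=\widetilde{\mathcal{O}}(m^3\rho n/(\eta\epsilon^\prime))$ vastly exceeds the horizon handled in Lemma~\ref{lemma:multiple-data small-lr 2nd starts}. I would use the multiplicative recursion
\begin{align*}
    \Psi^{(t+1)}\le \Psi^{(t)}\cdot\Bigl(1+\tfrac{2\eta}{m}\|\vb\|_2^2\,|\ell_{i_t}^{(t)}|\Bigr),
\end{align*}
telescope to
\begin{align*}
    \Psi^{(T^\prime)}\le \Psi^{(0)}\cdot\exp\Bigl(\tfrac{2\eta}{m}\|\vb\|_2^2\,\sum_{t\le T^\prime}|\ell_{i_t}^{(t)}|\Bigr),
\end{align*}
and bound $\sum_t|\ell_{i_t}^{(t)}|$ via Cauchy--Schwarz together with the identity $\sum_t(\ell_{i_t}^{(t)})^2\lesssim \|\Wb^{(T)}-\Wb^\star\|_F^2/\eta+(T^\prime-T)\cdot(\epsilon\vee\epsilon^\prime)^2$ produced by the descent lemma. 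Plugging in the sizes of $T^\prime-T$ and $\|\Wb^{(T)}-\Wb^\star\|_F^2$, the exponent becomes a polynomial in $m,n,\rho,\epsilon^{-1},(\epsilon^\prime)^{-1}$ multiplied by $\|\vb\|_2^2/(\sigma_p^2 d)$, and the separation condition $\sigma_p\sqrt{d}\ge\widetilde{\Omega}(m/\sqrt{n\epsilon})\|\vb\|_2$ in Assumption~\ref{cond:model_params_multiple_data small-lr} (possibly with mildly tightened polylog factors) keeps it $\widetilde{\mathcal{O}}(1)$. This yields $\Psi^{(T^\prime)}\le 2\sqrt{2\log(16m/p)}\sigma_0\|\vb\|_2$, closing the weak-signal containment and completing the lemma.
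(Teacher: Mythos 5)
Your steps (b) and (c) parallel the paper's own sketch (descent lemma against the augmented reference $\Wb^\star$, telescoping, and multiplicative containment of $\Psi^{(t)}$), but your step (a) has a genuine gap: you skip the initial exponential-growth substage that the paper's sketch explicitly invokes (``inner products \ldots would firstly go through a substage in which they exponentially increase to a constant level (just as stage 1)''). You attempt to establish $y_i\langle\nabla f(\xb_i;\Wb^{(t)}),\Wb^\star\rangle\ge 2(1+\epsilon')$ for $i\in\mathcal{W}$ directly from $t=T$ using only the initialization-scale bound $\max_r y_i\langle\wb_{y_i,r}^{(0)},\bxi_i\rangle\ge\sigma_0\sigma_p\sqrt{d}/4$. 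With the reference coefficient $4m(1+\epsilon')/\iota$ and $\iota=\Theta(1)$, the dominant term you wrote requires $\max_r\langle\wb_{y_i,r}^{(t)},\bxi_i\rangle\cdot\|\bxi_i\|_2\gtrsim\iota$, i.e., $\max_r\langle\wb_{y_i,r}^{(t)},\bxi_i\rangle\gtrsim\iota/(\sigma_p\sqrt{d})$. At initialization scale this would force $\sigma_0\sigma_p^2 d\gtrsim\iota$, but under Assumption~\ref{cond:model_params_multiple_data small-lr} one has $\sigma_0=\widetilde\Theta(\|\ub\|_2^{-1})$ with $\|\ub\|_2\ge\widetilde\Omega(m/\sqrt{n\epsilon})\sigma_p\sqrt{d}$, giving $\sigma_0\sigma_p^2 d\le\widetilde{\mathcal{O}}(\sigma_p\sqrt{d}\cdot\sqrt{n\epsilon}/m)$, which is \emph{not} $\Omega(1)$ in general. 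Thus the descent argument cannot start at $t=T$: you first need the Stage-3 analog of Lemma~\ref{lemma:multiple-data small-lr 1st-stage}, proving that the $O(1)$ residuals on $\mathcal{W}$ drive $\max_r\langle\wb_{y_i,r}^{(t)},\bxi_i\rangle$ (and $\max_r\langle\wb_{y_i,r}^{(t)},\widetilde{\bxi}_i\rangle$) exponentially up to $\sim\iota/\|\bxi_i\|_2$ over a finite time window, only after which the gradient-projection lemma and descent lemma apply.

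A second, related issue in step (a): Lemma~\ref{lemma:multiple-data small-lr upper bound Gamma} gives an \emph{upper} bound on $\Gamma_i^{(t)},\widetilde{\Gamma}_i^{(t)}$, so it does not, as you claim, ``guarantee the activation has not collapsed.'' You need a separate lower-bound/sign-stability argument to ensure $\max_r\langle\wb_{y_i,r}^{(t)},\bxi_i\rangle$ stays positive at the start of Stage 3 (analogous to what is done for the strong-signal inner products in Lemma~\ref{lemma:multiple-data small-lr 2nd starts}). Finally, in step (c) your Cauchy--Schwarz route to bound $\sum_t|\ell_{i_t}^{(t)}|$ is a genuinely different route from the paper's implicit argument, which simply reuses the crude exponential recursion $\Psi^{(t+1)}\le\exp(6\eta\|\vb\|_2^2/m)\,\Psi^{(t)}$ and invokes the scale separation $\|\vb\|_2/(\sigma_p\sqrt{d})\le\widetilde{\mathcal{O}}(n^{-1/2})$; your version is potentially tighter but you should verify explicitly that the exponent $\frac{2\eta\|\vb\|_2^2}{m}\sum_t|\ell_{i_t}^{(t)}|\sim\frac{\|\vb\|_2^2\,\|\Wb^{(T)}-\Wb^\star\|_F^2}{m\sqrt{\epsilon}}$ stays $\widetilde{\mathcal{O}}(1)$, since the noise contribution $\|\Wb^{(T)}-\Wb^\star\|_F^2\sim m^3\rho n/\iota^2$ introduces $m^2\rho n$ factors that are not obviously absorbed by the stated $\sigma_p\sqrt{d}\ge\widetilde\Omega(m/\sqrt{n\epsilon})\|\vb\|_2$ separation.
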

\begin{proof}[Proof of Lemma~\ref{lemma:multiple-data small-lr 3rd-stage}.]
    As the closing stage of the training dynamics, the evolution dynamics during this interval is straightforward based on all techniques developed in Appendices~\ref{sec: one data small lr} and \ref{sec:multiple-data small lr}. Inner products
    \begin{align}
        \left\{\langle\mathbf{w}_{j,r}^{(t)},\boldsymbol{\xi}_i\rangle,\langle\mathbf{w}_{j,r}^{(t)},\widetilde{\boldsymbol{\xi}}_i\rangle\right\}_{i\in\mathcal{W}}
    \end{align}
    would firstly go through a substage in which they exponentially increase to a constant level (just as stage 1). And then the model will fit all samples indexed by $\mathcal{W}$ by memorizing these noise vectors in polynomial time. All through this interval, $\max_{j\in\{\pm 1\},r\in[m]}|\langle\wb^{(t)}_{j,r},\vb\rangle|$ would stay $\widetilde{\mathcal{O}}(\sigma_0\norm{\vb})$ due to the scale of $\norm{\vb}_2/(\sigma_p\sqrt{d})$ in \eqref{eq: stage 3 scale}. 
    A detailed proof is omitted here for readability.  
\end{proof}

Combine Lemmas~\ref{lemma:multiple-data small-lr 1st-stage}, \ref{lemma:multiple-data small-lr 2nd-stage} and \ref{lemma:multiple-data small-lr 3rd-stage} to obtain the full version of Theorem~\ref{thm:multiple-data small-lr formal}.

\end{document}